\newcommand{\cN}{N}
\newcommand{\highlight}{\textcolor{black}}
\newcommand{\qvalue}{Q}
\newcommand{\vvalue}{V}
\newcommand{\regret}{\mathrm{Regret}}
\newcommand{\var}{\mathrm{Var}}
\newcommand{\polylog}{\text{polylog}}
\definecolor{LightCyan}{rgb}{1.0, 0.95, 0.9}
\title{A Nearly Optimal and Low-Switching Algorithm for Reinforcement Learning with General Function Approximation}
\author{%
Heyang Zhao\\
Department of Computer Science \\
University of California, Los Angeles \\
Los Angeles, CA 90095\\
\texttt{hyzhao@cs.ucla.edu} \\
  % examples of more authors
\AND
Jiafan He \\
Department of Computer Science \\
University of California, Los Angeles \\
Los Angeles, CA 90095\\
\texttt{jiafanhe19@ucla.edu} \\
\And
Quanquan Gu \\
Department of Computer Science \\
University of California, Los Angeles \\
Los Angeles, CA 90095\\
\texttt{qgu@cs.ucla.edu} \\
}
\begin{document}

\maketitle

\begin{abstract}
   The exploration-exploitation dilemma has been a central challenge in reinforcement learning (RL) with complex model classes. In this paper, we propose a new algorithm, Monotonic  Q-Learning with Upper Confidence Bound (MQL-UCB) for RL with general function approximation. Our key algorithmic design includes (1) a general deterministic policy-switching strategy that achieves low switching cost, (2) a monotonic value function structure with carefully controlled function class complexity, and (3) a variance-weighted regression scheme that exploits historical trajectories with high data efficiency. MQL-UCB achieves minimax optimal regret of $\tilde{O}(d\sqrt{HK})$ when $K$ is sufficiently large and near-optimal policy switching cost of $\tilde{O}(dH)$, with $d$ being the eluder dimension of the function class, $H$ being the planning horizon, and $K$ being the number of episodes. 
   Our work sheds light on designing provably sample-efficient and deployment-efficient Q-learning with nonlinear function approximation. 
\end{abstract}

\section{Introduction}

In reinforcement learning (RL), a learner interacts with an unknown environment and aims to maximize the cumulative reward. 
As one of the most mainstream paradigms for sequential decision-making, RL has extensive applications in many real-world problems \citep{kober2013reinforcement, mnih2015human, lillicrap2015continuous, zoph2016neural, zheng2018drn}. Theoretically, the RL problem is often formulated as a Markov Decision Process (MDP) \citep{puterman2014markov}. Achieving the optimal regret bound for various MDP settings has been a long-standing fundamental problem in RL research. In tabular MDPs where the state space $\cS$ and the action space $\cA$ are finite, the optimal regret bound has been well-established, ranging from the episodic setting \citep{azar2017minimax, zanette2019tighter, simchowitz2019non, zhang2020almost}, average-reward setting \citep{zhang2019regret} to the discounted setting \citep{he2021nearly}. Nevertheless, these regret guarantees are intolerably large in many real-world applications, where the state space $\cS$ and the action space $\cA$ are often large or even infinite. %Besides, there are also concerns that regret guarantees do not imply uniform convergence to an optimal policy with high probability, for which algorithms with uniform-PAC guarantee have been proposed by \cite{dann2017unifying} but at the cost of additional $H$ factor in the regret upper bound. 

As is commonly applied in applications, \emph{function approximation} has been widely studied by theorists to demonstrate the generalization across large state-action spaces, proving the performance guarantees of various RL algorithms for specific function classes. There are recent works on RL with linear function approximation under different assumptions such as linear MDPs \citep{yang2019sample, jin2020provably}, linear mixture MDPs \citep{modi2020sample, ayoub2020model, zhou2021nearly}. Among them, \citet{zhou2021nearly} achieved nearly optimal regret bounds for linear mixture MDPs through a model-based approach adopting variance-weighted linear regression. Later, \citet{hu2022nearly} proposed LSVI-UCB+ algorithm, making an attempt to improve the regret for linear MDP through an over-optimistic value function approach. However, their analysis was later discovered to suffer from a technical issue \citep{agarwal2022vo,he2022nearly}. To fix this issue, \citet{agarwal2022vo} introduced similar over-optimistic value functions to construct a monotonic variance estimator and a non-Markov policy, achieving the first statistically optimal regret for linear MDPs. They also proposed a novel algorithm dubbed VO$Q$L for RL with general function approximation. Concurrently, \citet{he2022nearly} proposed LSVI-UCB++, which takes a different approach and employs a rare-switching technique to obtain the optimal regret. In a parallel line of research, there has been a growing body of literature proposing more general frameworks to unify sample efficient RL algorithms, e.g., Bellman rank \citep{jiang2017contextual}, Witness rank \citep{sun2019model}, eluder dimension \citep{russo2013eluder}, Bellman eluder dimension \citep{jin2021bellman}, Bilinear Classes \citep{du2021bilinear}, Decision-Estimation Coefficient \citep{foster2021statistical}, Admissible Bellman Characterization \citep{chen2022general} and Decoupling Coefficient \citep{ agarwal2022model, agarwal2022non}. However, when applying these frameworks to linear MDPs, none of them can achieve the minimax optimal regret. To our knowledge, \citet{agarwal2022vo} is the only algorithmic framework achieving optimal regret for RL beyond linear function approximation. Since VO$Q$L requires a non-Markov planning procedure, where the resulting policy does not act greedily with respect to a single optimistic value function, it is natural to ask 
\begin{center}
\emph{Can we develop a RL algorithm with Markov policy\footnote{A Markov policy means that the action chosen by the policy only depends on the current state instead of the prefix trajectory. It is more aligned with the empirical RL approaches, since the estimated value function is not well-defined under non-Markov policy.} to solve MDPs with general function approximation and  achieve the optimal regret? }
\end{center} 
While the sample efficiency of RL algorithms for MDPs with nonlinear function classes has been comprehensively researched, %gradually unveiling the theoretical limit of practical RL approaches, 
\emph{deployment efficiency} \citep{matsushima2020deployment} is also a major concern in many real-world application scenarios. For example, in recommendation systems \citep{afsar2022reinforcement}, it may take several weeks to deploy a new recommendation policy. On the other hand, the system is capable of collecting an enormous amount of data every minute implementing a fixed policy. As a result, it is computationally inefficient to change the executed policy after each data point is collected as is demanded by most of the online RL algorithms in theoretical studies. To resolve this issue, \citet{bai2019provably} first introduced the concept of switching cost, defined as the number of policy updates. Following this concept, a series of RL algorithms have been proposed on the theoretical side with low switching cost guarantees \citep[e.g.,][]{zhang2020almost, wang2021provably, qiao2022sample, kong2021online, velegkas2022reinforcement, li2023low}. \citet{xiong2023general} considered low-switching RL with general function approximation, which achieves $\tilde{O}(dH)$ switching cost. However, their algorithm has an intractable planning phase and is not statistically optimal. \citet{kong2021online}, \citet{velegkas2022reinforcement}, and \citet{li2023low} considered RL with general function approximation, all of which achieved the switching cost of $O(d^2 H \polylog(K))$, where $d$ is the eluder dimension of the underlying function class, $H$ is the planning horizon, and $K$ is the number of episodes. \highlight{In contrast, \citet{gao2021provably} proved an $\Omega(dH /\log d)$ lower bound of switching cost for any deterministic algorithms in learning linear MDPs. }
Therefore, the following question remains open: \begin{center} \emph{Can we design an efficient algorithm with $\tilde{O}(dH)$ switching cost for MDPs with bounded eluder dimension?}
\end{center}

In this paper, we answer the above two questions simultaneously by proposing a novel algorithm \emph{Monotonic Q-Learning with UCB (MQL-UCB)} with all the aforementioned appealing properties. At the core of our algorithmic design are the following innovative techniques: 
\begin{itemize}[leftmargin = *]
   \item We propose a novel policy-switching strategy based on the cumulative sensitivity of historical data. To the best of our knowledge, this is the first computationally-tractable deterministic rare-switching algorithm for RL with general function approximation which achieves $\tilde{O}(dH)$ switching cost. \highlight{We also prove a nearly matching lower bound for any algorithm with
arbitrary policies including both deterministic and stochastic policies (See Theorem~\ref{lemma:lower}).} Previous approaches for low switching cost in RL with general function approximation are sampling-based \citep{kong2021online, velegkas2022reinforcement, li2023low} and highly coupled with a sub-sampling technique used for regression, making it less efficient. When restricted to the linear case, sampling-based rare-switching has a worse switching cost of $\tilde{O}(d^2 H)$ (Section 3.3 in \citealt{kong2021online}; Theorem~3.3 in \citealt{velegkas2022reinforcement}; Section C in \citealt{li2023low}) than that in \citet{wang2021provably}.
   \item With the novel policy-switching scheme, we illustrate how to reduce the complexity of value function classes while maintaining a series of monotonic value functions, strictly generalizing the LSVI-UCB++ algorithm \citep{he2022nearly} to general function class with bounded eluder dimension. Based on the structure of the value functions, we further demonstrate how MQL-UCB achieves a variance-dependent regret bound that can gracefully reduce to a nearly constant regret for deterministic MDPs. While in the worst case, our algorithm still attains a nearly minimax optimal regret guarantee. \highlight{Our work is the first work for RL with general function approximation that achieves the nearly minimax optimal regret when specialized to linear MDPs, while still enjoys simple Markov planning phase.}
\end{itemize}
\highlight{It is worth noting that recently, \citet{qiao2023logarithmic} also considered RL with low-switching cost beyond linear function approximation, i.e., MDPs with low inherent Bellman error \citep{zanette2020learning} and generalized linear MDPs \citep{wang2020optimism}. Their approach can be seen as a slight extension of RL with low-switching cost for linear MDPs, since in both settings, the covariance matrix still exists and they can still use the determinant of the covariance as a criterion for policy switching.}
\newcolumntype{g}{>{\columncolor{LightCyan}}c}
\begin{table}[t!]
%\caption{Comparisons of regrets with other related works in linear mixture MDPs.}\label{table:2}
\caption{A comparison of existing algorithms in terms of regret and switching cost for linear MDP and general function class with bounded eluder dimension and Bellman completeness. The results hold for in-homogeneous episodic RL with horizon length $H$, number of episodes $K$ where the total reward obtained in an episode is not larger than 1. For regret, we only present the leading term when $K$ is large enough compared to other variables and \highlight{ hide poly-logarithmic factors in $K$, $d$ or $\dim$, $H$ and the constant}. For linear MDPs, $d$ is the dimension of the feature vectors. For general function class, $\dim$ is a shorthand of the eluder dimension of the underlying function class, $\cN$ is the covering number of the value function class, and $\cN_{\cS, \cA}$ is the covering number of the state-action space. %In \cite{agarwal2022vo}, a sub-sampling scheme adapted from \cite{kong2021online} has been proposed in Section C, which leads to an $\tilde O(d^2 H)$ switching cost according to \cite{kong2021online}. 
}\label{table:1}
\centering 
\renewcommand{\arraystretch}{1.0}
\resizebox*{0.99\columnwidth}{!}{
\begin{tabular}{ggggg}
\toprule 
\rowcolor{white} Algorithm & Regret & \# of Switches  & Model Class  \\
\midrule  
% \rowcolor{white} $\text{UCRL-VTR}$ & & & \\
% \rowcolor{white} \citep{ayoub2020model}  & \multirow{-2}{*}{$d H \sqrt{K}$} & \multirow{-2}{*}{-}  & \multirow{-2}{*}{}\\ 
% \rowcolor{white} UCRL-VTR+ &  &  & \\
% \rowcolor{white} \citep{zhou2021nearly}  & \multirow{-2}{*}{$\sqrt{d^2H+dH^2}\sqrt{K}$} &  \multirow{-2}{*}{-} & \multirow{-2}{*}{Linear mixture MDPs}\\
% \rowcolor{white} HF-UCRL-VTR+ & & & \\ 
% \rowcolor{white}\citep{zhou2022computationally}& \multirow{-2}{*}{$d\sqrt{K}^*$} & \multirow{-2}{*}{-} & \multirow{-2}{*}{}\\ 
% \hline
\rowcolor{white} LSVI-UCB & & & \\
\rowcolor{white} \citep{jin2020provably}  & \multirow{-2}{*}{$d^{3 / 2} H \sqrt{K}$} & \multirow{-2}{*}{$K$}  & \multirow{-2}{*}{}\\
\rowcolor{white} LSVI-UCB-RareSwitch & & & \\
\rowcolor{white} \citep{wang2021provably}  & \multirow{-2}{*}{$d^{3 / 2} H \sqrt{K}$} & \multirow{-2}{*}{$\tilde{O}(dH)$}  & \multirow{-2}{*}{}\\
\rowcolor{white} LSVI-UCB++ & & & \\
\rowcolor{white} \citep{he2022nearly}  & \multirow{-2}{*}{$d\sqrt{ H K}$} & \multirow{-2}{*}{$\tilde{O}(dH)$}  & \multirow{-6}{*}{Linear MDPs}\\
\hline
\rowcolor{white} $\cF$-LSVI &  & & \\
\rowcolor{white} \citep{wang2020reinforcement} & \multirow{-2}{*}{$\dim(\cF) \sqrt{\log \cN \log \cN_{\cS, \cA}} \cdot H \sqrt{K}$} & \multirow{-2}{*}{$K$} \\

\rowcolor{white} GOLF &  & & \\
\rowcolor{white} \citep{jin2021bellman} & \multirow{-2}{*}{$\sqrt{\dim(\cF) \log \cN} \cdot H \sqrt{K}$} & \multirow{-2}{*}{$K$} & Bounded eluder dimension\\

\rowcolor{white} VOQL &  & & + Completeness\\
\rowcolor{white} \citep{agarwal2022vo} & \multirow{-2}{*}{$\sqrt{\dim(\cF) \log \cN \cdot H K}$} & \multirow{-2}{*}{$\tilde O(\dim(\cF)^2H)$} \\
MQL-UCB &  &  & \\
\small{(Theorem \ref{thm:main})}& \multirow{-2}{*}{$\sqrt{\dim(\cF) \log \cN \cdot H K}$}& \multirow{-2}{*}{$\tilde O(\dim(\cF)H)$}  & \\
\bottomrule 
\end{tabular}}
\end{table}
%\todoq{In the table, change the general function class to something more speicific? e.g., Bellman Complete?}
%\todoh{Do you mean bounded eluder dimension? Or is it possible that I specify it in the caption}\todoq{Should it be Bellman Completeness \& bounded eluder dimension? The reason is linear MDP is a special case of Bellman Compleneness.}
\paragraph{Notation.} 
We use lower case letters to denote scalars and use lower and upper case bold face letters to denote vectors and matrices respectively. We denote by $[n]$ the set $\{1,\dots, n\}$. For two positive sequences $\{a_n\}$ and $\{b_n\}$ with $n=1,2,\dots$, we write $a_n=O(b_n)$ if there exists an absolute constant $C>0$ such that $a_n\leq Cb_n$ holds for all $n\ge 1$ and write $a_n=\Omega(b_n)$ if there exists an absolute constant $C>0$ such that $a_n\geq Cb_n$ holds for all $n\ge 1$. We use $\tilde O(\cdot)$ to further hide the polylogarithmic factors except log-covering numbers. We use $\ind\{\cdot\}$ to denote the indicator function.

%\subsection{Uniform PAC guarantee}
%While there exists substantial works dedicated to establishing theoretical performance guarantees in Reinforcement learning, most of the analyses presented have been confined to either regret bounds or PAC (sample complexity) guarantee. As pointed out by \citet{dann2017unifying}, algorithms backed by such guarantees may still fail to converging to the optimal policy, which may lead to suboptimal practical performance.
%To address these shortcomings, \citet{dann2017unifying} expanded the PAC-bound to a more robust performance measure calculates the number of $\epsilon$-suboptimal episodes for any specified accuracy threshold $\epsilon>0$ concurrently. \citet{dann2017unifying} demonstrated that the uniform-PAC guarantee holds superior strength compared to regret bounds or PAC (sample complexity) guarantees, and any algorithm fortified with this guarantee will converge to the optimal policy up to arbitrary accuracy. Furthermore, \citet{dann2017unifying} introduced the UBEV algorithm for tabular MDP, which acheives $\tilde O(SAH^4/\epsilon^2)$ Uniform-PAC sample complexity. Subsequently, \citet{he2021uniformpac} studied the RL with linear function approximation and proposed the FLUTE algorithm for linear MDP with $\tilde O(d^3H^5/\epsilon^2)$ Uniform-PAC sample complexity.
%Recently, \citet{wu2023uniform} proposed the $\cF$-UPAC-VTR algorithm, which obtains a $\tilde O(H^3 d_K d_E/\epsilon^2)$ Uniform-PAC sample complexity for general function approximations with bounded Eluder dimension.

%%Zhiyuan Fan

\section{Preliminaries} \label{section:pre}

\subsection{Time-Inhomogeneous Episodic MDP}

We consider a time-inhomogeneous episodic Markov Decision Process (MDP), denoted by a tuple $\cM = M(\cS, \cA, H, \{\PP_h\}_{h = 1}^H, \{r_h\}_{h=1}^H)$. Here, $\cS$ and $\cA$ are the spaces of state and action, respectively, $H$ is the length of each episode, $\PP_h: \cS \times \cA \times \cS \to [0, 1]$ is the transition probability function at stage $h$ which denotes the probability for state $s$ to transfer to the next state $s'$ with current action $a$, and $r_h: \cS \times \cA \to [0, 1]$ is the deterministic reward function at stage $h$. A policy $\pi := \{\pi_h\}_{h=1}^{H}$ is a collection of mappings $\pi_h$ from an observed state $s\in \cS$ to the simplex of action space $\cA$. For any policy $\pi=\{\pi_h\}_{h=1}^{H}$ and stage $h\in[H]$, we define the value function $\vvalue_h^{\pi}(s)$ and the action-value function $\qvalue_h^{\pi}(s,a)$ as follows:
\begin{align}
\qvalue^{\pi}_h(s,a) &=r_h(s,a) + \EE\bigg[\sum_{h'=h+1}^H r_{h'}\big(s_{h'},
\pi_{h'}(s_{h'})\big)\bigg| s_h=s,a_h=a\bigg],  
\vvalue_h^{\pi}(s) = \qvalue_h^{\pi}(s, \pi_h(s)),\notag
\end{align}
where $s_{h'+1}\sim \PP_{h'}(\cdot|s_{h'},a_{h'})$.
Then, we further define the optimal value function $V_h^*$ and the optimal action-value function $\qvalue_h^*$ as $V_h^*(s) = \max_{\pi}\vvalue_h^{\pi}(s)$ and $\qvalue_h^*(s,a) = \max_{\pi}\qvalue_h^{\pi}(s,a)$. For simplicity, we assume the total reward for each possible trajectory $(s_1,a_1,...,s_H,a_H)$ satisfies $\sum_{h=1}^H r_h(s_h,a_h) \le 1$. Under this assumption, the value function $V_h^\pi(\cdot)$ and $Q_h^\pi(\cdot,\cdot)$ are bounded in $[0,1]$. For any function $V: \cS \to \RR$ and stage $h \in [H]$, we define the following first-order Bellman operator $\cT_h$ and second-order Bellman operator $\cT_h^2$ on function $V$: 
\begin{align*} 
   &\cT_h V(s_h, a_h) = \EE_{ s_{h + 1}}\big[r_h + V(s_{h + 1})|s_h, a_h\big], \cT_h^2 V(s_h, a_h) = \EE_{ s_{h + 1}}\Big[\big(r_h + V(s_{h + 1})\big)^2|s_h, a_h\Big],
\end{align*}
where $s_{h+1}\sim \PP_h(\cdot|s_h,a_h)$ and $r_h=r_h(s_h,a_h)$. For simplicity, we further define  $[\PP_h \vvalue](s,a)=\EE_{s' \sim \PP_h(\cdot|s,a)}\vvalue(s')$ and $[\VV_h \vvalue](s,a)=\cT_h^2 V(s_h, a_h)- \big(\cT_h V(s_h, a_h)\big)^2$.
Using this notation, for each stage $h\in[H]$, the Bellman equation and Bellman optimality equation take the following forms:
\begin{align}
   \qvalue_h^{\pi}(s,a) = \cT_h\vvalue_{h+1}^{\pi}(s,a), 
\quad \qvalue^{*}(s,a) = \cT_h \vvalue_{h+1}^{*}(s,a), \notag
\end{align}
where $\vvalue^{\pi}_{H+1}(\cdot)=\vvalue^{*}_{H+1}(\cdot)=0$. At the beginning of each episode $k\in[K]$, the agent selects a policy $\pi^k$ to be executed throughout the episode, and an initial state $s^k_1$ is arbitrarily selected by the environment. For each stage $h\in[H]$, the agent first observes the current state $s_h^k$, chooses an action following the policy $\pi_h^k$, then transits to the next state with $s_{h+1}^k \sim \PP_h(\cdot|s_h^k,a_h^k)$ and reward $r_h(s_h,a_h)$. Based on the protocol, we defined the suboptimality gap in episode $k$ as the difference between the value function for selected policy $\pi^k$ and the optimal value function $V_1^*(s_1^k) - V_1^{\pi^k}(s_1^k).$
Based on these definitions, we can define the regret in the first $K$ episodes as follows: 
\begin{definition}\label{def:Regret} 
For any RL algorithm $\mathtt{Alg}$, the regret in the first $K$ episodes is denoted by the sum of the suboptimality for episode $k = 1,\ldots, K$, 
\begin{align}
   \text{Regret}(K) = \sum_{k=1}^K \vvalue_1^*(s_1^k) - \vvalue_1^{\pi^k}(s_1^k),\notag
\end{align}
where $\pi^k$ is the agent's policy at episode $k$.
\end{definition}

\subsection{Function Classes and Covering Numbers}

\begin{assumption}[Completeness] \label{assumption:complete}
   Given $\cF := \{\cF_h\}_{h = 1}^H $ which is composed of bounded functions $f_h: \cS \times \cA \to [0, L]$. We assume that for any function $V: \cS \to [0, 1]$ there exists $f_1, f_2 \in \cF_h$ such that for any $(s, a) \in \cS \times \cA$, \begin{align*} 
       \EE_{s' \sim \PP_h(\cdot|s, a)}\big[r_h(s, a) + V(s')\big] = f_1(s, a),\ \text{and}\ 
       \EE_{s' \sim \PP_h(\cdot|s, a)}\Big[\big(r_h(s, a) + V(s')\big)^2\Big] = f_2(s, a).
   \end{align*}
   We assume that $L = O(1)$ throughout the paper. 
\end{assumption}
\begin{remark}
 Completeness is a fundamental assumption in RL with general function approximation, as recognized in \citet{wang2020optimism,jin2021bellman,agarwal2022vo}. Our assumption is the same as that in \citet{agarwal2022vo} and is slightly stronger than that in \citet{wang2020reinforcement} and \citet{jin2021bellman}.  More specifically, in \citet{wang2020reinforcement}, completeness is only required for the first-order Bellman operator. In contrast, we necessitate completeness with respect to the second-order Bellman operator, which becomes imperative during the construction of variance-based weights. \citet{jin2021bellman} only requires the completeness for the function class $\cF_{h+1}$ ($\cT_h \cF_{h+1}\subseteq \cF_h$). However, the GOLF algorithm \citep{jin2021bellman} requires solving an intricate optimization problem across the entire episode. In contrast, we employ pointwise exploration bonuses as an alternative strategy, which requires the completeness for function class $\cV=\{V: \cS \to [0, 1]\} $ , i.e., $\cT_h \cV\subseteq \cF_h$. %\todoh{Comments on second-moment completeness. } 
 \highlight{The completeness assumption on the second moment is first introduced by \citet{agarwal2022vo}, and is crucial for obtaining a tighter regret bound. More specifically, making use of the variance of the value function at the next state is known to be crucial to achieve minimax-optimal regret bound in RL ranging from tabular MDPs \citep{azar2017minimax} to linear mixture MDPs \citep{zhou2021nearly} and linear MDPs \citep{he2022nearly}. In RL with general function approximation, the second-moment compleness assumption makes the variance of value functions computationally tractable.} %It is still an open problem whether a nearly optimal regret can be achieved in MDPs with general function approximation without this assumption. }
\end{remark}
%\todoq{Need to discuss this assumption, draw connection with prior work (Aggarawal and Yujia Jin, Bilinear, Bellman Eluder dim, Low Bellman Rank,  Eluder dim, )}

\begin{definition}[Generalized Eluder dimension, \citealt{agarwal2022vo}] \label{def:ged}
   Let $\lambda \ge 0$, a sequence of state-action pairs $\Zb = \{z_i\}_{i \in [T]}$ and a sequence of positive numbers $\bsigma = \{\sigma_i\}_{i \in [T]}$. The generalized Eluder dimension of a function class $\cF: \cS \times \cA \to [0, L]$ with respect to $\lambda$ is defined by $\dim_{\alpha, T}(\cF) := \sup_{\Zb, \bsigma:|Z| = T, \bsigma \ge \alpha} \dim(\cF, \Zb, \bsigma)$, \begin{small} \begin{align*} 
        &\dim(\cF, \Zb, \bsigma):= \sum_{i=1}^T \min \bigg(1, \frac{1}{\sigma_i^2} D_{\cF}^2(z_i; z_{[i - 1]}, \sigma_{[i - 1]})\bigg), \\
       &D_\cF^2(z; z_{[t - 1]}, \sigma_{[t - 1]}) := \sup_{f_1, f_2 \in \cF} \frac{(f_1(z) - f_2(z))^2}{\sum_{s \in [t - 1]} \frac{1}{\sigma_s^2}(f_1(z_s) - f_2(z_s))^2 + \lambda}.
   \end{align*} \end{small}
   We write $\dim_{\alpha, T}(\cF) := H^{-1} \cdot \sum_{h \in [H]}\dim_{\alpha, T}(\cF_h)$ for short when $\cF$ is a collection of function classes $\cF = \{\cF_h\}_{h = 1}^H$ in the context.
\end{definition}

\begin{remark} 
    The $D_\cF^2$ quantity has been introduced in \cite{agarwal2022vo} and \cite{ye2023corruption} to quantify the uncertainty of a state-action pair given a collected dataset with corresponding weights. It was inspired by \cite{gentile2022achieving} where an unweighted version of uncertainty has been defined for active learning. Prior to that, \cite{wang2020reinforcement} introduced a similar `sensitivity' measure to determine the sampling probability in their sub-sampling framework. As discussed in \cite{agarwal2022vo}, when specialized to linear function classes, $D_\cF^2(z_t; z_{[t - 1]}, \sigma_{[t - 1]})$ can be written as the elliptical norm $\|z_t\|_{\bSigma_{t - 1}^{-1}}^2$, where $\bSigma_{t - 1}$ is the weighted covariance matrix of the feature vectors $z_{[t - 1]}$. 
\end{remark}

\begin{definition}[Bonus oracle $\bar D_\cF^2$] \label{def:bonus-oracle}
In this paper, the bonus oracle is denoted by $\bar D_{\cF}^2$, which computes the estimated uncertainty of a state-action pair $z = (s, a) \in \cS \times \cA$ with respect to historical data $z_{[t - 1]}$ and corresponding weights $\sigma_{[t - 1]}$. In detail, we assume that a computable function $\bar D_{\cF}^2(z; z_{[t - 1]}, \sigma_{[t - 1]})$ satisfies 
$\frac{\bar D_{\cF}(z; z_{[t - 1]}, \sigma_{[t - 1]})}{D_{\cF}(z; z_{[t - 1]}, \sigma_{[t - 1]})} \in [1, C]$
, where $C$ is a fixed constant. 
\end{definition}

\begin{remark} 
\cite{agarwal2022vo} also assumed access to such a bonus oracle defined in Definition~\ref{def:bonus-oracle}, where they assume that the bonus oracle finds a proper bonus from a finite bonus class (Definition 3, \citealt{agarwal2022vo}). Our definition is slightly different in the sense that the bonus class is not assumed to be finite but with a finite covering number. Previous works by \cite{kong2021online} and \cite{wang2020reinforcement} proposed a sub-sampling idea to compute such a bonus function efficiently in general cases, which is also applicable in our framework. In a similar nonlinear RL setting, \cite{ye2023corruption} assumed that the uncertainly $D_\cF^2$ can be directly computed, which is a slightly stronger assumption. But essentially, these differences in bonus assumption only lightly affect the algorithm structure. 
\end{remark}

\begin{definition}[Covering numbers of function classes] \label{def:oracle}
   For any $\epsilon > 0$, we define the following covering numbers of the involved function classes: 
   \begin{enumerate}[leftmargin = *]
       \item For each $h \in [H]$, there exists an $\epsilon$-cover $\cC(\cF_h, \epsilon) \subseteq \cF_h$ with size $|\cC(\cF_h, \epsilon)| \le \cN(\cF_h, \epsilon)$, such that for any $f \in \cF$, there exists $f' \in \cC(\cF_h, \epsilon)$, such that $\|f - f'\|_\infty \le \epsilon$. For any $\epsilon > 0$, we define the uniform covering number of $\cF$ with respect to $\epsilon$ as $\cN_\cF(\epsilon) := \max_{h \in [H]} \cN(\cF_h, \epsilon)$. 
       \item There exists a bonus class $\cB: \cS \times \cA \to \RR$ such that for any $t \ge 0$, $z_{[t]} \in (\cS \times \cA)^t$, $\sigma_{[t]} \in \RR^t$, the oracle defined in Definition \ref{def:bonus-oracle} $\bar D_\cF(\cdot; z_{[t]}, \sigma_{[t]})$ is in $\cB$. 
       \item For bonus class $\cB$, there exists an $\epsilon$-cover $\cC(\cB, \epsilon) \subseteq \cB$ with size $|\cC(\cB, \epsilon)| \le \cN(\cB, \epsilon)$, such that for any $b \in \cB$, there exists $b' \in \cC(\cB, \epsilon)$, such that $\|b - b'\|_\infty \le \epsilon$.
   \end{enumerate}
\end{definition}
\begin{remark}
    In general function approximation, it is common to introduce the additional assumption on the covering number of bonus function classes. For example, in \citet{ye2023corruption}, \citet{agarwal2022model}, and \citet{di2023pessimistic}, the covering number of the bonus function class is  bounded. %Even when $\mathcal{F}$ is finite, the bonus function class $D_\mathcal{F}$ can still be very large. For example,  consider $\mathcal{F} = \{f_1, f_2\}$ where $f_1$ is $1$ on every state-action pair, and $f_2 = 2$ on every state-action pair. Then for $\epsilon = 1 / 4$, the covering number of $D_\mathcal{F}$ is at least the number of state-action pairs. This is because the bonus functions resulted from taking each state-action pair as historical data are far from each other in $l_\infty$ measure. } %Given this fact, we still need to assume the covering number of the bonus function class is finite, even though the covering number of $\mathcal{F}$ is finite. This is exactly the reason why [4] assumes the covering number of the set of state-action pairs is bounded (Assumption 2). With this assumption, they can provide an upper bound for the covering number of the bonus function class $W$ in their Proposition 2. 
    
    %The bonus function class is also necessary since it can be large even if $|\mathcal{F}|$ is constant. Consider a simple case where $\mathcal{F} = \{f_1, f_2\}$ and $f_1$ is 1 on every state-action pair, $f_2 = 2$ on every state-action pair. Then clearly, for $\epsilon = 1 / 4$, the covering number of $D_\mathcal{F}$ is at least the number of state-action pairs. (The bonus functions resulted from taking each state-action pair as historical data is far from each other in $l_\infty$ measure. 
\end{remark}

\section{Algorithm and Key Techniques}

In this section, we will introduce our new algorithm, MQL-UCB. The detailed algorithm is provided in Algorithm \ref{algorithm1}. Our algorithm's foundational framework follows the Upper Confidence Bound (UCB) approach. In detail, for each episode $k\in [K]$, we construct an optimistic value function $Q_{k,h}(s,a)$ during the planning phase. Subsequently, in the \highlight{exploration} phase, the agent interacts with the environment, employing a greedy policy based on the current optimistic value function $Q_{k,h}(s,a)$. Once the agent obtains the reward \highlight{$r_h^k$} and transitions to the next state $s_{h+1}^k$, these outcomes are incorporated into the dataset, contributing to the subsequent planning phase. We will proceed to elucidate the essential components of our method.

%\subsection{Algorithm Overview}

\subsection{Rare Policy Switching}
For MQL-UCB algorithm, the value functions $Q_{k,h},\check{Q}_{k,h}$, along with their corresponding policy $\pi_k$, undergo updates when the agent collects a sufficient number of trajectories within the dataset that could significantly diminish the uncertainty associated with the Bellman operator $\cT_h V(\cdot,\cdot)$ through the weighted regression. In the context of linear bandits \citep{abbasi2011improved} or linear MDPs \citep{he2022nearly}, the uncertainty pertaining to the least-square regression is quantified by the covariance matrix $\bSigma_k$. In this scenario, the agent adjusts its policy once the determinant of the covariance matrix doubles, employing a determinant-based criterion. Nevertheless, in the general function approximation setting, such a method is not applicable in the absence of the covariance matrix which serves as a feature extractor in the linear setting. Circumventing this issue, \cite{kong2021online} proposed a sub-sampling-based method to achieve low-switching properties in nonlinear RL. Their subsampling technique is inspired by \cite{wang2021provably}, which showed that one only needs to maintain a small subset of historical data to obtain a sufficiently accurate least-square estimator. Such a subset can be generated sequentially according to the \emph{sensitivity} of a new coming data point. However, their approach leads to a switching cost of $\tilde O(d^2 H)$, which does not match the lower bound in linear MDPs proposed by \cite{gao2021provably}. 

To resolve this issue, we proposed a more general deterministic policy-updating framework for nonlinear RL. In detail, we use $\bar D_{\cF_h}^2(z_{i, h}; z_{[k_{last} - 1], h}, \bar \sigma_{[k_{last} - 1], h})$ to evaluate the information collected at the episode $i$, given the last updating $k_{last}$. Once the collected information goes beyond a threshold $\chi$ from last updating, i.e., \begin{small}
\begin{align}
       \sum_{i = k_{last}}^{k - 1} \frac{1}{\bar \sigma_{i, h}^2} \bar D_{\cF_h}^2(z_{i, h}; z_{[k_{last} - 1], h}, \bar \sigma_{[k_{last} - 1], h}) \ge \chi . \label{eq:def:switch}
   \end{align} \end{small}
the agent will perform updates on both the optimistic estimated value function and the pessimistic value function. Utilizing the $D^2_{\cF_h}$-based criterion, \highlight{we will show that the number of policy updates can be bounded by $O(H\cdot \dim_{\alpha, K}(\cF))$}. \highlight{This further reduces the complexity of the optimistic value function class and removes additional factors from a uniform convergence argument over the function class.} Specifically, we showcase under our rare-switching framework, the $\epsilon$-covering number of the optimistic and the pessimistic value function class at episode $k$ is bounded by \begin{align} 
\cN_\epsilon(k) := [\cN_{\cF}(\epsilon / 2) \cdot \cN(\cB, \epsilon / 2\hat \beta_K)]^{l_k + 1},
\end{align} \highlight{where $\hat \beta_K$ is the maximum confidence radius as shown in Algorithm \ref{algorithm1}, which will be specified in Lemmas \ref{lemma:opt-cover} and \ref{lemma:pes-cover}, {$l_k$ is the number of policy switches before the end of the $k$-th episode according to Algorithm \ref{algorithm1}. }}

\subsection{Weighted Regression}
\highlight{The estimation of $Q$ function in MQL-UCB extends LSVI-UCB proposed by \citet{jin2020provably} to general function classes. While the estimators in LSVI-UCB are computed from the classic least squares regression, }we construct the estimated value function $\hat f_{k, h}$ for general function classes by solving the following weighted regression: \begin{small}
\begin{align*}
    \hat f_{k, h} = \argmin\limits_{f_h \in \cF_h} \sum\limits_{i \in [k - 1]} \frac{1}{\bar \sigma_{i, h}^2} (f_h(s_h^i, a_h^i) - r_h^i - V_{k, h + 1}(s_{h + 1}^i))^2.
\end{align*} \end{small}
In the weighted regression, we set the weight $\bar \sigma_{k,h}$ as
\begin{align*}
    \bar\sigma_{k,h}= \max\big\{\sigma_{k,h}, \alpha,   \gamma \cdot \bar D^{1/2}_{\cF_h}(z; z_{[k - 1],h}, \bar\sigma_{[k - 1],h}) \big\},
\end{align*}
where $\sigma_{k,h}$ is the estimated variance for the stochastic transition process, $\bar D_{\cF_h}(z; z_{[k - 1],h}, \bar\sigma_{[k - 1],h})$ \highlight{denotes the uncertainty of the estimated function $\hat{f}_{k,h}$ conditioned on the historical observations} and \begin{small}\begin{align}
   &\gamma^2 := \log \Big(\Big(2HK^2 \big(2\log (L^2k /\alpha^4)+2\big)\cdot \big(\log(4L/\alpha^2)+2\big) \cdot \cN_{\cF}^4(\epsilon) \cdot \cN_{\epsilon}^2(K)\Big)\big/\delta\Big) \label{eq:def:gamma}
\end{align} \end{small} is used to properly balance the uncertainty across \highlight{various state-action pairs}. It is worth noting that \citet{ye2023corruption} also introduced the uncertainty-aware variance in the general function approximation with a distinct intention to deal with the adversarial corruption from the attacker. In addition, according to the weighted regression, with high probability, the Bellman operator $\cT_h V_{k,h}$ \highlight{satisfies:} \begin{small}
\begin{align*}
   \lambda +  {\sum_{i \in [k - 1]}} \bar \sigma_{i, h}^{-2}\left(\hat f_{k, h}(s_h^i, a_h^i) - \cT_h V_{k, h + 1}(s_h^i, a_h^i)\right)^2 \le  \hat \beta_{k}^2, 
\end{align*} \end{small}
{where $\hat \beta_k$ is the exploration radius for the $Q$ functions, which will be specified in Theorem \ref{thm:main}. }
According to the definition of Generalized Eluder dimension, the estimation error between the estimated function  $\hat f _{k,h}$ and the Bellman operator is upper bounded by:
\begin{small}
\begin{align*}
     &\big|\hat{f}_{k,h}(s,a)-\cT_h V_{k,h+1}(s,a)\big|\leq \hat{\beta}_k D_{\cF_h}(z; z_{[k - 1],h}, \bar\sigma_{[k - 1],h}).
\end{align*}\end{small}
Therefore, we introduce the exploration bonus $b_{k,h}$ and construct the optimistic value function $Q_{k,h}(s,a)$,i.e.,
$Q_{k,h}(s,a)\approx \hat f_{k,h}(s,a)+b_{k,h}(s,a), $ where $b_{k, h}(s, a) = \hat\beta_k \cdot \bar D_\cF\big((s, a); z_{[k - 1], h}, \sigma_{[k - 1], h}\big)$. 
Inspired by \citet{hu2022nearly,he2022nearly,agarwal2022vo}, in order to estimate the gap between the optimistic value function $V_{k,h}(s)$ and the optimal value function $V_{h}^*(s)$, we further construct the pessimistic estimator $\check f_{k,h}$ by the following weighted regression \begin{small}
\begin{align*}
    \check f_{k, h} =  \argmin\limits_{f_h \in \cF_h} \sum\limits_{i \in [k - 1]} \frac{1}{\bar \sigma_{i, h}^2} (f_h(s_h^i, a_h^i) - r_h^i - \check V_{k, h + 1}(s_{h + 1}^i))^2,
\end{align*} \end{small}
and introduce a negative exploration bonus when generating the pessimistic estimator.
   $\check{Q}_{k,h}(s,a)\approx \check f_{k,h}(s,a)-\check b_{k,h}(s,a)$, where $\check b_{k, h}(s, a) = \check\beta_k \cdot \bar D_\cF((s, a); z_{[k - 1], h}, \sigma_{[k - 1], h}$. 
Different from \citet{agarwal2022vo}, the pessimistic value function $\check f_{k,h}$ is computed from a similar weighted-regression scheme as in the case of the optimistic estimator, leading to a tighter confidence set.

\begin{algorithm*}[h!]
   \caption{Monotonic Q-Learning with UCB (MQL-UCB)}
   \begin{algorithmic}[1]\label{algorithm1}
  \REQUIRE Regularization parameter $\lambda$, confidence radius $\{\tilde \beta_k\}_{k \in [K]}, \{\hat \beta_k\}_{k \in [K]}$ and $\{\check \beta_k\}_{k \in [K]}$. 
  \STATE Initialize $k_\text{last}=0$. For each stage $h\in[H]$ and state-action $(s,a)\in \cS \times \cA$, set $ Q_{0,h}(s,a)\leftarrow H, \check{Q}_{0,h}(s,a)\leftarrow 0$. 
\FOR{episodes $k=1,\ldots,K$}
\STATE Received the initial state $s_1^k$. 
   \FOR{stage $h=H,\ldots,1$}
       \label{algorithm:det}
       \IF {there exists a stage $h'\in[H]$ \highlight{such that \eqref{eq:def:switch} holds}}
         \STATE $\hat f_{k, h} \gets \argmin_{f_h \in \cF_h} \sum_{i \in [k - 1]} \frac{1}{\bar \sigma_{i, h}^2} (f_h(s_h^i, a_h^i) - r_h^i - V_{k, h + 1}(s_{h + 1}^i))^2$. \label{algorithm:line:hat}
       \STATE $\check f_{k, h} \gets  \argmin_{f_h \in \cF_h} \sum_{i \in [k - 1]} \frac{1}{\bar \sigma_{i, h}^2} (f_h(s_h^i, a_h^i) - r_h^i - \check V_{k, h + 1}(s_{h + 1}^i))^2$. \label{algorithm:line:check}
       \STATE $\tilde{f}_{k, h} \gets  \argmin_{f_h \in \cF_h} \sum_{i \in [k - 1]} \big(f_h(s_h^i, a_h^i) - \left(r_h^i + V_{k, h + 1}(s_{h + 1}^i)\right)^2\big)^2 $. \label{algorithm:line:tilde}
       \STATE $\qvalue_{k,h}(s,a) \gets \min\Big\{\hat f_{k, h}(s, a)+b_{k, h}(s, a) ,{\qvalue}_{k-1,h}(s,a), 1\Big\}$.
       \STATE $\check{\qvalue}_{k,h}(s,a)\gets \max\Big\{\check f_{k, h}(s, a)-\check b_{k, h}(s, a) ,\check{\qvalue}_{k-1,h}(s,a),0\Big\}$.
       \STATE Set the last updating episode $k_{\text{last}}\gets k$ and number of policies as $l_k \gets l_{k - 1} + 1$. 
       \ELSE 
       \STATE $\qvalue_{k,h}(s,a)\gets \qvalue_{k-1,h}(s,a)$, $\check{\qvalue}_{k,h}(s,a) \gets \check{\qvalue}_{k-1,h}(s,a)$ and $l_k \gets l_{k - 1}$. 
       \ENDIF  
       \STATE Set the policy $\pi^k$ as $\pi_h^k(\cdot) \gets \argmax_{a \in \cA} \qvalue_{k,h}(\cdot,a)$. $\vvalue_{k,h}(s) \gets \max_{a}\qvalue_{k,h}(s,a)$, $\check{\vvalue}_{k,h}(s) \gets \max_{a}\check{\qvalue}_{k,h}(s,a)$. 
       \label{algorithm:line3}
   \ENDFOR
   \FOR{stage $h=1,\ldots,H$}
   \STATE Take action $a_h^k\leftarrow \pi_h^k(s_h^k)$ and receive next state $s_{h+1}^k$. \label{algorithm:line4}
   \STATE Set the estimated variance $\sigma_{k,h}$ as in \eqref{eq:variance} and set $\bar\sigma_{k,h}\leftarrow \max\big\{\sigma_{k,h}, \alpha,   \gamma \cdot D^{1/2}_{\cF_h}(z; z_{[k - 1],h}, \bar\sigma_{[k - 1],h}) \big\}$. \label{algorithm:def-variance}
   \ENDFOR
\ENDFOR
   \end{algorithmic}
\end{algorithm*}

\subsection{Variance Estimator}

In this subsection, we provide more details about the variance estimator $\sigma_{k,h}$, which measures the variance of the value function $V_{k,h+1}(s_{h+1}^k)$ \highlight{caused by the stochastic transition from state-action pair $(s_h^k, a_h^k)$}. \highlight{According to the definition of $\hat f_{k, h}$, the difference between the estimator $\hat f_{k,h}$ and $\cT_h V_{k,h+1}$ satisfies}\begin{small}
\begin{align*}
   &\lambda +  \sum_{i \in [k - 1]} \frac{1}{\bar \sigma_{i, h}^2}\big(\hat f_{k, h}(s_h^i, a_h^i) - \cT_h V_{k, h + 1}(s_h^i, a_h^i)\big)^2\le 2 \sum_{i \in [k - 1]} \frac{1}{\bar \sigma_{i, h}^2}\big(f(s_h^i, a_h^i) - \hat f_k^*(s_h^i, a_h^i)\big) \cdot \hat \eta_h^i(V_{k, h + 1}),
\end{align*}\end{small}
where the noise $\hat \eta_h^k(V) =r_h^k + V(s_{h + 1}^k) -  \EE_{s' \sim \PP_h(s_h^k, a_h^k)}[r_h(s_h^k, a_h^k, s')+ V(s')]$ denotes the stochastic transition noise for the value function $V$. However, the generation of the target function $V_{k,h+1}$ relies on previously collected data $z_{[k_{\text{last}}]}$, thus violating the conditional independence property. Consequently, the noise term $\hat \eta_h^k(V_{k,h+1})$ may not be unbiased. To address this challenge, it becomes imperative to establish a uniform convergence property over the potential function class, which is first introduced in linear MDPs by \citet{jin2020provably}.

Following the previous approach introduced by \citet{azar2017minimax,hu2022nearly,agarwal2022vo, he2022nearly}, 
we decompose the noise of \emph{optimistic} value function $\hat \eta_h^k(V_{k,h+1})$ into the noise of \emph{optimal} value function $\hat \eta_h^k(V_{h+1}^*)$ and the noise $\hat \eta_h^k\big(V_{k,h+1}-V_{h+1}^*\big)$ \highlight{to reduce the extra $\log \big(N_\epsilon(K)\big)$ dependency in the confidence radius. }With the noise decomposition, we evaluate the variances $[\VV_h V_{h+1}^*](s,a)$ and  $\big[\VV_h (V_{k,h+1}-V_{h+1}^*)\big](s,a)$ separately.

For the variance of the \emph{optimal} value function $[\VV_h V_{h+1}^*](s,a)$, since the optimal value function $V_{h+1}^*$ is independent with the collected data $z_{[k_{\text{last}}]}$, it prevents a uniform convergence-based argument over the function class. However, the optimal value function $V_{h+1}^*$ is unobservable, and it requires several steps to estimate the variance. In summary, we utilize the optimistic function $V_{k,h+1}$ to approximate the optimal value function $V_{h+1}^*$ and calculate the estimated variance $[\bar\VV_h \vvalue_{k,h}]$ as the difference between the second-order moment and the square of the first-order moment of $V_{k,h}$
\begin{align*}
     [\bar{\VV}_{k,h}\vvalue_{k,h+1}] =\tilde f_{k,h}-\hat{f}_{k,h}^2.
\end{align*}
Here, the approximate second-order moment $\tilde f_{k,h}$
and the approximate first-order moment $\hat{f}_{k,h}$ is generated by the least-square regression (Lines \ref{algorithm:line:hat} and \ref{algorithm:line:tilde}). In addition, we introduce the exploration bonus $E_{k,h}$ to control the estimation error between the estimated variance and the true variance of $V_{k,h+1}$ and $F_{k,h}$ to control the sub-optimality gap between $V_{k,h+1}$ and $V_{h+1}^*$:
\begin{small}
\begin{align} 
    &E_{k,h}=(2L{\beta}_k+\tilde \beta_k) \min\big(1, \bar D_{\cF_h}(z; z_{[k - 1],h}, \bar\sigma_{[k - 1],h})\big), \notag\\
   &F_{k,h}=\big(\log (\cN(\cF,  \epsilon)\cdot \cN_{\epsilon}(K))\big) \cdot \min\big(1,2 \hat{f}_{k,h}(s_h^k,a_h^k) - 2 \check{f}_{k,h}(s_h^k,a_h^k) + 4\beta_k \bar D_{\cF_h}(z; z_{[k - 1],h}, \bar\sigma_{[k - 1],h})\big), \notag
\end{align} \end{small}where \begin{small} \begin{align}
    &\tilde \beta_k = \sqrt{128 \log \frac{\cN_\epsilon(k) \cdot \cN(\cF,  \epsilon) \cdot H}{\delta} + 64L\epsilon \cdot k},  \beta_k = \sqrt{128 \cdot \log\frac{\cN_\epsilon(k) \cdot \cN(\cF,  \epsilon) H}{\delta}+ 64 L \epsilon \cdot k/ \alpha^2}.  \notag
\end{align} \end{small}
For the variance of the sub-optimality gap, $\big[\VV_h (V_{k,h+1}-V_{h+1}^*)\big](s, a)$, based on the structure of optimistic and pessimistic value function, it can be approximate and upper bounded by
\begin{align*}
    &[\VV_h(\vvalue_{k,h+1}-\vvalue_{h+1}^*)](s_h^k,a_h^k)\leq 2 [\PP_h(\vvalue_{k,h+1}-\vvalue_{h+1}^*)](s_h^k,a_h^k)\notag\\
        &\leq 2 [\PP_h(\vvalue_{k,h+1}-\check{\vvalue}_{k,h+1})](s_h^k,a_h^k) \approx 2 \hat{f}_{k,h}(s_h^k,a_h^k) - 2 \check{f}_{k,h}(s_h^k,a_h^k),
\end{align*}
where the approximate first-order moments $\hat{f}_{k,h}$, $\check{f}_{k,h}$ are generated by the least-square regression (Lines \ref{algorithm:line:hat} and \ref{algorithm:line:check}) and can be dominated by the exploration bonus $F_{k,h}$.

 In summary, we construct the estimated variance $\sigma_{k,h}$ as:
\begin{align}
\sigma_{k,h}=\sqrt{[\bar{\VV}_{k,h}\vvalue_{k,h+1}](s_h^k,a_h^k)+E_{k,h}+F_{k,h}}. \label{eq:variance}
\end{align}

\subsection{Monotonic Value Function}
As we discussed in the previous subsection, we decompose the value function $V_{k,h}$ and evaluate the variance $[\VV_h V_{h+1}^*](s,a)$,  $\big[\VV_h (V_{k,h+1}-V_{h+1}^*)\big](s,a)$ separately. However, for each state-action pair $(s_h^k,a_h^k)$ and any subsequent episode $i>k$, the value function $V_{i,h}$ and corresponding variance $\big[\VV_h (V_{i,h+1}-V_{h+1}^*)\big](s_h^k,a_h^k)$ may differ from the previous value function $V_{k,h}$ and variance $\big[\VV_h (V_{k,h+1}-V_{h+1}^*)\big](s_h^k,a_h^k)$. Extending the idea proposed by \citet{he2022nearly} for linear MDPs, we ensure that the pessimistic value function $\check Q_{k,h}$ maintains a monotonically increasing property during updates, while the optimistic value function $Q_{k,h}$ maintains a monotonically decreasing property. Leveraging these monotonic properties, we can establish an upper bound on the variance as follows:
\begin{small}
\begin{align*}
     &\big[\VV_h (V_{i,h+1}-V_{h+1}^*)\big](s_h^k,a_h^k)
        \leq 2 [\PP_h(\vvalue_{i,h+1}-\check{\vvalue}_{i,h+1})](s_h^k,a_h^k)\leq 2 [\PP_h(\vvalue_{k,h+1}-\check{\vvalue}_{k,h+1})](s_h^k,a_h^k)\leq F_{k,h}.
\end{align*} \end{small}
In this scenario, the previously employed variance estimator $\sigma_{k,h}$ offers a consistent and uniform upper bound for the variance across all subsequent episodes.

\section{Main Results} \label{sec-4-1}
In this section, we present the main theoretical results. In detail, we provide the regret upper bound of Algorithm MQL-UCB in Theorem \ref{thm:main}. As a complement, in Section~\ref{sec-4-2}, we provide a lower bound on the communication complexity for cooperative linear MDPs. Finally, in Section~\ref{sec:connection}, we discussed the connection between the generalized eluder dimension and the standard eluder dimension. 

 The following theorem provides the regret upper bound of Algorithm MQL-UCB. 
\begin{theorem}\label{thm:main}
    Suppose Assumption \ref{assumption:complete} holds for function classes $\cF := \{\cF_h\}_{h = 1}^H$ and Definition \ref{def:ged} holds with $\lambda = 1$. If we set $\alpha = 1 / \sqrt{KH}$, $\epsilon = (KLH)^{-1}$, and set $\hat \beta_k^2 = \check \beta_k^2 := O\big(\log\frac{2k^2 \left(2\log (L^2k /\alpha^4)+2\right)\cdot\left(\log(4L /\alpha^2)+2\right)}{\delta / H}\big) \cdot \left[\log(\cN_\cF( \epsilon)) + 1\right] + O(\lambda) + O(\epsilon k L / \alpha^2)$, then 
    with probability $1 - O(\delta)$, the regret of MQL-UCB is upper bounded as follows: 
    \begin{align*} 
       &\regret(K) = \tilde{O}\big(\sqrt{\dim(\cF) \log \cN \cdot H\var_K}\big)  \\&+\tilde O\big(H^{2.5}\dim^2(\cF) \sqrt{\log \cN} \log(\cN \cdot \cN_b)\big) \cdot \sqrt{H \log \cN + \dim(\cF) \log(\cN \cdot \cN_b)} 
   \end{align*}
   where $\var_K := \sum_{k = 1}^K \sum_{h = 1}^H [\VV_h V_{h + 1}^{\pi^k}] (s_h^k, a_h^k) = \tilde{O}(K)$, we denote the covering number of bonus function class by $\cN_b$, the covering number of function class $\cF$ by $\cN$, and the dimension $\dim_{\alpha, K}(\cF)$ by $\dim(\cF)$. Meanwhile, the switching cost of Algorithm \ref{algorithm1} is $O(\dim_{\alpha, K}(\cF)\cdot H)$. 
\end{theorem}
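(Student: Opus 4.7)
The plan is to decompose the proof into four components: (i) concentration of the weighted regressors, (ii) optimism, pessimism, and monotonicity of the $Q$ and $\check Q$ iterates, (iii) regret decomposition with a variance-weighted Cauchy--Schwarz step, and (iv) the switching cost. Most of the structural ingredients are foreshadowed by the algorithm description in Section~3, so the work is to make them quantitative.

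First, I would prove that with probability $1-O(\delta)$, uniformly over $k$ and $h$, the Bellman target lies inside the confidence ellipsoid:
\begin{align*}
\sum_{i\in[k-1]}\frac{1}{\bar\sigma_{i,h}^2}\big(\hat f_{k,h}(z_{i,h}) - \cT_h V_{k,h+1}(z_{i,h})\big)^2 \le \hat\beta_k^2,
\end{align*}
with analogous bounds for $\check f_{k,h}$ and $\tilde f_{k,h}$. The argument is a Freedman/Bernstein concentration for the weighted empirical process, combined with a union bound over an $\epsilon$-net of the data-dependent ``target'' value functions $V_{k,h+1}$ and of the bonus class $\cB$. This is where the rare-switching rule \eqref{eq:def:switch} pays off: the number of distinct value functions generated by Algorithm~\ref{algorithm1} is only $l_K+1$, so the effective covering number is $\cN_\epsilon(K)=[\cN_\cF(\epsilon/2)\cdot\cN(\cB,\epsilon/2\hat\beta_K)]^{l_K+1}$ instead of the $K$-exponent that naive uniform convergence would require. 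Assumption~\ref{assumption:complete} ensures $\cT_h V_{k,h+1}, \cT_h^2 V_{k,h+1}\in\cF_h$ so the regression is well-specified. From this ellipsoidal bound and Definition~\ref{def:ged}, I then deduce the pointwise uncertainty estimate $|\hat f_{k,h}(z)-\cT_h V_{k,h+1}(z)|\le \hat\beta_k\, D_{\cF_h}(z;z_{[k-1],h},\bar\sigma_{[k-1],h})$, and the analog for $\check f_{k,h}$.

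Second, I would establish by backward induction on $h$ that $\vvalue_{k,h}(s)\ge \vvalue_h^*(s)\ge \check\vvalue_{k,h}(s)$, and by the $\min$/$\max$ clipping in Algorithm~\ref{algorithm1} that $\qvalue_{k,h}\ge\qvalue_{k+1,h}$ and $\check\qvalue_{k,h}\le\check\qvalue_{k+1,h}$. These monotonic structures are precisely what makes $\sigma_{k,h}^2$ computed at episode $k$ a valid uniform upper bound on $[\VV_h V_{j,h+1}](s_h^k,a_h^k)$ for every future episode $j\ge k$ that reuses the data point $z_{k,h}$ with weight $\bar\sigma_{k,h}$, legitimizing the weighted regression across switches. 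The variance decomposition at the end of Section~3 then shows $\sigma_{k,h}^2\ge[\VV_h V_{h+1}^*](s_h^k,a_h^k)+[\VV_h(V_{k,h+1}-V_{h+1}^*)](s_h^k,a_h^k)$ up to the correction bonuses $E_{k,h}+F_{k,h}$, which absorb the error of estimating the two variance components from $\hat f_{k,h},\check f_{k,h},\tilde f_{k,h}$.

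Third, optimism gives $\regret(K)\le\sum_{k=1}^K(V_{k,1}(s_1^k)-V_1^{\pi^k}(s_1^k))$, and the standard per-episode telescoping together with the uncertainty bound yields
\begin{align*}
\regret(K)\lesssim \sum_{k=1}^K\sum_{h=1}^H \hat\beta_k\,\bar D_{\cF_h}(z_{k,h};z_{[k-1],h},\bar\sigma_{[k-1],h}) + \text{(martingale)},
\end{align*}
where the martingale is controlled by Azuma--Hoeffding. I then apply Cauchy--Schwarz with weights $\bar\sigma_{k,h}$:
\begin{align*}
\sum_{k,h}\bar\sigma_{k,h}\cdot\frac{\bar D_{\cF_h}(z_{k,h};\cdot)}{\bar\sigma_{k,h}}\le \Bigg(\sum_{k,h}\bar\sigma_{k,h}^2\Bigg)^{1/2}\Bigg(\sum_{k,h}\frac{\bar D_{\cF_h}^2(z_{k,h};\cdot)}{\bar\sigma_{k,h}^2}\Bigg)^{1/2}.
\end{align*}
The second factor is $O(\sqrt{H\dim_{\alpha,K}(\cF)})$ by Definition~\ref{def:ged} and the oracle constant in Definition~\ref{def:bonus-oracle}. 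For the first factor, law of total variance gives $\sum_{k,h}[\VV_h V_{h+1}^{\pi^k}](s_h^k,a_h^k)=\var_K$, and the monotone-value comparison lets me bound $\sum_{k,h}\sigma_{k,h}^2$ by $\var_K$ plus lower-order bonus contributions $E_{k,h}+F_{k,h}+\gamma^2 \bar D_{\cF_h}$; combined with $\hat\beta_K=\tilde O(\sqrt{\log\cN})$, this produces the leading $\tilde O(\sqrt{\dim(\cF)\log\cN\cdot H\var_K})$ term, while the residual bonus contributions, handled by a self-bounding recursion in the style of \citet{he2022nearly,agarwal2022vo}, give the additive $\tilde O(H^{2.5}\dim^2(\cF)\sqrt{\log\cN}\log(\cN\cdot\cN_b))\cdot\sqrt{H\log\cN+\dim(\cF)\log(\cN\cdot\cN_b)}$ term. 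The switching cost is immediate from the trigger \eqref{eq:def:switch}: each switch accumulates at least $\chi$ units of $\sum_i \bar\sigma_{i,h}^{-2}\bar D_{\cF_h}^2(z_{i,h};\cdot)$ at some stage $h$, and by Definition~\ref{def:ged} the total accumulation at each stage across $K$ episodes is $O(\dim_{\alpha,K}(\cF_h))$, summing to $O(H\dim_{\alpha,K}(\cF))$ for a constant $\chi$.

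The hardest step is the interplay between the concentration in step one and the self-bounding analysis in step three. Controlling the data-dependent target class $V_{k,h+1}$ requires the rare-switching structure (to keep $l_K$ small) \emph{and} the covering of $\cB$ (since $V_{k,h+1}$ also inherits the bonus $b_{k,h+1}$), and the final regret bound only closes because the sharper pessimistic estimator $\check f_{k,h}$ in Algorithm~\ref{algorithm1} lets us upper bound $[\PP_h(V_{k,h+1}-V_{h+1}^*)]$ by $[\PP_h(V_{k,h+1}-\check V_{k,h+1})]\approx \hat f_{k,h}-\check f_{k,h}$, which is itself bonus-dominated. This is precisely the place where the paper's improvements over \citet{agarwal2022vo} are decisive, and the bookkeeping for the recursion is where most of the technical effort will be spent.
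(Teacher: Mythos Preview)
Your four-part plan matches the paper's proof almost exactly: concentration via covering over the rare-switching value class (the paper's Lemmas \ref{lemma:tilde-event}--\ref{lemma:hat-event-hoeffding3}), optimism/pessimism by backward induction (Lemma \ref{lemma:opt-pess}), variance-weighted Cauchy--Schwarz plus a self-bounding recursion on $\sum_{k,h}\sigma_{k,h}^2$ (Lemmas \ref{lemma:sum-bonus}, \ref{lemma:transition}, \ref{lemma:transition-OP}, \ref{lemma:total-variance}), and the switching-cost budget argument (Lemma \ref{lemma:switch}). One refinement: to preserve the variance-dependent leading term $\tilde O(\sqrt{\dim(\cF)\log\cN\cdot H\var_K})$ you must control the martingale in step (iii) with Freedman's inequality (as the paper does in events $\cE_1,\cE_2$) and then close the resulting $\sqrt{\sum_{k,h}[\VV_h(V_{k,h+1}-V_{h+1}^{\pi^k})]}$ via the same self-bounding recursion; with Azuma--Hoeffding that piece degrades to $\tilde O(\sqrt{HK})$ and you lose the $\var_K$ dependence.
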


In the worst case, when the number of episodes $K$ is sufficiently large, the leading term in our regret bound is $\tilde O \big(\sqrt{\dim(\cF) \log \cN \cdot HK}\big)$. 
Our result matches the optimal regret achieved by \cite{agarwal2022vo}. While their proposed algorithm involves the execution of a complicated and non-Markovian policy with an action selection phase based on two series of optimistic value functions and the prefix trajectory, MQL-UCB only requires the knowledge of the current state and a single optimistic state-action value function $Q$ to make a decision over the action space. In addition, our theorem also provides a variance-dependent regret bound, which is adaptive to the randomness of the underlying MDP encountered by the agent. Our definition of $\var_K$ is inspired by \citet{zhao2023variance} and \citet{zhou2023sharp}, which studied variance-adaptive RL under tabular MDP setting and linear mixture MDP setting, respectively. 

As a direct application, we also present the regret guarantee of MQL-UCB for linear MDPs. 

\begin{corollary}\label{coro:linear MDP}
Under the same conditions of Theorem \ref{thm:main}, assume that the underlying MDP is a linear MDP such that $\cF:=\{\cF_h\}_{h \in [H]}$ is composed of linear function classes with a known feature mapping over the state-action space $\bphi : \cS \times \cA \to \RR^d$. If we set $\lambda = 1$, $\alpha = 1 / \sqrt{K}$, then with probability $1 - O(\delta)$, the following cumulative regret guarantee holds for MQL-UCB: 
\begin{align*} 
    \regret(K) &= \tilde{O} \big(d \sqrt{HK} + H^{2.5}d^5\sqrt{H + d^2}\big). 
\end{align*}
\end{corollary}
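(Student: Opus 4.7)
The plan is to apply Theorem~\ref{thm:main} directly, after specializing each of the three complexity quantities $\dim_{\alpha,K}(\cF)$, $\log \cN_\cF(\epsilon)$, and $\log \cN(\cB, \epsilon)$ to the linear setting. For the linear function class $\cF_h = \{(s,a) \mapsto \langle \boldsymbol{\phi}(s,a), \boldsymbol{\theta}\rangle : \|\boldsymbol{\theta}\| \le R\}$, the key observation (already flagged in the remark after Definition~\ref{def:ged}) is that the uncertainty measure $D_{\cF_h}^2(z; z_{[t-1]},\sigma_{[t-1]})$ collapses to the weighted elliptical norm $\|\boldsymbol{\phi}(z)\|_{\boldsymbol{\Sigma}_{t-1}^{-1}}^2$ with $\boldsymbol{\Sigma}_{t-1} = \lambda \mathbf{I} + \sum_{s \le t-1} \sigma_s^{-2}\, \boldsymbol{\phi}(z_s)\boldsymbol{\phi}(z_s)^\top$, so that the supremum in $D_\cF^2$ is attained by perturbing $f_1 - f_2$ along $\boldsymbol{\Sigma}_{t-1}^{-1}\boldsymbol{\phi}(z)$.

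First, I would verify three complexity bounds. (i) $\dim(\cF) = \tilde O(d)$: applying the elliptical potential lemma to $\sum_i \min(1, \|\boldsymbol{\phi}(z_i)\|_{\boldsymbol{\Sigma}_{i-1}^{-1}}^2)$ under $\sigma_i \ge \alpha$ gives the usual $\tilde O(d)$ bound. (ii) $\log \cN_\cF(\epsilon) = \tilde O(d)$ by a standard parameter-space covering of a bounded linear class in $\RR^d$. (iii) $\log \cN(\cB, \epsilon) = \tilde O(d^2)$: the bonus oracle $\bar D_\cF(\cdot; z_{[t-1]},\sigma_{[t-1]})$ is fully determined by the positive semidefinite matrix $\boldsymbol{\Sigma}_{t-1} \in \RR^{d\times d}$, whose eigenvalues are lower bounded by $\lambda$ and upper bounded polynomially in $K$ through the choice of $\alpha = 1/\sqrt{K}$, so a covering over the matrix space yields $d^2$-many scalar parameters to cover.

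Next I would substitute these three quantities, together with the trivial bound $\var_K \le K$ (since each $V_{h+1}^{\pi^k} \in [0,1]$), into the regret bound of Theorem~\ref{thm:main}. The leading term becomes $\sqrt{\dim(\cF) \log \cN \cdot HK} = \tilde O(d\sqrt{HK})$. For the lower-order term, plugging in $\dim(\cF) = \tilde O(d)$, $\log \cN = \tilde O(d)$ and $\log \cN_b = \tilde O(d^2)$ gives
\begin{align*}
H^{2.5}\dim^2(\cF)\sqrt{\log\cN}\log(\cN\cdot\cN_b)\sqrt{H\log\cN + \dim(\cF)\log(\cN\cdot\cN_b)} \\
= \tilde O\bigl(H^{2.5}\cdot d^2 \cdot \sqrt{d}\cdot d^2 \cdot \sqrt{Hd + d\cdot d^2}\bigr) = \tilde O\bigl(H^{2.5} d^5 \sqrt{H + d^2}\bigr),
\end{align*}
which summed with the leading term yields the corollary.

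The main obstacle in this plan is the careful accounting of the $\tilde O(d^2)$ bonus-class covering, rather than $\tilde O(d)$: a naive attempt to parameterize $\bar D_\cF$ by a $d$-dimensional vector would be incorrect since the bonus depends on an entire matrix inverse, and an $\tilde O(d)$ bound there would propagate into an overly optimistic lower-order term. Establishing the spectral bounds on $\boldsymbol{\Sigma}_{t-1}$ needed to turn the matrix covering into a finite $\tilde O(d^2)$ cover is the only nontrivial step; everything else is a direct substitution into the general theorem.
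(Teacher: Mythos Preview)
Your proposal is correct and is exactly the intended derivation: the paper presents Corollary~\ref{coro:linear MDP} as a direct specialization of Theorem~\ref{thm:main} without writing out a separate proof, and your three complexity estimates $\dim_{\alpha,K}(\cF)=\tilde O(d)$, $\log\cN_\cF(\epsilon)=\tilde O(d)$, $\log\cN(\cB,\epsilon)=\tilde O(d^2)$ together with $\var_K=\tilde O(K)$ are precisely what is needed to recover the stated bound. Your emphasis on the $\tilde O(d^2)$ bonus-class covering (via the matrix parameterization of $\bar D_\cF(\cdot)=\|\boldsymbol{\phi}(\cdot)\|_{\boldsymbol{\Sigma}^{-1}}$) is the one point that requires care, and you have identified it correctly.
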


\begin{remark} 
    The leading term in our regret bound, as demonstrated in Corollary \ref{coro:linear MDP}, matches the lower bound proved in \citet{zhou2021nearly} for linear MDPs. Similar optimal regrets have also been accomplished  by \citet{he2022nearly} and \citet{agarwal2022vo} for linear MDPs. Since we also apply weighted regression to enhance the precision of our pessimistic value functions, the lower order term (i.e., $H^{2.5}d^5\sqrt{H + d^2}$) in our regret has a better dependency  on $H$ than VO$Q$L \citep{agarwal2022vo} and LSVI-UCB++ \citep{he2022nearly}, which may be of independent interest when considering long-horizon MDPs. \highlight{In addition, the switching cost of Algorithm \ref{algorithm1} is bounded by $\tilde O(dH)$, which matches the lower bound in \citet{gao2021provably} for deterministic algorithms and our new lower bound in Theorem~\ref{lemma:lower} for arbitrary algorithms up to logarithmic factors. For more details about our lower bound, please refer to Appendix \ref{sec-lower-bound}}. 
\end{remark}

\section{Conclusion and Future Work}
In this paper, we delve into the realm of RL with general function approximation.
We proposed the MQL-UCB algorithm with an innovative uncertainty-based rare-switching strategy in general function approximation. Notably, our algorithm only requires $\tilde O(d H)$ updating times, which matches with the lower bound established by \citet{gao2021provably} up to logarithmic factors, and obtains a $\tilde O(d\sqrt{HK})$ regret guarantee, which is near-optimal when restricted to the linear cases. 
%Nonetheless, when dealing with function classes extending beyond linearity, the question arises as to whether our algorithm maintains its near-optimality. As part of our future endeavors, we aim to develop a theoretical lower bound that support the efficiency of our framework.

\begin{ack}
We thank the anonymous reviewers for their helpful comments. HZ is partially supported by the research award from Cisco. JH and QG are partially supported by the research fund from UCLA-Amazon Science Hub. The views and conclusions contained in this paper are those of the authors and should not be interpreted as representing any funding agencies.
\end{ack}

\bibliography{ref.bib}
\bibliographystyle{ims.bst}

\appendix

%\tableofcontent

\section{Additional Related Work}
\subsection{RL with Linear Function Approximation}
In recent years, a substantial body of research has emerged to address the challenges of solving Markov Decision Processes (MDP) with linear function approximation, particularly to handle the vast state and action spaces \citep{jiang2017contextual, dann2018oracle, yang2019sample, du2019good, sun2019model,  jin2020provably, wang2020optimism, zanette2020frequentist, yang2020reinforcement, modi2020sample, ayoub2020model, zhou2021nearly, he2021logarithmic, zhou2022computationally,he2022nearly,zhao2023variance}. These works can be broadly categorized into two groups based on the linear structures applied to the underlying MDP. One commonly employed linear structure is known as the linear MDP \citep{jin2020provably}, where the transition probability function $\PP_h$ and reward function $r_h$ are represented as linear functions with respect to a given feature mapping $\bphi: \cS\times \cA \rightarrow \RR^d$. Under this assumption, the LSVI-UCB algorithm \citep{jin2020provably} has been shown to achieve a regret guarantee of $\tilde{O}(\sqrt{d^3H^4K})$. Subsequently, \citet{zanette2020frequentist} introduced the RLSVI algorithm, utilizing the Thompson sampling method, to attain a regret bound of $\tilde{O}(\sqrt{d^4H^5K})$. More recently, \citet{he2022nearly} improved the regret guarantee to $\tilde{O}(\sqrt{d^2H^3K})$ with the LSVI-UCB++ algorithm, aligning with the theoretical lower bound in \citet{zhou2021nearly} up to logarithmic factors. Another line of research has focused on linear mixture MDPs \citep{modi2020sample, yang2020reinforcement, jia2020model,ayoub2020model,zhou2021nearly}, where the transition probability is expressed as a linear combination of basic models $\PP_{1},\PP_{2},..,\PP_{d}$. For linear mixture MDPs, \citet{jia2020model} introduced the UCRL-VTR algorithm, achieving a regret guarantee of $\tilde O(\sqrt{d^2H^4K})$. Subsequently, \citet{zhou2021nearly} enhanced this result to $\tilde O(\sqrt{d^2H^3K})$, reaching a nearly minimax optimal regret bound. Recently, several works focused on time-homogeneous linear mixture MDPs, and removed the dependency on the episode length (horizon-free) \citep{zhang2019regret,zhou2020provably,zhao2023variance}.

\subsection{RL with General Function Approximation}
Recent years have witnessed a flurry of progress on RL with nonlinear function classes. To explore the theoretical limits of RL algorithms, various complexity measures have been developed to characterize the hardness of RL instances such as Bellman rank \citep{jiang2017contextual}, Witness rank \citep{sun2019model}, eluder dimension \citep{russo2013eluder}, Bellman eluder dimension \citep{jin2021bellman}, Bilinear Classes \citep{du2021bilinear}, Decision-Estimation Coefficient \citep{foster2021statistical}, Admissible Bellman Characterization \citep{chen2022general}, generalized eluder dimension \citep{agarwal2022vo}. Among them, only \cite{agarwal2022vo} yields a near-optimal regret guarantee when specialized to linear MDPs. In their paper, they proposed a new framework named generalized eluder dimension to handle the weighted objects in weighted regression, which can be seen as a variant of eluder dimension. In their proposed algorithm VO$Q$L, they adopted over-optimistic and over-pessimistic value functions in order to bound the variance of regression targets, making it possible to apply a weighted regression scheme in the model-free framework. 

\subsection{RL with Low Switching Cost}

Most of the aforementioned approaches necessitate updating both the value function and the corresponding policy in each episode, a practice that proves to be inefficient when dealing with substantial datasets. To overcome this limitation, a widely adopted technique involves dividing the time sequence into several epochs and updating the policy only between different epochs. In the context of the linear bandit problem, \citet{abbasi2011improved} introduced the rarely-switching OFUL algorithm, where the agent updates the policy only when the determinant of the covariance matrix doubles. This method enables the algorithm to achieve near-optimal regret of $\tilde O(\sqrt{K})$ while maintaining policy updates to $\tilde O(d\log K)$ times. When the number of arms, denoted as $|\cD|$, is finite, \citet{ruan2021linear} proposed an algorithm with regret bounded by $\tilde O(\sqrt{dK})$ and a mere $\tilde O(d\log d\log K)$ policy-updating times.
In the realm of episodic reinforcement learning, \citet{bai2019provably} and \citet{zhang2021reinforcement} delved into tabular Markov Decision Processes, introducing algorithms that achieve a regret of $\tilde O(\sqrt{T})$ and $\tilde O(SA\log K)$ updating times. \citet{wang2021provably} later extended these results to linear MDPs, unveiling the LSVI-UCB-RareSwitch algorithm. LSVI-UCB-RareSwitch delivers a regret bound of $\tilde O(d^3H^4K)$ with a policy switching frequency of $\tilde O(d\log T)$, which matches the lower bound of the switching cost \citep{gao2021provably} up to logarithmic terms.
Furthermore, if the policy is updated within fixed-length epochs \citep{han2020sequential}, the method is termed batch learning model, but this falls beyond the scope of our current work. \highlight{In addition, with the help of stochastic policies, \citet{zhang2022near} porposed an algorithm with $\tilde O(\sqrt{K})$ regret guarantee and only $\tilde O(H)$ swithcing cost for learning tabular MDPs. Later, \citet{huang2022towards} employed the stochastic policy in learning linear MDPs, which is able to find an $\epsilon$-optimal policy with only $\tilde O(H)$ switching cost.}

\section{Additional Results}

\subsection{Lower Bound for the Switching Cost}\label{sec-4-2}
As a complement, we prove a new lower bound on the {switching cost} for RL with linear MDPs. Note that linear MDPs lie in the class of MDPs studied in our paper with bounded generalized eluder dimension. In particular, the generalized eluder dimension of linear MDPs is $\tilde O(d)$.

\begin{theorem}\label{lemma:lower}
    For any algorithm \textbf{Alg} with expected switching cost less than $dH/(16\log K)$, there exists a hard-to-learn linear MDP, such that the expected regret of \textbf{Alg}  is at least $\Omega(K)$.
\end{theorem}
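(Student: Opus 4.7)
The plan is to prove this lower bound via Yao's minimax principle, reducing the claim for randomized algorithms to a claim about deterministic algorithms run against a worst-case distribution over hard linear MDPs. Since the expected switching cost of the randomized algorithm is at most $L := dH/(16\log K)$, Markov's inequality shows that with probability at least $1/2$ the realized number of switches is at most $2L$. Conditioning on this event and on the algorithm's internal random bits reduces the task to lower-bounding the expected regret of an arbitrary deterministic algorithm that performs at most $2L$ switches, evaluated on a prior over MDP instances.

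For the hard family, I would build a family of inhomogeneous linear MDPs of dimension $d$ indexed by a hidden parameter $\theta\in\Theta$ with $\log|\Theta|=\Omega(dH)$ bits of information. At each stage $h\in[H]$ the feature map is chosen so that the transition kernel can be independently tuned along $\Omega(d)$ linear directions, and the hidden coordinate $\theta_h$ encodes $\Omega(d)$ bits that determine which actions keep the agent on the ``reward path'' versus sending it to an absorbing zero-reward sink. The terminal reward is set so that the optimal value in each $M_\theta$ is bounded away from zero, while any Markov policy that misaligns with $\theta_h$ at some stage $h$ incurs an $\Omega(1/H)$ per-episode suboptimality gap. Verifying linearity of each $M_\theta$ reduces to a standard basis/coordinate construction as in \citet{jin2020provably}.

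The regret bound follows from an information-theoretic/Fano argument. Any deterministic algorithm with at most $2L$ switches executes at most $2L+1$ distinct Markov policies across the $K$ episodes. Under the uniform prior on $\Theta$, within each batch the mutual information between the observed trajectories and the posterior on $\theta$ can be bounded by $O(\log K)$ bits using an elliptical-potential-style argument, because repeated executions of the same policy yield rapidly diminishing information returns. Summing over $O(L)$ batches gives total information $O(L\log K)=O(dH)$, which is strictly less than $\log|\Theta|$ once constants are tuned. By Fano's inequality, the algorithm fails to identify $\theta$ on an $\Omega(1)$ fraction of the prior, and on each such instance at least a constant fraction of the $K$ episodes are played with a policy misaligned at some stage, contributing $\Omega(K)$ regret by the design of the reward structure.

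The main obstacle will be justifying the per-batch information bound in the presence of arbitrary stochastic Markov policies, since a naive trajectory-entropy bound scales with the batch length $k_\ell$ and could far exceed $\log K$. The resolution is to couple information gain directly to accrued regret: any trajectory that genuinely disambiguates a coordinate of $\theta$ must play a ``wrong'' action at some stage and thereby suffers $\Omega(1/H)$ regret, so within each batch either the posterior update is small (and the information bookkeeping goes through) or the batch already contributes $\Omega(k_\ell/H)$ to the total regret. Making this trade-off precise under arbitrary stochastic exploration inside a batch, and carefully accounting for constants to obtain $L=\Omega(dH/\log K)$ rather than the weaker $\Omega(H\log d/\log K)$ yielded by a direct Fano bound, is the technical heart of the proof.
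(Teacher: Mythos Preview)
Your proposal takes a different route from the paper and, as written, has a structural gap in how the factor $d$ enters the lower bound.

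The paper does not use Fano or any information-theoretic bookkeeping. It constructs $d/4$ \emph{disjoint} two-state sub-MDPs $\cM_1,\dots,\cM_{d/4}$ (so the ambient linear dimension is $d$), each hiding $H$ independent bits, namely the special action at every stage. Crucially, the adversary sets the initial state so that epoch $i$ (episodes $4(i{-}1)K/d{+}1$ through $4iK/d$) starts inside sub-MDP $\cM_i$, forcing the learner to solve all $d/4$ independent instances. Within one sub-MDP the argument is elementary: between two policy switches the agent runs a single (possibly stochastic) Markov policy, and the probability that this policy correctly guesses $C=2\log K$ fresh special actions in a row is exactly $2^{-C}$; a union bound over at most $K$ episodes shows the agent advances by at most $O(\log K)$ stages per switch. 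This gives $\Omega(H/\log K)$ switches per sub-MDP and hence $\Omega(dH/\log K)$ overall. The passage from expected to realized switching cost is handled in-line via $\EE[\ind(\delta\ge a)]\le\EE[\delta]/a$, not through a separate Yao/Markov reduction.

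Your construction tries to encode $\Omega(d)$ bits \emph{per stage} into a single instance, but you never specify a mechanism that compels the learner to interact with all $d$ directions. Since the agent visits one state per stage and selects its own actions, it can commit to a single direction, learn the corresponding $H$ bits with $O(H/\log K)$ switches, and thereafter incur zero regret while remaining oblivious to the other $d{-}1$ coordinates of $\theta$; a Fano argument over $\Theta$ does not preclude this, because low regret does not require identifying $\theta$. This is precisely why the paper resorts to adversarially changing the initial state across epochs --- absent that device, the best one can hope for is $\Omega(H/\log K)$, or with $d$ actions the $\Omega(H\log d/\log K)$ you yourself flag as insufficient. Separately, the per-batch information bound of $O(\log K)$ via an ``elliptical-potential-style argument'' is not applicable here: that lemma controls cumulative squared prediction error in weighted linear regression, not mutual information between trajectories and a discrete hidden parameter. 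Your fallback regret-coupling sketch yields only $\Omega(k_\ell/H)$ regret per informative batch, which sums to $\Omega(K/H)$ and contradicts the $\Omega(K)$ conclusion you draw from Fano.
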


\begin{remark}
Theorem \ref{lemma:lower} suggests that, to achieve a sublinear regret guarantee, an $\tilde \Omega(dH)$ switching cost is inevitable. This lower bound does not violate the minimax upper bound of $\tilde O(H)$ proved in \citet{zhang2022near,huang2022towards}, which additionally assume that the initial state $s_1^k$ is either fixed or sampled from a fixed distribution. In contrast, our work and \citet{gao2021provably} allow the initial state to be \textbf{adaptively} chosen by an adversarial environment, bringing more challenges to the learning of linear MDPs. When comparing our lower bound with the result in \citet{gao2021provably}, it is worth noting that their focus is solely on deterministic algorithms, and they suggest that an $\tilde \Omega(dH)$ switching cost is necessary. As a comparison, our result holds for any algorithm with arbitrary policies including both deterministic and stochastic policies.
\end{remark}

%\todoq{add some comments on He at al. and Aggarwal et al.}

\subsection{Connection Between $D_\cF^2$-Uncertainty and Eluder Dimension} \label{sec:connection}

Previous work by \cite{agarwal2022vo} achieved the optimal regret bound $\tilde O (\sqrt{\dim(\cF) \log \cN \cdot H K})$, where $\dim(\cF)$ is defined as the generalized eluder dimension as stated in Definition \ref{def:ged}. However, the connection between generalized eluder dimension and eluder dimension proposed by \cite{russo2013eluder} is still under-discussed~\footnote{\citet{agarwal2022vo} (Remark 4) also discussed the relationship between the generalized eluder dimension and eluder dimension. However, there exists a technique flaw in the proof and we will discuss it in Appendix \ref{sec-proof}.
}. Consequently, their results could not be directly compared with the results based on the classic eluder dimension measure \citep{wang2020reinforcement} or the more general Bellman eluder dimension \citep{jin2021bellman}. 

In this section, we make a first attempt to establish a connection between generalized eluder dimension and eluder dimension in Theorem \ref{thm:connection}.

\begin{theorem} \label{thm:connection}
    For a function space $\cG$, $\alpha > 0$, let $\dim$ be defined in Definition \ref{def:ged}. 
    WLOG, we assume that for all $g \in \cG$ and $z \in \cZ$, $|g(z)| \le 1$. Then the following inequality between $\dim(\cF, \Zb, \bsigma)$ and $\dim_E(\cG, 1 / \sqrt{T})$ holds for all $\Zb := \{z_i\}_{i \in [T]}$ with $z_i \in \cZ$ and $\bsigma := \{\sigma_i\}_{i \in [T]}$ s.t. $\alpha \le \sigma_i \le M\ \  \forall i \in [T]$: \begin{small}
    \begin{align*}
        &\sum_{i \in [T]} \min\Big(1, \frac{1}{\sigma_i^2} D_{\cG}^2\big(z_i; z_{[i - 1]}, \sigma_{[i - 1]}\big)\Big) \le O(\dim_E(\cF, 1 / \sqrt{T}) \log T \log \lambda T \log(M / \alpha) + \lambda^{-1}).
    \end{align*} \end{small}
\end{theorem}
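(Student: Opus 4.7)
My plan is to reduce the generalized eluder-dimension sum to a collection of standard eluder-dimension sums, one per dyadic scale of the weight $\sigma_i$, and then apply the classical eluder potential lemma within each scale. First, I would partition the index set $[T]$ into $K = \lceil \log_2(M/\alpha) \rceil + 1$ buckets $B_1, \dots, B_K$ defined by $B_k := \{ i : \sigma_i \in [2^{k-1}\alpha, 2^k\alpha)\}$, with representative $\tilde\sigma_k := 2^{k-1}\alpha$, so that all weights within a bucket differ from $\tilde\sigma_k$ by at most a factor of $2$. Second, for $i \in B_k$ I would upper bound the summand by throwing away all cross-bucket predecessors in the denominator of $D_\cG^2$ and replacing each $1/\sigma_s^2$ inside the bucket by $1/(4\tilde\sigma_k^2)$; this yields
\begin{align*}
\frac{1}{\sigma_i^2}\, D_\cG^2(z_i; z_{[i-1]}, \sigma_{[i-1]}) \;\le\; \sup_{f_1, f_2 \in \cG} \frac{4\,(f_1(z_i) - f_2(z_i))^2}{\sum_{s < i,\, s \in B_k}(f_1(z_s) - f_2(z_s))^2 + \lambda\tilde\sigma_k^2},
\end{align*}
which is the classical ``ratio of width squared to regularized past width'' quantity, with effective regularization $\mu_k := \lambda \tilde\sigma_k^2$. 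This step is valid because shrinking the denominator can only inflate the ratio, so the per-bucket expression upper bounds the original.

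Third, for each bucket I would invoke the standard eluder-dimension potential lemma (a variant of Lemma~2 in \citet{russo2013eluder}, see also analogous statements in \citet{wang2020reinforcement} and \citet{jin2021bellman}): for any sequence of length $n \le T$ and regularization $\mu > 0$,
\begin{align*}
\sum_{j=1}^n \min\!\left(1,\; \sup_{f_1,f_2 \in \cG}\frac{(f_1(z'_j)-f_2(z'_j))^2}{\sum_{s<j}(f_1(z'_s)-f_2(z'_s))^2 + \mu}\right) = O\!\bigl(\dim_E(\cG, 1/\sqrt{T})\, \log T \cdot \log(1 + T/\mu)\bigr),
\end{align*}
provided $\mu \gtrsim 1/T$. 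Applied to $B_k$ with $\mu = \mu_k = \lambda\tilde\sigma_k^2$, each bucket contributes $O\bigl(\dim_E(\cG, 1/\sqrt T)\, \log T\, \log(\lambda T)\bigr)$, after absorbing the $\log(1/(\lambda\alpha^2))$ factor into $\log(M/\alpha)$. Summing the $O(\log(M/\alpha))$ bucket contributions yields the claimed main term. The small-$\lambda$ edge case, in which $\mu_1 = \lambda\alpha^2$ falls below $1/T$ so that the potential lemma does not apply directly, is handled by crudely bounding the at most $\lambda^{-1}$ offending summands by $1$ apiece, producing the additive $\lambda^{-1}$ slack in the final bound.

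The main obstacle is the bookkeeping in the third step: the standard eluder-dimension potential lemma is typically stated at a fixed resolution $\epsilon$, whereas here the effective regularization $\mu_k$ varies across buckets, and I must verify that $1/\sqrt T$ is the correct uniform scale at which to measure $\dim_E$. This reduces to the observation that indices whose widths fall below $1/\sqrt T$ collectively contribute only $O(1)$ to the sum (there are at most $T$ of them, each bounded by the squared width), so one may always harmlessly inflate the per-bucket scale $\sqrt{\mu_k/|B_k|}$ up to $1/\sqrt T$. A related subtlety, flagged by \citet{agarwal2022vo} in their own attempt, is that the denominator inside the $\sup$ depends on $f_1, f_2$, so the quantity is not a bona fide elliptical norm and naive determinant-type arguments fail; the correct route is the width-based eluder argument, where an index is $(1/\sqrt T)$-independent of its bucket predecessors only if its width exceeds $1/\sqrt T$, bounding the number of large summands per bucket by $\dim_E(\cG, 1/\sqrt T)$ up to logarithmic factors.
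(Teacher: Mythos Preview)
Your bucketing and cross-bucket reduction (steps 1--2) match the paper's proof exactly: the authors also partition $[T]$ into $\lceil\log_2(M/\alpha)\rceil$ dyadic scales of $\sigma_i$, drop the out-of-bucket predecessors from the denominator of $D_\cG^2$, and absorb the constant factor arising from the within-bucket weight ratio.

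The gap is in your step 3. The ``standard eluder-dimension potential lemma'' you invoke does not exist off the shelf in the form you need, and establishing it is precisely the content of the theorem. As the paper itself emphasizes (see its discussion of the flawed Remark~4 in \citet{agarwal2022vo}), a large value of $D_\cG^2(z;z_{[t-1]},\one)$ does \emph{not} imply eluder-style $\epsilon$-independence: the ratio $\tfrac{(f_1(z)-f_2(z))^2}{\sum_s(f_1(z_s)-f_2(z_s))^2+\lambda}$ can be large because both numerator and denominator are small, in which case $z$ may fail to be $\epsilon$-independent of its predecessors for every $\epsilon\ge 1/\sqrt T$. Your last paragraph correctly flags this obstacle, but the resolution you sketch---that a large summand forces width above $1/\sqrt T$---is exactly the implication that breaks.

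The paper's fix is a \emph{double} stratification inside each bucket. First, it writes the per-bucket sum as $4/\lambda+4\int_{1/(\lambda T)}^1 |\{i:D_\cG^2\ge\rho\}|\,\td\rho$. Second, for each threshold $\rho$ it further layers the offending indices into $O(\log T)$ bands according to the \emph{numerator} value $L(x_i):=(g_1(x_i)-g_2(x_i))^2\in[\xi,2\xi]$ realized by the supremizing pair. Within a single band both numerator and denominator live at comparable scales, so one can run a queue-based pigeonhole: maintain $\lfloor A/d\rfloor$ queues (where $d=\dim_E(\cG,1/\sqrt T)$), insert each $y_i$ into a queue on which it is $\xi$-independent, and when some $y_i$ is $\xi$-dependent on all queues the dependence forces $\sum_{t<i}(g_1(y_t)-g_2(y_t))^2\ge\lfloor A/d\rfloor\xi$, while the ratio bound forces it below $4\xi/\rho$, yielding $A=O(d/\rho)$. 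Integrating over $\rho$ and summing the $\log T$ bands gives $O(d\log T\log\lambda T)$ per bucket, and summing the $O(\log(M/\alpha))$ buckets completes the proof. Your proposal would need this argument (or an equivalent one) in place of the cited lemma.
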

\highlight{
According to Theorem \ref{thm:connection}, the generalized eluder dimension is upper bounded by eluder dimension up to logarithmic terms. When the number of episodes $K$ is sufficiently large, the leading term in our regret bound in Theorem \ref{thm:main} is $\tilde O \big(\sqrt{\dim_E(\mathcal{F}) \log \mathcal{N} \cdot HK}\big)$, where $\dim_E(\mathcal{F})$ is the eluder dimension of the function class $\mathcal{F}$. }

\section{Proof of Theorem \ref{thm:connection}}
To start with, we first recall the concept of eluder dimension as follows. 

\begin{definition}[Definition of eluder dimension, \citealt{russo2013eluder}] \label{def:ed}
The eluder dimension of a function class $\cG$ with domain $\cZ$ is defined as follows: 
\begin{itemize}[leftmargin = *]
    \item A point $z \in \cZ$ is $\epsilon$-dependent on $z_1, z_2, \cdots, z_k \in \cZ$ with respect to $\cG$, if for all $g_1, g_2 \in \cG$ such that $\sum_{i = 1}^k \left[g_1(z_i) - g_2(z_i)\right]^2 \le \epsilon$, it holds that $|g_1(z) - g_2(z)| \le \epsilon$.  
    \item Further $z$ is said to be $\epsilon$-independent of $z_1, z_2, \cdots, z_k$ with respect to $\cG$, if $z$ is not dependent on $z_1, z_2, \cdots, z_k$. 
    \item The eluder dimension of $\cG$, denoted by $\dim_E(\cG, \epsilon)$, is the length of the longest sequence of elements in $\cZ$ such that every element is $\epsilon'$-independent of its predecessors for some $\epsilon' \ge \epsilon$. 
\end{itemize}
\end{definition}
With this definition, we can prove Theorem \ref{thm:connection}.
\begin{proof}[Proof of Theorem \ref{thm:connection}] 
    Let $\cI_j$ ($1 \le j \le \lceil \log_2 M / \alpha \rceil $) be the index set such that $$\cI_j = \left\{t \in [T] | \sigma_t \in [2^{j - 1} \cdot \alpha, 2^j \alpha]\right\}.$$ 

    Then we focus on the summation over $\cI_j$ for each $j$. For simplicity, we denote the subsequence $\{z_i\}_{i \in \cI_j}$ by $\{x_i\}_{i \in \left[|\cI_j|\right]}$. Then we have \begin{align*} 
        \sum_{i \in \cI_j} \min\left(1, \frac{1}{\sigma_i^2} D_{\cG}^2\left(z_i; z_{[i - 1]}, \sigma_{[i - 1]}\right)\right) &\le \sum_{i \in [|\cI_j|]} \min\left(1, 4 D_{\cG}^2\left(x_i; x_{[i - 1]}, 1_{[i - 1]}\right)\right).
    \end{align*}
    To bound $\sum_{i \in [|\cI_j|]} \min\left(1, 4 D_{\cG}^2\left(x_i; x_{[i - 1]}, 1_{[i - 1]}\right)\right)$, \begin{align} 
        &\sum_{i \in [|\cI_j|]} \min\left(1, 4 D_{\cG}^2\left(x_i; x_{[i - 1]}, 1_{[i - 1]}\right)\right) \notag
        \\&\ \le 4 /\lambda  + \sum_{i \in [|\cI_j|]} 4 \int_{1 / \lambda T}^1 \ind\left\{D_{\cG}^2\left(x_i; x_{[i - 1]}, 1_{[i - 1]}\right) \ge \rho\right\} \text{d} \rho \notag
        \\&\ \le 4 /\lambda +  4 \int_{1 / \lambda T}^1 \sum_{i \in [|\cI_j|]}\ind\left\{D_{\cG}^2\left(x_i; x_{[i - 1]}, 1_{[i - 1]}\right) \ge \rho\right\} \text{d} \rho. \label{eq:bound:D2}
    \end{align}
    Then we proceed by bounding the summation $\sum_{i \in [|\cI_j|]}\ind\left\{D_{\cG}^2\left(x_i; x_{[i - 1]}, 1_{[i - 1]}\right) \ge \rho\right\}$ for each $\rho \ge 1 / (\lambda T)$. For short, let $d:= \dim_E(\cG, 1 / \sqrt{T})$. Essentially, it suffices to provide an upper bound of the cardinality of the subset $\cJ_j:= \left\{i \in \cI_j | D_{\cG}^2\left(x_i; x_{[i - 1]}, 1_{[i - 1]}\right) \ge \rho\right\}$. 

    For each $i \in \cJ_j$, since $D_{\cG}^2\left(x_i; x_{[i - 1]}, 1_{[i - 1]}\right) \ge \rho$, there exists $g_1, g_2$ in $\cG$ such that $$\frac{(g_1(x_i) - g_2(x_i))^2}{\sum_{t \in [i - 1]} (g_1(x_t) - g_2(x_t))^2 + \lambda} \ge \rho / 2.$$ Here $(g_1(x_i) - g_2(x_i))^2 \ge \lambda \rho \ge 1 / T$. Denote such value of $(g_1(x_i) - g_2(x_i))^2$ by $L(x_i)$. Then we consider split $cJ_j$ into $\lceil \log_2 T \rceil$ layers such that in each layer $\cJ_j^k \ (k \in [\lceil \log_2 T \rceil])$, we have $1 / T \le \xi \le L(x_i) \le 2\xi$ for some $\xi$. 

    Denote the cardinality of $\cJ_j^k$ by $A$ and the subsequence $\{x_i\}_{i \in \cI_j^k}$ by $\{y_i\}_{i \in [A]}$. For the elements in $\{y\}$, we dynamically maintain $\lfloor A / d \rfloor$ queues of elements. We enumerate through $\{y\}$ in its original order and put the current element $y_i$ into the queue $Q$ such that $y_i$ is $\xi$-independent of the current elements in $Q$. By the Pigeonhole principle and the definition of eluder dimension $d$, we can find an element $y_i$ in $\{y\}$ such that $y_i$ is $\xi$-dependent of all the $\lfloor A / d \rfloor$ queues before $i$. 

    Then by the definition of $L(y_i)$ and $\xi$, we can choose $g_1, g_2$ such that \begin{align} 
    \frac{(g_1(y_i) - g_2(y_i))^2}{\sum_{t \in [i - 1]} (g_1(y_t) - g_2(y_t))^2 + \lambda} \ge \rho / 2, \quad 2\xi \ge (g_1(y_t) - g_2(y_t))^2 \ge \xi. \label{eq:loss:upperbound}
    \end{align}
By the $\xi$-dependencies, we have $\sum_{t \in [i - 1]} (g_1(y_t) - g_2(y_t))^2 \ge \lfloor A / d \rfloor \cdot \xi$. At the same time, $\sum_{t \in [i - 1]} (g_1(y_t) - g_2(y_t))^2 \le 4 \xi / \rho$ due to \eqref{eq:loss:upperbound}. Thus, we deduce that $A = |\cJ_j^k| \le O(d / \rho)$ for all $k \in [\lceil \log_2 T \rceil]$. Substituting it into \eqref{eq:bound:D2}, we have \begin{align*}
        \sum_{i \in [|\cI_j|]} \min\left(1, 4 D_{\cG}^2\left(x_i; x_{[i - 1]}, 1_{[i - 1]}\right)\right) &\le 4 / \lambda + 4 \int_{1 / \lambda T}^1 O(d / \rho \cdot \log_2 T) \text{d} \rho \\&= O(d \log_2 T \cdot \log \lambda T + \lambda^{-1}). 
    \end{align*}
    Here $j$ is chosen arbitrarily. Hence, the generalized eluder dimension can be further bounded as follows: 
    \begin{align*} 
        \sum_{i \in [T]} \min\left(1, \frac{1}{\sigma_i^2} D_{\cG}^2\left(z_i; z_{[i - 1]}, \sigma_{[i - 1]}\right)\right) \le O(\dim_E(\cF, 1 / \sqrt{T}) \log T \log \lambda T \log(M / \alpha) + \lambda^{-1}). 
    \end{align*}
\end{proof}
In the following part, we will also discuss the issue in \citet{agarwal2022vo} about the relationship between the standard eluder dimension $\dim_E$ and the generalized eluder dimension $\dim$. In detail, \citet{agarwal2022vo} proposed the concept of Generalized Eluder dimension and made the following claim:
\begin{lemma}[Remark 4, \citealt{agarwal2022vo}]\label{lemma:vo}
If we set the weight $\bsigma=1$, then the Generalized Eluder dimension $\dim=\sup_{\Zb, \bsigma:|Z| = T} \dim(\cF, \Zb, \one)$ satisfied
\begin{align*}
    \dim\leq \dim_{E}(\cF,\sqrt{\lambda/T})+1,
\end{align*}
where $\dim_{E}$ denotes the standard Eluder dimension proposed in \citet{russo2013eluder}. In the proof of Remark 4, \citet{agarwal2022vo} claims that given the standard Eluder dimension $\dim_{E}(\cF,\sqrt{\lambda})=n$, there are at most $n$ different (sorted) indices $\{t_1,t_2,..\}$ such that $D_\cF^2(z_{t_i}; z_{t_i - 1]}, \one)\ge \epsilon^2/\lambda$. However, according to the definition of $D_\cF^2$-uncertainty, we only have
\begin{align*} 
       D_\cF^2(z; z_{[t - 1]}, \one) &:= \sup_{f_1, f_2 \in \cF} \frac{(f_1(z) - f_2(z))^2}{\sum_{s \in [t - 1]}(f_1(z_s) - f_2(z_s))^2 + \lambda}.
   \end{align*}
However, $z_{t_i}$ is $\epsilon$-dependence with $z_1,..,z_{t_i-1}$ is only a sufficient condition for the uncertainty $D_\cF^2(z; z_{[t - 1]}, \one) $ rather than necessary condition.
In cases where both $(f_1(z) - f_2(z))^2 \geq \epsilon^2$ and $\sum_{s \in [t - 1]}(f_1(z_s) - f_2(z_s))^2 \geq \epsilon^2$ hold, the uncertainty $D_\cF^2(z; z_{[t - 1]}, \mathbf{1})$ may exceed $\epsilon^2/\lambda$, yet it will not be counted within the longest $\epsilon$-independent sequence for the standard Eluder dimension.
\end{lemma}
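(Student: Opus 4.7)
The statement of Lemma~\ref{lemma:vo} combines a clean upper bound $\dim \leq \dim_E(\cF,\sqrt{\lambda/T}) + 1$ with a discussion that identifies a gap in \citet{agarwal2022vo}'s proof of that bound. My plan is to work through the only natural proof strategy, the one attempted in Remark~4 of \citet{agarwal2022vo}, to make concrete why it fails, and then describe what can realistically be salvaged.

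The natural strategy starts from $\dim(\cF, \Zb, \one) = \sum_{i=1}^T \min\bigl(1, D_\cF^2(z_i; z_{[i-1]}, \one)\bigr)$ and splits indices into those with $D_\cF^2 \geq \epsilon^2/\lambda$ for $\epsilon = \sqrt{\lambda/T}$ and the rest. The rest contributes at most $T \cdot (\epsilon^2/\lambda) = 1$. For the high-score indices one would like to extract witness pairs $(f_1, f_2) \in \cF^2$ realizing the supremum and argue that this subsequence is $\epsilon$-independent, bounding its length by $\dim_E(\cF, \epsilon)$. As flagged by the lemma's own discussion, the obstruction is that $D_\cF^2(z_i; z_{[i-1]}, \one) \geq \epsilon^2/\lambda$ only yields $(f_1(z_i)-f_2(z_i))^2 \geq (\epsilon^2/\lambda)\bigl[\sum_{s<i}(f_1(z_s)-f_2(z_s))^2 + \lambda\bigr]$, which is compatible with an arbitrarily large denominator; the witness need not satisfy the $\epsilon$-independence precondition $\sum_{s<i}(f_1(z_s)-f_2(z_s))^2 \leq \epsilon^2$. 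A small counterexample makes the gap concrete: take $\cF = \{f_0 \equiv 0,\ f_1 \equiv 1\}$ on $T$ distinct points. Then $\dim_E(\cF, \epsilon) = 1$ for any $\epsilon \in (0,1)$, while the only nontrivial pair gives $D_\cF^2(z_i; z_{[i-1]}, \one) = 1/(i-1+\lambda)$, so $\sum_{i=1}^T \min\bigl(1, 1/(i-1+\lambda)\bigr) \asymp \log(T/\lambda)$, which exceeds the stated bound $\dim_E + 1 = 2$ as soon as $T$ is moderately large.

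To salvage a valid statement, my plan is to perform a dyadic peeling over the magnitudes of $D_\cF^2$, then within each layer further split by the magnitude of the denominator attained by the maximizing witnesses, and finally run a pigeonhole/queue construction within each sub-layer (in the same spirit as the proof of Theorem~\ref{thm:connection}). The main obstacle, which appears intrinsic rather than merely technical, is the mismatch between the continuous, amplitude-sensitive score $D_\cF^2$ and the binary, threshold-based independence predicate underlying $\dim_E$: no purely combinatorial extraction of an $\epsilon$-independent subsequence can absorb contributions from indices whose maximizing denominators are much larger than $\epsilon^2$ without paying extra logarithmic factors. Consequently the best one can hope for is the weaker bound $\dim \leq O\bigl(\dim_E(\cF, 1/\sqrt{T}) \cdot \polylog(T, \lambda)\bigr)$ established in Theorem~\ref{thm:connection}, and the clean statement in the lemma should be read as the flawed claim the paper is critiquing rather than a theorem that admits a proof.
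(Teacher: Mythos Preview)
Your reading is correct: this ``lemma'' is not a result the paper proves but rather its critique of Remark~4 in \citet{agarwal2022vo}. The paper's own treatment is exactly the discussion embedded in the lemma statement---namely, that $\epsilon$-dependence of $z_{t_i}$ on its predecessors is only a \emph{sufficient} condition for $D_\cF^2(z_{t_i};z_{[t_i-1]},\one)<\epsilon^2/\lambda$, not a necessary one, so large-$D_\cF^2$ indices need not form an $\epsilon$-independent subsequence. There is no separate proof environment; the lemma body \emph{is} the argument.

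Your proposal takes a genuinely stronger route. The paper stops at ``the proof has a gap''; you give an explicit counterexample ($\cF=\{0,1\}$ constant functions) in which $\dim_E(\cF,\epsilon)=1$ for all $\epsilon\in(0,1)$ while $\dim(\cF,\Zb,\one)=\sum_{i=1}^T 1/(i-1+\lambda)\asymp\log(T/\lambda)$, so the claimed inequality $\dim\le\dim_E+1$ is outright \emph{false}, not merely unproven. This is a real improvement over the paper's discussion and is consistent with Theorem~\ref{thm:connection}, whose bound carries exactly the extra $\polylog(T)$ factors your counterexample shows are unavoidable. Your closing remark---that the clean statement should be read as the flawed claim being critiqued rather than a theorem admitting a proof---matches the paper's intent precisely.
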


\section{Proof of Theorem \ref{thm:main}}\label{sec-proof}
In this section, we provide the proof of Theorem \ref{thm:main}. 
\subsection{High Probability Events}
In this subsection, we define the following high-probability events: 
\begin{align} 
   \underline\cE_{k, h}^{\tilde f} = \left\{\lambda + \sum_{i \in [k - 1]} \left(\tilde f_{k, h}(s_h^i, a_h^i) - \cT_h^2 V_{k, h + 1}(s_h^i, a_h^i)\right)^2 \le \tilde \beta_{k}^2\right\}, \label{eq:def:tilde} \\
   \underline \cE_{k, h}^{\hat f} = \left\{\lambda + \sum_{i \in [k - 1]} \frac{1}{\bar \sigma_{i, h}^2}\left(\hat f_{k, h}(s_h^i, a_h^i) - \cT_h V_{k, h + 1}(s_h^i, a_h^i)\right)^2 \le  \beta_{k}^2\right\}, \label{eq:def:hat:1} \\
   \underline \cE_{k, h}^{\check f} = \left\{\lambda + \sum_{i \in [k - 1]} \frac{1}{\bar \sigma_{i, h}^2}\left(\check f_{k, h}(s_h^i, a_h^i) - \cT_h \check V_{k, h + 1}(s_h^i, a_h^i)\right)^2 \le \beta_{k}^2\right\}, \label{eq:def:check:1}
\end{align} where \begin{align*} 
\tilde \beta_k := \sqrt{128 \log \frac{\cN_\epsilon(k) \cdot \cN_\cF( \epsilon) \cdot H}{\delta} + 64L\epsilon \cdot k}, \ \beta_k := \sqrt{128 \cdot \log\frac{\cN_\epsilon(k) \cdot \cN_\cF( \epsilon) H}{\delta}+ 64 L \epsilon \cdot k/ \alpha^2}.  
\end{align*}
$\underline\cE_{k, h}^{\tilde f}, \underline \cE_{k, h}^{\hat f}$ and $\underline \cE_{k, h}^{\check f}$ are the hoeffding-type concentration results for $\tilde f_{k, h}$, $\hat f_{k, h}$ and $\check f_{k, h}$ respectively. 

Then we define the following bellman-type concentration events for $\hat f_{k, h}$, $\check f_{k, h}$, which implies a tighter confidence set due to carefully designed variance estimators and an inverse-variance weighted regression scheme for the general function classes. 

\begin{align} 
   \cE_{h}^{\hat f} =\left\{\lambda + \sum_{i=1}^{k-1} \frac{1}{(\bar \sigma_{i, h'})^2}\left(\hat f_{k, h'}(s_{h'}^i, a_{h'}^i) - \cT_{h'} V_{k, h' + 1}(s_{h'}^i, a_{h'}^i)\right)^2 \le \hat \beta_{k}^2,\ \forall h\leq h'\leq H, k\in[K]\right\}, \notag\\
   \cE_{h}^{\check f} =\left\{\lambda + \sum_{i=1}^{k-1} \frac{1}{(\bar \sigma_{i, h'})^2}\left(\check f_{k, h'}(s_{h'}^i, a_{h'}^i) - \cT_{h'} V_{k, h' + 1}(s_{h'}^i, a_{h'}^i)\right)^2 \le \check \beta_{k}^2,\ \forall h\leq h'\leq H, k\in[K]\right\}, 
\end{align}
where  \begin{align*}&\hat \beta_k^2 = \check \beta_k^2 :=  O\left(\log\frac{2k^2 \left(2\log (L^2k /\alpha^4)+2\right)\cdot\left(\log(4L /\alpha^2)+2\right)}{\delta / H}\right) \cdot \left[\log(\cN_\cF( \epsilon)) + 1\right] \\&\ + O(\lambda) + O(\epsilon k L / \alpha^2). \end{align*} 
We also define the following events which are later applied to prove the concentration of $\hat f_{k, h}$ and $\check f_{k, h}$ by induction. 

\begin{align} 
   \bar \cE_{k, h}^{\hat f} = \bigg\{\lambda + \sum_{i \in [k - 1]} \frac{\hat \ind_{i, h}}{(\bar \sigma_{i, h})^2}\left(\hat f_{k, h}(s_h^i, a_h^i) - \cT_h V_{k, h + 1}(s_h^i, a_h^i)\right)^2 \le \hat \beta_{k}^2\bigg\}, \label{eq:def:hat:2}
\end{align}
where $\hat \ind_{i, h}$ is the shorthand for the following product of indicator functions \begin{align} 
\hat \ind_{i, h} &:= \ind \left([\VV_h V_{h + 1}^*](s_h^i, a_h^i) \le \bar\sigma_{i, h}^2\right) \cdot \ind\big([\cT_h V_{i, h + 1} - \cT V_{h + 1}^*] \le (\log \cN_\cF( \epsilon) +  \log \cN_{\epsilon}(K))^{-1} \bar \sigma_{i, h}^2\big) \notag \\&\ \cdot \ind\left(V_{i, h + 1}(s) \ge V_{h + 1}^*(s) \quad \forall s \in \cS\right). \label{eq:def:hatind}
\end{align}
\begin{align} 
   \bar \cE_{k, h}^{\check f} = \bigg\{\lambda + \sum_{i \in [k - 1]} \frac{\check \ind_{i, h}}{(\bar \sigma_{i, h})^2}\left(\check f_{k, h}(s_h^i, a_h^i) - \cT_h \check V_{k, h + 1}(s_h^i, a_h^i)\right)^2 \le \check \beta_{k}^2\bigg\}, \label{eq:def:check:2}
\end{align}
where $\check \ind_{i, h}$ is the shorthand for the following product of indicator functions \begin{align} 
\check \ind_{i, h} &:= \ind \left([\VV_h V_{h + 1}^*](s_h^i, a_h^i) \le \bar\sigma_{i, h}^2\right) \cdot \ind\big([\cT V_{h + 1}^* - \cT_h \check V_{i, h + 1}] \le (\log \cN_\cF( \epsilon) +  \log \cN_{\epsilon}(K))^{-1} \bar \sigma_{i, h}^2\big) \notag \\&\ \cdot \ind\left( \check V_{i, h + 1}(s) \le V_{h + 1}^*(s) \quad \forall s \in \cS\right). \label{eq:def:checkind}
\end{align}
The following Lemmas suggest that previous events hold with high probability.
\begin{lemma} \label{lemma:tilde-event}
   Let $\tilde f_{k, h}$ be defined as in line \ref{algorithm:line:tilde} of Algorithm \ref{algorithm1}, we have $\underline \cE^{\tilde f}:= \bigcap_{k \ge 1, h \in [H]} \underline \cE_{k, h}^{\tilde f}$ holds with probability at least $1 - \delta$, where $\underline \cE_{k, h}^{\tilde f}$ is defined in \eqref{eq:def:tilde}. 
\end{lemma}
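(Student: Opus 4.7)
The plan is to prove Lemma~\ref{lemma:tilde-event} by a standard squared-loss concentration argument combined with a uniform covering argument that handles the data-dependence of the regression target $V_{k,h+1}$. Fix $h\in[H]$ and $k\ge 1$. For any value function $V:\cS\to[0,1]$, the regression target $Z_h^i(V):=(r_h^i+V(s_{h+1}^i))^2$ satisfies $Z_h^i(V)\in[0,4]$ and, by the tower property, $\EE[Z_h^i(V)\mid \cF_{i-1}]=\cT_h^2 V(s_h^i,a_h^i)$. By the second-order completeness in Assumption~\ref{assumption:complete}, the conditional mean $\cT_h^2 V$ lies in $\cF_h$. Thus the noise $\xi_h^i(V):=Z_h^i(V)-\cT_h^2 V(s_h^i,a_h^i)$ is a bounded, conditionally mean-zero martingale difference sequence.

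First I would establish the bound for a \emph{fixed} target function $V$. Using the optimality of $\tilde f_{k,h}(V):=\arg\min_{f\in\cF_h}\sum_{i<k}(f(s_h^i,a_h^i)-Z_h^i(V))^2$, a standard manipulation (expand the squared error about $\cT_h^2 V\in\cF_h$) converts the quantity of interest into $\sum_{i<k}(\tilde f_{k,h}(V)(s_h^i,a_h^i)-\cT_h^2 V(s_h^i,a_h^i))\cdot\xi_h^i(V)$. Applying a self-normalized Hoeffding/Azuma concentration inequality together with an $\epsilon$-net of $\cF_h$ of size $\cN_\cF(\epsilon)$ yields, with probability at least $1-\delta'$,
\begin{align*}
\sum_{i<k}\bigl(\tilde f_{k,h}(V)(s_h^i,a_h^i)-\cT_h^2 V(s_h^i,a_h^i)\bigr)^2 \;\le\; 64\log(\cN_\cF(\epsilon)/\delta')+32L\epsilon k,
\end{align*}
where the $L\epsilon k$ term absorbs the discretization error, using that the regression targets are bounded by a constant and that $\cF_h\subset[0,L]$.

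Second, to remove the "fixed $V$" assumption, I would take a union bound over a covering of the class of value functions that Algorithm~\ref{algorithm1} can realize. The rare-switching mechanism and the monotonic structure ensure that by the $k$-th episode there are at most $\cN_\epsilon(k)$ possible $V_{k,h+1}$ up to $\ell_\infty$-error $\epsilon$ (this is precisely the content of Lemmas~\ref{lemma:opt-cover} and \ref{lemma:pes-cover} referenced earlier). For any such cover element $V'$ approximating the true $V_{k,h+1}$, the differences $Z_h^i(V_{k,h+1})-Z_h^i(V')$ and $\cT_h^2 V_{k,h+1}-\cT_h^2 V'$ are both bounded by $O(L\epsilon)$ pointwise, because $x\mapsto(r+x)^2$ is $O(L)$-Lipschitz on $[0,L]$. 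Thus the squared-error bound for the cover element transfers to $V_{k,h+1}$ with an additive $O(L\epsilon k)$ slack. Taking $\delta'=\delta/(H\cdot\cN_\epsilon(k))$ and a further union bound over $h\in[H]$ produces exactly the threshold $\tilde\beta_k^2=128\log(\cN_\epsilon(k)\cdot\cN_\cF(\epsilon)\cdot H/\delta)+64L\epsilon k$, and a final union bound over $k\ge 1$ (absorbing the $\log k$ factor into the polylog dependence on $\cN_\epsilon(k)$) yields $\underline\cE^{\tilde f}$ with probability at least $1-\delta$.

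The main obstacle I anticipate is the interaction between the covering argument for $V_{k,h+1}$ and the squared regression targets: unlike the first-order regressions analyzed in $\underline\cE_{k,h}^{\hat f}$ and $\underline\cE_{k,h}^{\check f}$, the target $(r+V(s'))^2$ has an $O(L)$ Lipschitz constant in $V$, which is why the discretization penalty $L\epsilon k$ (rather than $\epsilon k$) appears in $\tilde\beta_k$, and why second-order completeness is needed rather than just first-order completeness. Care must be taken so that the cover is applied to $V_{k,h+1}$ before invoking completeness, otherwise the Bayes-optimal function would drift out of $\cF_h$ under approximation; choosing the cover in $\ell_\infty$ over value functions and then invoking completeness on each cover point avoids this pitfall.
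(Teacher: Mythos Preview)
Your proposal is correct and follows essentially the same route as the paper's proof: expand the squared loss about the Bayes-optimal $\cT_h^2 V_{k,h+1}\in\cF_h$, use optimality of $\tilde f_{k,h}$ to reduce to a cross term, then apply a self-normalized Hoeffding bound together with $\epsilon$-covers of both $\cF_h$ and the optimistic value class. The only minor wrinkle is your ``final union bound over $k\ge 1$'': the paper avoids this entirely by invoking the anytime guarantee of Lemma~\ref{lemma:hoeffding-variant} (which holds simultaneously for all $n\in\NN$), so no extra $\log k$ factor appears and nothing needs to be ``absorbed'' into $\cN_\epsilon(k)$.
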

\begin{lemma} \label{lemma:hat-event-hoeffding}
   Let $\hat f_{k, h}$ be defined as in line \ref{algorithm:line:hat} of Algorithm \ref{algorithm1}, we have $\underline \cE^{\hat f}:= \bigcap_{k \ge 1, h \in [H]} \underline \cE_{k, h}^{\hat f}$ holds with probability at least $1 - \delta$, where $\underline \cE_{k, h}^{\hat f}$ is defined in \eqref{eq:def:hat:1}. 
\end{lemma}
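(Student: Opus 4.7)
The proof plan is to combine the first-order optimality of the weighted regression with a uniform martingale concentration argument over a carefully constructed cover.

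By the second-moment completeness assumption (Assumption \ref{assumption:complete}), $\cT_h V_{k,h+1}$ lies in $\cF_h$, so $f^*_{k,h} := \cT_h V_{k,h+1}$ is a feasible candidate in the weighted regression defining $\hat f_{k,h}$. Writing the centered transition noise as $\hat\eta_h^i(V) := r_h^i + V(s_{h+1}^i) - [\cT_h V](s_h^i, a_h^i)$ and expanding the square $(\hat f_{k,h}(z_h^i) - r_h^i - V_{k,h+1}(s_{h+1}^i))^2 = (\hat f_{k,h}(z_h^i) - f^*_{k,h}(z_h^i) - \hat\eta_h^i(V_{k,h+1}))^2$, the optimality condition $\sum_i \bar\sigma_{i,h}^{-2}(\hat f_{k,h}(z_h^i) - y_h^i)^2 \le \sum_i \bar\sigma_{i,h}^{-2}(f^*_{k,h}(z_h^i) - y_h^i)^2$ collapses to the basic least-squares inequality
\begin{align*}
    \sum_{i \in [k-1]} \frac{1}{\bar\sigma_{i,h}^2}\bigl(\hat f_{k,h}(z_h^i) - f^*_{k,h}(z_h^i)\bigr)^2 \le 2 \sum_{i \in [k-1]} \frac{1}{\bar\sigma_{i,h}^2}\bigl(\hat f_{k,h}(z_h^i) - f^*_{k,h}(z_h^i)\bigr) \hat\eta_h^i(V_{k,h+1}),
\end{align*}
where $z_h^i := (s_h^i, a_h^i)$. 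The task thus reduces to uniformly bounding the bilinear martingale on the right.

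The main obstacle is that both $\hat f_{k,h}$ and the target $V_{k,h+1}$ depend on the past trajectories, so a pointwise Azuma--Hoeffding bound cannot be applied directly. My plan is to use a double $\epsilon$-cover argument. The class of realizable optimistic value functions $V_{k,h+1}$ generated by Algorithm \ref{algorithm1} is built from at most $l_k + 1$ independent regressions in $\cF_{h+1}$ and bonus functions in $\cB$; hence by Definition \ref{def:oracle} it admits an $\epsilon$-cover of size at most $\cN_\epsilon(k) = [\cN_\cF(\epsilon/2)\cdot \cN(\cB,\epsilon/(2\hat\beta_K))]^{l_k+1}$. Similarly, $\cF_h$ admits an $\epsilon$-cover of size $\cN_\cF(\epsilon)$. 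For any \emph{fixed} $\bar f \in \cF_h$ and fixed value function $\bar V$ in these covers, completeness guarantees $\cT_h \bar V \in \cF_h$, and the sequence $\{\bar\sigma_{i,h}^{-2}(\bar f(z_h^i) - [\cT_h\bar V](z_h^i))\,\hat\eta_h^i(\bar V)\}_i$ is a martingale-difference sequence with respect to the trajectory filtration. Its increments are bounded in absolute value by $O(L/\alpha^2)$ since $|\hat\eta_h^i(\bar V)| = O(1)$, $|\bar f - \cT_h \bar V| \le 2L$, and $\bar\sigma_{i,h}\ge \alpha$. Applying Azuma--Hoeffding pointwise and then taking a union bound over both covers and over $(k,h)$ yields a uniform concentration bound with logarithmic factor $\log(\cN_\cF(\epsilon)\cdot \cN_\epsilon(k)\cdot H/\delta)$.

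Finally, I replace $\hat f_{k,h}$ and $V_{k,h+1}$ by their nearest neighbors in the respective covers and track the discretization error. Each $\epsilon$-level approximation in $\|\cdot\|_\infty$ propagates through the sum of at most $k$ weighted terms, each of weight $1/\bar\sigma_{i,h}^2\le 1/\alpha^2$, contributing an $O(L\epsilon k/\alpha^2)$ additive term that matches the lower-order part of $\beta_k^2$. Substituting back into the basic least-squares inequality and solving the resulting quadratic in $\sqrt{\sum_i \bar\sigma_{i,h}^{-2}(\hat f_{k,h}-f^*_{k,h})^2(z_h^i)}$ yields the stated bound. The hardest step is the construction of the cover with cardinality $\cN_\epsilon(k)$ for the value function class; this is exactly the place where the rare-switching design of Algorithm \ref{algorithm1}, which forces $V_{k,h+1}$ to change only $l_k+1$ times, keeps the log-covering number (and thus $\beta_k$) manageable.
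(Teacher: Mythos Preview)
Your overall architecture matches the paper's proof almost exactly: start from the weighted-regression optimality inequality, control the cross term by a uniform martingale bound over an $\epsilon$-cover of $\cF_h$ and of the optimistic value-function class of size $\cN_\epsilon(k)$, and absorb the discretization slack into the $O(L\epsilon k/\alpha^2)$ lower-order piece of $\beta_k^2$.

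There is, however, a genuine gap in the concentration step. Plain Azuma--Hoeffding (Lemma~\ref{lemma:azuma}) applied to the sequence $X_i = \bar\sigma_{i,h}^{-2}(\bar f - \cT_h\bar V)(z_h^i)\,\hat\eta_h^i(\bar V)$ with increment bound $M=O(L/\alpha^2)$ yields only $\big|\sum_i X_i\big|\le O\big((L/\alpha^2)\sqrt{k\log(\cN/\delta)}\big)$. This bound does \emph{not} involve $\sum_i \bar\sigma_{i,h}^{-2}(\bar f-\cT_h\bar V)^2(z_h^i)$, so there is no ``quadratic in $\sqrt{\sum\cdots}$'' to solve; plugging it into the basic inequality gives $\|\hat f_{k,h}-f^*_{k,h}\|_{\bar\sigma}^2 \lesssim (L/\alpha^2)\sqrt{k\log(\cN/\delta)}$, which with $\alpha=1/\sqrt{KH}$ is far too large to recover $\beta_k^2 = O(\log(\cN_\epsilon(k)\cN_\cF(\epsilon)H/\delta)) + O(L\epsilon k/\alpha^2)$.

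The paper instead invokes the self-normalized exponential-moment bound of Lemma~\ref{lemma:hoeffding-variant}: for a fixed free parameter $\lambda>0$,
\[
-\sum_{i}\frac{1}{\bar\sigma_{i,h}^2}(f-\bar f)(z_h^i)\,\hat\eta_h^i(V)
\;\le\; \tfrac{1}{4}\sum_{i}\frac{1}{\bar\sigma_{i,h}^2}(f-\bar f)^2(z_h^i) + O\!\big(\log(1/\delta)\big),
\]
uniformly over the covers after a union bound. The crucial feature is that the right-hand side contains the same weighted $\ell_2$ norm that appears on the left of the basic inequality; substituting gives $\tfrac12\|\hat f_{k,h}-f^*_{k,h}\|_{\bar\sigma}^2 \le O(\log(\cN_\epsilon(k)\cN_\cF(\epsilon)H/\delta)) + O(L\epsilon k/\alpha^2)$ directly, with no $\sqrt{k}$ leakage. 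If you replace ``Azuma--Hoeffding'' by this self-normalized bound (and drop the now-unnecessary ``solve the quadratic'' remark), the rest of your plan goes through verbatim and coincides with the paper's argument.
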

\begin{lemma}\label{lemma:hat-event-hoeffding1}
   Let $\check f_{k, h}$ be defined as in line \ref{algorithm:line:check} of Algorithm \ref{algorithm1}, we have $\underline \cE^{\check f}:= \bigcap_{k \ge 1, h \in [H]} \underline \cE_{k, h}^{\check f}$ holds with probability at least $1 - \delta$, where $\underline \cE_{k, h}^{\check f}$ is defined in \eqref{eq:def:check:1}. 
\end{lemma}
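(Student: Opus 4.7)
\textbf{Proof plan for Lemma \ref{lemma:hat-event-hoeffding1}.} The statement is the pessimistic analogue of Lemma \ref{lemma:hat-event-hoeffding}, so the plan is to mirror the proof of that Hoeffding-type concentration, with the target $V_{k,h+1}$ replaced by $\check V_{k,h+1}$ and the regressor $\hat f_{k,h}$ replaced by $\check f_{k,h}$. The starting point is the first-order optimality of the weighted least-squares minimizer defined in line~\ref{algorithm:line:check} of Algorithm~\ref{algorithm1}. By Assumption~\ref{assumption:complete} applied to the bounded function $\check V_{k,h+1}\in[0,1]$, there exists $\check f_k^{\ast}\in\cF_h$ with $\check f_k^{\ast}(s,a)=\cT_h\check V_{k,h+1}(s,a)$. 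Expanding the defining inequality $\sum_i \bar\sigma_{i,h}^{-2}(\check f_{k,h}(z_i^h) - r_h^i - \check V_{k,h+1}(s_{h+1}^i))^2 \le \sum_i \bar\sigma_{i,h}^{-2}(\check f_k^{\ast}(z_i^h) - r_h^i - \check V_{k,h+1}(s_{h+1}^i))^2$ and subtracting yields the standard bound
\begin{align*}
\sum_{i\in[k-1]}\frac{1}{\bar\sigma_{i,h}^{2}}\bigl(\check f_{k,h}(z_i^h) - \check f_k^{\ast}(z_i^h)\bigr)^2
\;\le\; 2\sum_{i\in[k-1]}\frac{1}{\bar\sigma_{i,h}^{2}}\bigl(\check f_{k,h}(z_i^h) - \check f_k^{\ast}(z_i^h)\bigr)\cdot\check\eta_h^i,
\end{align*}
where $\check\eta_h^i := r_h^i + \check V_{k,h+1}(s_{h+1}^i) - \cT_h \check V_{k,h+1}(s_h^i,a_h^i)$.

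Next I would handle the martingale sum on the right-hand side. Conditional on the $\sigma$-algebra up to step $h$ of episode $i$, $\check\eta_h^i$ is mean-zero and bounded in $[-L,L]$, and $1/\bar\sigma_{i,h}\le 1/\alpha$, so the summand is a bounded martingale difference. The difficulty, identical to the one in Lemma~\ref{lemma:hat-event-hoeffding}, is that $\check V_{k,h+1}$ is itself data-dependent: it is updated at rare switching episodes from $\check f_{k,h+1}$ and a bonus in $\cB$. To decouple this dependence I would union-bound over: (i) a min-norm element $\check f_{k,h}\in\cC(\cF_h,\epsilon)$ and $\check f_k^{\ast}\in\cC(\cF_h,\epsilon)$, paying $\cN_\cF(\epsilon)^2$; and (ii) all value functions $\check V_{k,h+1}$ that can ever arise across the $K$ episodes, which, by the rare-switching construction and Lemma statements on the covering number of the pessimistic value function class, are covered at scale $\epsilon$ by at most $\cN_\epsilon(K)$ functions. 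For each triple in the cover I would apply Azuma-Hoeffding to obtain a $O(L\sqrt{\log(\cN_\cF(\epsilon)\cN_\epsilon(K)H/\delta)/\alpha^2})$ tail. The discretization residue costs at most $O(L\epsilon k/\alpha^2)$ after rolling the $\epsilon$-perturbations through the $L/\alpha^2$-scaled linear functional. Combining with AM-GM to absorb one copy of $\sum_i \bar\sigma_{i,h}^{-2}(\check f_{k,h}-\check f_k^{\ast})^2$ on the left-hand side gives the claimed bound by $\beta_k^2$ after adding $\lambda$.

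Finally, a union bound over $k\in[K]$ and $h\in[H]$ lifts the per-step high-probability event to the intersection $\underline\cE^{\check f}$, with the extra $K$ factor absorbed into the covering-number argument and the definition of $\beta_k$.

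The main obstacle is, as in the optimistic case, the uniform convergence over the data-dependent target $\check V_{k,h+1}$. This is where the rare-switching design of MQL-UCB is essential: without the bound $\cN_\epsilon(K)=[\cN_\cF(\epsilon/2)\cdot\cN(\cB,\epsilon/2\hat\beta_K)]^{l_K+1}$ and the linear-in-switches control of $l_K$, the naive covering cost would blow up with $k$ and prevent $\beta_k$ from scaling as stated. Everything else (Bellman completeness, Hoeffding concentration, the weighted least-squares identity) transfers verbatim from the optimistic proof to the pessimistic one.
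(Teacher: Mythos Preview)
Your proposal is correct and mirrors the paper's approach: the paper's proof of Lemma~\ref{lemma:hat-event-hoeffding1} simply states that it is almost identical to the proof of Lemma~\ref{lemma:hat-event-hoeffding}, which is exactly the strategy you outline (replace $V_{k,h+1},\hat f_{k,h}$ by $\check V_{k,h+1},\check f_{k,h}$, use completeness for $\check f_k^\ast$, expand the least-squares optimality, then apply the self-normalized martingale bound of Lemma~\ref{lemma:hoeffding-variant} together with a union bound over the $\epsilon$-cover of the pessimistic value class $\check\cV_{k,h+1}$ from Lemma~\ref{lemma:pes-cover} and of $\cF_h$). The only cosmetic difference is that what you call ``Azuma--Hoeffding $+$ AM--GM absorption'' is precisely the self-normalized inequality in Lemma~\ref{lemma:hoeffding-variant}; straight Azuma would pick up an unwanted $\sqrt{k}$ factor, so make sure you invoke the self-normalized form when writing it out.
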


\begin{lemma} \label{lemma:hat-event-hoeffding2}
   Let $\hat f_{k, h}$ be defined as in line \ref{algorithm:line:hat} of Algorithm \ref{algorithm1}, we have $\bar \cE^{\hat f} := \bigcap_{k\ge 1, h \in [H]} \bar \cE_{k, h}^{\hat f}$ holds with probability at least $1 - 2\delta$, where $\cE_{k, h}^{\hat f}$ is defined in \eqref{eq:def:hat:2}. 
\end{lemma}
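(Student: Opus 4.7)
The goal is a Bellman (Bernstein) type self-normalized inequality for the weighted least-squares estimator $\hat f_{k,h}$, sharper than the Hoeffding version in Lemma~\ref{lemma:hat-event-hoeffding}. The plan is to start from first-order optimality of $\hat f_{k,h}$. Because $\hat f_{k,h}$ minimizes the weighted empirical squared loss with targets $y_h^i = r_h^i + V_{k,h+1}(s_{h+1}^i)$ over $\cF_h$, and because Assumption~\ref{assumption:complete} places $\cT_h V_{k,h+1}$ inside $\cF_h$, comparing these two candidates in the objective and completing the square gives the standard quadratic-to-linear reduction
\[
\sum_{i\in[k-1]}\frac{1}{\bar\sigma_{i,h}^2}\bigl(\hat f_{k,h}(z_{i,h})-\cT_h V_{k,h+1}(z_{i,h})\bigr)^2
\le 2\sum_{i\in[k-1]}\frac{1}{\bar\sigma_{i,h}^2}\bigl(\hat f_{k,h}(z_{i,h})-\cT_h V_{k,h+1}(z_{i,h})\bigr)\hat\eta_h^i(V_{k,h+1}),
\]
where $\hat\eta_h^i(V)=r_h^i+V(s_{h+1}^i)-\cT_h V(z_{i,h})$ is the transition noise. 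Inserting the indicator $\hat\ind_{i,h}$ on both sides (which is $\cF_h^i$-measurable and bounded by $1$) preserves the inequality.

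Next I would split the noise using the fixed optimal value function, $\hat\eta_h^i(V_{k,h+1})=\hat\eta_h^i(V_{h+1}^*)+\hat\eta_h^i(V_{k,h+1}-V_{h+1}^*)$, and handle the two pieces separately. For the $V_{h+1}^*$ piece, I would view the product as a scalar martingale difference with respect to the filtration generated by $z_{[i,h]}$ and apply a Freedman/Bernstein inequality. Since $V_{h+1}^*$ is fixed and deterministic, the covering-number penalty vanishes, and the conditional variance is bounded by $[\VV_h V_{h+1}^*](z_{i,h})\le\bar\sigma_{i,h}^2$ on the indicator set (first factor in \eqref{eq:def:hatind}), which cancels the weights. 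For the drift term $\hat\eta_h^i(V_{k,h+1}-V_{h+1}^*)$, the variable $V_{k,h+1}$ depends on past data, so I need uniform convergence: apply a union bound over an $\epsilon$-net of $\cF_h$ (covering number $\cN_\cF(\epsilon)$) together with an $\epsilon$-net over the class of possible $V_{k,h+1}$, whose size is the $\cN_\epsilon(K)$ bound produced by the rare-switching design. On this event the indicator guarantees $|\cT_h V_{k,h+1}-\cT_h V_{h+1}^*|\le(\log\cN_\cF(\epsilon)+\log\cN_\epsilon(K))^{-1}\bar\sigma_{i,h}^2$, exactly canceling the covering-number inflation and yielding a bound of the same order as the $V_{h+1}^*$ piece.

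Finally, I would combine the two Bernstein bounds (each holding with probability $1-\delta/H$ after a union bound over stages $h$), use AM-GM to separate the quadratic and linear factors, solve the resulting quadratic inequality in $\sum_i\frac{\hat\ind_{i,h}}{\bar\sigma_{i,h}^2}(\hat f_{k,h}-\cT_h V_{k,h+1})^2$, and add the $\lambda$ regularization contribution plus the $O(\epsilon k L/\alpha^2)$ discretization error. This matches the definition of $\hat\beta_k^2$. A union bound across $h\in[H]$ absorbs the $H$ factor into the logarithm, and invoking Lemma~\ref{lemma:tilde-event} or Lemma~\ref{lemma:hat-event-hoeffding} costs another $\delta$ so the overall failure probability is $2\delta$.

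The hardest step will be the uniform-convergence handling of $V_{k,h+1}-V_{h+1}^*$: naively a union bound over an $\cN_\epsilon(K)$-net inflates the confidence radius by $\log\cN_\epsilon(K)$, which would ruin the target $\hat\beta_k^2$. The key trick is that the second indicator in \eqref{eq:def:hatind} shrinks the Bellman deviation by exactly this factor, so the product "$\log\cN_\epsilon(K)\cdot(\log\cN_\epsilon(K))^{-1}$" collapses; verifying this cancellation carefully (while keeping the $\bar\sigma_{i,h}^2$ normalization intact for the Freedman step) is the delicate part, and is exactly why the indicator $\hat\ind_{i,h}$ is constructed the way it is.
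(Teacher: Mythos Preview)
Your proposal follows the paper's proof essentially line for line: optimality plus completing the square, insert $\hat\ind_{i,h}$, decompose the noise as $\hat\eta_h^i(V_{h+1}^*)+\hat\eta_h^i(V_{k,h+1}-V_{h+1}^*)$, apply a Freedman variant to each piece with the indicator supplying the conditional-variance bound, then solve the resulting quadratic in the weighted loss. Two small corrections worth noting: the factor $2\delta$ comes from the two separate Freedman applications (one per noise piece, each union-bounded over $h\in[H]$), not from invoking Lemmas~\ref{lemma:tilde-event} or~\ref{lemma:hat-event-hoeffding}; and the paper critically uses the variance floor $\bar\sigma_{i,h}^2\ge\gamma^2 D_{\cF_h}(z_{i,h};z_{[i-1],h},\bar\sigma_{[i-1],h})$ to bound the Freedman max-increment by $O(\gamma^{-2})\sqrt{\text{loss}+\lambda}$ (your plan omits this step), which together with the choice of $\gamma$ in \eqref{eq:def:gamma} is precisely what makes $\hat\beta_k^2$ scale with $\log\cN_\cF(\epsilon)$ rather than with the full value-class covering number.
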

\begin{lemma} \label{lemma:hat-event-hoeffding3}
   Let $\check f_{k, h}$ be defined as in line $\cdot$ of Algorithm 1, we have $\bar \cE^{\check f}$ holds with probability at least $1 - 2\delta$. 
\end{lemma}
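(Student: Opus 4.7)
The plan is to mirror the proof of Lemma~\ref{lemma:hat-event-hoeffding2} via the natural optimism/pessimism duality, replacing the optimistic target $V_{k,h+1}$ with the pessimistic target $\check V_{k,h+1}$ and the indicator $\hat\ind_{i,h}$ with $\check\ind_{i,h}$. The starting point is the first-order optimality of $\check f_{k,h}$ in the weighted regression defined in line~\ref{algorithm:line:check}. Combined with Assumption~\ref{assumption:complete}, which guarantees the existence of an $f^\star \in \cF_h$ representing $\cT_h \check V_{k,h+1}$ up to the covering error $\epsilon$, a standard "basic inequality" manipulation (expand the square, subtract the loss at $f^\star$, and drop the nonnegative cross term) yields
\begin{align*}
\lambda + \sum_{i \in [k-1]} \frac{\check \ind_{i,h}}{\bar\sigma_{i,h}^2}\bigl(\check f_{k,h}(z_h^i) - \cT_h \check V_{k,h+1}(z_h^i)\bigr)^2 \le 2 \sum_{i\in[k-1]} \frac{\check \ind_{i,h}}{\bar\sigma_{i,h}^2}\bigl(\check f_{k,h}(z_h^i) - \cT_h \check V_{k,h+1}(z_h^i)\bigr)\check\eta_h^i(\check V_{k,h+1}) + \text{slack},
\end{align*}
where $\check\eta_h^i(V) := r_h^i + V(s_{h+1}^i) - \cT_h V(s_h^i,a_h^i)$ and the slack collects $\lambda$ and $O(\epsilon k L/\alpha^2)$ misspecification terms already absorbed in $\check\beta_k^2$.

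Next I would decompose the noise as $\check\eta_h^i(\check V_{k,h+1}) = \check\eta_h^i(V_{h+1}^\star) + \check\eta_h^i(\check V_{k,h+1} - V_{h+1}^\star)$ and control the two pieces separately. For the first piece, $V_{h+1}^\star$ is deterministic, so $\{\check\eta_h^i(V_{h+1}^\star)\}$ is a bounded martingale difference sequence; the first clause of $\check\ind_{i,h}$ forces the conditional variance to satisfy $[\VV_h V_{h+1}^\star](s_h^i,a_h^i) \le \bar\sigma_{i,h}^2$, so Freedman's inequality, combined with a standard self-normalized argument (Cauchy--Schwarz applied to the $\check f_{k,h} - \cT_h\check V_{k,h+1}$ factor so half of it can be absorbed back into the LHS via AM--GM), gives a variance-adaptive bound of order $\log(H/\delta)\,[\log\cN_\cF(\epsilon)+1]$ after a uniform covering over $\cF_h$.

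For the second piece I plan to invoke uniform convergence over the $\epsilon$-cover of the pessimistic value-function class built in the rare-switching bookkeeping, whose cardinality is at most $\cN_\epsilon(K) = [\cN_\cF(\epsilon/2)\cdot \cN(\cB, \epsilon/2\hat\beta_K)]^{l_K+1}$ by Lemma~\ref{lemma:pes-cover} (reference to the pessimistic covering lemma implicit in the excerpt). The indicator $\check\ind_{i,h}$ plays a dual role here: its monotonicity clause $\check V_{i,h+1}(s) \le V_{h+1}^\star(s)$ guarantees the residual $\check V_{k,h+1}-V_{h+1}^\star$ is signed and bounded, and its second clause $|\cT V_{h+1}^\star - \cT_h\check V_{i,h+1}| \le (\log\cN_\cF(\epsilon)+\log\cN_\epsilon(K))^{-1}\bar\sigma_{i,h}^2$ precisely matches the conditional second moment of $\check\eta_h^i(\check V_{k,h+1}-V_{h+1}^\star)$ against the variance weight so that the log-covering factor from the union bound cancels against it. After another AM--GM absorption, this yields a term of the same order as the first piece.

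Finally I would take a union bound over $k \in [K]$ and $h \in [H]$, add the already-established crude Hoeffding-type control $\underline\cE^{\check f}$ from Lemma~\ref{lemma:hat-event-hoeffding1} to handle the initial inductive base (and to bound the cross-term magnitudes uniformly), and verify that the resulting radius matches $\check\beta_k^2$ as defined in the excerpt, giving $\bar\cE^{\check f}$ with probability $1-2\delta$. The main obstacle will be the second piece: the history-dependence of $\check V_{k,h+1}$ through the rare-switching rule means that a naive cover costs $\log\cN_\epsilon(K)$, which is too large to enter the leading-order $\hat\beta_k^2$; the delicate step is to verify that the specific scaling $(\log\cN_\cF(\epsilon)+\log\cN_\epsilon(K))^{-1}\bar\sigma_{i,h}^2$ baked into $\check\ind_{i,h}$ truly annihilates this factor while still leaving enough slack for the absorption argument—essentially the analogue of the $E_{k,h}+F_{k,h}$ booster described for the variance estimator.
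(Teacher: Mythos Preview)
Your proposal is correct and matches the paper's approach exactly: the paper's proof of Lemma~\ref{lemma:hat-event-hoeffding3} simply states that it is almost identical to that of Lemma~\ref{lemma:hat-event-hoeffding2}, and your outline faithfully replays that argument with the substitutions $\hat f_{k,h}\to\check f_{k,h}$, $V_{k,h+1}\to\check V_{k,h+1}$, $\hat\ind_{i,h}\to\check\ind_{i,h}$, and the pessimistic covering lemma in place of the optimistic one.
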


\subsection{Optimism and Pessimism}
Based on the high probability events, we have the following lemmas for the function $\hat{f}_{k,h}(s,a)$ and $\check{f}_{k,h}(s,a)$
\begin{lemma}\label{lemma:bern-error}
   On the events $\cE_{h}^{\hat f}$ and $\cE_{h}^{\check f}$, for each episode $k\in[K]$, we have
   \begin{align*}
       &\big|\hat{f}_{k,h}(s,a)-\cT_h V_{k,h+1}(s,a)\big|\leq \hat{\beta}_k D_{\cF_h}(z; z_{[k - 1],h}, \bar\sigma_{[k - 1],h}),\notag\\
       &\big|\check{f}_{k,h}(s,a)-\cT_h \check{V}_{k,h+1}(s,a)\big|\leq \check{\beta}_k D_{\cF_h}(z; z_{[k - 1],h}, \bar\sigma_{[k - 1],h}),
   \end{align*}
   where $z=(s,a)$ and $z_{[k - 1],h}=\{z_{1,h},z_{2,h},..,z_{k-1,h}\}$.
\end{lemma}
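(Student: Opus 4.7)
The plan is to derive both inequalities as essentially one-line applications of the definition of $D_{\cF_h}^2$ combined with completeness and the high-probability event. First, I would invoke Assumption \ref{assumption:complete}: since $V_{k,h+1}$ and $\check V_{k,h+1}$ are both constructed (via the clipping in Algorithm \ref{algorithm1}) to map $\cS$ into $[0,1]$, completeness guarantees that the Bellman targets $\cT_h V_{k,h+1}$ and $\cT_h \check V_{k,h+1}$, viewed as functions on $\cS\times\cA$, each admit a representative in $\cF_h$. In particular, there exists $f^\star\in\cF_h$ with $f^\star(s,a)=\cT_h V_{k,h+1}(s,a)$ for all $(s,a)$, and analogously a $\check f^\star\in\cF_h$ for the pessimistic target.

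Second, I would apply Definition \ref{def:ged} directly. Since $\hat f_{k,h}$ and $f^\star$ both lie in $\cF_h$, they form an admissible pair in the supremum defining $D_{\cF_h}^2$, yielding
\[
\bigl(\hat f_{k,h}(z)-\cT_h V_{k,h+1}(z)\bigr)^2 \le D_{\cF_h}^2\bigl(z;z_{[k-1],h},\bar\sigma_{[k-1],h}\bigr)\cdot\Bigl[\lambda+\sum_{i\in[k-1]}\bar\sigma_{i,h}^{-2}\bigl(\hat f_{k,h}(z_{i,h})-\cT_h V_{k,h+1}(z_{i,h})\bigr)^2\Bigr].
\]
On the event $\cE_h^{\hat f}$, the bracketed quantity is at most $\hat\beta_k^2$ by construction, and taking square roots gives the first claim. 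The second claim is proved identically: pairing $\check f_{k,h}$ with $\check f^\star\in\cF_h$ in the supremum, and invoking $\cE_h^{\check f}$ to upper bound the in-sample weighted squared residual by $\check\beta_k^2$.

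I do not anticipate a genuine obstacle here, as the argument is a direct substitution. The only subtlety worth spelling out is that completeness must be applied specifically to the \emph{current} value functions $V_{k,h+1}$ and $\check V_{k,h+1}$, which are $[0,1]$-valued by the clipping step, so that their Bellman images live in $\cF_h$; this is exactly what makes $(\hat f_{k,h},\cT_h V_{k,h+1})$ a valid pair inside the $\sup_{f_1,f_2\in\cF_h}$ defining $D_{\cF_h}^2$, and without it one could not bound the pointwise gap by the uncertainty measure.
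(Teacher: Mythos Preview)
Your proposal is correct and follows essentially the same route as the paper: invoke completeness so that $\cT_h V_{k,h+1}\in\cF_h$, plug the pair $(\hat f_{k,h},\cT_h V_{k,h+1})$ into the supremum defining $D_{\cF_h}^2$, bound the in-sample weighted residual by $\hat\beta_k^2$ using the event $\cE_h^{\hat f}$, and take square roots (and symmetrically for the pessimistic part). The paper's proof is identical in structure and justification.
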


\begin{lemma}\label{lemma:opt-pess}
     On the events  $\cE_{h+1}^{\hat f}$ and $\cE_{h+1}^{\check f}$, for each stage $h \leq h' \leq H$ and episode $k\in[K]$, we have $\qvalue_{k,h}(s,a)\ge \qvalue_{h}^*(s,a) \ge \check{\qvalue}_{k,h}(s,a)$. Furthermore, for the value functions $\vvalue_{k,h}(s)$ and $\check{\vvalue}_{k,h}(s)$, we have $\vvalue_{k,h}(s)\ge \vvalue_{h}^*(s) \ge \check{\vvalue}_{k,h}(s)$. 
\end{lemma}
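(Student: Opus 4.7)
}
The plan is to establish both inequalities by a double induction: a backward induction on the stage index $h$ (from $H+1$ down to $1$) nested inside a forward induction on the episode index $k$. Fix the high-probability events $\cE_{h+1}^{\hat f}$ and $\cE_{h+1}^{\check f}$ throughout. The base case of the $h$-induction is immediate since $V_{k,H+1}\equiv V_{H+1}^{*}\equiv \check V_{k,H+1}\equiv 0$, and the base case of the $k$-induction is handled by the initialization $Q_{0,h}\equiv H\ge 1\ge Q_h^{*}$ and $\check Q_{0,h}\equiv 0\le Q_h^{*}$, together with the observation that $Q_h^{*}\in[0,1]$ under the assumption that cumulative rewards are bounded by $1$.

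For the optimistic side, I would first invoke Lemma~\ref{lemma:bern-error} to get
\[
\hat f_{k,h}(s,a) \;\ge\; \cT_h V_{k,h+1}(s,a) \;-\; \hat\beta_k\, D_{\cF_h}\bigl((s,a);\,z_{[k-1],h},\,\bar\sigma_{[k-1],h}\bigr).
\]
By the bonus-oracle property in Definition~\ref{def:bonus-oracle}, $\bar D_{\cF_h}\ge D_{\cF_h}$, so the exploration bonus $b_{k,h}(s,a)=\hat\beta_k\bar D_{\cF_h}(\cdot)$ dominates the regression error, yielding $\hat f_{k,h}(s,a)+b_{k,h}(s,a)\ge \cT_h V_{k,h+1}(s,a)$. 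By the backward induction hypothesis $V_{k,h+1}(s')\ge V_{h+1}^{*}(s')$ together with monotonicity of $\cT_h$ in the value function, the right-hand side is at least $\cT_h V_{h+1}^{*}(s,a)=Q_h^{*}(s,a)$. The clipping in the algorithm, $Q_{k,h}=\min\{\hat f_{k,h}+b_{k,h},\,Q_{k-1,h},\,1\}$, preserves this lower bound: the term $1\ge Q_h^{*}$ by the cumulative reward assumption, and $Q_{k-1,h}\ge Q_h^{*}$ by the forward induction on $k$ (or trivially in the no-switch case where $Q_{k,h}=Q_{k-1,h}$). Taking $\max_a$ on both sides then transfers the inequality to the value functions: $V_{k,h}(s)=\max_a Q_{k,h}(s,a)\ge \max_a Q_h^{*}(s,a)=V_h^{*}(s)$.

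The pessimistic side is completely symmetric. Lemma~\ref{lemma:bern-error} applied to $\check f_{k,h}$ yields $\check f_{k,h}(s,a)\le \cT_h\check V_{k,h+1}(s,a)+\check\beta_k D_{\cF_h}(\cdot)$, and the negative bonus $-\check b_{k,h}\le -\check\beta_k D_{\cF_h}(\cdot)$ absorbs the regression error. Combined with the induction hypothesis $\check V_{k,h+1}\le V_{h+1}^{*}$ and monotonicity of $\cT_h$, this gives $\check f_{k,h}-\check b_{k,h}\le Q_h^{*}$. The algorithm's clipping $\check Q_{k,h}=\max\{\check f_{k,h}-\check b_{k,h},\,\check Q_{k-1,h},\,0\}$ preserves this upper bound, using the forward induction on $k$ together with $0\le Q_h^{*}$. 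Finally, since $\check V_{k,h}(s)=\max_a\check Q_{k,h}(s,a)\le \max_a Q_h^{*}(s,a)=V_h^{*}(s)$, the value-function inequality follows.

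The only nontrivial point to keep track of is the interaction between the two inductions: the step from $k-1$ to $k$ at stage $h$ relies on the optimism/pessimism already being known at all stages $h'>h$ for the current $k$ (to translate the regression guarantee of Lemma~\ref{lemma:bern-error} into a bound on $\cT_h V_{h+1}^{*}$), as well as on the corresponding statement at $(k-1,h)$ (to handle the clipping). Since both dependencies are on strictly earlier nodes in the lexicographic order $(k, -h)$, the induction closes. Apart from this bookkeeping, the argument is a routine optimism/pessimism propagation, and no heavy machinery beyond Lemma~\ref{lemma:bern-error} and the oracle bound $\bar D_{\cF_h}\ge D_{\cF_h}$ is required.
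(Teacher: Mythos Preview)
Your proposal is correct and follows essentially the same approach as the paper: backward induction on $h$, invoking Lemma~\ref{lemma:bern-error} together with the oracle bound $\bar D_{\cF_h}\ge D_{\cF_h}$ to absorb the regression error into the bonus, then using the induction hypothesis at stage $h+1$ and the Bellman optimality equation. The only organizational difference is that you make the forward induction on $k$ explicit to handle the clipping against $Q_{k-1,h}$ and $\check Q_{k-1,h}$, whereas the paper proves the one-step bound uniformly over all episodes within the $h$-induction and then unrolls the running $\min/\max$; these are equivalent ways of closing the same argument.
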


\subsection{Monotonic Variance Estimator}
In this subsection, we introduce the following lemmas to construct the monotonic variance estimator.

\begin{lemma}\label{lemma:hoff-error}
   On the events $\underline\cE^{\tilde f}$, $\underline\cE^{\hat f}$ and $\underline\cE^{\check f}$, for each episode $k\in[K]$ and stage $h\in[H]$, we have
   \begin{align*}
       &\big|\hat{f}_{k,h}(s,a)-\cT_h V_{k,h+1}(s,a)\big|\leq {\beta}_k D_{\cF_h}(z; z_{[k - 1],h}, \bar\sigma_{[k - 1],h}),\notag\\
       &\big|\check{f}_{k,h}(s,a)-\cT_h \check{V}_{k,h+1}(s,a)\big|\leq {\beta}_k D_{\cF_h}(z; z_{[k - 1],h}, \bar\sigma_{[k - 1],h}),\notag\\
       &\big|\tilde{f}_{k,h}(s,a)-\cT_h^2 {V}_{k,h+1}(s,a)\big|\leq \tilde{\beta}_k D_{\cF_h}(z; z_{[k - 1],h}, \bar\sigma_{[k - 1],h}),
   \end{align*}
   where $z=(s,a)$ and $z_{[k - 1],h}=\{z_{1,h},z_{2,h},..,z_{k-1,h}\}$.
\end{lemma}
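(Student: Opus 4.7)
The plan is to derive each of the three pointwise bounds from its corresponding empirical concentration event by a one-line Cauchy--Schwarz-type application of the definition of the generalized uncertainty $D_{\cF_h}^2$ in Definition~\ref{def:ged}, together with Bellman completeness (Assumption~\ref{assumption:complete}).

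First, I would verify that every pair entering the argument is admissible in the supremum defining $D_{\cF_h}^2$. Since $V_{k,h+1}$ and $\check V_{k,h+1}$ both map $\cS \to [0,1]$, Assumption~\ref{assumption:complete} ensures that the regression targets $\cT_h V_{k,h+1}$, $\cT_h \check V_{k,h+1}$, and $\cT_h^2 V_{k,h+1}$ all belong to $\cF_h$. The estimators $\hat f_{k,h}$, $\check f_{k,h}$, $\tilde f_{k,h}$ lie in $\cF_h$ by construction (they are the minimizers of the least-squares objectives over $\cF_h$), so each pair $(\hat f_{k,h}, \cT_h V_{k,h+1})$, $(\check f_{k,h}, \cT_h \check V_{k,h+1})$, $(\tilde f_{k,h}, \cT_h^2 V_{k,h+1})$ is a valid candidate in the supremum.

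For the $\hat f$ bound, a direct application of the definition yields
\begin{align*}
\big(\hat f_{k,h}(z) - \cT_h V_{k,h+1}(z)\big)^2 \le D_{\cF_h}^2\big(z; z_{[k-1],h}, \bar\sigma_{[k-1],h}\big) \cdot \Big[\lambda + \sum_{i \in [k-1]} \bar\sigma_{i,h}^{-2}\big(\hat f_{k,h}(z_i) - \cT_h V_{k,h+1}(z_i)\big)^2\Big].
\end{align*}
On the event $\underline\cE_{k,h}^{\hat f}$ defined in \eqref{eq:def:hat:1}, the bracket is at most $\beta_k^2$, so taking square roots gives the claimed inequality. The argument for $\check f_{k,h}$ on $\underline\cE_{k,h}^{\check f}$ is literally identical after replacing $\hat f_{k,h}$ and $V_{k,h+1}$ by $\check f_{k,h}$ and $\check V_{k,h+1}$.

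The last case, $\tilde f_{k,h}$, is the only place that requires a small care: the event $\underline\cE_{k,h}^{\tilde f}$ controls the \emph{unweighted} empirical residual by $\tilde\beta_k^2$, so the same Cauchy--Schwarz step first gives $|\tilde f_{k,h}(z) - \cT_h^2 V_{k,h+1}(z)| \le \tilde\beta_k \cdot D_{\cF_h}(z; z_{[k-1],h}, 1_{[k-1]})$ with unit weights. The stated bound in terms of $D_{\cF_h}(z; z_{[k-1],h}, \bar\sigma_{[k-1],h})$ then follows by comparing the two denominators in Definition~\ref{def:ged}, exploiting the uniform lower bound $\bar\sigma_{i,h} \ge \alpha$ from line~\ref{algorithm:def-variance} of Algorithm~\ref{algorithm1}. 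The only nontrivial ingredient in the whole proof is completeness together with the definition of $D_{\cF_h}^2$; no concentration inequality has to be re-derived, since all of them are packaged in the already-established events $\underline\cE^{\hat f}$, $\underline\cE^{\check f}$, $\underline\cE^{\tilde f}$ (Lemmas~\ref{lemma:tilde-event}--\ref{lemma:hat-event-hoeffding1}).
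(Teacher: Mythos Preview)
Your treatment of the $\hat f$ and $\check f$ bounds is correct and essentially identical to the paper's: both arguments use Definition~\ref{def:ged} (with Assumption~\ref{assumption:complete} ensuring the Bellman targets lie in $\cF_h$) to dominate the pointwise squared error by $D_{\cF_h}^2$ times the $\bar\sigma$-weighted empirical loss plus $\lambda$, and then invoke $\underline\cE^{\hat f}$ and $\underline\cE^{\check f}$.

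For the $\tilde f$ bound there is a genuine gap. You are right that $\underline\cE^{\tilde f}$ in \eqref{eq:def:tilde} controls only the \emph{unweighted} residual, so the direct route first gives $|\tilde f_{k,h}(z)-\cT_h^2 V_{k,h+1}(z)|\le \tilde\beta_k\, D_{\cF_h}(z;z_{[k-1],h},1_{[k-1]})$. However, your comparison step does not deliver the stated inequality. From $\bar\sigma_{i,h}\ge\alpha$ one obtains $\bar\sigma_{i,h}^{-2}\le\alpha^{-2}$, so the $\bar\sigma$-weighted denominator in Definition~\ref{def:ged} is at most $\alpha^{-2}$ times the unit-weight denominator, which yields only $D_{\cF_h}(z;\cdot,1_{[k-1]})\le \alpha^{-1}\,D_{\cF_h}(z;\cdot,\bar\sigma_{[k-1],h})$. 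Since $\alpha=1/\sqrt{KH}<1$ in Theorem~\ref{thm:main}, this introduces an extra $\sqrt{KH}$ factor rather than the bound claimed in the lemma. To get $D_{\cF_h}(\cdot;\cdot,1)\le D_{\cF_h}(\cdot;\cdot,\bar\sigma)$ you would need $\bar\sigma_{i,h}\ge 1$ for all $i$, which line~\ref{algorithm:def-variance} does not guarantee. The paper's own proof simply writes the $\bar\sigma$-weighted sum and invokes $\underline\cE^{\tilde f}$ without commenting on the weight mismatch, so the subtlety you flagged is real; but the fix you propose does not close it.
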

\begin{lemma}\label{lemma:variance-estimator-E}
   On the events $\underline\cE^{\tilde f}$, $\underline\cE^{\hat f}$, $\underline\cE^{\check f}$, $\cE_{h+1}^{\hat f}$, $\cE_{h+1}^{\check f}$, for each episode $k\in[K]$, the empirical variance $[\bar\VV_h\vvalue_{k,h+1}](s_h^k,a_h^k)$ satisfies the following inequalities:
     \begin{align}
         &\big|[\bar\VV_h\vvalue_{k,h+1}](s_h^k,a_h^k)         -[\VV_h\vvalue_{k,h+1}](s_h^k,a_h^k)\big|\leq E_{k,h},\notag\\
         &\big|[\bar\VV_h\vvalue_{k,h+1}](s_h^k,a_h^k)         -[\VV_h\vvalue_{h+1}^*](s_h^k,a_h^k)\big|\leq E_{k,h}+F_{k,h}.\notag
     \end{align}
\end{lemma}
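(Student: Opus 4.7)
The plan is to prove the two inequalities by straightforward decomposition, using Lemma~\ref{lemma:hoff-error} (Hoeffding-type concentration for $\hat f_{k,h}$, $\tilde f_{k,h}$, $\check f_{k,h}$), Lemma~\ref{lemma:bern-error} (the tighter Bellman-type concentration for $\hat f_{k,h}$, $\check f_{k,h}$ on the events $\cE_{h+1}^{\hat f}$ and $\cE_{h+1}^{\check f}$), and the optimism/pessimism chain $\check\vvalue_{k,h+1}\le \vvalue_{h+1}^*\le \vvalue_{k,h+1}$ from Lemma~\ref{lemma:opt-pess}. The bound $\bar D_{\cF_h}\ge D_{\cF_h}$ from Definition~\ref{def:bonus-oracle} lets us freely replace $D_{\cF_h}$ by $\bar D_{\cF_h}$ inside the bonus.

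For the first inequality, I would write the identity
\begin{align*}
[\bar\VV_h\vvalue_{k,h+1}] - [\VV_h\vvalue_{k,h+1}] = \bigl(\tilde f_{k,h} - \cT_h^2 \vvalue_{k,h+1}\bigr) - \bigl(\hat f_{k,h}^2 - (\cT_h \vvalue_{k,h+1})^2\bigr)
\end{align*}
evaluated at $(s_h^k,a_h^k)$, and bound each piece separately. Lemma~\ref{lemma:hoff-error} controls the first term by $\tilde\beta_k\,\bar D_{\cF_h}$. For the second, factor $|\hat f_{k,h}^2-(\cT_h\vvalue_{k,h+1})^2| = |\hat f_{k,h}-\cT_h\vvalue_{k,h+1}|\cdot|\hat f_{k,h}+\cT_h\vvalue_{k,h+1}|$; since both factors are bounded by $L$ under Assumption~\ref{assumption:complete}, this is at most $2L\beta_k\,\bar D_{\cF_h}$, again by Lemma~\ref{lemma:hoff-error}. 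Adding the two pieces gives $(2L\beta_k+\tilde\beta_k)\,\bar D_{\cF_h}$. Since $\bar\VV_h$ and $\VV_h$ are both variances of a $[0,1]$-valued random variable, their difference is trivially $O(1)$, which produces the $\min(1,\cdot)$ wrapping and matches the definition of $E_{k,h}$.

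For the second inequality, by triangle inequality it suffices to prove $\bigl|[\VV_h \vvalue_{k,h+1}]-[\VV_h \vvalue_{h+1}^*]\bigr|\le F_{k,h}$ at $(s_h^k,a_h^k)$. Using the covariance identity $\VV X-\VV Y=\mathrm{Cov}(X-Y,X+Y)$ together with $|X|,|Y|\le 1$, I get
\begin{align*}
\bigl|[\VV_h \vvalue_{k,h+1}]-[\VV_h \vvalue_{h+1}^*]\bigr| \le 4\,\EE_{s'\sim\PP_h(\cdot|s_h^k,a_h^k)}\bigl|\vvalue_{k,h+1}(s')-\vvalue_{h+1}^*(s')\bigr|.
\end{align*}
On the good events, Lemma~\ref{lemma:opt-pess} yields $\check\vvalue_{k,h+1}\le \vvalue_{h+1}^*\le \vvalue_{k,h+1}$, so the expectation equals $[\PP_h(\vvalue_{k,h+1}-\vvalue_{h+1}^*)](s_h^k,a_h^k)$ and is further upper bounded by $[\PP_h(\vvalue_{k,h+1}-\check\vvalue_{k,h+1})](s_h^k,a_h^k)$. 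Because the rewards cancel, this quantity equals $\cT_h\vvalue_{k,h+1}(s_h^k,a_h^k)-\cT_h\check\vvalue_{k,h+1}(s_h^k,a_h^k)$. Applying Lemma~\ref{lemma:bern-error} to each term gives
\begin{align*}
\cT_h\vvalue_{k,h+1}-\cT_h\check\vvalue_{k,h+1} \le \hat f_{k,h}-\check f_{k,h} + (\hat\beta_k+\check\beta_k)\,\bar D_{\cF_h}(z;z_{[k-1],h},\bar\sigma_{[k-1],h}).
\end{align*}
Multiplying by 4 and comparing with the definition of $F_{k,h}$, the coefficient $\log(\cN(\cF,\epsilon)\cdot\cN_\epsilon(K))\ge 2$ absorbs the constants, and the trivial bound $|\VV_h\vvalue_{k,h+1}-\VV_h\vvalue_{h+1}^*|\le 1$ introduces the $\min(1,\cdot)$.

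The only nontrivial aspect is verifying that all the invoked concentration events are in force simultaneously at stage $h$: the hoeffding-type events $\underline\cE^{\tilde f},\underline\cE^{\hat f},\underline\cE^{\check f}$ give the Lemma~\ref{lemma:hoff-error} bounds, while the Bellman-type events $\cE_{h+1}^{\hat f},\cE_{h+1}^{\check f}$ (established inductively in the backward sweep) give the tighter Lemma~\ref{lemma:bern-error} bounds; this is consistent with the induction structure for proving $\cE_h^{\hat f},\cE_h^{\check f}$ because variance estimation at stage $h$ depends only on quantities one stage higher. No additional concentration is needed, so after the decompositions above the result follows by algebra.
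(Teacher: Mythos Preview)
Your decomposition for both inequalities follows the paper's approach, and the first inequality is handled correctly via Lemma~\ref{lemma:hoff-error}. There is, however, a genuine gap in the second inequality. You invoke Lemma~\ref{lemma:bern-error} at stage $h$ to control $\cT_h\vvalue_{k,h+1}-\hat f_{k,h}$ and $\cT_h\check\vvalue_{k,h+1}-\check f_{k,h}$ with radii $\hat\beta_k,\check\beta_k$. But Lemma~\ref{lemma:bern-error} requires the events $\cE_h^{\hat f}$ and $\cE_h^{\check f}$, whereas the hypotheses of Lemma~\ref{lemma:variance-estimator-E} only include $\cE_{h+1}^{\hat f}$ and $\cE_{h+1}^{\check f}$. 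Since $\cE_{h'}^{\hat f}$ is the event that concentration holds for all stages $\ge h'$, the event $\cE_{h+1}^{\hat f}$ says nothing about stage $h$; your last paragraph asserts that $\cE_{h+1}$ suffices for Lemma~\ref{lemma:bern-error} at stage $h$, which is exactly backwards.

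This is not a cosmetic issue. In the backward induction of Lemma~\ref{lemma:final-concenetration}, the variance-estimator bounds at stage $h$ are precisely what force the indicators $\hat\ind_{i,h},\check\ind_{i,h}$ to equal $1$, thereby upgrading $\bar\cE_{k,h}^{\hat f},\bar\cE_{k,h}^{\check f}$ to $\cE_h^{\hat f},\cE_h^{\check f}$. Assuming $\cE_h^{\hat f}$ inside the proof of Lemma~\ref{lemma:variance-estimator-E} would make that induction circular. The paper instead applies Lemma~\ref{lemma:hoff-error} at stage $h$ (valid under $\underline\cE^{\hat f},\underline\cE^{\check f}$, which \emph{are} assumed), obtaining the coarser radius $\beta_k$; this is why $F_{k,h}$ is defined with $4\beta_k\,\bar D_{\cF_h}$ rather than a multiple of $\hat\beta_k$. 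With that single substitution---Lemma~\ref{lemma:hoff-error} in place of Lemma~\ref{lemma:bern-error}---your argument is essentially identical to the paper's.
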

\begin{lemma}\label{lemma:variance-estimator-D}
   On the events  $\underline\cE^{\hat f}$, $\underline\cE^{\check f}$, $\cE_{h+1}^{\hat f}$, $\cE_{h+1}^{\check f}$, for each episode $k\in[K]$ and $i>k$, we have
   \begin{align*}
       \left(\log \cN_\cF( \epsilon) +  \log \cN_{\epsilon}(K)\right)\cdot \big[\VV_h (V_{i,h + 1} -  V_{h + 1}^*)\big](s_h^k,a_h^k) &\leq F_{i,h},\notag\\
       \left(\log \cN_\cF( \epsilon) +  \log \cN_{\epsilon}(K)\right)\cdot \big[\VV_h (V_{h + 1}^* - \check V_{i, h + 1})\big](s_h^k,a_h^k) &\leq F_{i,h},
   \end{align*}
\end{lemma}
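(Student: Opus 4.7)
The plan is to prove both bounds by a single chain that first sandwiches each target quantity between $0$ and the easier-to-estimate gap $V_{i,h+1} - \check V_{i,h+1}$, converts the variance into a first moment (a Bellman residual), and finally replaces the intractable Bellman operators by the regression estimators $\hat f_{i,h}$ and $\check f_{i,h}$ using the Hoeffding-type confidence result. Matching the resulting expression against the definition of $F_{i,h}$ closes the bound.

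First I would combine $\cE_{h+1}^{\hat f}$ and $\cE_{h+1}^{\check f}$ to invoke Lemma \ref{lemma:opt-pess} at episode $i$, giving the pointwise sandwich $\check V_{i,h+1}(s) \le V_{h+1}^*(s) \le V_{i,h+1}(s)$ for all $s\in\cS$. Both $V_{i,h+1}-V_{h+1}^*$ and $V_{h+1}^*-\check V_{i,h+1}$ are then nonnegative and bounded by $1$, so the elementary identity $\mathrm{Var}(X)\le \EE[X]$ for $X\in[0,1]$ applied to $s'\sim\PP_h(\cdot\mid s_h^k,a_h^k)$ yields
\begin{align*}
[\VV_h (V_{i,h+1}-V_{h+1}^*)](s_h^k,a_h^k) &\le [\PP_h (V_{i,h+1}-\check V_{i,h+1})](s_h^k,a_h^k), \\
[\VV_h (V_{h+1}^*-\check V_{i,h+1})](s_h^k,a_h^k) &\le [\PP_h (V_{i,h+1}-\check V_{i,h+1})](s_h^k,a_h^k),
\end{align*}
where the last step in each line enlarges the gap using the sandwich. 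Since the reward terms cancel in the difference, the right-hand side equals $\cT_h V_{i,h+1}(s_h^k,a_h^k)-\cT_h \check V_{i,h+1}(s_h^k,a_h^k)$.

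Next I would apply Lemma \ref{lemma:hoff-error} (valid under $\underline\cE^{\hat f}$ and $\underline\cE^{\check f}$) at episode $i$ and the pair $z=(s_h^k,a_h^k)$, together with the bonus-oracle relation $D_{\cF_h}\le \bar D_{\cF_h}$ from Definition \ref{def:bonus-oracle}, to obtain
\begin{align*}
\cT_h V_{i,h+1}(z) - \cT_h \check V_{i,h+1}(z) \le \hat f_{i,h}(z) - \check f_{i,h}(z) + 2\beta_i \bar D_{\cF_h}\bigl(z; z_{[i-1],h}, \bar\sigma_{[i-1],h}\bigr).
\end{align*}
Combining this with the previous step, taking the minimum with the trivial variance bound of $1$, absorbing an extra factor of $2$ into the coefficients $2$ and $4$ in $F_{i,h}$, and multiplying through by $\log\cN_\cF(\epsilon)+\log\cN_\epsilon(K)$ reproduces the definition of $F_{i,h}$ exactly.

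The only step that is not purely algebraic is the sandwich $\check V_{i,h+1}\le V_{h+1}^*\le V_{i,h+1}$; this is why the stronger Bellman-type events $\cE_{h+1}^{\hat f}$ and $\cE_{h+1}^{\check f}$ are required rather than just the Hoeffding versions, since Lemma \ref{lemma:opt-pess} is established only under the former. Once that ordering is secured, the remainder is a uniform application of Lemma \ref{lemma:hoff-error} and bookkeeping of constants, so I do not anticipate any further obstacle.
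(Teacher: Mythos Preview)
Your argument has a genuine gap: you skip the monotonicity step that is the core of this lemma, and as a result the expression you land on does \emph{not} match $F_{i,h}$ (nor any quantity usable downstream).

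Concretely, after bounding the variance by $[\PP_h(V_{i,h+1}-\check V_{i,h+1})](s_h^k,a_h^k)$ you apply Lemma~\ref{lemma:hoff-error} at episode $i$ and the point $z=(s_h^k,a_h^k)$, obtaining a bound of the form $\hat f_{i,h}(s_h^k,a_h^k)-\check f_{i,h}(s_h^k,a_h^k)+2\beta_i\,\bar D_{\cF_h}\bigl((s_h^k,a_h^k);z_{[i-1],h},\bar\sigma_{[i-1],h}\bigr)$. But by definition $F_{i,h}$ is built from $\hat f_{i,h}(s_h^i,a_h^i)$, $\check f_{i,h}(s_h^i,a_h^i)$ and $\bar D_{\cF_h}\bigl((s_h^i,a_h^i);\cdots\bigr)$, i.e.\ at the state-action pair visited in episode $i$, not episode $k$. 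So the claim ``reproduces the definition of $F_{i,h}$ exactly'' is false; the two expressions live at different points.

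The paper's proof inserts one extra step before invoking Lemma~\ref{lemma:hoff-error}: since $i>k$, the monotone construction gives $V_{i,h+1}\le V_{k,h+1}$ and $\check V_{i,h+1}\ge\check V_{k,h+1}$ pointwise, so
\[
[\PP_h(V_{i,h+1}-\check V_{i,h+1})](s_h^k,a_h^k)\ \le\ [\PP_h(V_{k,h+1}-\check V_{k,h+1})](s_h^k,a_h^k),
\]
and only then is Lemma~\ref{lemma:hoff-error} applied, at episode $k$. This yields $2\hat f_{k,h}(s_h^k,a_h^k)-2\check f_{k,h}(s_h^k,a_h^k)+4\beta_k\,\bar D_{\cF_h}\bigl((s_h^k,a_h^k);z_{[k-1],h},\bar\sigma_{[k-1],h}\bigr)$, which is exactly the content of $F_{k,h}$. (The $F_{i,h}$ on the right-hand side of the lemma statement is a typo for $F_{k,h}$; compare how the lemma is used in the proof of Lemma~\ref{lemma:final-concenetration}, where $\sigma_{k,h}^2\ge F_{k,h}$ must dominate the variance for \emph{all later} $i$.) This is not a cosmetic difference: the whole point of the lemma is that the variance estimator $\sigma_{k,h}$, set at episode $k$, uniformly controls the variance of every future target $V_{i,h+1}$. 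Your bound depends on episode-$i$ regressors and is therefore not available when $\sigma_{k,h}$ is computed, so the induction in Lemma~\ref{lemma:final-concenetration} would break.
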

Based on previous lemmas, we can construct an optimistic estimator $\sigma_{k,h}$ for the transition variance. Under this situation, the weighted regression have the following guarantee.
\begin{lemma}\label{lemma:final-concenetration}
   If the events $\underline\cE^{\tilde f}$, $\underline\cE^{\hat f}$, $\underline\cE^{\check f}$, $\bar\cE^{\hat f}$ and $\bar\cE^{\check f}$ hold, then events $\cE^{\hat f}= \cE_{ 1}^{\hat f}$ and $\cE^{\check f}=\cE_1^{\check f}$ hold.
\end{lemma}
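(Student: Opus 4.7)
\textbf{Proof plan for Lemma \ref{lemma:final-concenetration}.}
The plan is to perform backward induction on the stage index $h$, simultaneously establishing $\cE_{h}^{\hat f}$ and $\cE_{h}^{\check f}$ starting from the trivial base case $h = H+1$. The conceptual core is that the events $\bar\cE^{\hat f}, \bar\cE^{\check f}$ already give exactly the bounds we want, except that the weighted sums in \eqref{eq:def:hat:2} and \eqref{eq:def:check:2} are multiplied by the indicator functions $\hat\ind_{i,h}$ and $\check\ind_{i,h}$ defined in \eqref{eq:def:hatind} and \eqref{eq:def:checkind}. Thus it suffices to show that, under the inductive hypothesis at stage $h+1$, every one of these indicator functions evaluates to $1$ for every episode $i \le k-1$; once this is done, $\bar\cE_{k,h}^{\hat f}$ collapses to the unindicated bound defining $\cE_{h}^{\hat f}$, and analogously for the pessimistic side.

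Fix $h$ and assume $\cE_{h+1}^{\hat f}$ and $\cE_{h+1}^{\check f}$ hold. I would verify the three clauses of $\hat\ind_{i,h}$ one at a time, drawing on the auxiliary lemmas already proved: (i) the optimism/pessimism clause $V_{i,h+1}(s) \ge V_{h+1}^*(s)$ follows at once from Lemma~\ref{lemma:opt-pess} applied with the inductive hypothesis; (ii) the variance upper bound $[\VV_h V_{h+1}^*](s_h^i,a_h^i)\le \bar\sigma_{i,h}^2$ follows from Lemma~\ref{lemma:variance-estimator-E}, which gives $[\VV_h V_{h+1}^*] \le [\bar\VV_h V_{i,h+1}] + E_{i,h} + F_{i,h}$, and then from the construction \eqref{eq:variance} together with the clipping $\bar\sigma_{i,h}\ge \sigma_{i,h}$ from Line~\ref{algorithm:def-variance} of Algorithm~\ref{algorithm1}; (iii) the Bellman-gap clause reduces by monotonicity of $\PP_h$ and pessimism (Lemma~\ref{lemma:opt-pess}) to $[\cT_h V_{i,h+1}-\cT_h V_{h+1}^*](s_h^i,a_h^i) \le [\PP_h(V_{i,h+1}-\check V_{i,h+1})](s_h^i,a_h^i)$, after which the concentration statements from Lemma~\ref{lemma:bern-error} let me replace the Bellman images by $\hat f_{i,h}$ and $\check f_{i,h}$ up to a $\beta_i \bar D_{\cF_h}$ correction, exactly matching the structure of $F_{i,h}$; the desired bound then follows because $\bar\sigma_{i,h}^2\ge F_{i,h}$ by construction and $F_{i,h}$ already carries the factor $\log\cN_{\cF}(\epsilon)+\log\cN_\epsilon(K)$. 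The three clauses of $\check\ind_{i,h}$ are handled symmetrically, substituting pessimism and Lemma~\ref{lemma:variance-estimator-D} where appropriate.

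Having shown that all indicators equal $1$ for every $i \le k-1$ and every $k$, the assumed bounds in $\bar\cE^{\hat f}$ and $\bar\cE^{\check f}$ become the unindicated inequalities defining $\cE_{h}^{\hat f}$ and $\cE_{h}^{\check f}$, completing the inductive step and hence the proof once $h=1$ is reached.

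The main obstacle I anticipate is clause (iii) above: the indicator inequality compares a first-moment Bellman gap to a variance-like quantity scaled by an inverse log factor, and the argument must thread through optimism, pessimism, the Bellman concentration of both $\hat f_{i,h}$ and $\check f_{i,h}$, and the precise form of $F_{i,h}$ in order to absorb every error term inside $\bar\sigma_{i,h}^2$. Care is also required to ensure that the application of Lemmas~\ref{lemma:opt-pess}, \ref{lemma:bern-error}, \ref{lemma:variance-estimator-E}, and \ref{lemma:variance-estimator-D} is logically consistent with the induction: each of them is invoked only with the stage-$(h+1)$ events that have already been established, so no circularity arises.
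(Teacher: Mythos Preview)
Your proposal is correct and follows the same backward induction as the paper: assume $\cE_{h+1}^{\hat f},\cE_{h+1}^{\check f}$, verify that all three indicator clauses in $\hat\ind_{i,h},\check\ind_{i,h}$ equal $1$ via Lemmas~\ref{lemma:opt-pess}, \ref{lemma:variance-estimator-E}, \ref{lemma:variance-estimator-D}, and conclude that the assumed events $\bar\cE^{\hat f},\bar\cE^{\check f}$ collapse to $\cE_h^{\hat f},\cE_h^{\check f}$.

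One caveat in your clause~(iii): invoking Lemma~\ref{lemma:bern-error} at stage $h$ would be circular, since that lemma is stated on the events $\cE_h^{\hat f},\cE_h^{\check f}$ you are in the process of establishing. The non-circular substitute is Lemma~\ref{lemma:hoff-error}, which needs only the Hoeffding-type events $\underline\cE^{\hat f},\underline\cE^{\check f},\underline\cE^{\tilde f}$ that are already assumed; this is consistent with the $\beta_i$-sized (rather than $\hat\beta_i$-sized) correction you wrote and with the definition of $F_{i,h}$. Equivalently, you can simply cite Lemma~\ref{lemma:variance-estimator-D} directly for clause~(iii), as the paper does, since it packages exactly this argument under the right hypotheses.
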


\subsection{Proof of Regret Bound}\label{sub-proof}
In the subsection, we first define the following high proSbability events to control the stochastic noise from the transition process:
\begin{align*}
   &\cE_1=\bigg\{\forall h'\in[H], \sum_{k=1}^K\sum_{h=h'}^{H}  [\PP_h(\vvalue_{k,h+1}-\vvalue_{h+1}^{\pi^k})\big](s_h^k,a_h^k) - \sum_{k=1}^K\sum_{h=h'}^{H}\big(\vvalue_{k,h+1}(s_{h+1}^{k})-\vvalue_{h+1}^{\pi^k}(s_{h+1}^{k})\big) \\& \leq 2\sqrt{\sum_{k = 1}^K \sum_{h=h'}^H [\VV_h (\vvalue_{k,h+1}-\vvalue_{h+1}^{\pi^k})](s_h^k, a_h^k) \log(2K^2 H /\delta)} + 2\sqrt{\log(2K^2 H /\delta)} + 2 \log(2K^2 H /\delta) \bigg\},\notag\\
   &\cE_2=\bigg\{\forall h'\in[H], \sum_{k=1}^K\sum_{h=h'}^{H}[\PP_h(\vvalue_{k,h+1}-\check{\vvalue}_{k,h+1})\big](s_h^k,a_h^k) -\sum_{k=1}^K\sum_{h=h'}^{H}\big(\vvalue_{k,h+1}(s_{h+1}^{k})-\check{\vvalue}_{k,h+1}(s_{h+1}^{k})\big)\\&\leq 2\sqrt{\sum_{k = 1}^K \sum_{h=h'}^H [\VV_h (\vvalue_{k,h+1}-\check V_{k, h + 1})](s_h^k, a_h^k) \log(2K^2 H /\delta)} + 2\sqrt{\log(2K^2 H /\delta)} + 2 \log(2K^2 H /\delta)\bigg\}.\notag
\end{align*}
According to Freedman inequality (Lemma \ref{lemma:freedman}), we directly have the following results.
\begin{lemma}\label{lemma:transition-noise}
   Events $\cE_1$ and $\cE_2$ hold with probability at least $1-2\delta$.
\end{lemma}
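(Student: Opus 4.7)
The plan is to apply Freedman's inequality (Lemma \ref{lemma:freedman}) to two martingale difference sequences, together with a peeling argument and a union bound. Both $\cE_1$ and $\cE_2$ concern sums of differences of the form
$$X_{k,h} := [\PP_h W_{k,h+1}](s_h^k, a_h^k) - W_{k,h+1}(s_{h+1}^k),$$
where $W_{k,h+1} = \vvalue_{k,h+1} - \vvalue_{h+1}^{\pi^k}$ for $\cE_1$ and $W_{k,h+1} = \vvalue_{k,h+1} - \check{\vvalue}_{k,h+1}$ for $\cE_2$. Since the two sums are structurally identical, the plan is to prove a single template concentration bound and then instantiate it twice.

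Fix $h' \in [H]$ and let $\{\cF_{k,h}\}$ denote the natural filtration containing all observations up to and including $(s_h^k, a_h^k)$. I would first verify the martingale property: because $\vvalue_{k,h+1}$, $\vvalue_{h+1}^{\pi^k}$, and $\check{\vvalue}_{k,h+1}$ are all determined by the planning phase at the start of episode $k$ using only data from episodes $1,\ldots,k-1$, they are $\cF_{k,1}$-measurable; combined with $s_{h+1}^k \sim \PP_h(\cdot\mid s_h^k, a_h^k)$, this yields $\EE[X_{k,h}\mid\cF_{k,h}] = 0$ and $\EE[X_{k,h}^2\mid\cF_{k,h}] = [\VV_h W_{k,h+1}](s_h^k, a_h^k)$. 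Since all value functions lie in $[0,1]$, we also have $|X_{k,h}| \le 1$, which is the range parameter required by Freedman's inequality.

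With these ingredients, I would apply Freedman's inequality to the sum $\sum_{k=1}^K \sum_{h=h'}^H X_{k,h}$ taken in the natural temporal order. A direct application produces a bound of the form $O(\sqrt{V \log(1/\delta')} + \log(1/\delta'))$, where $V$ is the predictable quadratic variation. To convert this into the data-dependent bound stated in $\cE_1$ and $\cE_2$, I would peel over the $O(\log K)$ dyadic magnitudes that $V$ can take (since $V \le KH$); this multiplies the effective failure probability by a $\log K$ factor, which is absorbed by replacing $\log(1/\delta')$ with $\log(2K^2 H/\delta)$ after setting $\delta'$ appropriately. Union-bounding the resulting tail inequality over $h' \in [H]$ establishes $\cE_1$ (respectively $\cE_2$) with probability at least $1-\delta$, and a final union bound over the two events yields the desired $1-2\delta$ guarantee.

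The only bookkeeping subtlety is the peeling, which is what produces the $K^2$ inside the logarithm rather than a bare $K$. No deeper obstacle arises here: the key measurability property — that all relevant value functions are $\cF_{k,1}$-measurable — is a direct consequence of the algorithm's planning structure, so the martingale property is immediate and, in contrast to the concentration results for $\hat f_{k,h}$, $\check f_{k,h}$, and $\tilde f_{k,h}$, no uniform convergence argument over a function class is required.
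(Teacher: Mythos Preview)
Your proposal is correct and follows essentially the same route as the paper: verify that the increments form a bounded martingale difference sequence, apply the Freedman-type inequality (Lemma~\ref{lemma:freedman}), and union bound over $h'\in[H]$ and the two events. One small simplification: Lemma~\ref{lemma:freedman} as stated in the paper already returns a bound in terms of the \emph{realized} predictable quadratic variation $\mathrm{Var}_n$ (with the extra $(2nc^2/\varepsilon+2)$ factor in the failure probability absorbing what a peeling argument would otherwise contribute), so your separate dyadic peeling step is not needed---choosing $c=1$, $\varepsilon=1$, and $p$ so that $(2KH+2)\cdot H\cdot p\le\delta$ gives the displayed bound directly.
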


\begin{proof} 
Fix an arbitrary $h \in [H]$. Applying Lemma \ref{lemma:freedman}, we have the following inequality: 
\begin{align} 
&\sum_{k = 1}^K \sum_{h' = h}^H \PP_h(\vvalue_{k,h+1}-\vvalue_{h+1}^{\pi^k})\big](s_h^k,a_h^k)- \sum_{k=1}^K\sum_{h'=h}^{H}\big(\vvalue_{k,h+1}(s_{h+1}^{k})-\vvalue_{h+1}^{\pi^k}(s_{h+1}^{k})\big) \notag
\\&\le 2\sqrt{\sum_{k = 1}^K \sum_{h' = h}^H [\VV_h (\vvalue_{k,h+1}-\vvalue_{h+1}^{\pi^k})](s_h^k, a_h^k) \log(2K^2 H /\delta)} + 2\sqrt{\log(2K^2 H /\delta)} + 2 \log(2K^2 H /\delta) \label{eq:prob:e1}
\end{align}
Applying a union bound for \eqref{eq:prob:e1} across all $h \in [H], k \ge 0$, we have Event $\cE_1$ holds with probability at least $1 - 2\delta$. 

Similarly, we also have the corresponding high-probability bound for $\cE_2$. 
\end{proof}
Next, we need the following lemma to control the summation of confidence radius.
\begin{lemma}\label{lemma:sum-bonus}
   For any parameters $\beta \ge 1$ and stage $h\in [H]$, the summation of confidence radius over episode $k\in[K]$ is upper bounded by
   \begin{align*}
&\sum_{k=1}^K\min\Big(\beta D_{\cF_h}(z; z_{[k - 1],h}, \bar\sigma_{[k - 1],h}),1\Big)\\&\ \leq (1+\beta\gamma^2) \dim_{\alpha, K}(\cF_h) + 2 \beta \sqrt{\dim_{\alpha, K}(\cF_h)} \sqrt{\sum_{k=1}^K (\sigma_{k,h}^2+\alpha^2)},
   \end{align*}
   where $z=(s,a)$ and $z_{[k - 1],h}=\{z_{1,h},z_{2,h},..,z_{k-1,h}\}$.
\end{lemma}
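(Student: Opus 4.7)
The plan is to apply Cauchy--Schwarz after the decomposition $D_{\cF_h}(z_{k,h};\cdot) = (D_{\cF_h}/\bar\sigma_{k,h})\cdot\bar\sigma_{k,h}$, and then to control the two resulting factors using the generalized Eluder dimension from Definition~\ref{def:ged} together with a bootstrap bound on $\sum_k \bar\sigma_{k,h}^2$. Writing $d_k := D_{\cF_h}(z_{k,h}; z_{[k-1],h}, \bar\sigma_{[k-1],h})$ and $\bar d_k$ for its oracle surrogate from Definition~\ref{def:bonus-oracle} (so that $d_k \le \bar d_k \le C d_k$), the first observation is that the weight rule $\bar\sigma_{k,h} \ge \gamma\bar d_k \ge \gamma d_k$ together with $\gamma \ge 1$ (forced by \eqref{eq:def:gamma}) gives $d_k/\bar\sigma_{k,h} \le 1/\gamma \le 1$ for every $k$. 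Hence $\min(1, d_k^2/\bar\sigma_{k,h}^2) = d_k^2/\bar\sigma_{k,h}^2$, and Definition~\ref{def:ged} immediately yields $\sum_{k=1}^K d_k^2/\bar\sigma_{k,h}^2 \le \dim_{\alpha, K}(\cF_h)$.

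Bounding $\min(\beta d_k, 1) \le \beta d_k$ and applying Cauchy--Schwarz to $d_k = (d_k/\bar\sigma_{k,h})\cdot\bar\sigma_{k,h}$ then gives
\[
  \sum_{k=1}^K \min(\beta d_k, 1) \;\le\; \beta\sqrt{\textstyle\sum_{k=1}^K d_k^2/\bar\sigma_{k,h}^2}\,\cdot\,\sqrt{\textstyle\sum_{k=1}^K \bar\sigma_{k,h}^2} \;\le\; \beta\sqrt{\dim_{\alpha, K}(\cF_h)}\cdot\sqrt{\textstyle\sum_{k=1}^K \bar\sigma_{k,h}^2}.
\]
The remaining task is to bound $\sum_k \bar\sigma_{k,h}^2$ by $\sum_k(\sigma_{k,h}^2+\alpha^2)$ plus a lower-order Eluder-dimension term. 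Starting from the crude inequality $\bar\sigma_{k,h}^2 \le \sigma_{k,h}^2 + \alpha^2 + \gamma^2\bar d_k^2$, I plan to split $[K]$ into $\cK_2 := \{k : \bar\sigma_{k,h} = \gamma\bar d_k\}$ and its complement $\cK_1$. On $\cK_1$, $\gamma^2\bar d_k^2 \le \max(\sigma_{k,h}^2,\alpha^2)$, so that part of the sum is absorbed into $\sum_k(\sigma_{k,h}^2+\alpha^2)$ up to a constant factor.

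The main obstacle is the set $\cK_2$, on which the naive inequality $\gamma^2\bar d_k^2 \le \bar\sigma_{k,h}^2$ is circular. I will break this circularity by combining two facts: (i) the pointwise estimate $\bar d_k^2 \le C^2 d_k^2 \le 4C^2L^2/\lambda = O(1)$, which follows directly from the supremum definition of $D_{\cF_h}^2$ in Definition~\ref{def:ged}; and (ii) the counting bound $|\cK_2| \le O(\gamma^2\,\dim_{\alpha,K}(\cF_h))$, which holds because for $k\in\cK_2$ we have $d_k^2/\bar\sigma_{k,h}^2 = d_k^2/(\gamma^2\bar d_k^2) \ge 1/(C^2\gamma^2)$, so each such index contributes at least $1/(C^2\gamma^2)$ to the sum defining the generalized Eluder dimension. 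Multiplying (i) and (ii) yields $\gamma^2\sum_{k\in\cK_2}\bar d_k^2 \le O(\gamma^4\,\dim_{\alpha,K}(\cF_h))$, so that $\sum_k\bar\sigma_{k,h}^2 \le O(\sum_k(\sigma_{k,h}^2+\alpha^2)) + O(\gamma^4\,\dim_{\alpha,K}(\cF_h))$. Plugging this back into the Cauchy--Schwarz bound and using $\sqrt{a+b}\le\sqrt{a}+\sqrt{b}$ produces the claimed inequality, with the $O(\beta\gamma^2)\cdot\dim_{\alpha,K}(\cF_h)$ term arising precisely from $\beta\sqrt{\dim_{\alpha,K}(\cF_h)\cdot O(\gamma^4\dim_{\alpha,K}(\cF_h))}$.
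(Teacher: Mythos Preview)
Your argument rests on a misreading of the weight rule. The variance floor in line~\ref{algorithm:def-variance} of Algorithm~\ref{algorithm1} (and the corresponding display in Section~3.2) is
\[
\bar\sigma_{k,h}\;=\;\max\Bigl\{\sigma_{k,h},\ \alpha,\ \gamma\cdot D^{1/2}_{\cF_h}\bigl(z_{k,h};z_{[k-1],h},\bar\sigma_{[k-1],h}\bigr)\Bigr\},
\]
so the guaranteed lower bound is $\bar\sigma_{k,h}\ge \gamma\sqrt{d_k}$, not $\bar\sigma_{k,h}\ge\gamma d_k$. With the correct exponent you only obtain $d_k/\bar\sigma_{k,h}\le \sqrt{d_k}/\gamma$, which is not uniformly $\le 1$; in particular the set $\{k:d_k/\bar\sigma_{k,h}\ge 1\}$ need not be empty, so you cannot drop the $\min(\,\cdot\,,1)$ and conclude $\sum_k d_k^2/\bar\sigma_{k,h}^2\le\dim_{\alpha,K}(\cF_h)$ directly. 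The same misreading invalidates your counting bound on $\cK_2$: with $\bar\sigma_{k,h}^2=\gamma^2 \bar d_k$ one gets $d_k^2/\bar\sigma_{k,h}^2=d_k^2/(\gamma^2\bar d_k)\ge d_k/(C\gamma^2)$, which is not bounded below by a universal constant, so the cardinality estimate $|\cK_2|\le O(\gamma^2\dim_{\alpha,K}(\cF_h))$ fails.

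The paper proceeds instead by a four-way case split on which term realises the maximum in $\bar\sigma_{k,h}$, together with the overflow set $\cI_1=\{k:d_k/\bar\sigma_{k,h}\ge 1\}$. The set $\cI_1$ contributes at most $\dim_{\alpha,K}(\cF_h)$ via the clipping in Definition~\ref{def:ged}; the sets with $\bar\sigma_{k,h}=\sigma_{k,h}$ or $\bar\sigma_{k,h}=\alpha$ are handled by Cauchy--Schwarz exactly as you intended; and the crucial set $\cI_4=\{k:\bar\sigma_{k,h}^2=\gamma^2 d_k\}$ is handled by the algebraic identity $d_k=\gamma^2\cdot d_k^2/\bar\sigma_{k,h}^2$, whence $\sum_{k\in\cI_4}\beta d_k=\beta\gamma^2\sum_{k\in\cI_4} d_k^2/\bar\sigma_{k,h}^2\le\beta\gamma^2\dim_{\alpha,K}(\cF_h)$. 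This last identity is precisely what the square-root in the weight rule buys and is the piece your bootstrap on $\sum_k\bar\sigma_{k,h}^2$ was trying to reproduce; once you correct the exponent, the direct identity is both simpler and gives the stated constant $(1+\beta\gamma^2)$.
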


Then, we can decompose the total regret in the first $K$ episode to the summation of variance $\sum_{k=1}^K \sum_{h=1}^H \sigma_{k,h}^2$ as the following lemma.

\begin{lemma}\label{lemma:transition}
    On the events $\cE^{\hat f}=\cE_1^{\hat f}$, $\cE^{\check f}=\cE_1^{\check f}$ and $\cE_1$, for each stage $h\in[H]$, the regret in the first $K$ episodes can be decomposed and controlled as:
   \begin{align}
       &\sum_{k=1}^K\big(\vvalue_{k,h}(s_h^k)-\vvalue_{h}^{\pi^k}(s_h^k)\big)\leq 2CH(1+\chi)(1+\hat \beta_{k}\gamma^2)  \dim_{\alpha, K}(\cF)\notag\\
       &\qquad + 4C(1+\chi) \hat \beta_{k} \sqrt{\dim_{\alpha, K}(\cF)} \sqrt{H \sum_{k=1}^K \sum_{h=1}^H (\sigma_{k,h}^2+\alpha^2)} \notag \\
       &\qquad + 2\sqrt{\sum_{k = 1}^K \sum_{h' = h}^H [\VV_h (\vvalue_{k,h+1}-\vvalue_{h+1}^{\pi^k})](s_h^k, a_h^k) \log(2K^2 H /\delta)} + 2\sqrt{\log(2K^2 H /\delta)} + 2 \log(2K^2 H /\delta) \notag
   \end{align}
   and for all stage $h\in[H]$, we further have
\begin{align}
   &\sum_{k=1}^K\sum_{h=1}^H \big[\PP_h(\vvalue_{k,h+1}-\vvalue_{h+1}^{\pi^k})\big](s_h^k,a_h^k)\notag\\
   &\leq 2CH^2(1+\chi)(1+\hat \beta_{k}\gamma^2) \dim_{\alpha, K}(\cF) + 4CH(1+\chi) \hat \beta_{k} \sqrt{\dim_{\alpha, K}(\cF_h)} \sqrt{H \sum_{k=1}^K \sum_{h=1}^H (\sigma_{k,h}^2+\alpha^2)}\notag\\
  &\qquad +H \left(2\sqrt{\sum_{k = 1}^K \sum_{h = 1}^H [\VV_h (\vvalue_{k,h+1}-\vvalue_{h+1}^{\pi^k})](s_h^k, a_h^k) \log(2K^2 H /\delta)} + 2\sqrt{\log(2K^2 H /\delta)} + 2 \log(2K^2 H /\delta)\right).\notag
\end{align}
\end{lemma}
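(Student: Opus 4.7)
The plan is to prove both inequalities by first establishing a one-step regret decomposition in terms of the regression error, the exploration bonus, and a transition term; then to unroll this recursion across the horizon to obtain a per-episode bound; and finally to sum across $K$ episodes using Lemma~\ref{lemma:sum-bonus}, with the event $\cE_1$ controlling the martingale deviation.

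The per-step decomposition begins with the observation that on $\cE^{\hat f} \cap \cE^{\check f}$, Lemma~\ref{lemma:opt-pess} gives $\vvalue_{k,h}(s_h^k) - \vvalue_h^{\pi^k}(s_h^k) \le \qvalue_{k,h}(s_h^k, a_h^k) - \qvalue_h^{\pi^k}(s_h^k, a_h^k)$ since $\pi^k$ is greedy w.r.t.\ $\qvalue_{k,h}$. Using $\qvalue_{k,h} \le \hat f_{k,h} + b_{k,h}$, $\qvalue_h^{\pi^k} = \cT_h\vvalue_{h+1}^{\pi^k}$, and adding and subtracting $\cT_h \vvalue_{k,h+1}$, Lemma~\ref{lemma:bern-error} yields
\begin{align*}
\vvalue_{k,h}(s_h^k) - \vvalue_h^{\pi^k}(s_h^k) \le \hat\beta_k\,D_{\cF_h}(z_h^k; z_{[k-1],h}, \bar\sigma_{[k-1],h}) + b_{k,h}(s_h^k, a_h^k) + [\PP_h(\vvalue_{k,h+1} - \vvalue_{h+1}^{\pi^k})](s_h^k,a_h^k),
\end{align*}
clipped at $1$ since both value functions lie in $[0,1]$. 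Unrolling this inequality from stage $h$ down to $H$ and inserting $\pm\,(\vvalue_{k,h'+1} - \vvalue_{h'+1}^{\pi^k})(s_{h'+1}^k)$ at each step converts the $\PP_{h'}$-term into the telescoping sum $\sum_{h'=h}^{H-1}[\vvalue_{k,h'+1}(s_{h'+1}^k) - \vvalue_{h'+1}^{\pi^k}(s_{h'+1}^k) - \text{true mean}]$, which is exactly the martingale controlled by $\cE_1$ once we sum over $k$.

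The key technical step, and the main obstacle, is relating the bonus $b_{k,h}(s_h^k,a_h^k) = \hat\beta_k\,\bar D_{\cF_h}(z_h^k;z_{[k_{\text{last}}-1],h},\bar\sigma_{[k_{\text{last}}-1],h})$, which is evaluated at the \emph{stale} dataset $z_{[k_{\text{last}}-1]}$, to the uncertainty $D_{\cF_h}(z_h^k;z_{[k-1],h},\bar\sigma_{[k-1],h})$ at the \emph{current} dataset $z_{[k-1]}$. This is where the factor $(1+\chi)$ enters. Since no switch occurred between $k_{\text{last}}$ and $k$, the non-switching criterion~\eqref{eq:def:switch} gives $\sum_{i=k_{\text{last}}}^{k-1} \bar\sigma_{i,h}^{-2}\,\bar D_{\cF_h}^2(z_{i,h}; z_{[k_{\text{last}}-1],h}) \le \chi$. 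For any $f_1, f_2 \in \cF_h$, the pointwise bound $(f_1 - f_2)^2(z_{i,h}) \le \bar D_{\cF_h}^2(z_{i,h}; z_{[k_{\text{last}}-1],h})\big[\sum_{j<k_{\text{last}}} \bar\sigma_{j,h}^{-2}(f_1-f_2)^2(z_{j,h}) + \lambda\big]$ (via $\bar D \ge D$) then gives, after summing over $i \in [k_{\text{last}}, k-1]$,
\begin{align*}
\sum_{i\in[k-1]} \bar\sigma_{i,h}^{-2}(f_1-f_2)^2(z_{i,h}) + \lambda \le (1+\chi)\Big[\sum_{i\in[k_{\text{last}}-1]}\bar\sigma_{i,h}^{-2}(f_1-f_2)^2(z_{i,h}) + \lambda\Big],
\end{align*}
which upon taking suprema yields $D_{\cF_h}(z;z_{[k_{\text{last}}-1],h}) \le \sqrt{1+\chi}\cdot D_{\cF_h}(z;z_{[k-1],h})$. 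Combined with $\bar D \le C\cdot D$ from Definition~\ref{def:bonus-oracle}, this bounds $b_{k,h}(s_h^k,a_h^k) \le C\sqrt{1+\chi}\,\hat\beta_k D_{\cF_h}(z_h^k;z_{[k-1],h})$; together with the $\hat\beta_k D_{\cF_h}$ term from Lemma~\ref{lemma:bern-error}, this produces the total coefficient $2C(1+\chi)$ (absorbing the $\sqrt{1+\chi}$ into $1+\chi$) in front of the confidence term.

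Finally, summing the unrolled inequality over $k\in[K]$, applying $\cE_1$ to the martingale remainder, and invoking Lemma~\ref{lemma:sum-bonus} at each stage $h'$ to convert $\sum_{k=1}^K \min\bigl(1,\hat\beta_k D_{\cF_{h'}}(z_{h'}^k;z_{[k-1],h'})\bigr)$ into $(1+\hat\beta_k\gamma^2)\dim_{\alpha,K}(\cF_{h'}) + 2\hat\beta_k\sqrt{\dim_{\alpha,K}(\cF_{h'})}\sqrt{\sum_k(\sigma_{k,h'}^2+\alpha^2)}$, and using Cauchy--Schwarz across the $H$ stages for the square-root term, yields the first displayed inequality. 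For the second inequality, the same single-step bound implies $[\PP_h(\vvalue_{k,h+1}-\vvalue_{h+1}^{\pi^k})](s_h^k,a_h^k) \le \vvalue_{k,h}(s_h^k)-\vvalue_h^{\pi^k}(s_h^k) + (\text{confidence and bonus terms at stage }h)$, so summing over both $k$ and all stages $h\in[H]$ gives an extra multiplicative $H$ and yields the stated bound.
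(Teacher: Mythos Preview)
Your proof follows essentially the same route as the paper's: the per-step decomposition via Lemma~\ref{lemma:bern-error}, the rare-switching stability (your inline derivation is exactly Lemma~\ref{lemma:double}) to convert $k_{\text{last}}$-based uncertainties to $k$-based ones with the $(1+\chi)$ factor, telescoping across the horizon, event $\cE_1$ for the martingale remainder, and Lemma~\ref{lemma:sum-bonus} together with Cauchy--Schwarz across stages for the bonus sum.

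Two presentation issues worth fixing. First, your initial displayed inequality writes $\hat\beta_k D_{\cF_h}(z_h^k; z_{[k-1],h},\bar\sigma_{[k-1],h})$, but before the stability step the regression error from Lemma~\ref{lemma:bern-error} lives at $k_{\text{last}}$, i.e., $\hat\beta_{k_{\text{last}}} D_{\cF_h}(z_h^k; z_{[k_{\text{last}}-1],h},\bar\sigma_{[k_{\text{last}}-1],h})$, since $Q_{k,h}$ is built from $\hat f_{k_{\text{last}},h}$ and $V_{k,h+1}=V_{k_{\text{last}},h+1}$; you do correct this in the ``key technical step'' paragraph, so the logic is ultimately sound. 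Second, your explanation of the second inequality has the direction reversed: the single-step bound gives $V_{k,h}-V_h^{\pi^k} \le [\PP_h(V_{k,h+1}-V_{h+1}^{\pi^k})] + (\text{bonus})$, not the other way around. The correct argument --- which the paper uses and your final sentence in fact gestures at --- is to write
\[
\sum_{k,h}[\PP_h(V_{k,h+1}-V_{h+1}^{\pi^k})](s_h^k,a_h^k)=\sum_{k,h}\bigl(V_{k,h+1}(s_{h+1}^k)-V_{h+1}^{\pi^k}(s_{h+1}^k)\bigr)+(\text{martingale}),
\]
bound the martingale via $\cE_1$, and control the remaining double sum by applying the already-established first inequality at each stage $h'\ge 2$ and summing, which is where the extra factor of $H$ comes from.
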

In addition, the gap between the optimistic value function $V_{k,h}(s)$ and pessimistic value function $\check{V}_{k,h}(s)$ can be upper bounded by the following lemma.
\begin{lemma}\label{lemma:transition-OP}
    On the events $\cE^{\hat f}=\cE_1^{\hat f}$, $\cE^{\check f}=\cE_1^{\check f}$ and $\cE_2$, for each stage $h\in[H]$, the regret in the first $K$ episodes can be decomposed and controlled as:
   \begin{align}
       &\sum_{k=1}^K\big(\vvalue_{k,h}(s_h^k)-\check \vvalue_{k,h}(s_h^k)\big))\leq 4CH(1+\chi)(1+\hat \beta_{k}\gamma^2)  \dim_{\alpha, K}(\cF_{h}) \\&\quad+ 8C(1+\chi) \hat \beta_{k} \sqrt{\dim_{\alpha, K}(\cF)} \sqrt{H \sum_{k=1}^K \sum_{h=1}^H (\sigma_{k,h}^2+\alpha^2)}\notag\\
  &\qquad +2\sqrt{\sum_{k = 1}^K \sum_{h' = h}^H [\VV_h (\vvalue_{k,h+1}-\check V_{k, h + 1})](s_h^k, a_h^k) \log(2K^2 H /\delta)} + 2\sqrt{\log(2K^2 H /\delta)} + 2 \log(2K^2 H /\delta) \notag
   \end{align}
   and for all stage $h\in[H]$, we further have
\begin{align}
   &\sum_{k=1}^K\sum_{h=1}^H \big[\PP_h(\vvalue_{k,h+1}-\check{\vvalue}_{k,h+1})\big](s_h^k,a_h^k)\notag\\
   &\leq 4CH^2(1+\chi)(1+\hat \beta_{k}\gamma^2)  \dim_{\alpha, K}(\cF_{h}) + 8CH(1+\chi) \hat \beta_{k} \sqrt{\dim_{\alpha, K}(\cF)} \sqrt{H \sum_{k=1}^K \sum_{h=1}^H (\sigma_{k,h}^2+\alpha^2)}\notag\\
  &\qquad +2H\left(2\sqrt{\sum_{k = 1}^K \sum_{h' = h}^H [\VV_h (\vvalue_{k,h+1}-\check V_{k, h + 1})](s_h^k, a_h^k) \log(2K^2 H /\delta)} + 2\sqrt{\log(2K^2 H /\delta)} + 2 \log(2K^2 H /\delta)\right).\notag
\end{align}
\end{lemma}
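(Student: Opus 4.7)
The plan is to mirror the proof of Lemma~\ref{lemma:transition}, replacing $V_h^{\pi^k}$ with $\check V_{k,h}$ throughout and invoking event $\cE_2$ in place of $\cE_1$. The factor-of-two increase in the leading constants ($4CH$ vs.\ $2CH$ and $8C$ vs.\ $4C$) will reflect the fact that both $V_{k,h}$ and $\check V_{k,h}$ now carry exploration bonuses, whereas $V_h^{\pi^k}$ in Lemma~\ref{lemma:transition} does not. Concretely, since $V_{k,h}(s_h^k) = Q_{k,h}(s_h^k, a_h^k)$ by the greedy choice of $\pi^k$ and $\check V_{k,h}(s_h^k) \ge \check Q_{k,h}(s_h^k, a_h^k)$, it suffices to control $Q_{k,h} - \check Q_{k,h}$ at $(s_h^k, a_h^k)$. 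Combining $Q_{k,h} \le \hat f_{k,h} + b_{k,h}$ and $\check Q_{k,h} \ge \check f_{k,h} - \check b_{k,h}$ from Algorithm~\ref{algorithm1} with Lemma~\ref{lemma:bern-error} on $\cE^{\hat f}$ and $\cE^{\check f}$, the rewards inside the two Bellman operators cancel and I would obtain the per-step bound
\begin{align*}
Q_{k,h}(s,a) - \check Q_{k,h}(s,a) \le [\PP_h(V_{k,h+1} - \check V_{k,h+1})](s,a) + 2C(\hat\beta_k + \check\beta_k)\, D_{\cF_h}\bigl((s,a); z_{[k-1],h}, \bar\sigma_{[k-1],h}\bigr),
\end{align*}
where the factor $C$ arises from comparing $\bar D$ with $D$ via Definition~\ref{def:bonus-oracle}.

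Next, defining the martingale difference $\zeta_h^k := [\PP_h(V_{k,h+1} - \check V_{k,h+1})](s_h^k, a_h^k) - (V_{k,h+1}(s_{h+1}^k) - \check V_{k,h+1}(s_{h+1}^k))$ and iterating the per-step bound from stage $h$ to stage $H$ with boundary condition $V_{k,H+1} = \check V_{k,H+1} = 0$, I would get
\begin{align*}
V_{k,h}(s_h^k) - \check V_{k,h}(s_h^k) \le \sum_{h'=h}^H \zeta_{h'}^k + \sum_{h'=h}^H 2C(\hat\beta_k+\check\beta_k)\, D_{\cF_{h'}}\bigl(z_{h'}^k; z_{[k-1],h'}, \bar\sigma_{[k-1],h'}\bigr).
\end{align*}
Summing over $k \in [K]$, the martingale contribution $\sum_k \sum_{h'=h}^H \zeta_{h'}^k$ is bounded directly by event $\cE_2$, producing the variance-dependent term. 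The bonus sum, treated stagewise, is estimated by Lemma~\ref{lemma:sum-bonus} with $\beta = 2C(\hat\beta_k + \check\beta_k)$; the rare-switching criterion~\eqref{eq:def:switch} caps the accumulated sensitivity between updates at $\chi$, which is precisely what contributes the $(1+\chi)$ prefactor. Using $\hat\beta_k = \check\beta_k$ then collapses the constants to those stated in the first assertion.

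For the second assertion, applying event $\cE_2$ with starting stage $1$ gives $\sum_{k,h}[\PP_h(V_{k,h+1} - \check V_{k,h+1})](s_h^k,a_h^k) \le \sum_{k,h}(V_{k,h+1}(s_{h+1}^k) - \check V_{k,h+1}(s_{h+1}^k)) + M$, where $M$ is controlled by the $\cE_2$ tail estimate; a reindexing of $h$ together with $V_{k,H+1} = \check V_{k,H+1} = 0$ lets me upper bound the first sum by $\sum_{k,h}(V_{k,h}(s_h^k) - \check V_{k,h}(s_h^k))$, to which I apply the first assertion stage-by-stage and sum over $h$, picking up the extra factor of $H$. The main obstacle I anticipate is reconciling the different data horizons entering the weighted regression and the bonus accounting: the estimators $\hat f_{k,h}, \check f_{k,h}$ and their confidence radii $\hat\beta_k, \check\beta_k$ are certified only with respect to $\bar\sigma_{[k_\text{last}-1],h}$ and the covering number $\cN_\epsilon(k)$ that depends on the switching count $l_k$, whereas Lemma~\ref{lemma:sum-bonus} naturally sums $D_{\cF_h}\bigl(z_h^k; z_{[k-1],h}, \bar\sigma_{[k-1],h}\bigr)$ using all prior episodes; verifying that the trigger condition~\eqref{eq:def:switch} indeed forces these two quantities to agree up to a multiplicative $(1+\chi)$ factor is the crux of the rare-switching analysis and where the careful bookkeeping needs to be done.
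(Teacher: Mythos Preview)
Your proposal is correct and follows essentially the same route as the paper's proof: the paper also starts from $V_{k,h}-\check V_{k,h}\le Q_{k,h}-\check Q_{k,h}$, bounds $Q_{k,h}$ above by $\hat f_{k_{\text{last}},h}+b_{k_{\text{last}},h}$ and $\check Q_{k,h}$ below by $\check f_{k_{\text{last}},h}-\check b_{k_{\text{last}},h}$, applies Lemma~\ref{lemma:bern-error} and the bonus-oracle bound to get $4C\min\big(\hat\beta_{k_{\text{last}}}D_{\cF_h}(z;z_{[k_{\text{last}}-1],h},\bar\sigma_{[k_{\text{last}}-1],h}),1\big)$, then telescopes, invokes $\cE_2$, and sums the bonuses via Lemma~\ref{lemma:sum-bonus}. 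The ``obstacle'' you flag---passing from the $k_{\text{last}}$-indexed uncertainty to the $k$-indexed one with only an $(1+\chi)$ loss---is precisely Lemma~\ref{lemma:double} in the paper, so no additional bookkeeping is needed beyond citing it.
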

We define the following high probability event $\cE_3$ to control the summation of variance.
\begin{align*}
   \cE_3=\bigg\{\sum_{k=1}^K\sum_{h=1}^H [\VV_h \vvalue_{h+1}^{\pi^k}](s_h^k,a_h^k)\leq 3K+3H\log(1/\delta)\bigg\}.
\end{align*}
According to \citet{jin2018q} (Lemma C.5)\footnote{\citet{jin2018q} showed that $\sum_{k=1}^K\sum_{h=1}^H [\VV_h \vvalue_{h+1}^{\pi^k}](s_h^k,a_h^k)=\tilde O(KH^2+H^3)$ when $\sum{h=1}^H r_h(s_h,a_h)\leq H$. In this work, we assume the total reward satisfied $\sum_{h=1}^H r_h(s_h,a_h)\leq 1$, and the summation of variance is upper bounded by $\tilde O(K+H)$ }, with probability at least $1-\delta$. Condition on this event, the summation of variance $\sum_{k=1}^K \sum_{h=1}^H \sigma_{k,h}^2$ can be upper bounded by the following lemma.
\begin{lemma}\label{lemma:total-variance}
   On the events $\cE_1$, $\cE_2$, $\cE_3$, $\cE^{\hat f}=\cE_1^{\hat f}$ and $\cE^{\check f}=\cE_1^{\check f}$, the total estimated variance is upper bounded by:
\begin{align*}
\sum_{k=1}^K \sum_{h=1}^H \sigma_{k,h}^2&\leq  \left(\log \cN_\cF( \epsilon) +  \log \cN_{\epsilon}(K)\right) \times \tilde O\big((1+\gamma^2)(\beta_k+ H \hat\beta_k+\tilde \beta_k) H\dim_{\alpha, K}(\cF)\big)\notag\\
&\qquad +\left(\log \cN_\cF( \epsilon) +  \log \cN_{\epsilon}(K)\right)^2 \times \tilde O\big((\beta_k+ H \hat\beta_k+\tilde \beta_k)^2 H\dim_{\alpha, K}(\cF)\big)\notag\\
&\qquad + \tilde O(\var_K+KH \alpha^2).
\end{align*}
\end{lemma}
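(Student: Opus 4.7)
The plan is to bound $S := \sum_{k=1}^K \sum_{h=1}^H \sigma_{k,h}^2$ by decomposing each $\sigma_{k,h}^2$ via \eqref{eq:variance} into $S = S_1 + S_2 + S_3$, where $S_1, S_2, S_3$ collect the $[\bar\VV_{k,h} V_{k,h+1}]$, $E_{k,h}$, and $F_{k,h}$ summands respectively, and then resolving the resulting self-bounding inequality by the quadratic formula. For $S_1$, Lemma~\ref{lemma:variance-estimator-E} yields $[\bar\VV_{k,h} V_{k,h+1}] \le [\VV_h V_{h+1}^*] + E_{k,h} + F_{k,h}$. The elementary decomposition $[\VV_h V_{h+1}^*] \le 2[\VV_h V_{h+1}^{\pi^k}] + 2[\VV_h(V_{h+1}^* - V_{h+1}^{\pi^k})]$, the Jensen-type bound $[\VV_h Z] \le [\PP_h Z]$ for $Z \in [0,1]$, and the sandwich $0 \le V_{h+1}^* - V_{h+1}^{\pi^k} \le V_{k,h+1} - \check V_{k,h+1} \le 1$ from Lemma~\ref{lemma:opt-pess} together produce $S_1 \le 2\var_K + 2\sum_{k,h}[\PP_h(V_{k,h+1} - \check V_{k,h+1})](s_h^k,a_h^k) + S_2 + S_3$, with the middle term controlled via Lemma~\ref{lemma:transition-OP} on event $\cE_2$.

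For $S_2$ and the $\bar D$-portion of $S_3$, each takes the form $c \cdot \min(1, \bar D_{\cF_h}(\cdot))$ with coefficient $c_E = 2L\beta_K + \tilde\beta_K$ and $4\beta_K(\log\cN_\cF(\epsilon) + \log\cN_\epsilon(K))$ respectively, after applying the subadditivity $\min(1, a+b) \le \min(1,a) + \min(1,b)$ to the argument of $F_{k,h}$. Lemma~\ref{lemma:sum-bonus} with $\beta = 1$, together with Cauchy--Schwarz across $h$, gives $\sum_{k,h}\min(1,\bar D) \le H(1+\gamma^2)\dim_{\alpha,K}(\cF) + 2\sqrt{H\dim_{\alpha,K}(\cF)(S+KH\alpha^2)}$. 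For the residual $2(\hat f_{k,h} - \check f_{k,h})$ portion of $F_{k,h}$, Lemma~\ref{lemma:bern-error} provides $\hat f_{k,h} - \check f_{k,h} \le [\PP_h(V_{k,h+1} - \check V_{k,h+1})](s_h^k,a_h^k) + 2\hat\beta_K \bar D$, reducing once more to the transition sum (Lemma~\ref{lemma:transition-OP}) and another $\bar D$-sum (Lemma~\ref{lemma:sum-bonus}). Finally, event $\cE_3$ substitutes $\sum_{k,h}[\VV_h V_{h+1}^{\pi^k}] \le 3K + 3H\log(1/\delta) = \tilde O(\var_K)$.

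Collecting all contributions yields a self-bounding inequality $S \le A + B\sqrt{S + KH\alpha^2}$ where
\[
A = \tilde O(\var_K + KH\alpha^2) + (\log\cN + \log\cN_\epsilon)\,\tilde O\bigl((1+\gamma^2)(\beta_K + H\hat\beta_K + \tilde\beta_K)\,H\dim_{\alpha,K}(\cF)\bigr)
\]
and $B = (\log\cN + \log\cN_\epsilon)\,\tilde O((\beta_K + H\hat\beta_K + \tilde\beta_K)\sqrt{H\dim_{\alpha,K}(\cF)})$. The standard identity $x \le a + b\sqrt{x+c} \Rightarrow x \le 2a + 2b^2 + c$ then gives $S \le 2A + 2B^2 + KH\alpha^2$, which matches the target since $B^2 = (\log\cN + \log\cN_\epsilon)^2\,\tilde O((\beta_K + H\hat\beta_K + \tilde\beta_K)^2\,H\dim_{\alpha,K}(\cF))$.

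The main obstacle is the self-referencing structure, intrinsic to variance-weighted-regression analyses: the bounds imported from Lemmas~\ref{lemma:sum-bonus} and~\ref{lemma:transition-OP} both feature $\sqrt{H \sum_{k,h}(\sigma_{k,h}^2 + \alpha^2)}$, which is precisely the object being bounded. Care is required to allocate the $(\log\cN + \log\cN_\epsilon)$ factor correctly---it enters linearly through $F_{k,h}$ but manifests as a squared factor in the final bound because it joins $B$, which is then squared. Equally delicate is tracking the three distinct sources of $\beta_K$, $H\hat\beta_K$, and $\tilde\beta_K$ appearing in $B$: $\tilde\beta_K$ comes from $E_{k,h}$ via Lemma~\ref{lemma:sum-bonus}, $\beta_K$ from the $\bar D$-portion of $F_{k,h}$, and the $H\hat\beta_K$ contribution (with the critical extra $H$) specifically from the $H$-prefactor in Lemma~\ref{lemma:transition-OP}'s bound on $\sum_{k,h}[\PP_h(V_{k,h+1} - \check V_{k,h+1})]$.
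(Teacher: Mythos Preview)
Your overall strategy matches the paper's proof: decompose $\sigma_{k,h}^2$ into the empirical-variance part, the $E_{k,h}$ part, and the $F_{k,h}$ part, feed each piece into Lemma~\ref{lemma:sum-bonus} and Lemmas~\ref{lemma:transition}/\ref{lemma:transition-OP}, and close the resulting self-bounding inequality. The bookkeeping of the $(\log\cN_\cF+\log\cN_\epsilon)$ factor and the three confidence radii is also correct.

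There is one genuine gap. The sandwich you invoke,
\[
V_{h+1}^{*}-V_{h+1}^{\pi^k}\ \le\ V_{k,h+1}-\check V_{k,h+1},
\]
does \emph{not} follow from Lemma~\ref{lemma:opt-pess}. That lemma only gives $\check V_{k,h+1}\le V_{h+1}^{*}\le V_{k,h+1}$; it says nothing about $V_{h+1}^{\pi^k}$ relative to $\check V_{k,h+1}$. In fact $\check V_{k,h+1}$ is a pessimistic estimate of $V^{*}$, not of $V^{\pi^k}$, and early in learning $V^{\pi^k}$ can be strictly smaller than $\check V_{k,h+1}$ (take $\check Q\equiv Q^{*}$ while $\pi^k$ is still suboptimal). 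So the bound $S_1\le 2\var_K+2\sum_{k,h}[\PP_h(V_{k,h+1}-\check V_{k,h+1})]+S_2+S_3$ is not justified as written.

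The fix is immediate and coincides with the paper's route: do not detour through $V^{*}$ and $\check V$ for this term. Either (i) write $[\bar\VV V_{k,h+1}]\le [\VV V_{k,h+1}]+E_{k,h}$ (the first inequality in Lemma~\ref{lemma:variance-estimator-E}) and then bound $[\VV_h V_{k,h+1}]-[\VV_h V_{h+1}^{\pi^k}]\le 2[\PP_h(V_{k,h+1}-V_{h+1}^{\pi^k})]$ directly (this is the paper's $I_4$), invoking Lemma~\ref{lemma:transition} on event $\cE_1$; or (ii) keep your $V^{*}$ decomposition but replace the sandwich by $[\PP_h(V_{h+1}^{*}-V_{h+1}^{\pi^k})]\le [\PP_h(V_{k,h+1}-V_{h+1}^{\pi^k})]$ using $V^{*}\le V_{k,h+1}$, again landing on Lemma~\ref{lemma:transition}. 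Either way you need Lemma~\ref{lemma:transition} (for $V_{k,h+1}-V^{\pi^k}$) in addition to Lemma~\ref{lemma:transition-OP} (for $V_{k,h+1}-\check V_{k,h+1}$, which is still required inside $I_3/S_3$). Both events $\cE_1,\cE_2$ are already among the hypotheses, so nothing else changes.
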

With all previous lemmas, we can prove Theorem \ref{thm:main}
\begin{proof}[Proof of Theorem \ref{thm:main}]
The low switching cost result is given by Lemma \ref{lemma:switch}. 

After taking a union bound, the high probability events $\cE_1$, $\cE_2$, $\cE_3$, $\underline\cE^{\tilde f}$, $\underline\cE^{\hat f}$, $\underline\cE^{\check f}$, $\bar\cE^{\hat f}$ and $\bar\cE^{\check f}$ hold with probability at least $1-10\delta$. Conditioned on these events, the regret is upper bounded by
   \begin{align}
&\regret(K) \notag\\&=\sum_{k=1}^K\big(\vvalue_{1}^*(s_1^k)-\vvalue_{k,1}^{\pi^k}(s_1^k)\big)\notag\\
       &\leq\sum_{k=1}^K\big(\vvalue_{k,1}(s_1^k)-\vvalue_{k,1}^{\pi^k}(s_1^k)\big)\notag\\
       & \leq 2CH(1+\chi)(1+\hat \beta_{K}\gamma^2)  \dim_{\alpha, K}(\cF) +\tilde{O} \left(\sqrt{\sum_{k = 1}^K \sum_{h = 1}^H [\VV_h (\vvalue_{k,h+1}-\vvalue_{h+1}^{\pi^k})](s_h^k, a_h^k)}\right)\notag\\
       &\qquad + 4C(1+\chi) \hat \beta_{K} \sqrt{\dim_{\alpha, K}(\cF)} \sqrt{H \sum_{k=1}^K \sum_{h=1}^H (\sigma_{k,h}^2+\alpha^2)}\notag\\
       &\le 2CH(1+\chi)(1+\hat \beta_{K}\gamma^2)  \dim_{\alpha, K}(\cF) + \tilde{O} \left(\sqrt{\sum_{k = 1}^K \sum_{h = 1}^H [\PP_h (\vvalue_{k,h+1}-\vvalue_{h+1}^{\pi^k})](s_h^k, a_h^k)}\right)\notag\\
       &\qquad + 4C(1+\chi) \hat \beta_{K} \sqrt{\dim_{\alpha, K}(\cF)} \sqrt{H \sum_{k=1}^K \sum_{h=1}^H (\sigma_{k,h}^2+\alpha^2)}\notag\\
       &\le 2CH(1+\chi)(1+\hat \beta_{K}\gamma^2)  \dim_{\alpha, K}(\cF) + \tilde{O} \left(\hat \beta_K \sqrt{\dim_{\alpha, K}(\cF)} \sqrt{H \sum_{k=1}^K \sum_{h=1}^H (\sigma_{k,h}^2+\alpha^2)}\right) \notag\\
       & \leq  \tilde O\left(H \dim_{\alpha, K}(\cF) \cdot(1 + \hat \beta_K \gamma^2)\right)  + \tilde O\left(\hat \beta_K  \sqrt{\dim_{\alpha, K}(\cF)} \cdot \sqrt{H \var_K}\right) \notag
       \\&\ + \tilde O \left(H \dim_{\alpha, K}(\cF) \hat\beta_K (\beta_K + H \hat \beta_K + \tilde \beta_K) \cdot \log[\cN_\cF(\epsilon) \cdot \cN_\epsilon(K)]\right) + \tilde O\left(H \dim_{\alpha, K}(\cF) \hat\beta_K (1 + \gamma^2)\right)\notag\\
       & = \tilde{O}\left(H^3 \dim_{\alpha, K}^2(\cF) \cdot \log \cN_\cF\left(\frac{1}{2KHL}\right)\cdot \log \left[\cN_\cF\left(\frac{1}{2KHL}\right) \cdot \cN\left(\cB,  \frac{1}{2KHL \hat\beta_K}\right)\right]\right) \notag
       \\& \quad + \tilde{O}\left(H^{2.5} \dim_{\alpha, K}^{2.5}(\cF) \cdot \sqrt{\log \cN_\cF\left(\frac{1}{2KHL}\right)} \cdot \log^{1.5} \left[\cN_\cF\left(\frac{1}{2KHL}\right) \cdot \cN\left(\cB,  \frac{1}{2KHL \hat\beta_K}\right)\right]\right) \notag
       \\& \quad + \tilde O \left(\sqrt{\dim_{\alpha, K}(\cF) \log \cN_\cF\left(\frac{1}{2KHL}\right) \cdot H \var_K} \right), \label{eq:final-regret:0}
   \end{align}
   where the first inequality holds due to Lemma \ref{lemma:opt-pess}, the second inequality holds due to Lemma \ref{lemma:transition}, the third inequality follows from Lemma \ref{lemma:opt-pess} and $\var[X] \le \EE[X] \cdot M $ if random variable $X$ is in $[0, M]$, the fourth inequality holds due to \eqref{eq:0023} and $2ab \le a^2 + b^2$, the last inequality holds due to Lemma \ref{lemma:total-variance}. Thus, we complete the proof of Theorem \ref{thm:main}. 

   We can reorganize \eqref{eq:final-regret:0} into the following upper bound for regret, \begin{align*} 
       \regret(K) &= \tilde{O}(\sqrt{\dim(\cF) \log \cN \cdot H \var_K}) \\&\quad + \tilde O\left(H^{2.5}\dim^2(\cF) \sqrt{\log \cN} \log(\cN \cdot \cN_b) \cdot \sqrt{H \log \cN + \dim(\cF) \log(\cN \cdot \cN_b)}\right), 
   \end{align*}
   where we denote the covering number of bonus function class $ \cN\left(\cB,  \frac{1}{2KHL \hat\beta_K}\right)$ by $\cN_b$, the covering number of function class $\cF$ by $\cN$ and the dimension $\dim_{\alpha, K}(\cF)$ by $\dim(\cF)$. Since $\cE_3$ occurs with high probability according to \citet{jin2018q} (Lemma C.5), we have $\var_K = \tilde{O}(K)$, which matches the worst-case minimax optimal regret bound of MDPs with general function approximation. 
\end{proof}

\subsection{Proof Sketch of Regret Bound} \label{sketch}

In this subsection, we provide a proof sketch on the regret bound and show how our policy-switching strategy works. 

First, we introduce the following lemma which illustrates the stability of the bonus function after using the uncertainty-based policy-switching condition. 

\begin{lemma}[Restatement of Lemma~\ref{lemma:double}]
If the policy is not updated at episode $k$, the uncertainty of all state-action pair $z = (s, a) \in \cS \times \cA$ and stage $h \in [H]$ satisfies the following stability property: \begin{align*} 
   D_{\cF_h}^2(z; z_{[k - 1], h}, \bar \sigma_{[k - 1], h}) \ge \frac{1}{1 + \chi} D_{\cF_h}^2(z; z_{[k_{last} - 1], h}, \bar\sigma_{[k_{last} - 1], h}). 
\end{align*}
\end{lemma}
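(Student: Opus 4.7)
The plan is to prove the inequality pointwise in the witnessing pair $(f_1, f_2)$, and then pass to the supremum. Concretely, I would fix an arbitrary pair $f_1, f_2 \in \cF_h$ and examine the ratio
\[
\frac{\sum_{s \in [k-1]} \bar\sigma_{s,h}^{-2}(f_1(z_{s,h}) - f_2(z_{s,h}))^2 + \lambda}{\sum_{s \in [k_{last}-1]} \bar\sigma_{s,h}^{-2}(f_1(z_{s,h}) - f_2(z_{s,h}))^2 + \lambda},
\]
which is exactly the inverse ratio of the denominators appearing in $D_{\cF_h}^2(z; z_{[k-1],h}, \bar\sigma_{[k-1],h})$ and $D_{\cF_h}^2(z; z_{[k_{last}-1],h}, \bar\sigma_{[k_{last}-1],h})$ for this fixed witness. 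Since the numerator $(f_1(z) - f_2(z))^2$ is identical in the two quantities, it suffices to show this ratio is at most $1 + \chi$.

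Next, for each new index $i \in \{k_{last}, \dots, k-1\}$, the definition of $D_{\cF_h}^2$ immediately yields
\[
(f_1(z_{i,h}) - f_2(z_{i,h}))^2 \le D_{\cF_h}^2(z_{i,h}; z_{[k_{last}-1],h}, \bar\sigma_{[k_{last}-1],h}) \cdot \Bigl( \sum_{s \in [k_{last}-1]} \bar\sigma_{s,h}^{-2}(f_1(z_{s,h}) - f_2(z_{s,h}))^2 + \lambda \Bigr).
\]
Dividing both sides by $\bar\sigma_{i,h}^2$ and summing over $i$, then using $D_{\cF_h} \le \bar D_{\cF_h}$ from Definition~\ref{def:bonus-oracle}, the telescoping difference of denominators is bounded by
\[
\sum_{i = k_{last}}^{k-1} \bar\sigma_{i,h}^{-2}\, \bar D_{\cF_h}^2(z_{i,h}; z_{[k_{last}-1],h}, \bar\sigma_{[k_{last}-1],h}) \;\cdot\; \bigl[\text{denominator at } k_{last}\bigr].
\]
Because the policy was \emph{not} updated at episode $k$, the switching criterion \eqref{eq:def:switch} fails, so this leading sum is strictly less than $\chi$. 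Combining these inequalities yields $\text{denom}(k) \le (1+\chi)\, \text{denom}(k_{last})$ for every fixed $(f_1, f_2)$, hence $R(f_1,f_2;k) \ge (1+\chi)^{-1} R(f_1,f_2;k_{last})$; taking the supremum over $f_1, f_2 \in \cF_h$ on both sides gives the claim.

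The argument is largely bookkeeping once the pointwise reduction is in hand; the only conceptual step is to resist taking the supremum too early. The mildly delicate point is ensuring the switching condition, which is phrased using the \emph{computed} bonus $\bar D$, can be applied to the ``true'' uncertainty $D$ that appears in the denominator algebra — this is handled by a single invocation of $D \le \bar D$ from the bonus oracle definition, and is the only place where the constant $C$ from Definition~\ref{def:bonus-oracle} plays any role (in fact it is not needed here, only the lower inequality $\bar D \ge D$).
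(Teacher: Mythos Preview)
Your proposal is correct and follows essentially the same approach as the paper's proof: both arguments bound, for each fixed witnessing pair $(f_1,f_2)$, the ratio of the two denominators by $1+\chi$ via the definition of $D_{\cF_h}^2$ and the failed switching condition, and then pass to the supremum. Your observation that only the inequality $D_{\cF_h}\le \bar D_{\cF_h}$ from Definition~\ref{def:bonus-oracle} is needed (and not the constant $C$) is accurate and is exactly what the paper uses implicitly when it writes the non-switching condition in terms of $D_{\cF_h}^2$ rather than $\bar D_{\cF_h}^2$.
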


With Lemma \ref{lemma:double}, we can then convert$D_{\cF_h}^2(z; z_{[k_{last} - 1], h}, \bar\sigma_{[k_{last} - 1], h}$ in the bonus term into $D_{\cF_h}^2(z; z_{[k - 1], h}, \bar \sigma_{[k - 1], h})$ with a cost of an additional $1 + \chi$ constant. 

As a direct consequence of Lemma~\ref{lemma:double}, We have 
the following lemma, which controls the gap between optimistic value functions and real value functions resulting from the exploration bonuses. 

\begin{lemma}[Restatement of Lemma \ref{lemma:transition}]
    On the events $\cE^{\hat f}=\cE_1^{\hat f}$, $\cE^{\check f}=\cE_1^{\check f}$ and $\cE_1$, for each stage $h\in[H]$, the regret in the first $K$ episodes can be decomposed and controlled as:
   \begin{align}
       &\sum_{k=1}^K\big(\vvalue_{k,h}(s_h^k)-\vvalue_{h}^{\pi^k}(s_h^k)\big)\leq 2CH(1+\chi)(1+\hat \beta_{k}\gamma^2)  \dim_{\alpha, K}(\cF)\notag\\
       &\qquad + 4C(1+\chi) \hat \beta_{k} \sqrt{\dim_{\alpha, K}(\cF)} \sqrt{H \sum_{k=1}^K \sum_{h=1}^H (\sigma_{k,h}^2+\alpha^2)} \notag \\
       &\qquad + 2\sqrt{\sum_{k = 1}^K \sum_{h' = h}^H [\VV_h (\vvalue_{k,h+1}-\vvalue_{h+1}^{\pi^k})](s_h^k, a_h^k) \log(2K^2 H /\delta)} + 2\sqrt{\log(2K^2 H /\delta)} + 2 \log(2K^2 H /\delta) \notag
   \end{align}
   and for all stage $h\in[H]$, we further have
\begin{align}
   &\sum_{k=1}^K\sum_{h=1}^H \big[\PP_h(\vvalue_{k,h+1}-\vvalue_{h+1}^{\pi^k})\big](s_h^k,a_h^k)\notag\\
   &\leq 2CH^2(1+\chi)(1+\hat \beta_{k}\gamma^2) \dim_{\alpha, K}(\cF) + 4CH(1+\chi) \hat \beta_{k} \sqrt{\dim_{\alpha, K}(\cF_h)} \sqrt{H \sum_{k=1}^K \sum_{h=1}^H (\sigma_{k,h}^2+\alpha^2)}\notag\\
  &\qquad +H \left(2\sqrt{\sum_{k = 1}^K \sum_{h = 1}^H [\VV_h (\vvalue_{k,h+1}-\vvalue_{h+1}^{\pi^k})](s_h^k, a_h^k) \log(2K^2 H /\delta)} + 2\sqrt{\log(2K^2 H /\delta)} + 2 \log(2K^2 H /\delta)\right).\notag
\end{align}
\end{lemma}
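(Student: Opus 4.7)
The plan is to decompose $V_{k,h}(s_h^k) - V_h^{\pi^k}(s_h^k)$ via a Bellman-style recursion, bound each per-step error using the optimism of Lemma~\ref{lemma:opt-pess} and the confidence bound of Lemma~\ref{lemma:bern-error}, and then absorb the staleness introduced by rare switching through Lemma~\ref{lemma:double}. Concretely, because $\pi_h^k$ is greedy with respect to the stored optimistic $Q_{k,h} = Q_{k_{last},h}$, I would start from
\[
V_{k,h}(s_h^k) - V_h^{\pi^k}(s_h^k) \le \hat{f}_{k_{last},h}(s_h^k,a_h^k) + b_{k_{last},h}(s_h^k,a_h^k) - \cT_h V_{h+1}^{\pi^k}(s_h^k,a_h^k),
\]
and add and subtract $\cT_h V_{k,h+1}(s_h^k,a_h^k)$ to isolate the bonus-type terms from the Bellman-propagation term $[\PP_h(V_{k,h+1} - V_{h+1}^{\pi^k})](s_h^k,a_h^k)$.

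For the bonus-type terms, Lemma~\ref{lemma:bern-error} bounds $|\hat{f}_{k_{last},h} - \cT_h V_{k,h+1}|$ by $\hat\beta_k D_{\cF_h}(\cdot; z_{[k_{last}-1],h}, \bar\sigma_{[k_{last}-1],h})$, and the same quantity controls $b_{k_{last},h}$ up to the constant $C$ from Definition~\ref{def:bonus-oracle}. Lemma~\ref{lemma:double} then replaces the stale uncertainty $D_{\cF_h}(\cdot; z_{[k_{last}-1],h}, \bar\sigma_{[k_{last}-1],h})$ by $\sqrt{1+\chi}\cdot D_{\cF_h}(\cdot; z_{[k-1],h}, \bar\sigma_{[k-1],h})$, since the rare-switching condition \eqref{eq:def:switch} failed between $k_{last}$ and $k$. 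Summing over $k \in [K]$ and invoking Lemma~\ref{lemma:sum-bonus} with $\beta = \hat\beta_k$ produces $(1 + \hat\beta_k \gamma^2) \dim_{\alpha,K}(\cF_h)$ plus $\hat\beta_k \sqrt{\dim_{\alpha,K}(\cF_h)\sum_k(\sigma_{k,h}^2 + \alpha^2)}$, yielding precisely the two deterministic contributions with leading constants $2CH(1+\chi)$ and $4C(1+\chi)$.

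For the Bellman-propagation term, I would unroll the recursion from stage $h$ through stage $H$, using $V_{H+1}^{\pi^k} \equiv V_{k,H+1} \equiv 0$, and at each level replace the conditional expectation $[\PP_{h'}(V_{k,h'+1} - V_{h'+1}^{\pi^k})](s_{h'}^k, a_{h'}^k)$ by the realized difference $V_{k,h'+1}(s_{h'+1}^k) - V_{h'+1}^{\pi^k}(s_{h'+1}^k)$ plus a martingale increment. Applying event $\cE_1$ to the accumulated martingale yields the Freedman-type term $2\sqrt{\sum_{k,h' \ge h}[\VV_{h'}(V_{k,h'+1} - V_{h'+1}^{\pi^k})](s_{h'}^k, a_{h'}^k) \log(2K^2 H/\delta)}$ together with the two lower-order constants, completing the first inequality. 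The second inequality then follows by applying the first inequality at every stage $h' \in [H]$ and summing, paying an extra multiplicative $H$.

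The main obstacle is the joint handling of the stale action $a_h^k$, the stale estimator $\hat f_{k_{last},h}$, the stale bonus $b_{k_{last},h}$, and the mismatched weighted-regression weights $\bar\sigma_{[k_{last}-1],h}$ versus $\bar\sigma_{[k-1],h}$ in the uncertainty functional. Lemma~\ref{lemma:double} is engineered precisely to absorb all four sources of staleness into a single $(1+\chi)$ factor, so invoking it correctly inside the Bellman recursion and threading the union bound over $h \in [H]$ for the Freedman concentration are the delicate points; everything else is routine telescoping and the mechanical application of Lemma~\ref{lemma:sum-bonus}.
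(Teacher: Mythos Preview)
Your proposal is correct and follows essentially the same route as the paper: decompose via the greedy action and the stored $Q_{k_{last},h}$, apply Lemma~\ref{lemma:bern-error} at episode $k_{last}$ (using $V_{k,h+1}=V_{k_{last},h+1}$), control the bonus by the oracle constant $C$, convert the stale uncertainty to the current one via Lemma~\ref{lemma:double}, telescope with event $\cE_1$, and finish with Lemma~\ref{lemma:sum-bonus} plus Cauchy--Schwarz across stages. One small correction: the second inequality in the paper is not obtained by summing the first over $h$ directly, but by first writing $\sum_{k,h}[\PP_h(V_{k,h+1}-V_{h+1}^{\pi^k})]$ as realized differences plus a martingale, invoking $\cE_1$ on the martingale, and only then applying the first inequality to the realized sum $\sum_{h}\sum_k(V_{k,h}(s_h^k)-V_h^{\pi^k}(s_h^k))$; your sketch skips this intermediate step but the conclusion is the same.
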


As a need to bound the sum of bonuses corresponding to a weighted regression target, we derive the following results on the sum of inverse weights. 

\begin{lemma}[Informal version of Lemma~\ref{lemma:total-variance}]
   With high probability, the total estimated variance is upper bounded by:
\begin{align*}
\sum_{k=1}^K \sum_{h=1}^H \sigma_{k,h}^2&\leq  \left(\log \cN_\cF( \epsilon) +  \log \cN_{\epsilon}(K)\right) \times \tilde O\big((1+\gamma^2)(\beta_k+ H \hat\beta_k+\tilde \beta_k) H\dim_{\alpha, K}(\cF)\big)\notag\\
&\qquad +\left(\log \cN_\cF( \epsilon) +  \log \cN_{\epsilon}(K)\right)^2 \times \tilde O\big((\beta_k+ H \hat\beta_k+\tilde \beta_k)^2 H\dim_{\alpha, K}(\cF)\big)\notag\\
&\qquad + \tilde O(\var_K+KH \alpha^2).
\end{align*}
\end{lemma}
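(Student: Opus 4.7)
} Write $S := \sum_{k=1}^K \sum_{h=1}^H \sigma_{k,h}^2$. Starting from the definition \eqref{eq:variance}, I split
\begin{align*}
S \;=\; \underbrace{\sum_{k,h} [\bar{\VV}_{k,h}\vvalue_{k,h+1}](s_h^k,a_h^k)}_{(\mathrm{I})} \;+\; \underbrace{\sum_{k,h} E_{k,h}}_{(\mathrm{II})} \;+\; \underbrace{\sum_{k,h} F_{k,h}}_{(\mathrm{III})}.
\end{align*}
The plan is to bound each piece, noting that (II) and (III) are sums of bonus-type quantities that, through Lemma~\ref{lemma:sum-bonus}, will produce a term of order $\sqrt{\dim(\cF)\cdot H\cdot(S+KH\alpha^2)}$. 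This gives a self-referential inequality $S \le A + B\sqrt{S}$, which I will resolve by AM--GM.

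For $(\mathrm{I})$, I will apply Lemma~\ref{lemma:variance-estimator-E} to replace the empirical variance by the true variance of $V_{k,h+1}$ at an additive cost of $E_{k,h}$ (which folds into (II)); then split $[\VV_h V_{k,h+1}] \le 2[\VV_h V_{h+1}^{\pi^k}] + 2[\VV_h(V_{k,h+1}-V_{h+1}^{\pi^k})]$. The first piece sums to $2\var_K$ by definition. For the second, since $V_{k,h+1}-V_{h+1}^{\pi^k}\in[0,1]$ on the optimism event (Lemma~\ref{lemma:opt-pess}), its variance is bounded by its expectation $[\PP_h(V_{k,h+1}-V_{h+1}^{\pi^k})](s_h^k,a_h^k)$, whose sum over $(k,h)$ is controlled by the second display of Lemma~\ref{lemma:transition}. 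For $(\mathrm{II})$, using the $\min(1,\cdot)$ cap and applying Lemma~\ref{lemma:sum-bonus} to each stage $h$ with $\beta=1$ yields
\begin{align*}
\sum_{k,h} E_{k,h} \;\le\; (2L\beta_K+\tilde\beta_K)\Bigl[(1+\gamma^2)\,H\dim_{\alpha,K}(\cF) + 2\sqrt{H\dim_{\alpha,K}(\cF)}\,\sqrt{S+KH\alpha^2}\Bigr].
\end{align*}
For $(\mathrm{III})$, I first factor out $\log(\cN_\cF(\epsilon)\cN_\epsilon(K))$, then convert the telescoping quantity $\hat f_{k,h}-\check f_{k,h}$ at $(s_h^k,a_h^k)$ into a transition-based expression: by Lemma~\ref{lemma:bern-error},
\begin{align*}
\hat f_{k,h}(s_h^k,a_h^k)-\check f_{k,h}(s_h^k,a_h^k) \;\le\; [\PP_h(V_{k,h+1}-\check V_{k,h+1})](s_h^k,a_h^k) + (\hat\beta_K+\check\beta_K)\,D_{\cF_h}\bigl(z_{k,h};z_{[k-1],h},\bar\sigma_{[k-1],h}\bigr),
\end{align*}
so $(\mathrm{III})$ breaks into: a value-gap sum bounded by Lemma~\ref{lemma:transition-OP}, and two weighted bonus sums bounded by Lemma~\ref{lemma:sum-bonus}. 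Crucially, Lemma~\ref{lemma:transition-OP} itself introduces a $\sqrt{\sum_{k,h}[\VV_h(V_{k,h+1}-\check V_{k,h+1})]\log(2K^2H/\delta)}$ term, which I will dominate by $\sqrt{\sum_{k,h}[\PP_h(V_{k,h+1}-\check V_{k,h+1})]\log(2K^2H/\delta)}$ (since $V_{k,h+1}-\check V_{k,h+1}\in[0,1]$), then resolve the resulting self-reference in $\sum[\PP_h(V-\check V)]$ by one AM--GM.

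Finally I aggregate: the accumulated bound is of the form $S\le U + W\sqrt{S+KH\alpha^2}$, where $U$ collects the stage-wise $H\dim_{\alpha,K}(\cF)(1+\gamma^2)(\beta_K+H\hat\beta_K+\tilde\beta_K)\log(\cN_\cF(\epsilon)\cN_\epsilon(K))$ contributions, the $\var_K$ term, and the $KH\alpha^2$ slack from the weight floor $\alpha$; and $W$ is of order $(\beta_K+H\hat\beta_K+\tilde\beta_K)\sqrt{H\dim_{\alpha,K}(\cF)}\cdot\log(\cN_\cF(\epsilon)\cN_\epsilon(K))$. A standard AM--GM step $W\sqrt{S}\le S/2 + W^2/2$ absorbs $S/2$ to the left-hand side, producing $S\le 2U + W^2$, which, after matching the two inverse-powers of $\log(\cdot)$ to the two displayed terms in the lemma, yields the claimed bound. \emph{The main obstacle} is the nested self-reference: the variance bound invokes Lemma~\ref{lemma:transition-OP}, whose right-hand side itself depends on $\sqrt{S}$ and on $\sqrt{\sum[\VV_h(V-\check V)]}$; the proof must carefully perform AM--GM twice (once to close the $\sum[\PP_h(V-\check V)]$ recursion, once to close the $S$ recursion) while keeping the two distinct powers of $\log(\cN_\cF(\epsilon)\cN_\epsilon(K))$ separated, since one power arises linearly from $F_{k,h}$ and the other arises quadratically after squaring $W$.
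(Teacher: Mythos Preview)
Your proposal is correct and follows essentially the same approach as the paper's proof. Your three-term split $(\mathrm{I})+(\mathrm{II})+(\mathrm{III})$ is a repackaging of the paper's five-term decomposition $I_1+\cdots+I_5$ (your $(\mathrm{I})$ absorbs $I_1,I_4,I_5$ before re-splitting), and every key step---replacing $\bar\VV$ by $\VV$ via Lemma~\ref{lemma:variance-estimator-E}, passing from $[\VV_h V_{k,h+1}]$ to $\var_K$ plus a gap term controlled by Lemma~\ref{lemma:transition}, bounding the bonus sums by Lemma~\ref{lemma:sum-bonus}, handling $F_{k,h}$ through Lemma~\ref{lemma:transition-OP}, and closing the two nested self-references via $x\le a\sqrt{x}+b\Rightarrow x\le a^2+2b$---matches the paper exactly. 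The only cosmetic differences are that you invoke Lemma~\ref{lemma:bern-error} (radius $\hat\beta_k$) where the paper uses Lemma~\ref{lemma:hoff-error} (radius $\beta_k$) to unpack $\hat f_{k,h}-\check f_{k,h}$, and you use $[\VV_h V_{k,h+1}]\le 2[\VV_h V_{h+1}^{\pi^k}]+2[\VV_h(V_{k,h+1}-V_{h+1}^{\pi^k})]$ where the paper bounds the difference $[\VV_h V_{k,h+1}]-[\VV_h V_{h+1}^{\pi^k}]$ directly; both alternatives are harmless under $\tilde O(\cdot)$.
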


\begin{proof}[Proof sketch of Theorem \ref{thm:main}]
As the result of optimism, we have shown that with high probability, for all $k, h \in [K] \times [H]$, $V_{k, h}$ is an upper bound for $V_{h}^*$. Hence, we further have, \begin{align*} 
\regret(K) =\sum_{k=1}^K\big(\vvalue_{1}^*(s_1^k)-\vvalue_{k,1}^{\pi^k}(s_1^k)\big)
       \leq\sum_{k=1}^K\big(\vvalue_{k,1}(s_1^k)-\vvalue_{k,1}^{\pi^k}(s_1^k)\big). 
\end{align*}

To further proceed, we apply Lemma \ref{lemma:transition} and obtain \begin{align*} 
\regret(K) &\le 2CH(1+\chi)(1+\hat \beta_{K}\gamma^2)  \dim_{\alpha, K}(\cF) +\tilde{O} \left(\sqrt{\sum_{k = 1}^K \sum_{h = 1}^H [\VV_h (\vvalue_{k,h+1}-\vvalue_{h+1}^{\pi^k})](s_h^k, a_h^k)}\right) \\&\quad + 4C(1+\chi) \hat \beta_{K} \sqrt{\dim_{\alpha, K}(\cF)} \sqrt{H \sum_{k=1}^K \sum_{h=1}^H (\sigma_{k,h}^2+\alpha^2)}, 
\end{align*}
where the second term can be bounded by applying Lemma \ref{lemma:transition} repeatedly and the third term can be controlled by Lemma \ref{lemma:total-variance}. 

Then after substituting the value of $\hat beta_K, \gamma$ and $\alpha$, we conclude that with high probability, \begin{align*} 
\regret(K) &= \tilde{O}(\sqrt{\dim(\cF) \log \cN \cdot H \var_K}) \\&\quad + \tilde O\left(H^{2.5}\dim^2(\cF) \sqrt{\log \cN} \log(\cN \cdot \cN_b) \cdot \sqrt{H \log \cN + \dim(\cF) \log(\cN \cdot \cN_b)}\right). 
\end{align*} Please refer to Subsection \ref{sub-proof} for the detailed calculation of this part. 
\end{proof}

\section{Proof of Theorem \ref{lemma:lower}}\label{sec-lower-bound}
%\color{blue}

In this section, we provide the proof of Theorem \ref{lemma:lower}. To prove the lower bound, we create a series of hard-to-learn MDPs as follows. Each hard-to-learn MDP comprises $d/4$ distinct sub-MDPs denoted as $\cM_{1},..,\cM_{d/4}$. Each sub-MDP $\cM_{i}$ is characterized by two distinct states, initial state $s_{i,0}$ and absorbing state $s_{i,1}$, and shares the same action set $\cA=\{a_0,a_1\}$. Since the state and action spaces are finite, these tabular MDPs can always be represented as linear MDPs with dimension $|\cS|\times |\cA|=d$.

To generate each sub-MDP $\cM_{i}$, for all stage $h\in [H]$, a special action $a_{i,h}$ is uniformly randomly selected from the action set $\{a_0,a_1\}$. Given the current state $s_{i,0}$, the agent transitions to the state $s_{i,0}$ if it takes the special action $a_{i,h}$. Otherwise, the agent transitions to the absorbing state $s_{i,1}$ and remains in that state in subsequent stages. The agent will receive the reward $1$ if it takes the special action $a_{i,H}$ at the state $s_{i,0}$ during the last stage $H$. Otherwise, the agent always receives reward $0$. In this scenario, for sub-MDP $\cM_{i}$, the optimal policy entails following the special action sequence $(a_{i,1}, a_{i,2}, ..., a_{i,H})$ to achieve a total reward of $1$. In contrast, any other action sequence fails to yield any reward.

Now, we partition the $K$ episodes to $d/4$ different distinct epochs. For each epoch (ranging from episodes $4(i-1)K/d+1$ to episode $4iK/d$), we initialize the state as $s_{i,0}$ and exclusively focus on the sub-MDP $\cM_{i}$. The regret in each epoch can be lower bounded separately as follows:

\begin{lemma}\label{lemma:lower-epoch}
    For each epoch $i\in[d/4]$ and any algorithm $\textbf{Alg}$ capable of deploying arbitrary policies, if the expected switching cost in epoch $i$ is less than $H/(2\log K)$, the expected regret of $\textbf{Alg}$ in the $i$-th epoch is at least $\Omega(K/d)$.
\end{lemma}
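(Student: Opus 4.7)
The plan is to combine Yao's minimax principle with a careful tracking of how much a stochastic algorithm can learn about the special sequence $\pi^\ast := (a^\ast_{i,1}, \ldots, a^\ast_{i,H})$ when policy updates are rare. I would treat $\pi^\ast$ as drawn uniformly at random from $\{0,1\}^H$; Yao's minimax principle, together with a Markov step converting the hypothesis $\EE[S] < H/(2\log K)$ into the event $\{S \le cH/\log K\}$ for some absolute constant $c<1$ (which has probability $\Omega(1)$), reduces the claim to a lower bound on the conditional expected regret of any deterministic algorithm against the uniform prior. Partition the $T := 4K/d$ episodes into $S+1$ phases of lengths $N_1, \ldots, N_{S+1}$ with $\sum_j N_j = T$, a single (possibly stochastic) policy being used within each phase. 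The key structural observation is that after any history of observations---each of the form (attempted action sequence $A$, divergence stage $D$)---the posterior over $\pi^\ast$ is uniform on a hypercube $\{v : v_{1:L} = c_{1:L},\ v_{L+1:H} \in \{0,1\}^{H-L}\}$ whose known coordinates always form a prefix: probing position $h$ requires an attempt that reaches stage $h$, which in turn requires matching $\pi^\ast$ at positions $1, \ldots, h-1$ and hence reveals those coordinates simultaneously.

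Let $L_j$ be the length of the known prefix at the end of phase $j$ and $\Delta_j := L_j - L_{j-1}$. Conditional on the past filtration $\cF_{j-1}$, each attempt in phase $j$ extends the known prefix by at least $k$ new positions only if it matches the uniformly random unknown suffix of $\pi^\ast$ in its first $k-1$ coordinates, an event of probability exactly $2^{-(k-1)}$ regardless of the policy's marginals on unknown positions. A union bound across the $N_j$ attempts gives
\begin{align*}
\Pr[\Delta_j \ge k \mid \cF_{j-1}] \;\le\; N_j \cdot 2^{-(k-1)},
\end{align*}
so $\EE[\Delta_j \mid \cF_{j-1}] \le \log_2 N_j + O(1)$ and $\Delta_j - \log_2 N_j$ is sub-exponential with $O(1)$ scale. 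Telescoping with AM-GM and using $S \le cH/\log K$ together with $T \le K$ yields $\EE[L_{S+1}] \le cH + o(H)$; combining with a martingale Freedman/Bernstein inequality on the sub-exponential increments then gives $L_{S+1} \le (c+\delta) H < H$ with probability $1 - \exp(-\Omega(H))$ for an appropriately chosen $\delta > 0$.

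Next, I convert this posterior bound into a regret lower bound. Since the posterior is uniform on a hypercube of size $2^{H - L_{j-1}}$, the per-episode match probability of \emph{any} (possibly non-Markov) stochastic policy in phase $j$ is at most $\EE_{\pi^\ast}[\pi^{(j)}(\pi^\ast)] \le 1/|\mathrm{support}| = 2^{-(H - L_{j-1})}$. On the high-probability event $L_{S+1} \le (c+\delta)H$, the total expected matches are bounded by $T \cdot 2^{-(1 - c - \delta)H}$, and combining with the trivial bound $T$ on the complementary event gives
\begin{align*}
\EE[\#\mathrm{matches}] \;\le\; T \cdot 2^{-(1 - c - \delta)H} + T \cdot \exp(-\Omega(H)) \;\le\; T/2,
\end{align*}
for $H$ at least a sufficiently large constant, hence expected regret $\ge T/2 = \Omega(K/d)$. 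The degenerate regime $H = O(\log K)$---in which $S < H/(2\log K) < 1$ forces $S = 0$---is handled separately: with only one policy used throughout, uniform-prior symmetry gives per-episode match probability exactly $2^{-H}$, so regret $\ge T(1 - 2^{-H}) = \Omega(T)$ for any $H \ge 1$.

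The main technical obstacle I anticipate is the martingale concentration of $L_{S+1}$: because each $\Delta_j$'s conditional distribution depends on $\cF_{j-1}$ through both $L_{j-1}$ and the (possibly adaptively chosen) phase length $N_j$, standard i.i.d.\ sub-exponential concentration does not apply, and one needs either an iterated conditional-MGF argument (using the tower bound $\EE[2^{\Delta_j}\mid\cF_{j-1}] \le O(N_j H)$) or a martingale Freedman inequality, with constants controlled carefully enough to match the switching threshold in the lemma statement. Once this concentration step is established, Yao's reduction, the hypercube invariant, and the posterior-size bound on per-episode matches combine cleanly to finish the proof.
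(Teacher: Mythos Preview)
Your proposal is correct and shares the central structural insight with the paper: the posterior over the hidden sequence $\pi^\ast$ is always uniform on a suffix (the ``learned prefix'' invariant), and with few policy switches this prefix cannot reach length $H$, so almost no episode can earn reward. Where you diverge from the paper is in how you control the prefix length, and the paper's route is considerably simpler.

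You bound the expected per-phase increment by $\EE[\Delta_j\mid\cF_{j-1}]\le \log_2 N_j+O(1)$ and then propose martingale/Freedman concentration on $L_{S+1}=\sum_j\Delta_j$, which you correctly flag as the delicate step (adaptive phase lengths, sub-exponential tails, matching the constant in the threshold). The paper sidesteps this machinery entirely: setting $C=2\log_2 K$, your own tail bound $\Pr[\Delta_j\ge C+1\mid\cF_{j-1}]\le N_j\cdot 2^{-C}\le K/K^2=1/K$ already gives a uniform high-probability cap on each increment. A single union bound over the (at most $H/C$) phases then yields $L_{S+1}<H$ with probability at least $1-O(H/(CK))$, with no Freedman inequality, no conditional-MGF argument, and no separate case analysis for $H=O(\log K)$. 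The paper also applies the Markov step on the switching cost once at the end (bounding $\Pr[\delta\ge H/C]\le \EE[\delta]\cdot C/H$) rather than as an up-front reduction, and works directly with stochastic policies without invoking Yao.

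In short: your plan would work, but you are proving a sharper statement than needed. The paper trades your tight $\log_2 N_j$ bound on the per-phase gain for the cruder but uniform $2\log_2 K$ bound, which collapses your main technical obstacle into a two-line union-bound argument.
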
 
\begin{proof}[Proof of Lemma \ref{lemma:lower-epoch}]
     Given that each sub-MDP is independently generated, policy updates before epoch $i$ only offer information for the sub-MDPs $\cM_1,...,\cM_{i-1}$ and do not provide any information for the current epoch $i$. In this scenario, there is no distinction between epochs and for simplicity, we only focus on the first epoch, encompassing episodes $1$ to $4K/d$.
     
     Now, let $k_0=0$ and we denote $\cK=\{k_1,k_2,...\}$ as the set of episodes where the algorithm $\textbf{Alg}$ updates the policy. If $\textbf{Alg}$ does not update the policy $i$ times, we set $k_i=4K/d+1$. For simplicity, we set $C=2\log K$ and for each $i\leq H/C$, we define the events $\cE$ as the algorithm $\textbf{Alg}$ has not reached the state $s_{1,0}$ at the stage $iC$ before the episode $k_i$. 
     
    Conditioned on the events $\cE_1,...,\cE_{i-1}$, the algorithm $\textbf{Alg}$ does not gather any information about the special action $a_{1,h}$ for stage $h\ge (i-1)C +1$. In this scenario, the special actions can still be considered as uniformly randomly selected from the action set ${a_0, a_1}$. For each episode between $k_{i-1}$ and $k_i$, the probability that a policy $\pi$ arrives at state $s_{1,0}$ at stage $iC$ is upper-bounded by:
     \begin{align*}
         &\EE_{a_{1,(i-1)C +1},...,a_{1,iC}}\big[\Pi_{h=1}^{iC} \Pr(\pi_h(s_{1,0})=a_{1,h}) \big]
         \leq \EE_{a_{1,(i-1)C +1},...,a_{1,iC}}\big[\Pi_{h=(i-1)C +1}^{iC} \Pr(\pi_h(s_{1,0})=a_{1,h}) \big]= \frac{1}{2^{C}},
     \end{align*}
where the first inequality holds due to $\pi(s_{1,0})=a_{1,h})\leq 1$ and the second equation holds due to the random generation process of the special actions. Notice that there are at most $K$ episodes between the $k_{i-1}$ and $k_i$ and after applying an union bound for all episodes, we have
    \begin{align*}
        \Pr (\cE_i | \cE_1,...,\cE_{i-1})\leq 1-\frac{K}{2^C}=1-\frac{1} K.
    \end{align*}
Furthermore, we have
\begin{align}
    \Pr(\cE_{H/C})&\leq \Pr(\cE_{1}\cap \cE_{2} \cap \cE_{3} \cap ... \cap \cE_{H/C})= \Pi_{i=1}^{H/C} \Pr (\cE_i | \cE_1,...,\cE_{i-1}) \leq (1 - \frac{1}{1K})^{H/C}\leq 1 -\frac{H}{CK}.\label{eq-low-1}
\end{align}
Notice that the agent cannot receive any reward unless it has reached the state $s_{i,0}$ at the last stage $H$. Therefore, the expected regret for the algorithm $\textbf{Alg}$ for the sub-MDP $\cM$ can be bounded by the switching cost $\delta$:
\begin{align}
    \EE_{\cM}[\text{Regret}(\textbf{Alg})]&\ge \EE_{\cM}[\ind (\cE_{H/C}) \times (k_{H/C}-1)]\notag\\
    &\ge \EE_{\cM}[k_{H/C}-1]-\frac{H}C\notag\\
    &\ge \EE_{\cM}\big[\ind(\delta< H/C)\cdot K/d\big]-\frac{H}C\notag\\
    &\ge 4K/d- \EE_{\cM}\big[\delta\big]\cdot \frac{4KC}{dH}-\frac{H}C,\label{eq-low-2}
\end{align}
where the first inequality holds due to the fact that the agent receives no reward before $k_{H/C}$ conditioned on the event $\cE_{H/C}$, the second inequality holds due to \eqref{eq-low-1} with $k_{H/C}-1\leq K$, the third inequality holds due to the definition of $k_{H/C}$ and the last inequality holds due to $\EE[\ind(x\ge a)]\le \EE[x]/a$ for any non-negative random variable $x$. According to the result in \eqref{eq-low-1}, if the expected switching $\EE[\delta]\leq H/(2C)$, the expected regret is at least $\Omega(K)$, when $K$ is large enough compared to $H$. 
\end{proof}

With Lemma \ref{lemma:lower}, we can prove Theorem \ref{lemma:lower}.

\begin{proof}[Proof of Theorem \ref{lemma:lower}]
    For these constructed hard-to-learn MDPs and any given algorithm \textbf{Alg}, we denote the expected switching cost for sub-MDP $\cM_i$ as $\delta_i$. 
    
    According to Lemma \ref{lemma:lower-epoch}, we have
    \begin{align}
        \EE_{\cM_i}[\text{Regret}(\textbf{Alg})]&\ge \Pr \big(\delta_i < H/(2\log K)\big) \cdot K/d\notag\\
        &\ge \big(1-\EE_{\cM_i}[\delta_i]\cdot 2 \log K/H\big)\cdot K/d, \label{lower:03}
    \end{align}
    where the first inequality holds due to lemma \ref{lemma:lower} and last inequality holds due to $\EE[\ind(x\ge a)]\le \EE[x]/a$ for any non-negative random variable $x$. Taking a summation of \eqref{lower:03} for all sub-Mdps, we have
    \begin{align*}
        \EE_{\cM}[\text{Regret}(\textbf{Alg})]&=\sum_{i=1}^{d/4} \EE_{\cM_i}[\text{Regret}(\textbf{Alg})]\notag\\
        &\ge \sum_{i=1}^{d/4} \big(1-\EE_{\cM_i}[\delta_i]\cdot 2 \log K/H\big)\cdot K/d\notag\\
        &=(d/4- \EE_{\cM}[\delta] \cdot 2 \log K/H )\cdot K/d,
    \end{align*}
    where $\delta$ is the total expected switching cost. Therefore, for any algorithm with total expected switching cost less than $dH/(16\log K)$, the expected is lower bounded by
    \begin{align*}
        \EE_{\cM}[\text{Regret}(\textbf{Alg})] \ge (d/4- \EE_{\cM}[\delta] \cdot 2 \log K/H )\cdot K/d \ge K/8.
    \end{align*}
    Thus, we finish the proof of Theorem \ref{lemma:lower}.
\end{proof}

\color{black}

\section{Proof of Lemmas in Appendix \ref{sec-proof}}
In this section, we provide the detailed proof of lemmas in Appendix \ref{sec-proof}.

\subsection{Proof of High Probability Events}

\subsubsection{Proof of Lemma \ref{lemma:tilde-event}}

\begin{proof} [Proof of Lemma \ref{lemma:tilde-event}]
   We first prove that for an arbitrarily chosen $h \in [H]$, $\underline \cE_{k, h}^{\tilde f}$ holds with probability at least $1 - \delta / H$ for all $k$.

   For simplicity, in this proof, we denote $\cT_h^2 V_{k, h + 1}(s_h^i, a_h^i)$ as $\tilde f_k^*(s_h^i, a_h^i)$ where $\tilde f_k^* \in \cF_h$ exists due to our assumption. 
   For any function $V:\cS \to [0, 1]$, let $\tilde \eta_h^k(V) =\left(r_h^k + V(s_{h + 1}^k)\right)^2 -  \EE_{s' \sim s_h^k, a_h^k}\left[\left(r(s_h^k, a_h^k, s')+ V(s')\right)^2\right]$. 
   
   By simple calculation, for all $f \in \cF_h$, we have  \begin{align*} 
       &\sum_{i \in [k - 1]} \left(f(s_h^i, a_h^i) - \tilde f_k^*(s_h^i, a_h^i)\right)^2 
        + 2 \underbrace{\sum_{i \in [k - 1]} \left(f(s_h^i, a_h^i) - \tilde f_k^*(s_h^i, a_h^i)\right) \cdot \tilde \eta_h^i(V_{k, h + 1})}_{I(f, \tilde f_k^*, V_{k, h + 1})}
       \\&=  \sum_{i \in [k - 1]} \left[\left(r_h^i + V_{k, h + 1}(s_h^i)\right)^2 -  f(s_h^i, a_h^i)\right]^2 - \sum_{i \in [k - 1]} \left[\left(r_h^i + V_{k, h + 1}(s_h^i)\right)^2 -  \tilde f_k^*(s_h^i, a_h^i)\right]^2. 
   \end{align*} 
   Due to the definition of $\tilde f_{k, h}$, we have \begin{align} 
       \sum_{(i, j) \in [k - 1] \times [H]} \left(\tilde f_{k, h}(s_j^i, a_j^i) - \tilde f_k^*(s_j^i, a_j^i)\right)^2 
        + 2 I(\tilde f_{k, h}, \tilde f_k^*, V_{k, h + 1}) \le 0. \label{eq:tilde-f:1}
   \end{align}
   Then we give a high probability bound for $-I(f, \tilde f_k^*, V_{k, h + 1})$ through the following calculation. 
   
   Applying Lemma \ref{lemma:hoeffding-variant}, for fixed $f$, $\bar f$ and $V$, with probability at least $1 - \delta$, \begin{align*} 
       -I(f, \bar f, V) &:= -\sum_{i \in [k - 1]} \left(f(s_h^i, a_h^i) - \bar f(s_h^i, a_h^i)\right) \cdot \tilde \eta_h^i(V)  \\&\le 8\lambda\sum_{i \in [k - 1]} \left(f(s_h^i, a_h^i) - \bar f(s_h^i, a_h^i)\right)^2 + \frac{1}{\lambda} \cdot \log\frac{1}{\delta}.
   \end{align*}
   By the definition of $V$ in line $\cdot$ of Algorithm 1, $V_{k, h+1}$ lies in the optimistic value function class $\cV_{k}$ defined in 
   \eqref{def:opt_value_class}. Applying a union bound for all the value functions $V^c$ in the corresponding $\epsilon$-net $\cV^c$, we have \begin{align*} 
       -I(f, \bar f, V^c) \le \frac{1}{4}\sum_{i \in [k - 1]} \left(f(s_h^i, a_h^i) - \bar f(s_h^i, a_h^i)\right)^2 + 32 \cdot \log\frac{\cN_\epsilon(k)}{\delta}
   \end{align*} holds for all $k$ with probability at least $1 - \delta$. 

   For all $V$ such that $\|V - V^c\|_\infty \le \epsilon$, we have $|\eta_h^i(V) - \eta_h^i (V^c)| \le 4 \epsilon$. Therefore, with probability $1 - \delta$, the following bound holds for $I(f, \bar f, V_{k, h + 1})$: 
   \[ -I(f, \bar f, V_{k, h + 1}) \le \frac{1}{4}\sum_{i \in [k - 1]} \left(f(s_h^i, a_h^i) - \bar f(s_h^i, a_h^i)\right)^2 + 32 \cdot \log\frac{\cN_\epsilon(k)}{\delta} + 4\epsilon \cdot k. \]
   To further bound $I(\tilde f_{k, h}, \tilde f_k^*, V_{k, h + 1})$ in \eqref{eq:tilde-f:1}, we apply an $\epsilon$-covering argument on $\cF_h$ and show that with probability at least $1 - \delta$, \begin{align} 
       -I(\tilde f_{k, h}, \tilde f_k^*,  V_{k, h + 1}) &\le \frac{1}{4}\sum_{i \in [k - 1]} \left(\tilde f_{k, h}(s_h^i, a_h^i) - \tilde f_k^*(s_h^i, a_h^i)\right)^2 + 32 \cdot \log\frac{\cN_\epsilon(k) \cdot \cN_\cF( \epsilon)}{\delta} \notag \\&\ + 16 L \epsilon \cdot k\label{eq:tilde-f:3}
   \end{align} for probability at least $1 - \delta$. 

   Substituting \eqref{eq:tilde-f:3} into \eqref{eq:tilde-f:1} and rearranging the terms, 
   \[ \sum_{i \in [k - 1]} \left(\tilde f_{k, h}(s_h^i, a_h^i) - \tilde f_k^*(s_h^i, a_h^i)\right)^2 \le 128 \log \frac{\cN_\epsilon(k) \cdot \cN_\cF( \epsilon) \cdot H}{\delta} + 64L\epsilon \cdot k \] for all $k$ with probability at least $1 - \delta / H$. 
   
   Finally, we apply a union bound over all $h \in [H]$ to conclude that $\cE^{\tilde f}$ holds with probability at least $1 - \delta$.

\end{proof}
\subsubsection{Proof of Lemmas \ref{lemma:hat-event-hoeffding} and \ref{lemma:hat-event-hoeffding1}}

\begin{proof} [Proof of Lemma \ref{lemma:hat-event-hoeffding}]
    Similar to the proof of Lemma \ref{lemma:tilde-event}, we first prove that for an arbitrarily chosen $h \in [H]$, $\cE_{k, h}^{\hat f}$ holds with probability at least $1 - \delta / H$ for all $k$. 

    In this proof, we denote $\cT_h V_{k, h + 1}(s_h^i, a_h^i)$ as $\hat f_k^*(s_h^i, a_h^i)$ where $\hat f_k^* \in \cF_h$ exists due to our assumption. For any function $V:\cS \to [0, 1]$, let $\hat \eta_h^k(V) =\left(r_h^k + V(s_{h + 1}^k)\right) -  \EE_{s' \sim s_h^k, a_h^k}\left[r_h(s_h^k, a_h^k, s')+ V(s')\right]$. 

   For all $f \in \cF_h$, we have  \begin{align*} 
       &\sum_{i \in [k - 1]} \frac{1}{(\bar \sigma_{i, h})^2}\left(f(s_h^i, a_h^i) - \hat f_k^*(s_h^i, a_h^i)\right)^2 
        + 2 \underbrace{\sum_{i \in [k - 1]} \frac{1}{(\bar \sigma_{i, h})^2}\left(f(s_h^i, a_h^i) - \hat f_k^*(s_h^i, a_h^i)\right) \cdot \hat \eta_h^i(V_{k, h + 1})}_{I(f, \hat f_k^*, V_{k, h + 1})}
       \\&=  \sum_{i \in [k - 1]} \frac{1}{(\bar \sigma_{i, h})^2}\left[r_h^i + V_{k, h + 1}(s_h^i)^2 -  f(s_h^i, a_h^i)\right]^2 - \sum_{i \in [k - 1]} \frac{1}{(\bar \sigma_{i, h})^2}\left[r_h^i + V_{k, h + 1}(s_h^i)-  \hat f_k^*(s_h^i, a_h^i)\right]^2. 
   \end{align*} 
   By the definition of $\hat f_{k, h}$, we have \begin{align} 
       \sum_{i \in [k - 1]} \frac{1}{(\bar \sigma_{i, h})^2}\left(\hat f_{k, h} (s_h^i, a_h^i) - \hat f_k^*(s_h^i, a_h^i)\right)^2
        + 2 I(\hat f_{k, h}, V_{k, h + 1}) \le 0. \label{eq:hat-h-f:0}
   \end{align}
   Then we give a high probability bound for $-I(\hat f_{k, h}, \hat f_k^*, V_{k, h + 1})$ through the following calculation. 
   
   Applying Lemma \ref{lemma:hoeffding-variant}, for fixed $f$, $\bar f$ and $V$, with probability at least $1 - \delta$, \begin{align*} 
       -I(f, \bar f, V) &:= -\sum_{i \in [k - 1]} \frac{1}{(\bar \sigma_{i, h})^2}\left(f(s_h^i, a_h^i) - \bar f(s_h^i, a_h^i)\right) \cdot \hat \eta_h^i(V)  \\&\le 8\lambda \frac{1}{\alpha^2}\sum_{i \in [k - 1]} \frac{1}{(\bar \sigma_{i, h})^2} \left(f(s_h^i, a_h^i) - \bar f(s_h^i, a_h^i)\right)^2 + \frac{1}{\lambda} \cdot \log\frac{1}{\delta}.
   \end{align*}
   By the definition of $V$ in line $\cdot$ of Algorithm 1, $V_{k, h+1}$ lies in the optimistic value function class $\cV_{k}$ defined in \eqref{def:opt_value_class}. Applying a union bound for all the value functions $V^c$ in the corresponding $\epsilon$-net $\cV^c$, we have \begin{align*} 
       -I(f, \bar f, V^c) \le \frac{1}{4}\sum_{i \in [k - 1]} \frac{1}{(\bar \sigma_{i, h})^2}\left(f(s_h^i, a_h^i) - \bar f(s_h^i, a_h^i)\right)^2 + \frac{32}{\alpha^2} \cdot \log\frac{\cN_\epsilon(k)}{\delta}
   \end{align*} holds for all $k$ with probability at least $1 - \delta$. 
   
   For all $V$ such that $\|V - V^c\|_\infty \le \epsilon$, we have $|\eta_h^i(V) - \eta_h^i (V^c)| \le 4 \epsilon$. Therefore, with probability $1 - \delta$, the following bound holds for $I(f, \bar f, V_{k, h + 1})$: \[ -I(f, \bar f, V_{k, h + 1}) \le \frac{1}{4}\sum_{i \in [k - 1]} \frac{1}{(\bar \sigma_{i, h})^2}\left(f(s_h^i, a_h^i) - \bar f(s_h^i, a_h^i)\right)^2 + \frac{32}{\alpha^2} \cdot \log\frac{\cN_\epsilon(k)}{\delta} + 4\epsilon \cdot k / \alpha^2. \]
To further bound $I(\hat f_{k, h}, \hat f_k^*, V_{k, h + 1})$ in \eqref{eq:hat-h-f:0}, we apply an $\epsilon$-covering argument on $\cF_h$ and show that with probability at least $1 - \delta$, \begin{align} 
       -I(\hat f_{k, h}, \hat f_k^*,  V_{k, h + 1}) &\le \frac{1}{4}\sum_{i \in [k - 1]} \frac{1}{(\bar \sigma_{i, h})^2}\left(\hat f_{k, h}(s_h^i, a_h^i) - \hat f_k^*(s_h^i, a_h^i)\right)^2 + 32 \cdot \log\frac{\cN_\epsilon(k) \cdot \cN_\cF( \epsilon)}{\delta} \notag \\&\ + 16 L \epsilon \cdot k/ \alpha^2. \label{eq:hat-h-f:3}
   \end{align}
   Substituting \eqref{eq:hat-h-f:3} into \eqref{eq:hat-h-f:0} and rearranging the terms, we have \begin{align*} 
       \sum_{i \in [k - 1]} \frac{1}{(\bar \sigma_{i, h})^2}\left(\hat f_{k, h} (s_h^i, a_h^i) - \hat f_k^*(s_h^i, a_h^i)\right)^2 \le 128 \cdot \log\frac{\cN_\epsilon(k) \cdot \cN_\cF( \epsilon) H}{\delta}+ 64 L \epsilon \cdot k/ \alpha^2
   \end{align*} for all $k$ with probability at least $1 - \delta / H$. 
   Then we can complete the proof by using a union bound over all $h \in [H]$. 
\end{proof}

\begin{proof} [Proof of Lemma \ref{lemma:hat-event-hoeffding1}]
The proof is almost identical to the proof of Lemma \ref{lemma:hat-event-hoeffding}. 
\end{proof}

\subsubsection{Proof of Lemmas \ref{lemma:hat-event-hoeffding2} and \ref{lemma:hat-event-hoeffding3}}

\begin{proof} [Proof of Lemma \ref{lemma:hat-event-hoeffding2}]
   Similar to the proof of Lemma \ref{lemma:tilde-event}, we first prove that for an arbitrary $h \in [H]$, $\bar \cE_{k, h}^{\hat f}$ holds with probability at least $1 - \delta / H$ for all $k$.

   In this proof, we denote $\cT_h V_{k, h + 1}(s_h^i, a_h^i)$ as $\hat f_k^*(s_h^i, a_h^i)$ where $\hat f_k^* \in \cF$ exists due to our assumption. For any function $V:\cS \to [0, 1]$, let $\hat \eta_h^k(V) =\left(r_h^k + V(s_{h + 1}^k)\right) -  \EE_{s' \sim s_h^k, a_h^k}\left[r_h(s_h^k, a_h^k, s')+ V(s')\right]$. 

   For all $f \in \cF_h$, we have  \begin{align*} 
       &\sum_{i \in [k - 1]} \frac{\hat \ind_{i, h}}{(\bar \sigma_{i, h})^2}\left(f(s_h^i, a_h^i) - \hat f_k^*(s_h^i, a_h^i)\right)^2 
        + 2 \underbrace{\sum_{i \in [k - 1]} \frac{\hat \ind_{i, h}}{(\bar \sigma_{i, h})^2}\left(f(s_h^i, a_h^i) - \hat f_k^*(s_h^i, a_h^i)\right) \cdot \hat \eta_h^i(V_{k, h + 1})}_{I(f, \hat f_k^*, V_{k, h + 1})}
       \\&=  \sum_{i \in [k - 1]} \frac{\hat \ind_{i, h}}{(\bar \sigma_{i, h})^2}\left[r_h^i + V_{k, h + 1}(s_h^i)^2 -  f(s_h^i, a_h^i)\right]^2 - \sum_{i \in [k - 1]} \frac{\hat \ind_{i, h}}{(\bar \sigma_{i, h})^2}\left[r_h^i + V_{k, h + 1}(s_h^i)-  \hat f_k^*(s_h^i, a_h^i)\right]^2. 
   \end{align*} 
Due to the definition of $\hat f_{k, h}$, we have \begin{align} 
       \sum_{i \in [k - 1]} \frac{\hat \ind_{i, h}}{(\bar \sigma_{i, h})^2}\left(\hat f_{k, h} (s_h^i, a_h^i) - \hat f_k^*(s_h^i, a_h^i)\right)^2
        + 2 I(\hat f_{k, h}, V_{k, h + 1}) \le 0. \label{eq:hat-f:0}
   \end{align}
Then it suffices to bound the value of $I(f, \bar f, V_{k, h + 1})$ for all $f, \bar f \in \cF$. 

   Unlike the proof for Lemma \ref{lemma:tilde-event}, we decompose $I(f, \bar f, V_{k, h + 1})$ into two parts: \begin{align} 
       I(f, \bar f, V_{k, h + 1}) &= \sum_{i \in [k - 1]} \frac{\hat \ind_{i, h}}{(\bar \sigma_{i, h})^2}\left(f(s_h^i, a_h^i) - \bar f(s_h^i, a_h^i)\right) \cdot \hat \eta_h^i(V_{h + 1}^*) \notag \\& + \sum_{i \in [k - 1]} \frac{\hat \ind_{i, h}}{(\bar \sigma_{i, h})^2}\left(f(s_h^i, a_h^i) - \bar f(s_h^i, a_h^i)\right) \cdot \hat \eta_h^i(V_{k, h + 1} - V_{h + 1}^*).  \label{eq:hat-f:1}
   \end{align}
   Then we bound the two terms separately. 

   For the first term, we first check the following conditions before applying Lemma \ref{lemma:freedman-variant}, which is a variant of Freedman inequality. 
   For fixed $f$ and $\bar f$, we have \begin{align*} 
       \EE\left[\frac{\hat \ind_{i, h}}{(\bar \sigma_{i, h})^2}\left(f(s_h^i, a_h^i) - \bar f(s_h^i, a_h^i)\right) \cdot \hat \eta_h^i(V_{h + 1}^*)\right] = 0, 
       \end{align*}
       since $s_{h + 1}^i$ is sampled from $\PP_h(\cdot | s_h^i, a_h^i)$. 
       
       Next, we need to derive a bound for the maximum absolute value of each `weighted' transition noise:
       \begin{align}
        &\max_{i \in [k - 1]}\left|\frac{\hat \ind_{i, h}}{(\bar \sigma_{i, h})^2}\left(f(s_h^i, a_h^i) - \bar f(s_h^i, a_h^i)\right) \cdot \hat \eta_h^i(V_{h + 1}^*)\right| \notag
        \\&\le 4\max_{i \in [k - 1]}\left|\frac{\hat\ind_{i, h}}{(\bar \sigma_{i, h})^2}\left(f(s_h^i, a_h^i) - \bar f(s_h^i, a_h^i)\right)\right| \notag
       \\&\le 4 \max_{i \in [k - 1]} \frac{1}{(\bar \sigma_{i, h})^2} \sqrt{D_{\cF_h}^2\big(z_{i, h}; z_{[i - 1], h}, \bar\sigma_{[i-1], h}\big)\left(\sum_{\tau\in[k - 1]} \frac{\hat \ind_{\tau, h}}{(\bar \sigma_{\tau, h})^2} \left(f(s_h^\tau, a_h^\tau) - \bar f(s_h^\tau, a_h^\tau)\right)^2 + \lambda\right)} \notag
       \\&\le 4 \cdot \gamma^{-2} \sqrt{\sum_{\tau\in[k - 1]} \frac{\hat \ind_{\tau, h}}{(\bar \sigma_{\tau, h})^2} \left(f(s_h^\tau, a_h^\tau) - \bar f(s_h^\tau, a_h^\tau)\right)^2 + \lambda}, \label{eq:hat-f:1.0}
       \end{align}
       where the second inequality follows from the definition of $\cD_{\cF_h}$ in Definition \ref{def:ged}, the last inequality holds since $\bar \sigma_{i, h} \ge \gamma \cdot D_{\cF_h}^{1/2}\big(z_{i, h}; z_{[i - 1], h}, \bar\sigma_{[i-1], h}\big)$ according to line \ref{algorithm:def-variance} in Algorithm \ref{algorithm1}. 
       From the definition of $\hat \ind_{i, h}$ in \eqref{eq:def:hatind} we directly obtain the following upper bound of the variance:
       \begin{align*}
       &\sum_{i \in [k - 1]} \EE\left[\frac{\hat \ind_{i, h}}{(\bar \sigma_{i, h})^4}\left(f(s_h^i, a_h^i) - \bar f(s_h^i, a_h^i)\right)^2 \cdot \hat \eta_h^i(V_{h + 1}^*)^2\right] \\&\ \le 4 \sum_{i \in [k - 1]} \frac{\hat \ind_{i, h}}{(\bar \sigma_{i, h})^2} \left(f(s_h^i, a_h^i) - \bar f(s_h^i, a_h^i)\right)^2  \le 4 L^2 k /\alpha^2 := V. 
       \end{align*}

   Applying Lemma \ref{lemma:freedman-variant} with $V = 4 L^2 k /\alpha^2$, $M = 2L / \alpha^2$ and $v = m = 1$, for fixed $f$, $\bar f$, $k$, with probability at least $1 - \delta / (2k^2 \cdot \cN_\cF( \epsilon)\cdot H)$, 
   \begin{align} 
       &-\sum_{i \in [k - 1]} \frac{\hat \ind_{i, h}}{(\bar \sigma_{i, h})^2}\left(f(s_h^i, a_h^i) - \bar f(s_h^i, a_h^i)\right) \cdot \hat \eta_h^i(V_{h + 1}^*) \label{eq:hat-f:1.1}\\&\le \iota \sqrt{2\left(8\sum_{i \in [k - 1]} \frac{\hat \ind_{i, h}}{(\bar \sigma_{i, h})^2} \left(f(s_h^i, a_h^i) - \bar f(s_h^i, a_h^i)\right)^2 + 1 \right)} \notag \\&\quad + \frac{2}{3} \iota^2 \left(8 \gamma^{-2} \sqrt{\sum_{\tau\in[k - 1]} \frac{\hat\ind_{\tau, h}}{(\bar \sigma_{\tau, h})^2} \left(f(s_h^\tau, a_h^\tau) - \bar f(s_h^\tau, a_h^\tau)\right)^2 + \lambda} + 1\right)
       \\&\le \bigg(4\iota + \frac{16}{3} \iota^2 \gamma^{-2}\bigg) \sqrt{\sum_{\tau\in[k - 1]} \frac{\hat\ind_{\tau, h}}{(\bar \sigma_{\tau, h})^2} \left(f(s_h^\tau, a_h^\tau) - \bar f(s_h^\tau, a_h^\tau)\right)^2 + \lambda} + \sqrt{2} \iota + \frac{2}{3} \iota^2, \label{eq:hat-f:2}
   \end{align}
   where $\iota := \iota_1(k, h, \delta) = \sqrt{\log\frac{2k^2 \left(2\log( \frac{4L^2k}{\alpha^2})+2\right)\cdot\left(\log(\frac{4L}{\alpha^2})+2\right) \cdot \cN_\cF( \epsilon)}{\delta / H}}$. 

   Using a union bound across all $f, \bar f \in \cC(\cF_h, \epsilon)$ and $k \ge 1$, with probability at least $1 - \delta / H$, \begin{align} 
       &-\sum_{i \in [k - 1]} \frac{\hat\ind_{i, h}}{(\bar \sigma_{i, h})^2}\left(f(s_h^i, a_h^i) - \bar f(s_h^i, a_h^i)\right) \cdot \hat \eta_h^i(V_{h + 1}^*) \notag\\&\le \bigg(4\iota + \frac{16}{3} \iota^2 \gamma^{-2}\bigg) \sqrt{\sum_{\tau\in[k - 1]} \frac{\hat\ind_{\tau, h}}{(\bar \sigma_{\tau, h})^2} \left(f(s_h^\tau, a_h^\tau) - \bar f(s_h^\tau, a_h^\tau)\right)^2 + \lambda} + \sqrt{2} \iota + \frac{2}{3} \iota^2
   \end{align} holds for all $f, \bar f \in \cC(\cF_h, \epsilon)$ and $k$. 
   Due to the definition of $\epsilon$-net, we deduce that for $\hat f_{k, h}$ and $\hat f_k^*$, \begin{align} 
       &-\sum_{i \in [k - 1]} \frac{\hat \ind_{i, h}}{(\bar \sigma_{i, h})^2}\left(\hat f_{k, h}(s_h^i, a_h^i) - \hat f_k^*(s_h^i, a_h^i)\right) \cdot \hat \eta_h^i(V_{h + 1}^*) \notag\\&\le \bigg(4\iota + \frac{16}{3} \iota^2 \gamma^{-2}\bigg) \sqrt{\sum_{\tau\in[k - 1]} \frac{\hat \ind_{\tau, h}}{(\bar \sigma_{\tau, h})^2} \left(\hat f_{k, h}(s_h^\tau, a_h^\tau) - \hat f_k^*(s_h^\tau, a_h^\tau)\right)^2 + \lambda} + \sqrt{2} \iota + \frac{2}{3} \iota^2 \notag
       \\&\quad + \bigg(4\iota + \frac{16}{3} \iota^2 \gamma^{-2}\bigg) \sqrt{16k \epsilon L / \alpha^2} + 8\epsilon k / \alpha^2. \label{eq:hat-f:3}
   \end{align}
   For the second term in \eqref{eq:hat-f:1}, the following inequality holds for all $V \in \cV_{k, h + 1}$, \begin{align*} 
       &\EE\left[\frac{1}{(\bar \sigma_{i, h})^2}\left(f(s_h^i, a_h^i) - \bar f(s_h^i, a_h^i)\right) \cdot \hat \eta_h^i(V - V_{h + 1}^*)\right] = 0, \\
       &\max_{i \in [k - 1]}\left|\frac{1}{(\bar \sigma_{i, h})^2}\left(f(s_h^i, a_h^i) - \bar f(s_h^i, a_h^i)\right) \cdot \hat \eta_h^i(V - V_{h + 1}^*)\right| \\&\le 4\max_{i \in [k - 1]}\left|\frac{1}{(\bar \sigma_{i, h})^2}\left(f(s_h^i, a_h^i) - \bar f(s_h^i, a_h^i)\right)\right|
       \\&\le 4 \max_{i \in [k - 1]} \frac{1}{(\bar \sigma_{i, h})^2} \sqrt{D_{\cF_h}^2\big(z_{i, h}; z_{[i - 1], h}, \bar\sigma_{[i - 1], h}\big)\sum_{\tau\in[k - 1]} \frac{1}{(\bar \sigma_{\tau, h})^2} \left(f(s_h^\tau, a_h^\tau) - \bar f(s_h^\tau, a_h^\tau)\right)^2 + \lambda}
       \\&= 4 \cdot \gamma^{-2} \sqrt{\sum_{\tau\in[k - 1]} \frac{1}{(\bar \sigma_{\tau, h})^2} \left(f(s_h^\tau, a_h^\tau) - \bar f(s_h^\tau, a_h^\tau)\right)^2 + \lambda}, 
   \end{align*} 
   where the calculation is similar to that in \eqref{eq:hat-f:1.0}. 
   
   We denote the sum of variance by $\var(V - V_{h + 1}^*)$ as follows for simplicity: 
   \begin{small} 
       \begin{align}
           \var(V - V_{h + 1}^*) &:= \sum_{i \in [k - 1]} \EE\left[\frac{\hat \ind_{i, h}}{(\bar \sigma_{i, h})^4}\left(f(s_h^i, a_h^i) - \bar f(s_h^i, a_h^i)\right)^2 \cdot \hat \eta_h^i(V - V_{h + 1}^*)^2\right]
           \\&\le k \cdot L^2 / \alpha^4. 
       \end{align}
   \end{small}
   For $V_{k, h + 1}$, we have
       \begin{align*}
           \var(V_{k, h + 1} - V_{h + 1}^*)&:=\sum_{i \in [k - 1]} \EE\left[\frac{\hat\ind_{i, h}}{(\bar \sigma_{i, h})^4}\left(f(s_h^i, a_h^i) - \bar f(s_h^i, a_h^i)\right)^2 \cdot \hat \eta_h^i(V_{k, h + 1} - V_{h + 1}^*)^2\right] \\&\le \frac{4}{\log \cN_\cF( \epsilon) +  \log \cN_{\epsilon}(K)} \sum_{i \in [k - 1]} \frac{\hat \ind_{i, h}}{(\bar \sigma_{i, h})^2} \left(f(s_h^i, a_h^i) - \bar f(s_h^i, a_h^i)\right)^2, 
   \end{align*}
   where the second inequality holds due to the definition of $\hat\ind_{i, h}$ in \eqref{eq:def:hatind}. 
   
   With a similar argument as shown in \eqref{eq:hat-f:1.1}$\sim$\eqref{eq:hat-f:3}, we have with probability at least $1 - \delta / (2k^2 \cdot \cN_\cF( \epsilon) \cdot \cN_\epsilon(k - 1) \cdot H)$, for a fixed $f$, $\bar f$, $k$ and $V$, (applying Lemma \ref{lemma:freedman-variant}, with $V = k \cdot L^2 / \alpha^4$ and $M = 2L / \alpha^2$, $v = (\log \cN_\cF( \epsilon) +  \log \cN_{\epsilon}(K))^{-1/2}$, $m = v^2$. )
   \begin{align*} 
       &-\sum_{i \in [k - 1]} \frac{\hat \ind_{i, h}}{(\bar \sigma_{i, h})^2}\left(f(s_h^i, a_h^i) - \bar f(s_h^i, a_h^i)\right) \cdot \hat \eta_h^i(V_{k, h + 1} - V_{h + 1}^*)
       \\&\le \iota \sqrt{2 \left(2\var(V - V_{h + 1}^*) + \left(\log \cN_\cF( \epsilon) +  \log \cN_{\epsilon}(K)\right)^{-1}\right)} \\&\ + \frac{2}{3} \iota^2 \left(8 \gamma^{-2} \sqrt{\sum_{\tau\in[k - 1]} \frac{\hat\ind_{\tau, h}}{(\bar \sigma_{\tau, h})^2} \left(f(s_h^\tau, a_h^\tau) - \bar f(s_h^\tau, a_h^\tau)\right)^2 + \lambda} + \left(\log \cN_\cF( \epsilon) +  \log \cN_{\epsilon}(K)\right)^{-1}\right). 
   \end{align*}
   where  \begin{align*} &\log\frac{2k^2 \left(2\log \frac{L^2k \left(\log \cN_\cF( \epsilon)\cdot  \cN_{\epsilon}(K)\right)^{1 / 2}}{\alpha^4}+2\right)\cdot\left(\log(\frac{4L\left(\log \cN_\cF( \epsilon)\cdot\cN_{\epsilon}(K)\right)}{\alpha^2})+2\right) \cdot \cN_\cF( \epsilon) \cdot \cN_{\epsilon}(k)}{\delta / H} \\&\le \log\frac{2k^2 \left(2\log \frac{L^2k }{\alpha^4}+2\right)\cdot\left(\log(\frac{4L}{\alpha^2})+2\right) \cdot \cN_{\cF}^4(\epsilon) \cdot \cN_{\epsilon}^2(K)}{\delta / H}  := \iota_2^2(k, h, \delta).
   \end{align*} 
Using a union bound over all $(f, \bar f, V) \in \cC(\cF_h, \epsilon)\times \cC(\cF_h, \epsilon) \times \cV_{k, h + 1}^c$ and all $k \ge 1$, we have the inequality above holds for all such $f, \bar f, V, k$ with probability at least $1 - \delta / H$. 

   There exists a $V_{k, h + 1}^c$ in the $\epsilon$-net such that $\|V_{k, h + 1} - V_{k, h + 1}^c\|_\infty \le \epsilon$. Then we have 
       \begin{align} 
           &-\sum_{i \in [k - 1]} \frac{\hat \ind_{i, h}}{(\bar \sigma_{i, h})^2}\left(\hat f_{k, h}(s_h^i, a_h^i) - \hat f_k^*(s_h^i, a_h^i)\right) \cdot \hat \eta_h^i(V_{k, h + 1} - V_{h + 1}^*) \notag
           \\&\le O\left(\frac{\iota_2(k, h, \delta)}{\sqrt{\log \cN_\cF( \epsilon) +  \log \cN_{\epsilon}(K)}} + \frac{\iota_2(k, h, \delta)^2}{\gamma^{-2}}\right)  \notag\\&\  \cdot\sqrt{\sum_{\tau\in[k - 1]} \frac{\hat \ind_{\tau, h}}{(\bar \sigma_{\tau, h})^2} \left(\hat f_{k, h}(s_h^\tau, a_h^\tau) - \hat f_k^*(s_h^\tau, a_h^\tau)\right)^2 + \lambda} \notag
           \\&\  + O(\epsilon k L / \alpha)^2) + O\left(\frac{\iota_2^2(k, h, \delta)}{\log \cN_\cF( \epsilon) +  \log \cN_{\epsilon}(K)}\right), \label{eq:hat-f:6}
       \end{align}
   for all $k$ with at least probability $1 - \delta / H$. 

   Substituting \eqref{eq:hat-f:6} and \eqref{eq:hat-f:3} into \eqref{eq:hat-f:0}, we can conclude that \begin{align*} 
       &\sum_{i \in [k - 1]} \frac{\hat \ind_{i, h}}{(\bar \sigma_{i, h})^2}\left(\hat f_{k, h} (s_h^i, a_h^i) - \hat f_k^*(s_h^i, a_h^i)\right)^2 \\&\le O\left(\left(\frac{\iota_2(k, h, \delta)}{\sqrt{\log \cN_\cF( \epsilon) +  \log \cN_{\epsilon}(K)}} + \iota_2(k, h, \delta)^2 \cdot \gamma^{-2}\right)^2\right)+ O\left(\left(\iota_1(k, h, \delta) + \iota_1(k, h, \delta)^2 / \gamma^2\right)^2\right).
   \end{align*}
   From the definition of $\gamma$ in \eqref{eq:def:gamma}, we can rewrite the upper bound of the squared loss \\$\sum_{i \in [k - 1]} \frac{\hat \ind_{i, h}}{(\bar \sigma_{i, h})^2}\left(\hat f_{k, h} (s_h^i, a_h^i) - \hat f_k^*(s_h^i, a_h^i)\right)^2$ as follows: \begin{align*}
       &\lambda + \sum_{i \in [k - 1]} \frac{\hat \ind_{i, h}}{(\bar \sigma_{i, h})^2}\left(\hat f_{k, h} (s_h^i, a_h^i) - \hat f_k^*(s_h^i, a_h^i)\right)^2 \\&\le O\left(\log\frac{2k^2 \left(2\log \frac{L^2k }{\alpha^4}+2\right)\cdot\left(\log(\frac{4L}{\alpha^2})+2\right)}{\delta / H}\right) \cdot \left[\log(\cN_\cF( \epsilon)) + 1\right] + O(\lambda) + O(\epsilon k L / \alpha^2). 
   \end{align*}
\end{proof}

\begin{proof} [Proof of Lemma \ref{lemma:hat-event-hoeffding3}]
The proof is almost identical to the proof of Lemma \ref{lemma:hat-event-hoeffding2}. 
\end{proof}

\subsection{Proof of Optimism and Pessimism}

\subsubsection{Proof of Lemma \ref{lemma:bern-error}}

\begin{proof}[Proof of Lemma \ref{lemma:bern-error}]
   According to the definition of $D_\cF^2$ function, we have
\begin{align}
    &\big(\hat{f}_{k,h}(s,a)-\cT_h V_{k,h+1}(s,a)\big)^2\notag\\
    &\leq D_{\cF_h}^2(z; z_{[k - 1],h}, \bar\sigma_{[k - 1],h})\times \left(\lambda + \sum_{i=1}^{k-1} \frac{1}{(\bar\sigma_{i, h})^2}\left(\hat f_{k, h}(s_{h}^i, a_{h}^i) - \cT_{h} V_{k, h + 1}(s_{h}^i, a_{h}^i)\right)^2\right)\notag\\
    &\leq \hat{\beta}^2_k \times D_{\cF_h}^2(z; z_{[k - 1],h}, \bar\sigma_{[k - 1],h}),\notag
\end{align}
   where the first inequality holds due the definition of $D_\cF^2$ function with the Assumption \ref{assumption:complete} and the second inequality holds due to the events $\cE_{h}^{\hat f}$. Thus, we have
   \begin{align*}
       \big|\hat{f}_{k,h}(s,a)-\cT_h V_{k,h+1}(s,a)\big|\leq \hat{\beta}_k D_{\cF_h}(z; z_{[k - 1],h}, \bar\sigma_{[k - 1],h}).
   \end{align*}
    With a similar argument, for the pessimistic value function $\check{f}_{k,h}$, we have 
   \begin{align}
    &\big(\check{f}_{k,h}(s,a)-\cT_h \check V_{k,h}(s,a)\big)^2\notag\\
    &\leq D_{\cF_h}^2(z; z_{[k - 1],h}, \bar\sigma_{[k - 1],h})\times \left(\lambda + \sum_{i=1}^{k-1} \frac{1}{(\bar\sigma_{i, h})^2}\left(\check f_{k, h}(s_{h}^i, a_{h}^i) - \cT_{h} \check V_{k, h + 1}(s_{h}^i, a_{h}^i)\right)^2\right)\notag\\
    &\leq \check{\beta}^2_k \times D_{\cF_h}^2(z; z_{[k - 1],h}, \bar\sigma_{[k - 1],h}),\notag
\end{align}
where the first inequality holds due to the definition of $D_\cF^2$ function with the Assumption \ref{assumption:complete} and the second inequality holds due to the events $\cE_{h}^{\check f}$. In addition, we have,
   \begin{align*}
       \big|\check{f}_{k,h}(s,a)-\cT_h \check{V}_{k,h+1}(s,a)\big|\leq \hat{\beta}_k D_{\cF_h}(z; z_{[k - 1],h}, \bar\sigma_{[k - 1],h}).
   \end{align*}
   Thus, we complete the proof of Lemma \ref{lemma:bern-error}.
\end{proof}
\subsubsection{Proof of Lemma \ref{lemma:opt-pess}}

\begin{proof}[Proof of Lemma \ref{lemma:opt-pess}]
   We use induction to prove the optimistic and pessimistic property. First, we study the basic case with the last stage $H+1$. In this situation, $Q_{k,H+1}(s,a)=\qvalue_{h}^*(s,a)=\check{\qvalue}_{k,h}(s,a)=0$ and $\vvalue_{k,h}(s)= \vvalue_{h}^*(s) = \check{\vvalue}_{k,h}(s)=0$ hold for all state $s\in \cS$ and action $a\in \cA$. Therefore, Lemma \ref{lemma:opt-pess} holds for the basic case (stage $H+1$).

   Second, if Lemma \ref{lemma:opt-pess} holds for stage $h+1$, then we focus on the stage $h$. Notice that the event $\tilde{\cE}_h$ directly implies the event $\tilde{\cE}_{h+1}$. Therefore, according to the induction assumption, the following inequality holds for all state $s\in \cS$ and episode $k\in[K]$.
   \begin{align}
        \vvalue_{k,h+1}(s)\ge \vvalue_{h+1}^*(s) \ge \check{\vvalue}_{k,h}(s).\label{eq:0001}
   \end{align}
   Thus, for all episode $k\in[K]$ and state-action pair $(s,a)\in \cS\times \cA$, we have
   \begin{align}
&\hat{f}_{k,h}(s,a)+b_{k,h}(s,a)-Q_h^*(s,a)\notag\\
& \ge  \cT_h V_{k,h+1}(s,a)- \hat{\beta}_k \cdot  D_{\cF_h}(z; z_{[k - 1],h}, \bar\sigma_{[k - 1],h}) +b_{k,h}(s,a)-Q_h^*(s,a)\notag\\
& \ge \cT_h V_{k,h+1}(s,a) - Q_h^*(s,a)\notag\\
& = \PP_h V_{k,h+1}(s,a) - \PP_h V_h^*(s,a)\notag\\
&\ge 0,\label{eq:0002}
   \end{align}
   where the first inequality holds due to Lemma \ref{lemma:bern-error}, the second inequality holds due to the definition of the exploration bonus $b_{k,h}$ and the last inequality holds due to the \eqref{eq:0001}. Therefore, the optimal value function $Q_{h}^*(s,a)$ is upper bounded by
   \begin{align}
       Q_h^*(s,a)\leq \min\Big\{\min_{1\leq i\leq k} \hat{f}_{i,h}(s,a)+b_{i,h}(s,a) ,1\Big\}\leq Q_{k,h}(s,a), \label{eq:0003}
   \end{align}
   where the first inequality holds due to \eqref{eq:0002} with the fact that $Q_h^*(s,a)\leq 1$ and the second inequality holds due to the update rule of value function $Q_{k,h}$.

    With a similar argument, for the pessimistic estimator $\check{f}_{k,h}$, we have  \begin{align}
&\check{f}_{k,h}(s,a)-b_{k,h}(s,a)-Q_h^*(s,a)\notag\\
& \leq  \cT_h \check V_{k,h+1}(s,a)+ \check{\beta}_k \cdot  D_{\cF_h}(z; z_{[k - 1],h}, \bar\sigma_{[k - 1],h}) -b_{k,h}(s,a)-Q_h^*(s,a)\notag\\
& \leq \cT_h \check V_{k,h+1}(s,a) - Q_h^*(s,a)\notag\\
& = \PP_h \check V_{k,h+1}(s,a) - \PP_h V_h^*(s,a)\notag\\
&\leq 0,\label{eq:0004}
   \end{align}
   where the first inequality holds due to Lemma \ref{lemma:bern-error}, the second inequality holds due to the definition of the exploration bonus $b_{k,h}$ and the last inequality holds due to the \eqref{eq:0004}. Therefore, the optimal value function $Q_{h}^*(s,a)$ is lower bounded by
   \begin{align}
       Q_h^*(s,a)\ge \max\Big\{\max_{1\leq i\leq k} \check{f}_{i,h}(s,a)-b_{i,h}(s,a) ,0\Big\}\ge \check Q_{k,h}(s,a), \label{eq:0005}
   \end{align}
   where the first inequality holds due to \eqref{eq:0004} with the fact that $Q_h^*(s,a)\ge 0$ and the second inequality holds due to the update rule of value function $\check{Q}_{k,h}$.

   Furthermore, for the value functions $V_{k,h}$ and $\check V_{k,h}$, we have
\begin{align*}
   V_{k,h}(s)&=\max_a Q_{k,h}(s,a)\ge \max_a Q_h^*(s,a)=V_h^*(s),\notag\\
   \check{V}_{k,h}(s)&=  \max_a \check Q_{k,h}(s,a)\leq   \max_a Q_h^*(s,a)=V_h^*(s),\
\end{align*}
where the first inequality holds due to \eqref{eq:0003} and the second inequality holds due to \eqref{eq:0005}. Thus, by induction, we complete the proof of Lemma \ref{lemma:opt-pess}.
\end{proof}

\subsection{Proof of Monotonic Variance Estimator}
\subsubsection{Proof of Lemma \ref{lemma:hoff-error}}

\begin{proof}[Proof of Lemma \ref{lemma:hoff-error}]
   According to the definition of $D_\cF^2$ function, we have
\begin{align}
    &\big(\hat{f}_{k,h}(s,a)-\cT_h V_{k,h+1}(s,a)\big)^2\notag\\
    &\leq D_{\cF_h}^2(z; z_{[k - 1],h}, \bar\sigma_{[k - 1],h})\times \left(\lambda + \sum_{i=1}^{k-1} \frac{1}{(\bar\sigma_{i, h})^2}\left(\hat f_{k, h}(s_{h}^i, a_{h}^i) - \cT_{h} V_{k, h + 1}(s_{h}^i, a_{h}^i)\right)^2\right)\notag\\
    &\leq {\beta}^2_k \times D_{\cF_h}^2(z; z_{[k - 1],h}, \bar\sigma_{[k - 1],h}),\notag
\end{align}
   where the first inequality holds due the definition of $D_\cF^2$ function with the Assumption \ref{assumption:complete} and the second inequality holds due to the events $\underline \cE^{\hat f}$. Thus, we have
   \begin{align*}
       \big|\hat{f}_{k,h}(s,a)-\cT_h V_{k,h+1}(s,a)\big|\leq {\beta}_k D_{\cF_h}(z; z_{[k - 1],h}, \bar\sigma_{[k - 1],h}).
   \end{align*}
    For the pessimistic value function $\check{f}_{k,h}$, we have 
   \begin{align}
    &\big(\check{f}_{k,h}(s,a)-\cT_h \check V_{k,h}(s,a)\big)^2\notag\\
    &\leq D_{\cF_h}^2(z; z_{[k - 1],h}, \bar\sigma_{[k - 1],h})\times \left(\lambda + \sum_{i=1}^{k-1} \frac{1}{(\bar\sigma_{i, h})^2}\left(\check f_{k, h}(s_{h}^i, a_{h}^i) - \cT_{h} \check V_{k, h + 1}(s_{h}^i, a_{h}^i)\right)^2\right)\notag\\
    &\leq {\beta}^2_k \times D_{\cF_h}^2(z; z_{[k - 1],h}, \bar\sigma_{[k - 1],h}),\notag
\end{align}
where the first inequality holds due the definition of $D_\cF^2$ function with the Assumption \ref{assumption:complete} and the second inequality holds due to the events $\underline \cE^{\check f}$. In addition, we have
   \begin{align*}
       \big|\check{f}_{k,h}(s,a)-\cT_h \check{V}_{k,h+1}(s,a)\big|\leq {\beta}_k D_{\cF_h}(z; z_{[k - 1],h}, \bar\sigma_{[k - 1],h}).
   \end{align*}
With a similar argument, for the second-order estimator $\tilde f_{k,h}$, we have
   \begin{align}
    &\big(\tilde{f}_{k,h}(s,a)-\cT_h^2 V_{k,h}(s,a)\big)^2\notag\\
    &\leq D_{\cF_h}^2(z; z_{[k - 1],h}, \bar\sigma_{[k - 1],h})\times \left(\lambda + \sum_{i=1}^{k-1} \frac{1}{(\bar\sigma_{i, h})^2}\left(\tilde f_{k, h}(s_{h}^i, a_{h}^i) - \cT_{h}^2 V_{k, h + 1}(s_{h}^i, a_{h}^i)\right)^2\right)\notag\\
    &\leq \tilde{\beta}^2_k \times D_{\cF_h}^2(z; z_{[k - 1],h}, \bar\sigma_{[k - 1],h}),\notag
\end{align}
where the first inequality holds due the definition of $D_\cF^2$ function with the Assumption \ref{assumption:complete} and the second inequality holds due to the events $\underline \cE^{\tilde f}$. Therefore, we have
   \begin{align*}
       \big|\tilde{f}_{k,h}(s,a)-\cT_h^2 {V}_{k,h}(s,a)\big|\leq \tilde{\beta}_k D_{\cF_h}(z; z_{[k - 1],h}, \bar\sigma_{[k - 1],h}).
   \end{align*}
   Now, we complete the proof of Lemma \ref{lemma:hoff-error}.
\end{proof}
\subsubsection{Proof of Lemma \ref{lemma:variance-estimator-E}}

\begin{proof}[Proof of Lemma \ref{lemma:variance-estimator-E}]
   First, according to Lemma \ref{lemma:hoff-error}, we have
   \begin{align}
       &\big|[\bar\VV_h\vvalue_{k,h+1}](s_h^k,a_h^k)         -[\VV_h\vvalue_{k,h+1}](s_h^k,a_h^k)\big| \notag\\
       &=\Big|\tilde f_{k,h}-\hat{f}_{k,h}^2 - [\PP_h\vvalue_{k,h+1}^2](s_h^k,a_h^k)+\big([\PP_h\vvalue_{k,h+1}](s_h^k,a_h^k)\big)^2\Big|\notag\\
       &\leq \Big|\hat{f}_{k,h}^2 - \big([\PP_h\vvalue_{k,h+1}](s_h^k,a_h^k)\big)^2\Big| + \big| \tilde f_{k,h} - [\PP_h\vvalue_{k,h+1}^2](s_h^k,a_h^k)\big|\notag\\
       &\leq 2L \Big|\hat{f}_{k,h}- \big([\PP_h\vvalue_{k,h+1}](s_h^k,a_h^k)\big)\Big| + \big| \tilde f_{k,h} - [\PP_h\vvalue_{k,h+1}^2](s_h^k,a_h^k)\big|\notag\\
       &\leq (2L{\beta}_k+\tilde \beta_k) D_{\cF_h}(z; z_{[k - 1],h}, \bar\sigma_{[k - 1],h})\notag\\
       &=E_{k,h},\label{eq:0006}
   \end{align}
   where the first inequality holds due to $|a+b|\leq |a|+|b|$, the second inequality holds due to $|a^2-b^2|=|a-b|\cdot |a+b|\leq |a-b|\cdot 2 \max(|a|,|b|)$ and the last inequalirtnholds due to Lemma \ref{lemma:hoff-error}.

   For the difference between variances $[\VV_h\vvalue_{k,h+1}](s_h^k,a_h^k)$ and $[\VV_h\vvalue_{h+1}^*](s_h^k,a_h^k)$, it can be upper bounded by
\begin{align}
    &\big|[\VV_h\vvalue_{k,h+1}](s_h^k,a_h^k)         -[\VV_h\vvalue_{h+1}^*](s_h^k,a_h^k)\big|\notag\\
    &=\Big|  [\PP_h\vvalue_{k,h+1}^2](s_h^k,a_h^k)-\big([\PP_h\vvalue_{k,h+1}](s_h^k,a_h^k)\big)^2- [\PP_h(\vvalue_{h+1}^*)^2](s_h^k,a_h^k)+\big([\PP_h\vvalue_{h+1}^*](s_h^k,a_h^k)\big)^2\Big|\notag\\
    &\leq \big|[\PP_h\vvalue_{k,h+1}^2](s_h^k,a_h^k)-[\PP_h(\vvalue_{h+1}^*)^2](s_h^k,a_h^k)
    \big|+\Big|\big([\PP_h\vvalue_{k,h+1}](s_h^k,a_h^k)\big)^2-\big([\PP_h\vvalue_{h+1}^*](s_h^k,a_h^k)\big)^2\Big|\notag\\
    &\leq 4 \big([\PP_h\vvalue_{k,h+1}](s_h^k,a_h^k)-[\PP_h\vvalue_{h+1}^*](s_h^k,a_h^k)\big)\notag\\
    &\leq \big([\PP_h\vvalue_{k,h+1}](s_h^k,a_h^k)-[\PP_h\check{\vvalue}_{k,h+1}](s_h^k,a_h^k)\big)\notag\\
    &\leq \hat f_{k,h}(s_h^k,a_h^k) - \check f_{k,h}(s_h^k,a_h^k)+ 2 \beta_k D_{\cF_h}(z; z_{[k - 1],h}, \bar\sigma_{[k - 1],h}),\label{eq:0007}
\end{align}
where the first inequality holds due to $|a+b|\leq |a|+|b|$, the second inequality holds due to $1\ge V_{k,h+1}(\cdot)\ge V_{h+1}^*(\cdot)\ge 0$ (Lemma \ref{lemma:opt-pess}), the third inequality holds due to $V_{h+1}^*(\cdot)\ge \check{V}_{k,h+1}(\cdot)$  (Lemma \ref{lemma:opt-pess}) and the last inequality holds due to Lemma \ref{lemma:hoff-error}. Combining the results in \eqref{eq:0006} and \eqref{eq:0007} with the fact that $0\leq V_{k,h+1}(\cdot),V_{h+1}^*(\cdot)\leq 1$, we have 
   \begin{align*}
       &\big|[\bar\VV_h\vvalue_{k,h+1}](s_h^k,a_h^k)         -[\VV_h\vvalue_{h+1}^*](s_h^k,a_h^k)\big|\notag\\
   &\leq \big|[\bar\VV_h\vvalue_{k,h+1}](s_h^k,a_h^k)         -[\VV_h\vvalue_{k,h+1}](s_h^k,a_h^k)\big|+\big|[\VV_h\vvalue_{k,h+1}](s_h^k,a_h^k)         -[\VV_h\vvalue_{h+1}^*](s_h^k,a_h^k)\big|\notag\\
   &\leq E_{k,h}+F_{k,h}.\notag
   \end{align*}
Thus, we complete the proof of Lemma \ref{lemma:variance-estimator-E}.
\end{proof}

\subsubsection{Proof of Lemma \ref{lemma:variance-estimator-D}}

\begin{proof}[Proof of Lemma \ref{lemma:variance-estimator-D}]
   On the events $\underline\cE^{\hat f}$ and $\cE_{h+1}^{\hat f}$, we have
   \begin{align}
       &\big[\VV_h(\vvalue_{i,h+1}-\vvalue_{h+1}^*)\big](s_h^k,a_h^k)\notag\\
       &=[\PP_h(\vvalue_{i,h+1}-\vvalue_{h+1}^*)^2](s_h^k,a_h^k) - \big([\PP_h(\vvalue_{i,h+1}-\vvalue_{h+1}^*)](s_h^k,a_h^k)\big)^2\notag\\
       &\leq [\PP_h(\vvalue_{i,h+1}-\vvalue_{h+1}^*)^2](s_h^k,a_h^k)\notag\\
       &\leq 2\big[\PP_h(\vvalue_{i,h+1}-\vvalue_{h+1}^*)\big](s_h^k,a_h^k)\notag\\
       &\leq 2\big([\PP_h\vvalue_{i,h+1}](s_h^k,a_h^k)-[\PP_h\check{\vvalue}_{k,h+1}](s_h^k,a_h^k)\big)\notag\\
       &\leq 2\big([\PP_h\vvalue_{k,h+1}](s_h^k,a_h^k)-[\PP_h\check{\vvalue}_{k,h+1}](s_h^k,a_h^k)\big),\label{eq:0008}
   \end{align}
   where the first equation holds since the reward function is deterministic, the second inequality holds due to Lemma \ref{lemma:opt-pess} with the fact that $0\leq V_{i,h+1}(s), V_{h+1}^*(s) \leq 1$ and the third inequality holds due to Lemma \ref{lemma:opt-pess} and the last inequality holds due to the fact that $\vvalue_{k,h+1}\ge \vvalue_{i,h+1}$.
   
   With a similar argument, on the events $\underline\cE^{\check f}$ and $\cE_{h+1}^{\check f}$, we have
   \begin{align}
       &\big[\VV_h(V_{h + 1}^* - \check V_{i, h + 1})\big](s_h^k,a_h^k)\notag\\
       &=[\PP_h(V_{h + 1}^* - \check V_{i, h + 1})^2](s_h^k,a_h^k) - \big([\PP_h(V_{h + 1}^* - \check V_{i, h + 1})](s_h^k,a_h^k)\big)^2\notag\\
       &\leq [\PP_h(V_{h + 1}^* - \check V_{i, h + 1})^2](s_h^k,a_h^k)\notag\\
       &\leq 2\big[\PP_h(V_{h + 1}^* - \check V_{i, h + 1})\big](s_h^k,a_h^k)\notag\\
       &\leq 2\big([\PP_h\vvalue_{k,h+1}](s_h^k,a_h^k)-[\PP_h\check{\vvalue}_{i,h+1}](s_h^k,a_h^k)\big)\notag\\
       &\leq 2\big([\PP_h\vvalue_{k,h+1}](s_h^k,a_h^k)-[\PP_h\check{\vvalue}_{k,h+1}](s_h^k,a_h^k)\big),\label{eq:0009}
   \end{align}
   where the first inequality holds since the reward function is deterministic, the second and third inequality holds due to Lemma \ref{lemma:opt-pess} with the fact that $0\leq \check{V}_{i,h+1}(s),\vvalue_{h+1}^*(s)\leq H$, the last inequality the fact $\check \vvalue_{k,h+1}(s)\leq \check \vvalue_{i,h+1}(s)$. 
For both variances $\big[\VV_h(\vvalue_{i,h+1}-\vvalue_{h+1}^*)\big](s_h^k,a_h^k)$ and $\big[\VV_h(V_{h + 1}^* - \check V_{i, h + 1})\big](s_h^k,a_h^k)$, they are upper bounded by
\begin{align}
   &2\big([\PP_h\vvalue_{k,h+1}](s_h^k,a_h^k)-[\PP_h\check{\vvalue}_{k,h+1}](s_h^k,a_h^k)\big)\notag\\
   &=2 \cT_h \vvalue_{k,h+1}(s_h^k,a_h^k)-2\cT_h \check{\vvalue}_{k,h+1}(s_h^k,a_h^k)\notag\\
   &\leq 2 \hat{f}_{k,h}(s_h^k,a_h^k) - 2 \check{f}_{k,h}(s_h^k,a_h^k) + 4\beta_k D_{\cF_h}(z; z_{[k - 1],h}, \bar\sigma_{[k - 1],h}),\label{eq:0010}
\end{align}
where the first inequality holds due to Lemma \ref{lemma:hoff-error} and the second inequality holds due to the definition of $F_{k,h}$. Substituting the result in \eqref{eq:0010} into \eqref{eq:0008}, \eqref{eq:0009} and combining the fact that $\big[\VV_h (V_{i,h + 1} -  V_{h + 1}^*)\big](s_h^k,a_h^k),\big[\VV_h (V_{h + 1}^*-\check V_{i,h + 1})\big](s_h^k,a_h^k)\leq 1$, we finish the proof of Lemma \ref{lemma:variance-estimator-D}.
\end{proof}
\subsection{Proof of Lemmas in Section \ref{sub-proof}}
\subsubsection{Proof of Lemma \ref{lemma:final-concenetration}}
\begin{proof}[Proof of Lemma \ref{lemma:final-concenetration}]
We use induction to shows that the conclusion in Lemma \ref{lemma:opt-pess} and events $\cE_h^{\hat f}$, $\cE_H^{\check f}$ hold for each stage $h\in[H]$. First, for the basic situation (stage $H+1$), $Q_{k,H+1}(s,a)=\qvalue_{h}^*(s,a)=\check{\qvalue}_{k,h}(s,a)=0$ and $\vvalue_{k,h}(s)= \vvalue_{h}^*(s) = \check{\vvalue}_{k,h}(s)=0$ hold for all state $s\in \cS$ and action $a\in \cA$. Therefore, Lemma \ref{lemma:opt-pess} holds for the basic case (stage $H+1$)

Second, if Lemma \ref{lemma:opt-pess} holds for stage $h+1$, then we focus on the stage $h$. According to Lemmas \ref{lemma:variance-estimator-D} and Lemma \ref{lemma:variance-estimator-E}, we have the following inequalities:
\begin{align*}
   \sigma^2_{i,h}&=  [\bar{\VV}_{h}\vvalue_{i,h+1}](s_h^i,a_h^i)+E_{i,h}+D_{i,h}\ge [\bar{\VV}_{h}\vvalue_{h+1}^*](s_h^i,a_h^i),\notag\\
   \sigma^2_{i,h}&\ge F_{i,h}\ge \left(\log \cN_\cF( \epsilon) +  \log \cN_{\epsilon}(K)\right)\cdot \big[\VV_h (V_{k,h + 1} -  V_{h + 1}^*)\big](s_h^i,a_h^i), \notag\\
   \sigma^2_{i,h}&\ge F_{i,h}\ge \left(\log \cN_\cF( \epsilon) +  \log \cN_{\epsilon}(K)\right)\cdot \big[\VV_h ( V_{h + 1}^*-\check V_{k,h + 1}  )\big](s_h^i,a_h^i),
\end{align*}
where the first inequality holds due to Lemma \ref{lemma:variance-estimator-E}, the second and third inequality holds due to Lemma \ref{lemma:variance-estimator-D}. Thus, the indicator function in events $\bar\cE_{h}^{\hat f}$ and $\bar\cE_h^{\check f}$ hold, which implies events $\cE_h^{\hat f}$, $\cE_H^{\check f}$ hold. Furthermore, when events $\cE_h^{\hat f}$, $\cE_H^{\check f}$ hold, then Lemma \ref{lemma:opt-pess} holds for stage $h$. Thus, we complete the proof of Lemma \ref{lemma:final-concenetration} by induction.
\end{proof}

\subsubsection{Proof of Lemma \ref{lemma:sum-bonus}}
\begin{proof}[Proof of Lemma \ref{lemma:sum-bonus}]
For each stage $h$, we divide the episodes $\{1,2,..,K\}$ to the following sets:
\begin{align*}
   \cI_1&=\{k\in[K]: D_{\cF_h}(z; z_{[k - 1],h}, \bar\sigma_{[k - 1],h})/\bar\sigma_{k,h} \ge 1\},\\
   \cI_2&=\{k\in[K]: D_{\cF_h}(z; z_{[k - 1],h}, \bar\sigma_{[k - 1],h})/\bar\sigma_{k,h} < 1, \bar \sigma_{k,h}=\sigma_{k,h}\},\notag\\
   \cI_3&=\{k\in[K]: D_{\cF_h}(z; z_{[k - 1],h}, \bar\sigma_{[k - 1],h})/\bar\sigma_{k,h} < 1, \bar \sigma_{k,h}=\alpha \},\notag\\
   \cI_4&=\{k\in[K]: D_{\cF_h}(z; z_{[k - 1],h}, \bar\sigma_{[k - 1],h})/\bar\sigma_{k,h} < 1, \bar \sigma_{k,h}=\gamma \times D^{1/2}_{\cF_h}(z; z_{[k - 1],h}, \bar\sigma_{[k - 1],h}) \}.\notag
\end{align*}
The number of episode in set $\cI_1$ is upper bounded by
\begin{align*}
   |\cI_1|=\sum_{k\in \cI_1}\min\Big( D^2_{\cF_h}(z; z_{[k - 1],h}, \bar\sigma_{[k - 1],h})/{\bar\sigma}^2_{k,h},1\Big)\leq \dim_{\alpha, K}(\cF_h),
\end{align*}
where the equation holds due to $D^2_{\cF_h}(z; z_{[k - 1],h}, \bar\sigma_{[k - 1],h})/{\bar\sigma}^2_{k,h} \ge 1$ and the inequality holds due to the definition of Generalized Eluder dimension. Thus, for set $\cI_1$, the summation of confidence radius is upper bounded by
\begin{align}
   &\sum_{k\in \cI_1}\min\Big(\beta D_{\cF_h}(z; z_{[k - 1],h}, \bar\sigma_{[k - 1],h}),1\Big)\leq  |\cI_1| \leq \dim_{\alpha, K}(\cF_h). \label{eq:0011}
\end{align}

For set $\cI_2$, the summation of confidence radius is upper bounded by
\begin{align}
   &\sum_{k\in \cI_2}\min\Big(\beta D_{\cF_h}(z; z_{[k - 1],h}, \bar\sigma_{[k - 1],h}),1\Big)\notag\\
   &\leq \sum_{k\in \cI_2} \beta D_{\cF_h}(z; z_{[k - 1],h}, \bar\sigma_{[k - 1],h})\notag\\
   &\leq  \beta \sqrt{\sum_{k\in \cI_2} \sigma_{k,h}^2}\cdot \sqrt{\sum_{k\in \cI_2}D^2_{\cF_h}(z; z_{[k - 1],h}, \bar\sigma_{[k - 1],h})/{\bar\sigma}^2_{k,h}}\notag\\
   &\leq \beta \sqrt{\dim_{\alpha, K}(\cF_h)} \sqrt{\sum_{k\in \cI_2} \sigma_{k,h}^2},\label{eq:0012}
\end{align}
where the second inequality holds due to Cauchy-Schwartz inequality with $\sigma_{k,h}=\bar \sigma_{k,h}$ and the last inequality holds due to the definition of Generalized Eluder dimension with the fact that $D_{\cF_h}(z; z_{[k - 1],h}, \bar\sigma_{[k - 1],h})/\bar\sigma_{k,h} < 1$. 

With a similar argument, the summation of confidence radius over set $\cI_3$ is upper bounded by
\begin{align}
   &\sum_{k\in \cI_3}\min\Big(\beta D_{\cF_h}(z; z_{[k - 1],h}, \bar\sigma_{[k - 1],h}),1\Big)\notag\\
   &\leq \sum_{k\in \cI_3} \beta D_{\cF_h}(z; z_{[k - 1],h}, \bar\sigma_{[k - 1],h})\notag\\
   &\leq  \beta \sqrt{\sum_{k\in \cI_3} \alpha^2}\cdot \sqrt{\sum_{k\in \cI_3}D^2_{\cF_h}(z; z_{[k - 1],h}, \bar\sigma_{[k - 1],h})/{\bar\sigma}^2_{k,h}}\notag\\
   &\leq \beta \sqrt{\dim_{\alpha, K}(\cF_h)} \sqrt{\sum_{k\in \cI_3} \alpha^2},\label{eq:0013}
\end{align}
where the second inequality holds due to Cauchy-Schwartz inequality with $\bar \sigma_{k,h}=\alpha$ and the last inequality holds due to the definition of Generalized Eluder dimension with the fact that $D_{\cF_h}(z; z_{[k - 1],h}, \bar\sigma_{[k - 1],h})/\bar\sigma_{k,h} < 1$.

Finally, the summation of confidence radius over set $\cI_4$ is upper bounded by
With a similar argument, the summation of confidence radius over set $\cI_3$ is upper bounded by
\begin{align}
   &\sum_{k\in \cI_4}\min\Big(\beta D_{\cF_h}(z; z_{[k - 1],h}, \bar\sigma_{[k - 1],h}),1\Big)\notag\\
   &\leq \sum_{k\in \cI_4} \beta D_{\cF_h}(z; z_{[k - 1],h}, \bar\sigma_{[k - 1],h})\notag\\
   &= \sum_{k\in \cI_4}\beta\gamma^2  D^2_{\cF_h}(z; z_{[k - 1],h}, \bar\sigma_{[k - 1],h})/\bar\sigma^2_{k,h}\notag\\
   &\leq \beta \gamma^2 \dim_{\alpha, K}(\cF_h), \label{eq:0014}
\end{align}
where the first equation holds due to $\bar \sigma_{k,h}=\gamma \times D^{1/2}_{\cF_h}(z; z_{[k - 1],h}, \bar\sigma_{[k - 1],h})$ and the 
last inequality holds due to the definition of Generalized Eluder dimension with $D_{\cF_h}(z; z_{[k - 1],h}, \bar\sigma_{[k - 1],h})/\bar\sigma_{k,h} < 1$.

Combining the results in \eqref{eq:0011}, \eqref{eq:0012}, \eqref{eq:0013} and \eqref{eq:0014}, we have
\begin{align*}
   &\sum_{k=1}^K\min\Big(\beta D_{\cF_h}(z; z_{[k - 1],h}, \bar\sigma_{[k - 1],h}),1\Big)\\&\ \leq (1+\beta\gamma^2) \dim_{\alpha, K}(\cF_h) + 2 \beta \sqrt{\dim_{\alpha, K}(\cF_h)} \sqrt{\sum_{k=1}^K (\sigma_{k,h}^2+\alpha^2)}.
\end{align*}
Thus, we complete the proof of Lemma \ref{lemma:sum-bonus}.
   \end{proof}

\subsubsection{Proof of Lemma \ref{lemma:transition}}
\begin{proof}[Proof of Lemma \ref{lemma:transition}]
   First, for each stage $h\in[H]$ and episode $k\in[K]$, the gap between $\vvalue_{k,h}(s_h^k)$ and $\vvalue_{h}^{\pi^k}(s_h^k)$ can be decomposed as:
   \begin{align}
   &\vvalue_{k,h}(s_h^k)-\vvalue_{h}^{\pi^k}(s_h^k) \notag\\
  &= \qvalue_{k,h}(s_h^k,a_h^k)-\qvalue^{\pi^k}_{h}(s_h^k,a_h^k)\notag\\
&\leq\min\Big(\hat f_{k_{\text{last}}, h}(s, a)+b_{k_{\text{last}}, h}(s, a),1\Big)-\cT_h\vvalue_{k,h+1}(s_h^k,a_h^k)\notag\\
&\qquad + \cT_h\vvalue_{k,h+1}(s_h^k,a_h^k)-\cT_h\vvalue^{\pi^k}_{h+1}(s_h^k,a_h^k)\notag\\
  &\leq \big[\PP_h(\vvalue_{k,h+1}-\vvalue_{h+1}^{\pi^k})\big](s_h^k,a_h^k)+\min\big(\hat \beta_{k_{\text{last}}} D_{\cF_h}(z; z_{[k_{\text{last}} - 1],h}, \bar\sigma_{[k_{\text{last}} - 1],h}),1\big) + \min \big(b_{k_{\text{last}}, h}(s_h^k, a_h^k),1\big)
 \notag\\
 &\leq \big[\PP_h(\vvalue_{k,h+1}-\vvalue_{h+1}^{\pi^k})\big](s_h^k,a_h^k)+2C\cdot \min\big(\hat \beta_{k_{\text{last}}} D_{\cF_h}(z; z_{[k_{\text{last}} - 1],h}, \bar\sigma_{[k_{\text{last}} - 1],h}),1\big)\notag\\
 &\leq \big[\PP_h(\vvalue_{k,h+1}-\vvalue_{h+1}^{\pi^k})\big](s_h^k,a_h^k)+2C(1+\chi)\cdot  \min\big(\hat \beta_{k} D_{\cF_h}(z; z_{[k - 1],h}, \bar\sigma_{[k - 1],h}),1\big)\notag\\
  &=\vvalue_{k,h+1}(s_{h+1}^{k})-\vvalue_{h+1}^{\pi^k}(s_{h+1}^{k})+\big[\PP_h(\vvalue_{k,h+1}-\vvalue_{h+1}^{\pi^k})\big](s_h^k,a_h^k)-\big(\vvalue_{k,h+1}(s_{h+1}^{k})-\vvalue_{h+1}^{\pi^k}(s_{h+1}^{k})\big) \notag\\
  &\qquad +2C(1+\chi)\cdot \min\big(\hat \beta_{k} D_{\cF_h}(z; z_{[k - 1],h}, \bar\sigma_{[k - 1],h}),1\big),\label{eq:0015}
\end{align}
where the first inequality holds due to the definition of value function $Q_{k,h}(s_h^k,a_h^k)$, the second inequality holds due to Lemma \ref{lemma:bern-error}, the third inequality holds due to $b_{k_{\text{last}}, h}(s_h^k, a_h^k)\leq C \cdot  D_{\cF_h}(z; z_{[k_{\text{last}} - 1],h}, \bar\sigma_{[k_{\text{last}} - 1],h})$ and the last inequality holds due to Lemma \ref{lemma:double}.
Taking a summation of \eqref{eq:0015} over all episode $k\in[K]$ and stage $h' \ge h$, we have
\begin{align}
  &\sum_{k=1}^K\big(\vvalue_{k,h}(s_h^k)-\vvalue_{h}^{\pi^k}(s_h^k)\big)\notag\notag\\
  &\leq \sum_{k=1}^K\sum_{h'=h}^{H}\Big(\big[\PP_h(\vvalue_{k,h+1}-\vvalue_{h+1}^{\pi^k})\big](s_h^k,a_h^k)-\big(\vvalue_{k,h+1}(s_{h+1}^{k})-\vvalue_{h+1}^{\pi^k}(s_{h+1}^{k})\big)\Big)\notag\\
  &\qquad + \sum_{k=1}^K\sum_{h'=h}^{H} 2C(1+\chi)\cdot \min\big(\hat \beta_{k} D_{\cF_h}(z; z_{[k - 1],h}, \bar\sigma_{[k - 1],h}),1\big)\notag\\
  &\leq \sum_{k=1}^K\sum_{h'=h}^{H}2C(1+\chi)\cdot \min\big(\hat \beta_{k} D_{\cF_h}(z; z_{[k - 1],h}, \bar\sigma_{[k - 1],h}),1\big)\notag\\&\qquad +2\sqrt{\sum_{k = 1}^K \sum_{h' = h}^H [\VV_h (\vvalue_{k,h+1}-\vvalue_{h+1}^{\pi^k})](s_h^k, a_h^k) \log(2K^2 H /\delta)} + 2\sqrt{\log(2K^2 H /\delta)} + 2 \log(2K^2 H /\delta)\notag\\
  &\leq \sum_{h'=h}^H 2C(1+\chi)(1+\hat \beta_{k}\gamma^2)  \dim_{\alpha, K}(\cF_{h'}) + \sum_{h'=h}^H 4C(1+\chi) \hat \beta_{k} \sqrt{\dim_{\alpha, K}(\cF_{h'})} \sqrt{\sum_{k=1}^K (\sigma_{k,h'}^2+\alpha^2)}\notag\\
  &\qquad +2\sqrt{\sum_{k = 1}^K \sum_{h' = h}^H [\VV_h (\vvalue_{k,h+1}-\vvalue_{h+1}^{\pi^k})](s_h^k, a_h^k) \log(2K^2 H /\delta)} + 2\sqrt{\log(2K^2 H /\delta)} + 2 \log(2K^2 H /\delta)\notag\\
  &\leq  2CH(1+\chi)(1+\hat \beta_{k}\gamma^2)  \dim_{\alpha, K}(\cF_{h}) + 4C(1+\chi) \hat \beta_{k} \sqrt{\sum_{h'=h}^H \dim_{\alpha, K}(\cF_{h'})} \sqrt{\sum_{k=1}^K \sum_{h'=h}^H (\sigma_{k,h'}^2+\alpha^2)}\notag\\
  &\qquad +2\sqrt{\sum_{k = 1}^K \sum_{h' = h}^H [\VV_h (\vvalue_{k,h+1}-\vvalue_{h+1}^{\pi^k})](s_h^k, a_h^k) \log(2K^2 H /\delta)} + 2\sqrt{\log(2K^2 H /\delta)} + 2 \log(2K^2 H /\delta)\notag\\
  &\leq 2CH(1+\chi)(1+\hat \beta_{k}\gamma^2)  \dim_{\alpha, K}(\cF_{h}) + 4C(1+\chi) \hat \beta_{k} \sqrt{\dim_{\alpha, K}(\cF)} \sqrt{H \sum_{k=1}^K \sum_{h=1}^H (\sigma_{k,h}^2+\alpha^2)}\notag\\
  &\qquad +2\sqrt{\sum_{k = 1}^K \sum_{h' = h}^H [\VV_h (\vvalue_{k,h+1}-\vvalue_{h+1}^{\pi^k})](s_h^k, a_h^k) \log(2K^2 H /\delta)} + 2\sqrt{\log(2K^2 H /\delta)} + 2 \log(2K^2 H /\delta)
  ,\label{eq:0016}
\end{align}
where the first inequality holds due to \eqref{eq:0015}, the second inequality holds due to event $\cE_1$, the third inequality holds due to Lemma \ref{lemma:sum-bonus}, the fourth inequality holds due to Cauchy-Schwartz inequality and the last inequality holds due to $\sum_{h'=h}^H \dim_{\alpha, K}(\cF_{h'})\leq \sum_{h'=1}^H \dim_{\alpha, K}(\cF_{h'})=H\dim_{\alpha, K}(\cF)$. Furthermore, taking a summation of \eqref{eq:0016}, we have
\begin{align}
   &\sum_{k=1}^K\sum_{h=1}^H \big[\PP_h(\vvalue_{k,h+1}-\vvalue_{h+1}^{\pi^k})\big](s_h^k,a_h^k)\notag\\
   &=\sum_{k=1}^K\sum_{h=1}^H\big(\vvalue_{k,h+1}(s_{h+1}^{k})-\vvalue_{h+1}^{\pi^k}(s_{h+1}^{k})\big)\notag\\
   &\qquad+\sum_{k=1}^K\sum_{h=1}^{H}\Big(\big[\PP_h(\vvalue_{k,h+1}-\vvalue_{h+1}^{\pi^k})\big](s_h^k,a_h^k)-\big(\vvalue_{k,h+1}(s_{h+1}^{k})-\vvalue_{h+1}^{\pi^k}(s_{h+1}^{k})\big)\Big)\notag\\
   &\leq \sum_{k=1}^K\sum_{h=1}^H\big(\vvalue_{k,h+1}(s_{h+1}^{k})-\vvalue_{h+1}^{\pi^k}(s_{h+1}^{k})\big)\notag \\&\qquad+2\sqrt{\sum_{k = 1}^K \sum_{h = 1}^H [\VV_h (\vvalue_{k,h+1}-\vvalue_{h+1}^{\pi^k})](s_h^k, a_h^k) \log(2K^2 H /\delta)} + 2\sqrt{\log(2K^2 H /\delta)} + 2 \log(2K^2 H /\delta)\notag\\
   &\leq 2CH^2(1+\chi)(1+\hat \beta_{k}\gamma^2) \dim_{\alpha, K}(\cF) + 4CH(1+\chi) \hat \beta_{k} \sqrt{\dim_{\alpha, K}(\cF_h)} \sqrt{H \sum_{k=1}^K \sum_{h=1}^H (\sigma_{k,h}^2+\alpha^2)}\notag\\
  &\qquad + H \left(2\sqrt{\sum_{k = 1}^K \sum_{h = 1}^H [\VV_h (\vvalue_{k,h+1}-\vvalue_{h+1}^{\pi^k})](s_h^k, a_h^k) \log(2K^2 H /\delta)} + 2\sqrt{\log(2K^2 H /\delta)} + 2 \log(2K^2 H /\delta)\right),\notag
\end{align}
where the first inequality holds due to event $\cE_1$ and the second inequality holds due to \eqref{eq:0016}. Thus, we complete the proof of Lemma \ref{lemma:transition}.
\end{proof}

\subsubsection{Proof of Lemma \ref{lemma:transition-OP}}
\begin{proof}[Proof of Lemma \ref{lemma:transition-OP}]
   Similar to the proof of Lemma \ref{lemma:transition}, for each stage $h\in[H]$ and episode $k\in[K]$, the gap between $\vvalue_{k,h}(s_h^k)$ and $\check\vvalue_{k,h}(s_h^k)$ can be decomposed as:
   \begin{align}
   &\vvalue_{k,h}(s_h^k)-\check\vvalue_{k,h}(s_h^k) \notag\\
  &\leq \qvalue_{k,h}(s_h^k,a_h^k)-\check \qvalue_{k,h}(s_h^k,a_h^k)\notag\\
&\leq\min\Big(\hat f_{k_{\text{last}}, h}(s, a)+b_{k_{\text{last}}, h}(s, a),1\Big)-\cT_h\vvalue_{k,h+1}(s_h^k,a_h^k)\notag\\
&\qquad - \max\Big(\check f_{k_{\text{last}}, h}(s, a)-b_{k_{\text{last}}, h}(s, a),0\Big)+\cT_h\check\vvalue_{k,h+1}(s_h^k,a_h^k)\notag\\
&\qquad + \cT_h\vvalue_{k,h+1}(s_h^k,a_h^k)-\cT_h\check \vvalue_{k,h+1}(s_h^k,a_h^k)\notag\\
  &\leq \big[\PP_h(\vvalue_{k,h+1}-\check\vvalue_{k,h+1})\big](s_h^k,a_h^k)+2\cdot \min\big(\hat \beta_{k_{\text{last}}} D_{\cF_h}(z; z_{[k_{\text{last}} - 1],h}, \bar\sigma_{[k_{\text{last}} - 1],h}),1\big)\notag\\
  &\qquad + 2\cdot \min \big(b_{k_{\text{last}}, h}(s_h^k, a_h^k),1\big)
 \notag\\
 &\leq \big[\PP_h(\vvalue_{k,h+1}-\check\vvalue_{k,h+1})\big](s_h^k,a_h^k)+4C\cdot \min\big(\hat \beta_{k_{\text{last}}} D_{\cF_h}(z; z_{[k_{\text{last}} - 1],h}, \bar\sigma_{[k_{\text{last}} - 1],h}),1\big)\notag\\
 &\leq \big[\PP_h(\vvalue_{k,h+1}-\check\vvalue_{k,h+1})\big](s_h^k,a_h^k)+4C(1+\chi)\cdot  \min\big(\hat \beta_{k} D_{\cF_h}(z; z_{[k - 1],h}, \bar\sigma_{[k - 1],h}),1\big)\notag\\
  &=\vvalue_{k,h+1}(s_{h+1}^{k})-\check\vvalue_{k,h+1}(s_{h+1}^{k})+\big[\PP_h(\vvalue_{k,h+1}-\check\vvalue_{k,h+1})\big](s_h^k,a_h^k)-\big(\vvalue_{k,h+1}(s_{h+1}^{k})-\check\vvalue_{k,h+1}(s_{h+1}^{k})\big) \notag\\
  &\qquad +4C(1+\chi)\cdot \min\big(\hat \beta_{k} D_{\cF_h}(z; z_{[k - 1],h}, \bar\sigma_{[k - 1],h}),1\big),\label{eq:0017}
\end{align}
where the first and second inequalities hold due to the definition of $\check\vvalue_{k,h}(s_h^k)$ and $\vvalue_{k,h}(s_h^k)$, the third 
inequality holds due to Lemma \ref{lemma:bern-error} with $\hat {\beta}_{k_{\text{last}}}=\check \beta_{k_{\text{last}}}$, the fourth inequality holds due to $b_{k_{\text{last}}, h}(s_h^k, a_h^k)\leq C \cdot  D_{\cF_h}(z; z_{[k_{\text{last}} - 1],h}, \bar\sigma_{[k_{\text{last}} - 1],h})$ and the last inequality holds due to Lemma \ref{lemma:double}. Taking a summation of \eqref{eq:0017} over all episode $k\in[K]$ and stage $h' \ge h$, we have
\begin{align}
  &\sum_{k=1}^K\big(\vvalue_{k,h}(s_h^k)-\check\vvalue_{k,h}(s_h^k)\big)\notag\notag\\
  &\leq \sum_{k=1}^K\sum_{h'=h}^{H}\Big(\big[\PP_h(\vvalue_{k,h+1}-\check\vvalue_{k,h+1})\big](s_h^k,a_h^k)-\big(\vvalue_{k,h+1}(s_{h+1}^{k})-\check\vvalue_{k,h+1}(s_{h+1}^{k})\big)\Big)\notag\\
  &\qquad + \sum_{k=1}^K\sum_{h'=h}^{H} 4C(1+\chi)\cdot \min\big(\hat \beta_{k} D_{\cF_h}(z; z_{[k - 1],h}, \bar\sigma_{[k - 1],h}),1\big)\notag\\
  &\leq \sum_{k=1}^K\sum_{h'=h}^{H} 4C(1+\chi)\cdot \min\big(\hat \beta_{k} D_{\cF_h}(z; z_{[k - 1],h}, \bar\sigma_{[k - 1],h}),1\big)\notag \\&\qquad +2\sqrt{\sum_{k = 1}^K \sum_{h' = h}^H [\VV_h (\vvalue_{k,h+1}-\check V_{k, h + 1})](s_h^k, a_h^k) \log(2K^2 H /\delta)} + 2\sqrt{\log(2K^2 H /\delta)} + 2 \log(2K^2 H /\delta)\notag\\ 
  &\leq \sum_{h'=h}^H 4C(1+\chi)(1+\hat \beta_{k}\gamma^2)  \dim_{\alpha, K}(\cF_{h'}) + \sum_{h'=h}^H 8C(1+\chi) \hat \beta_{k} \sqrt{\dim_{\alpha, K}(\cF_{h'})} \sqrt{\sum_{k=1}^K (\sigma_{k,h'}^2+\alpha^2)}\notag\\
  &\qquad +2\sqrt{\sum_{k = 1}^K \sum_{h' = h}^H [\VV_h (\vvalue_{k,h+1}-\check V_{k, h + 1})](s_h^k, a_h^k) \log(2K^2 H /\delta)} + 2\sqrt{\log(2K^2 H /\delta)} + 2 \log(2K^2 H /\delta)\notag\\
  &\leq  4CH(1+\chi)(1+\hat \beta_{k}\gamma^2)  \dim_{\alpha, K}(\cF_{h}) + 8C(1+\chi) \hat \beta_{k} \sqrt{\sum_{h'=h}^H \dim_{\alpha, K}(\cF_{h'})} \sqrt{\sum_{k=1}^K \sum_{h'=h}^H (\sigma_{k,h'}^2+\alpha^2)}\notag\\
  &\qquad +2\sqrt{\sum_{k = 1}^K \sum_{h' = h}^H [\VV_h (\vvalue_{k,h+1}-\check V_{k, h + 1})](s_h^k, a_h^k) \log(2K^2 H /\delta)} + 2\sqrt{\log(2K^2 H /\delta)} + 2 \log(2K^2 H /\delta)\notag\\
  &\leq 4CH(1+\chi)(1+\hat \beta_{k}\gamma^2)  \dim_{\alpha, K}(\cF_{h}) + 8C(1+\chi) \hat \beta_{k} \sqrt{\dim_{\alpha, K}(\cF)} \sqrt{H \sum_{k=1}^K \sum_{h=1}^H (\sigma_{k,h}^2+\alpha^2)}\notag\\
  &\qquad +2\sqrt{\sum_{k = 1}^K \sum_{h' = h}^H [\VV_h (\vvalue_{k,h+1}-\check V_{k, h + 1})](s_h^k, a_h^k) \log(2K^2 H /\delta)} + 2\sqrt{\log(2K^2 H /\delta)} + 2 \log(2K^2 H /\delta),\label{eq:0018}
\end{align}
where the first inequality holds due to \eqref{eq:0017}, the second inequality holds due to event $\cE_2$, the third inequality holds due to Lemma \ref{lemma:sum-bonus}, the fourth inequality holds due to Cauchy-Schwartz inequality and the last inequality holds due to $\sum_{h'=h}^H \dim_{\alpha, K}(\cF_{h'})\leq \sum_{h'=1}^H \dim_{\alpha, K}(\cF_{h'})=H\dim_{\alpha, K}(\cF)$. Furthermore, taking a summation of \eqref{eq:0018}, we have
\begin{align}
   &\sum_{k=1}^K\sum_{h=1}^H \big[\PP_h(\vvalue_{k,h+1}-\check\vvalue_{k,h+1})\big](s_h^k,a_h^k)\notag\\
   &=\sum_{k=1}^K\sum_{h=1}^H\big(\vvalue_{k,h+1}(s_{h+1}^{k})-\check\vvalue_{k,h+1}(s_{h+1}^{k})\big)\notag\\
   &\qquad+\sum_{k=1}^K\sum_{h=1}^{H}\Big(\big[\PP_h(\vvalue_{k,h+1}-\check\vvalue_{k,h+1})\big](s_h^k,a_h^k)-\big(\vvalue_{k,h+1}(s_{h+1}^{k})-\check\vvalue_{k,h+1}(s_{h+1}^{k})\big)\Big)\notag\\
   &\leq \sum_{k=1}^K\sum_{h=1}^H\big(\vvalue_{k,h+1}(s_{h+1}^{k})-\check\vvalue_{k,h+1}(s_{h+1}^{k})\big)\notag \\\qquad &+ H\left(2\sqrt{\sum_{k = 1}^K \sum_{h=1}^H [\VV_h (\vvalue_{k,h+1}-\check V_{k, h + 1})](s_h^k, a_h^k) \log(2k^2 H /\delta)} + 2\sqrt{\log(2K^2 H /\delta)} + 2 \log(2K^2 H /\delta)\right)\notag\\
   &\leq 2CH^2(1+\chi)(1+\hat \beta_{k}\gamma^2)  \dim_{\alpha, K}(\cF_{h}) + 4CH(1+\chi) \hat \beta_{k} \sqrt{\dim_{\alpha, K}(\cF)} \sqrt{H \sum_{k=1}^K \sum_{h=1}^H (\sigma_{k,h}^2+\alpha^2)}\notag\\
  &\qquad +2H\left(2\sqrt{\sum_{k = 1}^K \sum_{h=1}^H [\VV_h (\vvalue_{k,h+1}-\check V_{k, h + 1})](s_h^k, a_h^k) \log(2K^2 H /\delta)} + 2\sqrt{\log(2K^2 H /\delta)} + 2 \log(2K^2 H /\delta)\right),\notag
\end{align}
where the first inequality holds due to event $\cE_2$ and the last inequality holds due to \eqref{eq:0018}. Thus, we complete the proof of Lemma \ref{lemma:transition-OP}.
\end{proof}

\subsubsection{Proof of Lemma \ref{lemma:total-variance}}
\begin{proof}[Proof of Lemma \ref{lemma:total-variance}]
According to the definition of estimated variance $\sigma_{k,h}$, the summation of variance can be decomposed as following:
   \begin{align}
       \sum_{k=1}^K\sum_{h=1}^H\sigma_{k,h}^{2} &=\sum_{k=1}^K\sum_{h=1}^H[\bar{\VV}_{k,h}\vvalue_{k,h+1}](s_h^k,a_h^k)+E_{k,h}+F_{k,h}\notag\\
   &=\underbrace{\sum_{k=1}^K\sum_{h=1}^H \big([\bar{\VV}_{k,h}\vvalue_{k,h+1}](s_h^k,a_h^k)-[{\VV}_{h}\vvalue_{k,h+1}](s_h^k,a_h^k)\big)}_{I_1}+\underbrace{\sum_{k=1}^K\sum_{h=1}^HE_{k,h}}_{I_2}+\underbrace{\sum_{k=1}^K\sum_{h=1}^H F_{k,h}}_{I_3}\notag\\
   &\qquad +\underbrace{\sum_{k=1}^K\sum_{h=1}^H \big([{\VV}_{h}\vvalue_{k,h+1}](s_h^k,a_h^k)-[{\VV}_{h}\vvalue^{\pi^k}_{h+1}](s_h^k,a_h^k)\big)}_{I_4}+\underbrace{\sum_{k=1}^K\sum_{h=1}^H [{\VV}_{h}\vvalue^{\pi^k}_{h+1}](s_h^k,a_h^k)}_{I_5}.\label{eq:0019}
   \end{align}
   For the term $I_1$, it can be upper bounded by
   \begin{align}
       I_1= \sum_{k=1}^K\sum_{h=1}^H \big([\bar{\VV}_{k,h}\vvalue_{k,h+1}](s_h^k,a_h^k)-[{\VV}_{h}\vvalue_{k,h+1}](s_h^k,a_h^k)\big) \leq \sum_{k=1}^K\sum_{h=1}^H E_{k,h},\label{eq:0020}
   \end{align}
   where the inequality holds due to Lemma \ref{lemma:variance-estimator-E}.

   For the second term $I_2=\sum_{k=1}^K\sum_{h=1}^H E_{k,h}$, we have
   \begin{align}
       \sum_{k=1}^K\sum_{h=1}^H E_{k,h}&=\sum_{k=1}^K\sum_{h=1}^H (2L{\beta}_k+\tilde \beta_k) \min\Big(1, D_{\cF_h}(z; z_{[k - 1],h}, \bar\sigma_{[k - 1],h})\Big)\notag\\
       &\leq \sum_{h=1}^H (2L{\beta}_k+\tilde \beta_k) \times (1+\gamma^2) \dim_{\alpha, K}(\cF_h) \notag\\
       &\qquad + \sum_{h=1}^H (2L{\beta}_k+\tilde \beta_k) \times 2  \sqrt{\dim_{\alpha, K}(\cF_h)} \sqrt{\sum_{k=1}^K (\sigma_{k,h}^2+\alpha^2)}\notag\\
       &\leq (2L{\beta}_k+\tilde \beta_k)H \times (1+\gamma^2) \dim_{\alpha, K}(\cF)\notag\\
       &\qquad + (2L{\beta}_k+\tilde \beta_k) \times 2  \sqrt{ \sum_{h=1}^H \dim_{\alpha, K}(\cF_h)} \sqrt{\sum_{k=1}^K  \sum_{h=1}^H(\sigma_{k,h}^2+\alpha^2)}\notag\\
       &= (2L{\beta}_k+\tilde \beta_k)H \times (1+\gamma^2) \dim_{\alpha, K}(\cF)\notag\\
       &\qquad + (2L{\beta}_k+\tilde \beta_k) \times 2  \sqrt{\dim_{\alpha, K}(\cF)} \sqrt{H\sum_{k=1}^K  \sum_{h=1}^H(\sigma_{k,h}^2+\alpha^2)}
       ,\label{eq:0021}
   \end{align}
   where the inequality holds due to Lemma \ref{lemma:sum-bonus} and the second inequality holds due to Cauchy-Schwartz inequality.

   For the term $I_3$, we have
   \begin{align}
       I_3&= \sum_{k=1}^K\sum_{h=1}^H F_{k,h}\notag\\
       &=\left(\log \cN_\cF( \epsilon) +  \log \cN_{\epsilon}(K)\right)\notag\\
   &\qquad \times  \sum_{k=1}^K\sum_{h=1}^H  \min\Big(1,2 \hat{f}_{k,h}(s_h^k,a_h^k) - 2 \check{f}_{k,h}(s_h^k,a_h^k) + 4\beta_k D_{\cF_h}(z; z_{[k - 1],h}, \bar\sigma_{[k - 1],h})\Big)\notag\\
   &\leq \left(\log \cN_\cF( \epsilon) +  \log \cN_{\epsilon}(K)\right)\notag\\
   &\qquad \times  \sum_{k=1}^K\sum_{h=1}^H  \min\Big(1,2 \cT_h V_{k,h+1}(s_h^k,a_h^k) - 2 \cT_h \check V_{k,h+1}(s_h^k,a_h^k)(s_h^k,a_h^k) + 8\beta_k D_{\cF_h}(z; z_{[k - 1],h}, \bar\sigma_{[k - 1],h})\Big)\notag\\
   &\leq \left(\log \cN_\cF( \epsilon) +  \log \cN_{\epsilon}(K)\right) \times \sum_{k=1}^K\sum_{h=1}^H \big[\PP_h(\vvalue_{k,h+1}-\check{\vvalue}_{k,h+1})\big](s_h^k,a_h^k) \notag\\
   &\qquad + \left(\log \cN_\cF( \epsilon) +  \log \cN_{\epsilon}(K)\right) \times \sum_{k=1}^K\sum_{h=1}^H \min \big(1,8\beta_k D_{\cF_h}(z; z_{[k - 1],h}, \bar\sigma_{[k - 1],h})\big)\notag\\
   &\leq \left(\log \cN_\cF( \epsilon) +  \log \cN_{\epsilon}(K)\right) \times \sum_{k=1}^K\sum_{h=1}^H \big[\PP_h(\vvalue_{k,h+1}-\check{\vvalue}_{k,h+1})\big](s_h^k,a_h^k) \notag\\
   &\qquad + \left(\log \cN_\cF( \epsilon) +  \log \cN_{\epsilon}(K)\right) \times (1+8\beta_k\gamma^2) H\dim_{\alpha, K}(\cF) \notag\\
   &\qquad + (\left(\log \cN_\cF( \epsilon) +  \log \cN_{\epsilon}(K)\right) \times 16 \beta_k \sqrt{\dim_{\alpha, K}(\cF)} \sqrt{H\sum_{k=1}^K  \sum_{h=1}^H (\sigma_{k,h}^2+\alpha^2)}\label{eq:0022.-1}
   \end{align}
   where the first inequality holds due to Lemma \ref{lemma:hoff-error}, the second
inequality holds due to $V_{k,h+1}(\cdot)\ge V_{h+1}^*(\cdot)\ge \check V_{k,h+1}(\cdot)$, the third inequality holds due to Lemma \ref{lemma:sum-bonus} with Cauchy-Schwartz inequality.

By Lemma \ref{lemma:transition-OP}, \begin{align}
&\sum_{k = 1}^K \sum_{h = 1}^H \big[\PP_h(\vvalue_{k,h+1}-\check{\vvalue}_{k,h+1})\big](s_h^k,a_h^k) \le 4CH^2(1+\chi)(1+\hat \beta_{k}\gamma^2)  \dim_{\alpha, K}(\cF_{h}) \notag\\&\qquad  + 8CH(1+\chi) \hat \beta_{k} \sqrt{\dim_{\alpha, K}(\cF)} \sqrt{H \sum_{k=1}^K \sum_{h=1}^H (\sigma_{k,h}^2+\alpha^2)}\notag\\
  &\qquad +2H\left(2\sqrt{\sum_{k = 1}^K \sum_{h=1}^H [\VV_h (\vvalue_{k,h+1}-\check V_{k, h + 1})](s_h^k, a_h^k) \log(2K^2 H /\delta)} + 2\sqrt{\log(2K^2 H /\delta)} + 2 \log(2K^2 H /\delta)\right) \notag
  \\&\le 4CH^2(1+\chi)(1+\hat \beta_{k}\gamma^2)  \dim_{\alpha, K}(\cF_{h}) + 8CH(1+\chi) \hat \beta_{k} \sqrt{\dim_{\alpha, K}(\cF)} \sqrt{H \sum_{k=1}^K \sum_{h=1}^H (\sigma_{k,h}^2+\alpha^2)} \notag \\&\qquad + \tilde{O}\bigg(H \sqrt{\sum_{k = 1}^K \sum_{h = 1}^H \big[\PP_h(\vvalue_{k,h+1}-\check{\vvalue}_{k,h+1})\big](s_h^k,a_h^k)}\bigg), \label{eq:0022.5}
\end{align}
where the last inequality follows from Lemma \ref{lemma:opt-pess}. 

Notice that for each variable $x$, $x\leq a\sqrt{x}+b$ implies $x\leq a^2+2b$, from \eqref{eq:0022.5}, we further have \begin{align} 
&\sum_{k = 1}^K \sum_{h = 1}^H \big[\PP_h(\vvalue_{k,h+1}-\check{\vvalue}_{k,h+1})\big](s_h^k,a_h^k) \notag \\& \ \le \tilde{O}\left(CH^2(1+\chi)(1+\hat \beta_{k}\gamma^2)  \dim_{\alpha, K}(\cF_{h}) + CH(1+\chi) \hat \beta_{k} \sqrt{\dim_{\alpha, K}(\cF)} \sqrt{H \sum_{k=1}^K \sum_{h=1}^H (\sigma_{k,h}^2+\alpha^2)}\right). \label{eq:0022.6}
\end{align}
Substituting \eqref{eq:0022.6} into \eqref{eq:0022.-1}, we obtain the following upper bound for $I_3$, \begin{align} 
I_3 &\le \tilde{O}\left(\left(\log \cN_\cF( \epsilon) +  \log \cN_{\epsilon}(K)\right) H \hat \beta_{k} \sqrt{\dim_{\alpha, K}(\cF)} \sqrt{H \sum_{k=1}^K \sum_{h=1}^H (\sigma_{k,h}^2+\alpha^2)}\right) \notag
\\&\quad + \tilde{O} \left(\left(\log \cN_\cF( \epsilon) +  \log \cN_{\epsilon}(K)\right) \cdot H^2(1+\hat \beta_{k}\gamma^2)  \dim_{\alpha, K}(\cF_{h})\right). \label{eq:0022}
\end{align}

   For the term $I_4$, we have
\begin{small}
   \begin{align}
       I_4&=\sum_{k=1}^K\sum_{h=1}^H \big([{\VV}_{h}\vvalue_{k,h+1}](s_h^k,a_h^k)-[{\VV}_{h}\vvalue^{\pi^k}_{h+1}](s_h^k,a_h^k)\big)\notag\\
   &=\sum_{k=1}^K\sum_{h=1}^H \Big([\PP_h \vvalue_{k,h+1}^2](s_h^k,a_h^k)-\big([\PP_h \vvalue_{k,h+1}](s_h^k,a_h^k)\big)^2
   -[\PP_h (\vvalue^{\pi^k}_{h+1})^2](s_h^k,a_h^k)+\big([\PP_h \vvalue^{\pi^k}_{h+1}](s_h^k,a_h^k)\big)^2\Big)\notag\\
   &\leq \sum_{k=1}^K\sum_{h=1}^H\big([\PP_h \vvalue_{k,h+1}^2](s_h^k,a_h^k)-[\PP_h (\vvalue^{\pi^k}_{h+1})^2](s_h^k,a_h^k)\big)\notag\\
   &\leq 2 \sum_{k=1}^K\sum_{h=1}^H\big([\PP_h \vvalue_{k,h+1}](s_h^k,a_h^k)-[\PP_h \vvalue^{\pi^k}_{h+1}](s_h^k,a_h^k)\big)\notag\\
   &\leq 8CH^2(1+\chi)(1+\hat \beta_{k}\gamma^2)  \dim_{\alpha, K}(\cF) + 16CH(1+\chi) \hat \beta_{k} \sqrt{\dim_{\alpha, K}(\cF)} \sqrt{H \sum_{k=1}^K \sum_{h=1}^H (\sigma_{k,h}^2+\alpha^2)}\notag\\
  &\qquad +2H \left(2\sqrt{\sum_{k = 1}^K \sum_{h = 1}^H [\VV_h (\vvalue_{k,h+1}-\vvalue_{h+1}^{\pi^k})](s_h^k, a_h^k) \log(2K^2 H /\delta)} + 2\sqrt{\log(2K^2 H /\delta)} + 2 \log(2K^2 H /\delta)\right), \notag
  \\&\le \tilde{O}\left(H^2(1+\hat \beta_{k}\gamma^2)  \dim_{\alpha, K}(\cF) + H \hat \beta_{k} \sqrt{\dim_{\alpha, K}(\cF)} \sqrt{H \sum_{k=1}^K \sum_{h=1}^H (\sigma_{k,h}^2+\alpha^2)}\right)\label{eq:0023}
   \end{align}
\end{small}
where the first inequality holds due to $V_{k,h+1}(\cdot)\ge V_{h+1}^*(\cdot)\ge V_{h+1}^{\pi^k}(\cdot)$, the second inequality holds due to $0\leq  V_{h+1}^*(\cdot), V_{h+1}^{\pi^k}(\cdot) \leq 1$, the third inequality holds due to Lemma \ref{lemma:transition}, and the last inequality follows from Lemma \ref{lemma:opt-pess} and the fact that $x\leq a\sqrt{x}+b$ implies $x\leq a^2+2b$. 
   
   For the term $I_5$, according to the definition of $\var_K$, we have
   \begin{align}
       I_5=\sum_{k=1}^K\sum_{h=1}^H [{\VV}_{h}\vvalue^{\pi^k}_{h+1}](s_h^k,a_h^k) = \var_K.\label{eq:0024}
   \end{align}
Substituting the results in \eqref{eq:0020}, \eqref{eq:0021}, \eqref{eq:0022}, \eqref{eq:0023} and \eqref{eq:0024} into \eqref{eq:0019}, we have
\begin{small}
\begin{align*}
   &\sum_{k=1}^K \sum_{h=1}^H \sigma_{k,h}^2\notag\\
   &=I_1+I_2+I_3+I_4+I_5\notag\\
   &\leq \tilde{O} \bigg((4L{\beta}_k+2\tilde \beta_k)H \times (1+\gamma^2) \dim_{\alpha, K}(\cF)\notag\\
       &\qquad + (4L{\beta}_k+2\tilde \beta_k) \times 2  \sqrt{\dim_{\alpha, K}(\cF)} \sqrt{H\sum_{k=1}^K  \sum_{h=1}^H(\sigma_{k,h}^2+\alpha^2)} + \var_K\notag\\
     &\qquad +  8CH^2(1+\chi)(1+\hat \beta_{k}\gamma^2)  \dim_{\alpha, K}(\cF) + 16CH(1+\chi) \hat \beta_{k} \sqrt{\dim_{\alpha, K}(\cF)} \sqrt{H \sum_{k=1}^K \sum_{h=1}^H (\sigma_{k,h}^2+\alpha^2)}\notag\\
  &\qquad + \left(\log \cN_\cF( \epsilon) +  \log \cN_{\epsilon}(K)\right) H \hat \beta_{k} \sqrt{\dim_{\alpha, K}(\cF)} \sqrt{H \sum_{k=1}^K \sum_{h=1}^H (\sigma_{k,h}^2+\alpha^2)} \notag
  \\&\qquad + \left(\log \cN_\cF( \epsilon) +  \log \cN_{\epsilon}(K)\right) \cdot H^2(1+\hat \beta_{k}\gamma^2)  \dim_{\alpha, K}(\cF_{h})\bigg) \notag\\
   &\leq \var_K +\left(\log \cN_\cF( \epsilon) +  \log \cN_{\epsilon}(K)\right) \times  \tilde O\big((1+\gamma^2)(\beta_k+ H \hat\beta_k+\tilde \beta_k) H\dim_{\alpha, K}(\cF)\big)\notag\\
   &\qquad + \left(\log \cN_\cF( \epsilon) +  \log \cN_{\epsilon}(K)\right) \times \tilde O \left((\beta_k+H \hat\beta_k+\tilde \beta_k)\sqrt{\dim_{\alpha, K}(\cF)} \sqrt{H\sum_{k=1}^K  \sum_{h=1}^H (\sigma_{k,h}^2+\alpha^2)}\right)
\end{align*}
Notice that for each variable $x$, $x\leq a\sqrt{x}+b$ implies $x\leq a^2+2b$. With this fact, we have
   \begin{align*}
     \sum_{k=1}^K \sum_{h=1}^H \sigma_{k,h}^2&\leq  \left(\log \cN_\cF( \epsilon) +  \log \cN_{\epsilon}(K)\right) \times \tilde O\big((1+\gamma^2)(\beta_k+ H \hat\beta_k+\tilde \beta_k) H\dim_{\alpha, K}(\cF)\big)\notag\\
     &\qquad +\left(\log \cN_\cF( \epsilon) +  \log \cN_{\epsilon}(K)\right)^2 \times \tilde O\big((\beta_k+ H \hat\beta_k+\tilde \beta_k)^2 H\dim_{\alpha, K}(\cF)\big)\notag\\
     &\qquad + \tilde O(\var_K+KH \alpha^2).
   \end{align*}
\end{small}
   Thus, we complete the proof of Lemma \ref{lemma:total-variance}.
\end{proof}

\section{Covering Number Argument}
\subsection{Rare Switching}
Based on the policy-updating criterion, the following lemma provides a upper bound of the switching cost.
\begin{lemma} \label{lemma:switch}
   The number of episodes when the algorithm updates the value function is at most $O\left(\dim_{\alpha, K}(\cF) \cdot  H\right)$. 
\end{lemma}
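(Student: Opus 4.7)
The plan is to bound the number of switches stage-by-stage, using the switching criterion together with the stability guarantee of Lemma~\ref{lemma:double}, and then sum over $h$.

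First, I would fix a stage $h \in [H]$ and let $0 = k_0 < k_1 < \dots < k_N$ denote the global switching episodes produced by Algorithm~\ref{algorithm1}. Each switch $k_{j+1}$ is triggered because some stage causes \eqref{eq:def:switch} to hold; let $J_h \subseteq [N]$ collect those segments for which stage $h$ is responsible. For every $j \in J_h$, the criterion gives
\begin{align*}
\chi \;\le\; \sum_{i = k_{j}}^{k_{j+1} - 1} \frac{1}{\bar\sigma_{i,h}^2}\, \bar D_{\cF_h}^2\bigl(z_{i,h};\, z_{[k_{j}-1],h},\, \bar\sigma_{[k_{j}-1],h}\bigr).
\end{align*}
The intervals $[k_j, k_{j+1}-1]$ for $j \in J_h$ are pairwise disjoint subsets of $[K]$.

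Next, I would convert each summand from the ``old history'' $z_{[k_{j}-1],h}$ (which is what is used during the rare-switching period) to the ``current history'' $z_{[i-1],h}$, which is the form required by the generalized eluder dimension. Two steps accomplish this: the bonus oracle (Definition~\ref{def:bonus-oracle}) gives $\bar D_{\cF_h}^2 \le C^2 D_{\cF_h}^2$, and Lemma~\ref{lemma:double} (the stability lemma in the proof sketch) yields
\begin{align*}
D_{\cF_h}^2\bigl(z_{i,h};\, z_{[k_j-1],h},\, \bar\sigma_{[k_j-1],h}\bigr) \;\le\; (1+\chi)\, D_{\cF_h}^2\bigl(z_{i,h};\, z_{[i-1],h},\, \bar\sigma_{[i-1],h}\bigr)
\end{align*}
for every $i \in [k_j, k_{j+1}-1]$, since no policy switch occurs strictly inside the segment. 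Combined with the weight rule $\bar\sigma_{i,h} \ge \gamma\, \bar D_{\cF_h}^{1/2}(z_{i,h}; z_{[i-1],h}, \bar\sigma_{[i-1],h})$ from line~\ref{algorithm:def-variance}, one checks that the individual summands are $\le C^2(1+\chi)/\gamma^2 \le 1$ for our choice of $\gamma$, so the $\min(1,\cdot)$ cap in Definition~\ref{def:ged} is harmless.

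Combining the two inequalities and summing over $j \in J_h$,
\begin{align*}
|J_h|\cdot \chi \;\le\; C^2(1+\chi) \sum_{j \in J_h} \sum_{i = k_j}^{k_{j+1}-1} \min\!\left(1, \frac{D_{\cF_h}^2(z_{i,h}; z_{[i-1],h}, \bar\sigma_{[i-1],h})}{\bar\sigma_{i,h}^2}\right) \;\le\; C^2(1+\chi)\, \dim_{\alpha,K}(\cF_h),
\end{align*}
by the disjointness of the intervals and Definition~\ref{def:ged}. Hence $|J_h| = O(\dim_{\alpha,K}(\cF_h))$. Since every switch $k_{j+1}$ is charged to at least one stage, the total switching cost satisfies $N \le \sum_{h=1}^{H} |J_h| = O\!\bigl(\sum_h \dim_{\alpha,K}(\cF_h)\bigr) = O(H\cdot \dim_{\alpha,K}(\cF))$ by the averaging convention in Definition~\ref{def:ged}.

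The only delicate point is the application of Lemma~\ref{lemma:double} --- verifying that it really produces a factor $(1+\chi)$ rather than something worse on segments of arbitrary length, and that the weight calibration $\gamma^2 \gtrsim C^2(1+\chi)$ lets us drop the $\min(1,\cdot)$ truncation. Everything else is disjoint-interval bookkeeping plus a union bound over $h$.
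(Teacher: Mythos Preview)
Your argument matches the paper's: both charge each switch to its triggering stage, use the stability of $D_{\cF_h}^2$ (Lemma~\ref{lemma:double} in your version, derived inline as \eqref{eq:stab-sensitivity} in the paper) to pass from the frozen history $z_{[k_{last}-1],h}$ to the running history $z_{[i-1],h}$, and then upper-bound the capped sum over disjoint segments by $\dim_{\alpha,K}(\cF_h)$. One small correction: your $\gamma$-calibration step to drop the $\min(1,\cdot)$ cap is both unnecessary and slightly off --- the weight rule $\bar\sigma_{i,h}^2 \ge \gamma^2 D_{\cF_h}$ gives $D_{\cF_h}^2/\bar\sigma_{i,h}^2 \le D_{\cF_h}/\gamma^2$, not $1/\gamma^2$; the paper sidesteps this by simply noting that the per-segment lower bound $\frac{\chi/C}{1+\chi}$ is at most $1$, so the capped sum automatically exceeds it whether or not individual terms are below $1$.
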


\begin{proof} 
   According to line 9, the policy is updated at episode $k$ only when there exists a stage $h \in [H]$ such that \begin{align*}
       \sum_{i \in [k_{last}, k - 1]} \frac{1}{\bar \sigma_{i, h}^2} D_{\cF_h}^2(z_{i, h}; z_{[k_{last} - 1], h}, \bar \sigma_{[k_{last} - 1], h}) \ge \chi / C. 
   \end{align*}
   and \begin{align} 
       \sum_{i\in [k_{last}, k - 2]} \frac{1}{\bar \sigma_{i, h}^2} D_\cF^2(z_{i, h}; z_{[k_{last} - 1], h}, \bar \sigma_{[k_{last} - 1], h}) < \chi. \label{eq:stab-sensitivity:0}
   \end{align}

   Then the following inequality holds, \begin{align} 
       &\sup_{f_1, f_2 \in \cF_h} \frac{\sum_{i \in [1, k - 2]} \frac{1}{\bar \sigma_{i, h}^2}(f_1(z_{i, h}) - f_2(z_{i, h}))^2 + \lambda}{\sum_{i \in [1, k_{last} - 1]} \frac{1}{\bar \sigma_{i, h}^2}(f_1(z_{i, h}) - f_2(z_{i, h}))^2 + \lambda} \\&= 1 + \sup_{f_1, f_2 \in \cF_h} \frac{\sum_{i \in [k_{last}, k - 2]} \frac{1}{\bar \sigma_{i, h}^2}(f_1(z_{i, h}) - f_2(z_{i, h}))^2}{\sum_{i \in [1, k_{last} - 1]} \frac{1}{\bar \sigma_{i, h}^2}(f_1(z_{i, h}) - f_2(z_{i, h}))^2 + \lambda} \notag
       \\&\le 1 + \sum_{i \in [k_{last}, k - 2]} \frac{1}{\bar \sigma_{i, h}^2} D_{\cF_h}^2(z_{i, h}; z_{[k_{last} - 1], h}, \bar \sigma_{[k_{last} - 1], h}) \notag
       \\&\le 1 + \chi, \label{eq:stab-sensitivity}
   \end{align}
   where the first inequality holds due to the definition of $D_{\cF_h}$ (Definition \ref{def:ged}), the second inequality follows from \eqref{eq:stab-sensitivity:0}. 
   
   \eqref{eq:stab-sensitivity} further gives a lower bound for the summation \begin{align*} 
       &\sum_{i \in [k_{last} , k - 1]} \frac{1}{\bar \sigma_{i, h}^2} D_\cF^2(z_{i, h}; z_{[i - 1], h}, \bar \sigma_{[i - 1], h}) \\&\ge \frac{1}{1 + \chi} \sum_{i \in [k_{last}, k - 1]} \frac{1}{\bar \sigma_{i, h}^2} D_\cF^2(z_{i, h}; z_{[k_{last} - 1], h}, \bar \sigma_{[k_{last} - 1], h}) \\&\ge \frac{\chi / C}{1 + \chi}. 
   \end{align*}
Note that $\frac{\chi/ C}{1 + \chi} \le 1$, we also have \begin{align*} 
       \sum_{i \in [k_{last}, k - 1]} \min \left\{1, \frac{1}{\bar \sigma_{i, h}^2} D_\cF^2(z_{i, h}; z_{[i - 1], h}, \bar \sigma_{[i - 1], h})\right\} \ge \frac{\chi / C}{1 + \chi}. 
   \end{align*}
   Then we have an upper bound and lower bound for the following summation: 
   \begin{align*} 
       l_K \cdot \frac{\chi / C}{1 + \chi} \le \sum_{k = 1}^K \sum_{h = 1}^H \min \left\{1, \frac{1}{\bar \sigma_{k, h}^2} D_\cF^2(z_{k, h}; z_{[k - 1], h}, \bar \sigma_{[k - 1], h})\right\} \le \dim_{\alpha, K}(\cF) \cdot H.
   \end{align*}
Therefore, the number of policy switching $l_K$ is of order $O(\dim_{\alpha, K}(\cF) \cdot H)$. 

\end{proof}

\begin{lemma}[Stability of uncertainty under rare switching strategy]\label{lemma:double}
If the policy is not updated at episode $k$, the uncertainty of all state-action pair $z = (s, a) \in \cS \times \cA$ and stage $h \in [H]$ satisfies the following stability property: \begin{align*} 
   D_{\cF_h}^2(z; z_{[k - 1], h}, \bar \sigma_{[k - 1], h}) \ge \frac{1}{1 + \chi} D_{\cF_h}^2(z; z_{[k_{last} - 1], h}, \bar\sigma_{[k_{last} - 1], h}). 
\end{align*}
\end{lemma}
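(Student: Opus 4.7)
The plan is to compare, for each fixed pair $(f_1,f_2)\in\cF_h\times\cF_h$, the denominator appearing in the definition of $D_{\cF_h}^2(z;z_{[k-1],h},\bar\sigma_{[k-1],h})$ with the denominator for $D_{\cF_h}^2(z;z_{[k_{last}-1],h},\bar\sigma_{[k_{last}-1],h})$, show that the former is at most $(1+\chi)$ times the latter, and then take a supremum over $(f_1,f_2)$.

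More concretely, I would first write
\begin{align*}
A(f_1,f_2) &:= \lambda+\sum_{s\in[k_{last}-1]}\frac{1}{\bar\sigma_{s,h}^2}\bigl(f_1(z_{s,h})-f_2(z_{s,h})\bigr)^2,\\
B(f_1,f_2) &:= \sum_{s=k_{last}}^{k-1}\frac{1}{\bar\sigma_{s,h}^2}\bigl(f_1(z_{s,h})-f_2(z_{s,h})\bigr)^2,
\end{align*}
so that the denominator defining $D_{\cF_h}^2(z;z_{[k-1],h},\bar\sigma_{[k-1],h})$ is exactly $A(f_1,f_2)+B(f_1,f_2)$. Next, I would apply the definition of $D_{\cF_h}^2$ at each index $s\in\{k_{last},\dots,k-1\}$, which gives the pointwise bound
\begin{align*}
\bigl(f_1(z_{s,h})-f_2(z_{s,h})\bigr)^2 \;\le\; D_{\cF_h}^2\bigl(z_{s,h};z_{[k_{last}-1],h},\bar\sigma_{[k_{last}-1],h}\bigr)\cdot A(f_1,f_2),
\end{align*}
so that $B(f_1,f_2)\le A(f_1,f_2)\cdot\sum_{s=k_{last}}^{k-1}\bar\sigma_{s,h}^{-2}D_{\cF_h}^2(z_{s,h};z_{[k_{last}-1],h},\bar\sigma_{[k_{last}-1],h})$.

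At this point the no-switching hypothesis enters: the failure of the trigger \eqref{eq:def:switch} together with the bonus-oracle bound $\bar D_{\cF_h}\ge D_{\cF_h}$ from Definition \ref{def:bonus-oracle} yields $\sum_{s=k_{last}}^{k-1}\bar\sigma_{s,h}^{-2}D_{\cF_h}^2(z_{s,h};z_{[k_{last}-1],h},\bar\sigma_{[k_{last}-1],h})<\chi$ (absorbing the constant $C$ into $\chi$ by rescaling the threshold, exactly as is done in the proof of Lemma \ref{lemma:switch}). Hence $A(f_1,f_2)+B(f_1,f_2)\le(1+\chi)\,A(f_1,f_2)$, and dividing $(f_1(z)-f_2(z))^2$ by both sides and taking the supremum over $f_1,f_2\in\cF_h$ gives the claim.

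The only mildly delicate point is ensuring that the pointwise bound on $(f_1(z_{s,h})-f_2(z_{s,h}))^2$ uses the same pair $(f_1,f_2)$ that we are bounding in $B$; this is why it is crucial to work at the level of each fixed pair before passing to the sup, rather than trying to relate the two $D_{\cF_h}^2$ quantities directly. A secondary bookkeeping issue is the constant $C$ relating $\bar D_{\cF_h}$ to $D_{\cF_h}$, but this merely changes the threshold in \eqref{eq:def:switch} by a constant factor and does not affect the $(1+\chi)$ conclusion once $\chi$ is interpreted as the effective threshold on $D_{\cF_h}^2$-sensitivity.
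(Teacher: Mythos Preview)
Your proposal is correct and follows essentially the same argument as the paper: fix $(f_1,f_2)$, bound the extra denominator contribution $B(f_1,f_2)$ by $A(f_1,f_2)$ times the sum of $D_{\cF_h}^2$-sensitivities, use the no-switching condition to cap that sum by $\chi$, and then take the sup. The paper packages the ratio bound as $\sup_{f_1,f_2}\frac{A+B}{A}\le 1+\chi$ (referencing the identical computation in \eqref{eq:stab-sensitivity}), but this is exactly your $A+B\le(1+\chi)A$ argument; note also that since $\bar D_{\cF_h}\ge D_{\cF_h}$, the no-switching bound $\sum \bar\sigma^{-2}\bar D^2<\chi$ already gives $\sum \bar\sigma^{-2}D^2<\chi$ without any need to absorb $C$.
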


\begin{proof}
   Due to the definition of $k_{last}$ in Algorithm \ref{algorithm1}, we have \begin{align*} 
       \sum_{i\in [k_{last}, k - 1]} \frac{1}{\bar \sigma_{i, h}^2} D_\cF^2(z_{i, h}; z_{[k_{last} - 1], h}, \bar \sigma_{[k_{last} - 1], h}) < \chi.
   \end{align*}
   As is shown in \eqref{eq:stab-sensitivity}, here we also have \begin{align*} 
   \sup_{f_1, f_2 \in \cF_h} \frac{\sum_{i \in [1, k - 1]} \frac{1}{\bar \sigma_{i, h}^2}(f_1(z_{i, h}) - f_2(z_{i, h}))^2 + \lambda}{\sum_{i \in [1, k_{last} - 1]} \frac{1}{\bar \sigma_{i, h}^2}(f_1(z_{i, h}) - f_2(z_{i, h}))^2 + \lambda} \le  1 + \chi. 
   \end{align*}
From the definition of $D_{\cF_h}$, \begin{align*} 
       D_{\cF_h}^2(z; z_{[k - 1], h}, \bar \sigma_{[k - 1], h}) \ge \frac{1}{1 + \chi} D_{\cF_h}^2(z; z_{[k_{last}-1], h}, \bar\sigma_{[k_{last}-1], h}). 
   \end{align*}
The proof is then completed due to the arbitrariness of $h$. 
\end{proof}

\subsection{Value Function Class and Its Covering Number}

The optimistic value functions at episode $k$ and stage $h \in [H]$ in our construction belong to the following function class: \begin{align} 
   \cV_{k, h} = \left\{V \bigg| \max_{a \in \cA} \min_{1 \le i \le l_k + 1} \min\left(1, f_i(\cdot, a) + \beta \cdot b(\cdot, a)\right)\right\}, \label{def:opt_value_class}
\end{align} where $l_k$ is the number of updated policies as defined in Algorithm \ref{algorithm1}, $f_i \in \cF_h$ and $b \in \cB$. 

Similarly, we also define the following pessimistic value function classes for all $k \ge 1$: 
\begin{align} 
   \check \cV_{k, h} = \left\{V \bigg| \max_{a \in \cA} \max_{1 \le i \le l_k + 1} \max\left(0, f_i(\cdot, a) - \beta \cdot b(\cdot, a)\right)\right\}, \label{def:pes_value_class}
\end{align}

\begin{lemma}[$\epsilon$-covering number of optimistic value function classes] \label{lemma:opt-cover}
For optimistic value function class $\cV_{k, h}$ defined in \eqref{def:opt_value_class}, we define the distance between two value functions $V_1$ and $V_2$ as $\|V_1 - V_2\|_\infty := \max_{s \in \cS} |V_1(s) - V_2(s)|$. Then the $\epsilon$-covering number with respect to the distance function can be upper bounded by \begin{align} 
   \cN_\epsilon(k):=  [\cN_{\cF}(\epsilon / 2) \cdot \cN(\cB, \epsilon / 2\beta)]^{l_k + 1} .\label{eq:v-cover}
\end{align}
\end{lemma}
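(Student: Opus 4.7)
My plan is to run the standard parametric covering argument on the explicit functional form of $V \in \cV_{k,h}$. Unrolling the recursive definition of $Q_{k,h}$ in Algorithm \ref{algorithm1}, any $V \in \cV_{k,h}$ is determined by the at most $l_k+1$ pairs $(f_i, b_i) \in \cF_h \times \cB$ inherited from the policy-update episodes $1 \le i \le l_k+1$ up to the current episode $k$, via $V(s) = \max_{a \in \cA} \min_{1 \le i \le l_k+1} \min(1, f_i(s,a) + \beta \, b_i(s,a))$.

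First, I would fix an arbitrary $V \in \cV_{k,h}$ specified by $\{(f_i, b_i)\}_{i=1}^{l_k+1}$. Using Definition \ref{def:oracle}, for every $i$ I pick $\tilde f_i \in \cC(\cF_h, \epsilon/2)$ with $\|f_i - \tilde f_i\|_\infty \le \epsilon/2$ and $\tilde b_i \in \cC(\cB, \epsilon/(2\beta))$ with $\|b_i - \tilde b_i\|_\infty \le \epsilon/(2\beta)$. Define $\tilde V$ by substituting $(\tilde f_i, \tilde b_i)$ for $(f_i, b_i)$ in the formula above. By the triangle inequality, for every $(s,a,i)$ we have
\begin{align*}
\bigl|(f_i(s,a)+\beta b_i(s,a)) - (\tilde f_i(s,a)+\beta \tilde b_i(s,a))\bigr| \le \epsilon/2 + \beta \cdot \epsilon/(2\beta) = \epsilon.
\end{align*}

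Next, I would invoke the elementary fact that truncation $x \mapsto \min(1,x)$, inner minimization over $i$, and outer maximization over $a$ are all $1$-Lipschitz in the sup-norm; chaining these three operations preserves the pointwise $\epsilon$-closeness, yielding $\|V - \tilde V\|_\infty \le \epsilon$. The set of all $\tilde V$ produced this way therefore forms an $\epsilon$-cover of $\cV_{k,h}$, and its cardinality is at most
\begin{align*}
\bigl[|\cC(\cF_h, \epsilon/2)| \cdot |\cC(\cB, \epsilon/(2\beta))|\bigr]^{l_k+1} \le \bigl[\cN_{\cF}(\epsilon/2) \cdot \cN(\cB, \epsilon/(2\beta))\bigr]^{l_k+1},
\end{align*}
which is exactly the claimed bound $\cN_\epsilon(k)$.

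There is no real obstacle here; the only point that requires mild care is confirming that the pointwise $\epsilon$-closeness $|f_i + \beta b_i - \tilde f_i - \beta \tilde b_i| \le \epsilon$ propagates through the nested $\min$-$\min$-$\max$ composition, which follows from the standard observation that $|\min_i x_i - \min_i y_i| \le \max_i |x_i - y_i|$ and analogously for $\max$. The exponent $l_k+1$ rather than, say, $l_k$ corresponds to including the bookkeeping term coming from initialization together with all updates through episode $k$.
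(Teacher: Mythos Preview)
Your proposal is correct and essentially identical to the paper's own proof: both take the product of the $\epsilon/2$-cover of $\cF_h$ and the $\epsilon/(2\beta)$-cover of $\cB$ over the $l_k+1$ slots, replace each $(f_i,b_i)$ by its nearest cover element, and verify $\|V-\tilde V\|_\infty\le\epsilon$ via the $1$-Lipschitz property of the nested $\min$/$\max$ operations. The only difference is that you spell out the Lipschitz propagation step explicitly, whereas the paper simply calls it ``straightforward to check.''
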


\begin{proof} 
   By the definition of $\cN(\cF, \epsilon)$, there exists an $\epsilon / 2$-net of $\cF$, denoted by $\cC(\cF, \epsilon / 2)$, such that for any $f \in \cF$, we can find $f' \in \cC(\cF, \epsilon / 2)$ such that $\|f - f'\|_\infty \le \epsilon / 2$. 
   Also, there exists an $\epsilon / 2\beta$-net of $\cB$, $\cC(\cB, \epsilon / 2\beta)$. 

   Then we consider the following subset of $\cV_k$, \begin{align*}
       \cV^c = \left\{V \bigg| \max_{a \in \cA} \min_{1 \le i \le l_{k} + 1} \min\left(1, f_i(\cdot, a) + \beta \cdot b_i(\cdot, a)\right), f_i \in \cC(\cF_h, \epsilon / 2), b_i \in \cC(\cB, \epsilon / 2 \beta)\right\}. 
   \end{align*}
   Consider an arbitrary $V \in \cV$ where $V = \max_{a \in \cA} \min_{1 \le i \le l_k + 1} \min(1, f_i(\cdot, a) + \beta \cdot b_i(\cdot, a))$. For each $f_i$, there exists $f_i^c \in \cC(\cF, \epsilon / 2)$ such that $\|f_i - f_i^c\|_\infty \le \epsilon / 2$. There also exists $b^c \in \cC(\cB, \epsilon / 2 \beta)$ such that $\|b - b^c\|_\infty\le \epsilon / 2\beta$. Let $V^c = \max_{a \in \cA} \min_{1 \le i \le l_k + 1} \min(1, f_i^c(\cdot, a) + \beta \cdot b^c(\cdot, a)) \in \cV^c$. It is then straightforward to check that $\|V - V^c\|_\infty \le \epsilon / 2 + \beta \cdot \epsilon / 2\beta = \epsilon$. 

   By direct calculation, we have $|\cV^c| = [\cN(\cF_h, \epsilon / 2) \cdot \cN(\cB, \epsilon / 2\beta)]^{l_k + 1}$. 

\end{proof}

\begin{lemma}[$\epsilon$-covering number of pessimistic value function classes] \label{lemma:pes-cover}
For pessimistic value function class $\check \cV_{k, h}$ defined in \eqref{def:pes_value_class}, we define the distance between two value functions $V_1$ and $V_2$ as $\|V_1 - V_2\|_\infty := \max_{s \in \cS} |V_1(s) - V_2(s)|$. Then the $\epsilon$-covering number of $\check \cV_k$ with respect to the distance function can be upper bounded by 
   $\cN_\epsilon(k)$ defined in \eqref{eq:v-cover}. 
\end{lemma}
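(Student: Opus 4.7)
The plan is to mirror the proof of Lemma~\ref{lemma:opt-cover} essentially verbatim, since the only structural differences between $\cV_{k,h}$ and $\check{\cV}_{k,h}$ are the truncation ($\max(0,\cdot)$ in place of $\min(1,\cdot)$), the inner aggregation ($\max_i$ in place of $\min_i$), and the sign of the bonus term ($-\beta b$ instead of $+\beta b$). None of these alter the $1$-Lipschitz dependence of a pessimistic value function on its defining $f_i$'s and $b$, so the same covering construction and counting argument go through unchanged.

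Concretely, I would first invoke Definition~\ref{def:oracle} to obtain an $\epsilon/2$-net $\cC(\cF_h, \epsilon/2)$ of $\cF_h$ and an $\epsilon/(2\beta)$-net $\cC(\cB, \epsilon/(2\beta))$ of $\cB$, with cardinalities $\cN(\cF_h, \epsilon/2)$ and $\cN(\cB,\epsilon/(2\beta))$ respectively. Then define the candidate cover
\begin{align*}
\check{\cV}^c = \Bigl\{\, \max_{a \in \cA} \max_{1 \le i \le l_k + 1} \max\bigl(0,\, f_i(\cdot,a) - \beta \cdot b(\cdot,a)\bigr) \;\Big|\; f_i \in \cC(\cF_h, \epsilon/2),\ b \in \cC(\cB, \epsilon/(2\beta)) \,\Bigr\}.
\end{align*}

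Given an arbitrary $V \in \check{\cV}_{k,h}$ with defining tuple $(f_1,\dots,f_{l_k+1},b)$, choose approximants $f_i^c \in \cC(\cF_h,\epsilon/2)$ and $b^c \in \cC(\cB,\epsilon/(2\beta))$ with $\|f_i - f_i^c\|_\infty \le \epsilon/2$ and $\|b - b^c\|_\infty \le \epsilon/(2\beta)$, and let $V^c \in \check{\cV}^c$ be the resulting pessimistic value function. For each $(s,a)$ we have $|(f_i - \beta b) - (f_i^c - \beta b^c)|(s,a) \le \epsilon/2 + \beta\cdot\epsilon/(2\beta) = \epsilon$; since $\max(0,\cdot)$, the inner $\max_{i}$, and the outer $\max_{a \in \cA}$ are all $1$-Lipschitz in the sup-norm, one concludes $\|V - V^c\|_\infty \le \epsilon$.

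Counting gives $|\check{\cV}^c| \le [\cN(\cF_h,\epsilon/2)\cdot \cN(\cB,\epsilon/(2\beta))]^{l_k+1} = \cN_\epsilon(k)$, which is exactly \eqref{eq:v-cover}, completing the proof. There is no real obstacle here: the entire content is the observation that $\max(0,\cdot)$ and $\max_i$ enjoy the same $1$-Lipschitz properties as their $\min$ counterparts, so the covering-number bound is sign- and direction-symmetric between the optimistic and pessimistic constructions.
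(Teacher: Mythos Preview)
Your proposal is correct and takes essentially the same approach as the paper: the paper's own proof simply states that it is nearly the same as that of Lemma~\ref{lemma:opt-cover}, and you have spelled out precisely the routine verification that the $\max(0,\cdot)$, $\max_i$, and $\max_a$ operations are $1$-Lipschitz just like their $\min$ counterparts, so the same net construction and cardinality bound carry over.
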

\begin{proof} 
   The proof is nearly the same as that of Lemma \ref{lemma:opt-cover}. 
\end{proof}

\section{Auxiliary Lemmas}

\begin{lemma}[Azuma-Hoeffding inequality]\label{lemma:azuma}
Let $\{x_i\}_{i=1}^n$ be a martingale difference sequence with respect to a filtration $\{\cG_{i}\}$ satisfying $|x_i| \leq M$ for some constant $M$, $x_i$ is $\cG_{i+1}$-measurable, $\EE[x_i|\cG_i] = 0$. Then for any $0<\delta<1$, with probability at least $1-\delta$, we have 
\begin{align}
   \sum_{i=1}^n x_i\leq M\sqrt{2n \log (1/\delta)}.\notag
\end{align} 
\end{lemma}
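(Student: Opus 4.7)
The plan is to follow the standard exponential-moment (Chernoff) argument for martingale differences. First I would establish the conditional MGF bound: since $x_i$ is $\cG_{i+1}$-measurable with $\EE[x_i \mid \cG_i]=0$ and $|x_i|\le M$, we have $x_i \in [-M,M]$ almost surely, so by Hoeffding's lemma applied conditionally on $\cG_i$,
\begin{equation*}
\EE\bigl[e^{\lambda x_i} \,\big|\, \cG_i\bigr] \;\le\; \exp\!\Bigl(\tfrac{\lambda^2 (2M)^2}{8}\Bigr) \;=\; \exp\!\Bigl(\tfrac{\lambda^2 M^2}{2}\Bigr)
\end{equation*}
for every $\lambda \in \RR$.

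Next I would iterate this bound using the tower property. Let $S_n := \sum_{i=1}^n x_i$. Conditioning on $\cG_n$ and peeling off the last term,
\begin{equation*}
\EE\bigl[e^{\lambda S_n}\bigr] \;=\; \EE\!\left[e^{\lambda S_{n-1}} \,\EE\bigl[e^{\lambda x_n}\,\big|\,\cG_n\bigr]\right] \;\le\; \exp\!\Bigl(\tfrac{\lambda^2 M^2}{2}\Bigr)\cdot \EE\bigl[e^{\lambda S_{n-1}}\bigr].
\end{equation*}
Repeating this $n$ times yields $\EE[e^{\lambda S_n}] \le \exp(n\lambda^2 M^2/2)$.

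Then I would apply Markov's inequality to exponential moments: for any $t>0$ and $\lambda>0$,
\begin{equation*}
\Pr(S_n \ge t) \;=\; \Pr\bigl(e^{\lambda S_n} \ge e^{\lambda t}\bigr) \;\le\; e^{-\lambda t}\,\EE\bigl[e^{\lambda S_n}\bigr] \;\le\; \exp\!\Bigl(-\lambda t + \tfrac{n \lambda^2 M^2}{2}\Bigr).
\end{equation*}
Optimizing over $\lambda$ by choosing $\lambda = t/(nM^2)$ gives $\Pr(S_n \ge t) \le \exp(-t^2/(2nM^2))$. Setting the right-hand side equal to $\delta$ and solving for $t$ yields $t = M\sqrt{2n\log(1/\delta)}$, which is precisely the stated bound.

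Since this is a standard textbook result, there is no genuine obstacle; the only small care-point is making sure Hoeffding's lemma is invoked in its conditional form so that the $\cG_i$-measurable quantities inside the conditional expectation can be treated as constants. Everything else is routine manipulation of conditional expectations and optimization of a quadratic in $\lambda$.
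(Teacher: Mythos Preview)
Your proof is correct and follows the standard textbook Chernoff argument for Azuma--Hoeffding. The paper does not actually supply a proof of this lemma; it is listed among the auxiliary lemmas as a well-known result and stated without proof, so there is nothing to compare against beyond noting that your derivation is the canonical one.
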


\begin{lemma}[Corollary 2, \citealt{agarwal2022vo}] \label{lemma:freedman-variant}
   Let $M>0,V>v>0$ be constants, and $\{x_i\}_{i\in[t]}$ be stochastic process adapted to a filtration $\{\cH_i\}_{i\in[t]}$. Suppose $\EE[x_i|\cH_{i-1}]=0$,  $|x_i|\le M$ and $\sum_{i\in[t]}\EE[x_i^2|\cH_{i-1}]\le V^2$ almost surely. Then for any $\delta,\epsilon>0$, let $\iota = \sqrt{\log\frac{\left(2\log(V/v)+2\right)\cdot\left(\log(M/m)+2\right)}{\delta}}$ we have
   \[
       \PP\left(\sum_{i\in[t]}x_i>\iota\sqrt{2\left(2\sum_{i\in[t]}\EE[x_i^2|\cH_{i-1}]+ v^2\right)}+\frac{2}{3}\iota^2\left(2\max_{i\in[t]}|x_i|+m\right) \right)\le \delta.
   \]
\end{lemma}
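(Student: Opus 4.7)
The plan is to prove this high-probability bound via a stratification (peeling) argument built on top of the classical Freedman inequality. The complication relative to vanilla Freedman is that the right-hand side depends on the random quantities $\sum_i \EE[x_i^2|\cH_{i-1}]$ and $\max_i |x_i|$ rather than on their a priori deterministic bounds $V^2$ and $M$; the price is only the logarithmic prefactor $(2\log(V/v)+2)\cdot(\log(M/m)+2)$ inside $\iota$.

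First, I would recall the classical Freedman inequality: for a martingale difference sequence with $|x_i|\le M'$ almost surely and $\sum_i \EE[x_i^2|\cH_{i-1}] \le V'^2$, with probability at least $1-\delta'$ one has
\[
\sum_i x_i \le \sqrt{2V'^2 \log(1/\delta')} + \tfrac{2}{3}M'\log(1/\delta').
\]
Next, I would set up a two-dimensional peeling grid. Let $K = \lceil 2\log_2(V/v)\rceil$ and $J = \lceil \log_2(M/m)\rceil$, and for $0\le k\le K$, $0\le j\le J$ define
\begin{align*}
B_{k,j} = \bigl\{\textstyle\sum_i \EE[x_i^2|\cH_{i-1}] \in I_k^v\bigr\} \cap \bigl\{\max_i|x_i| \in I_j^m\bigr\},
\end{align*}
where $I_0^v = [0, v^2]$ and $I_k^v = (2^{k-1}v^2, 2^k v^2]$ for $k\ge 1$, and analogously $I_0^m = [0, m]$, $I_j^m = (2^{j-1}m, 2^j m]$ for $j\ge 1$. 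The factor $2$ in front of $\log(V/v)$ arises because the variance is measured against $v^2$ rather than $v$. The total number of buckets is at most $(2\log(V/v)+2)(\log(M/m)+2)$, matching the prefactor in $\iota$.

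For each bucket I would apply Freedman with the deterministic proxies $V'^2 = 2^k v^2$ and $M' = 2^j m$ at confidence level $\delta' = \delta / [(2\log(V/v)+2)(\log(M/m)+2)]$, and take a union bound so that all per-bucket conclusions hold simultaneously with probability at least $1-\delta$. On $B_{k,j}$, by construction $2^k v^2 \le 2\sum_i \EE[x_i^2|\cH_{i-1}] + v^2$ and $2^j m \le 2\max_i |x_i| + m$, so Freedman's conclusion becomes exactly the stated inequality once we substitute $\iota^2 = \log(1/\delta')$.

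The main obstacle is the careful bookkeeping of the boundary buckets ($k=0$ and $j=0$) and the dyadic constants: ensuring that the lowest buckets, where the random quantities fall below the regularizers $v^2$ or $m$, are absorbed cleanly into the additive $v^2$ and $m$ terms inside the bound, and that the bucket count rounds correctly to the precise prefactor appearing in $\iota$. Beyond that, the argument is a routine application of standard Freedman concentration.
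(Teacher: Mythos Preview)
The paper does not give its own proof of this lemma: it is quoted verbatim as Corollary~2 of \citet{agarwal2022vo} in the auxiliary-lemma appendix and used as a black box, so there is nothing in the paper to compare your argument against.

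On the merits, your two-dimensional dyadic peeling over the predictable variance and the running maximum is exactly the standard route to such data-dependent Freedman bounds, and is precisely how the cited corollary is obtained in \citet{agarwal2022vo}; the bucket count $(2\log(V/v)+2)(\log(M/m)+2)$ you compute is the origin of the logarithmic prefactor in $\iota$. One point to tighten: the sentence ``apply Freedman with the deterministic proxy $M'=2^{j}m$'' is not literally valid as written, since classical Freedman requires an almost-sure increment bound and you only have $|x_i|\le M$, not $|x_i|\le 2^{j}m$. What you actually need on each bucket is the event-restricted form
\[
\PP\Bigl(\textstyle\sum_i x_i\ge a,\ \sum_i\EE[x_i^2\mid\cH_{i-1}]\le b,\ \max_i|x_i|\le c\Bigr)\le\exp\!\Bigl(-\tfrac{a^2}{2(b+ac/3)}\Bigr),
\]
which follows from the same exponential-supermartingale argument after a stopping/truncation step (stop the first time an increment exceeds $c$; on the event $\{\max_i|x_i|\le c\}$ the stopped process coincides with the original). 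With that in hand, your union bound and the substitutions $2^{k}v^2\le 2\sum_i\EE[x_i^2\mid\cH_{i-1}]+v^2$ and $2^{j}m\le 2\max_i|x_i|+m$ go through exactly as you outline.
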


\begin{lemma} [Lemma 7, \citealt{russo2014learning}] \label{lemma:concentration-measure}
Consider random variables $\left(Z_{n} | n\in\mathbb{N}\right)$ adapted
to the filtration $\left(\mathcal{H}_{n}:\, n=0,1,...\right)$. Assume
$\mathbb{E}\left[\exp\left\{ \lambda Z_{i}\right\} \right]$ is finite
for all $\lambda$. Define the conditional mean 
$\mu_{i}=\mathbb{E}\left[Z_{i}\mid\mathcal{H}_{i-1}\right]$.
We define the conditional cumulant generating function of the centered
random variable $\left[Z_{i}-\mu_{i}\right]$ by 
$\psi_{i}\left(\lambda\right)=\log\mathbb{E}\left[\exp\left(\lambda\left[Z_{i}-\mu_{i}\right]\right)\mid\mathcal{H}_{i-1}\right]$. 
   For all $x\geq0$ and $\lambda\geq0$, \[ \mathbb{P}\left(\sum_{1}^{n}\lambda Z_{i}\leq x+\sum_{1}^{n}\left[\lambda\mu_{i}+\psi_{i}\left(\lambda\right)\right]\,\,\,\,\mbox{\ensuremath{\forall}}n\in\mathbb{N}\right)\geq1-e^{-x}. \] 
\end{lemma}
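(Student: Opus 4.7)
The plan is to prove this bound via an exponential martingale argument followed by Ville's maximal inequality, which is the standard route for uniform-in-$n$ deviation inequalities of this type. First, I would define the normalized process
\[
M_n = \exp\!\left(\sum_{i=1}^n \lambda Z_i - \sum_{i=1}^n \bigl[\lambda \mu_i + \psi_i(\lambda)\bigr]\right), \qquad M_0 = 1,
\]
so that the failure event of the inequality in the statement, namely the existence of some $n \in \mathbb{N}$ with $\sum_1^n \lambda Z_i > x + \sum_1^n [\lambda \mu_i + \psi_i(\lambda)]$, becomes precisely $\{\sup_n M_n > e^x\}$. This reformulation is the key idea that reduces the uniform-in-$n$ claim to a tail bound for a single random variable.

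Next, I would verify that $(M_n)_{n \ge 0}$ is a non-negative $(\mathcal{H}_n)$-martingale with $\mathbb{E}[M_n] = 1$. The definition of the conditional cumulant generating function, together with the finiteness of the MGF assumed in the hypothesis, directly gives
\[
\mathbb{E}[\exp(\lambda Z_i) \mid \mathcal{H}_{i-1}] = \exp\!\bigl(\lambda \mu_i + \psi_i(\lambda)\bigr).
\]
Since $\exp(-\lambda \mu_n - \psi_n(\lambda))$ is $\mathcal{H}_{n-1}$-measurable, the recursion $M_n = M_{n-1} \cdot \exp(\lambda Z_n - \lambda \mu_n - \psi_n(\lambda))$ then yields $\mathbb{E}[M_n \mid \mathcal{H}_{n-1}] = M_{n-1}$, so $(M_n)$ is a martingale.

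Finally, I would invoke Ville's inequality (the maximal inequality for non-negative supermartingales): for any non-negative $(\mathcal{H}_n)$-martingale $(M_n)$ with $\mathbb{E}[M_0] = 1$ and any $a > 0$,
\[
\mathbb{P}\!\left(\sup_{n \in \mathbb{N}} M_n \ge a\right) \le \frac{1}{a}.
\]
Applying this with $a = e^x$ bounds the failure probability by $e^{-x}$, and taking complements delivers the desired $1 - e^{-x}$ lower bound.

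The main obstacle is essentially conceptual rather than technical: recognizing that the correct object to track is the exponential martingale $M_n$, after which the uniform-in-$n$ statement reduces to a one-shot application of Doob's maximal inequality. The remaining work is bookkeeping: checking that the MGF integrability hypothesis is exactly what is needed to define $\psi_i(\lambda)$ and to justify interchanging conditional expectation with the exponential, and that Ville's inequality extends to the supremum over the infinite index set $\mathbb{N}$ (which holds for any non-negative martingale via the martingale convergence theorem).
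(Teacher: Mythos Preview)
Your proposal is correct and is the standard proof of this result. Note, however, that the paper does not provide its own proof of this lemma: it is simply quoted as an auxiliary result from \citet{russo2014learning} (their Lemma~7), so there is no in-paper argument to compare against. The exponential-martingale-plus-Ville's-inequality route you outline is exactly how this inequality is established in the original reference.
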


\begin{lemma}[Self-normalized bound for scalar-valued martingales] \label{lemma:hoeffding-variant}
   Consider random variables $\left(v_{n} | n\in\mathbb{N}\right)$ adapted
   to the filtration $\left(\mathcal{H}_{n}:\, n=0,1,...\right)$. Let $\{\eta_i\}_{i = 1}^ \infty$ be a sequence of real-valued random variables which is $\cH_{i + 1}$-measurable and is conditionally $\sigma$-sub-Gaussian. Then for an arbitrarily chosen $\lambda > 0$, for any $\delta > 0$, with probability at least $1 - \delta$, it holds that \begin{align*} 
       \sum_{i = 1}^n \epsilon_i v_i \le \frac{\lambda \sigma^2}{2} \cdot \sum_{i = 1}^n v_i^2 + \log(1 / \delta) / \lambda \ \ \ \ \ \ \ \forall n \in \NN. 
   \end{align*}
\end{lemma}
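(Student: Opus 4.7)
The plan is to use the standard exponential-supermartingale method of Freedman/Ville. Fix the chosen $\lambda > 0$ once and for all. For each $i$, since $v_i$ is $\cH_i$-measurable and $\eta_i$ is conditionally $\sigma$-sub-Gaussian given $\cH_i$, I would start by noting the pointwise MGF bound
\begin{align*}
\EE\!\left[\exp\!\left(\lambda \eta_i v_i\right) \,\big|\, \cH_i\right] \le \exp\!\left(\tfrac{\lambda^2 \sigma^2 v_i^2}{2}\right).
\end{align*}
This is the only place where conditional sub-Gaussianity is used, and the fact that $v_i$ passes through the conditional expectation as a constant is essential.

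Next, define the process
\begin{align*}
M_n \;:=\; \exp\!\left(\lambda \sum_{i=1}^n \eta_i v_i \;-\; \tfrac{\lambda^2 \sigma^2}{2}\sum_{i=1}^n v_i^2\right), \qquad M_0 := 1.
\end{align*}
The MGF bound above implies $\EE[M_n \mid \cH_n] \le M_{n-1}$, so $(M_n)_{n \ge 0}$ is a nonnegative supermartingale with $\EE[M_0] = 1$. I would then apply Ville's maximal inequality to conclude that
$\Pr\!\left(\sup_{n \in \NN} M_n \ge 1/\delta\right) \le \delta$, which is the uniform-in-$n$ statement the lemma demands.

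On the complementary event (which has probability at least $1 - \delta$), $M_n < 1/\delta$ for every $n$, so taking logarithms and using $\lambda > 0$ yields
\begin{align*}
\sum_{i=1}^n \eta_i v_i \;\le\; \frac{\lambda \sigma^2}{2} \sum_{i=1}^n v_i^2 \;+\; \frac{\log(1/\delta)}{\lambda} \qquad \forall\, n \in \NN,
\end{align*}
which matches the claimed bound (reading $\epsilon_i$ in the statement as $\eta_i$). The only subtle step is using Ville's inequality rather than Markov's, since the bound is required to hold simultaneously for all $n$; a one-shot Markov bound would only give it at a prespecified $n$. Everything else reduces to the sub-Gaussian MGF inequality and arithmetic.
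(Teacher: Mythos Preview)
Your proof is correct. The paper does not actually write out a proof for this lemma; it is listed among the auxiliary lemmas without argument, immediately after Lemma~\ref{lemma:concentration-measure} (Lemma~7 of \citealt{russo2014learning}), which is precisely the general exponential-supermartingale/Ville bound you invoke. Applying Lemma~\ref{lemma:concentration-measure} with $Z_i = \eta_i v_i$, $\mu_i = 0$, $\psi_i(\lambda) \le \lambda^2\sigma^2 v_i^2/2$, and $x = \log(1/\delta)$ reproduces your conclusion verbatim, so your direct supermartingale construction and the paper's intended one-line reduction to Lemma~\ref{lemma:concentration-measure} are the same argument at different levels of packaging.
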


\begin{lemma}[Lemma 10, \citealt{zhang2020almost}] \label{lemma:freedman} Let $(M_n)_{n \geq 0}$ be a martingale such that $M_0 = 0$ and $|M_n - M_{n-1}| \leq c$ for some $c > 0$ and any $n \geq 1$. Let $\text{Var}_n = \sum_{k=1}^{n} E[(M_k - M_{k-1})^2 | \mathcal{F}_{k-1}]$ for $n \geq 0$, where $\mathcal{F}_k = \sigma(M_1, M_2, \dots, M_k)$. Then for any positive integer $n$ and any $\varepsilon, p > 0$, we have that
\[
\mathbb{P}\left( |M_n| \geq 2\sqrt{\text{Var}_n \log\left(\frac{1}{p}\right)} + 2 \sqrt{\varepsilon \log\left(\frac{1}{p}\right)} + 2c \log\left(\frac{1}{p}\right) \right) \leq \left(2n c^2 / \varepsilon + 2 \right) p
\]
\end{lemma}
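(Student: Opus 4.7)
My plan is to derive the stated tail bound by combining the classical one-sided Freedman inequality with a peeling (stratification) argument over the predictable quadratic variation $\text{Var}_n$. The starting point is the textbook Freedman bound: for a martingale $(M_n)$ with $|M_k - M_{k-1}| \le c$ and any fixed $v > 0$ and $t > 0$,
\[
\mathbb{P}\bigl(M_n \ge t,\ \text{Var}_n \le v\bigr) \;\le\; \exp\!\left(-\frac{t^2/2}{v + ct/3}\right),
\]
which follows from the exponential-supermartingale technique (Lemma~\ref{lemma:concentration-measure}) applied to $\lambda M_n$ and the standard cumulant bound $\log \mathbb{E}[e^{\lambda X} \mid \mathcal{H}_{k-1}] \le \tfrac{\lambda^2 \mathbb{E}[X^2\mid \mathcal{H}_{k-1}]/2}{1 - c\lambda/3}$ for a centered increment $X$ with $|X| \le c$, followed by optimizing $\lambda$. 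Solving for $t$ in terms of the failure probability $p$ gives the one-sided bound $M_n \le \sqrt{2v \log(1/p)} + \tfrac{2c}{3}\log(1/p)$ on the event $\text{Var}_n \le v$, and applying the same argument to $-M_n$ costs an extra factor of $2$ to obtain the same bound for $|M_n|$.

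The core issue is that $\text{Var}_n$ is itself random, so we cannot take a single value of $v$. Since each increment satisfies $|M_k - M_{k-1}| \le c$, we have $\text{Var}_n \in [0, nc^2]$ almost surely. I would therefore partition this range into strata of width $\varepsilon$: let $\mathcal{E}_k := \{\text{Var}_n \in [k\varepsilon,(k{+}1)\varepsilon)\}$ for $k = 0, 1, \ldots, \lceil nc^2/\varepsilon\rceil$. On each stratum $\mathcal{E}_k$, we apply the classical Freedman bound with $v = (k{+}1)\varepsilon$, obtaining that with probability at least $1 - 2p$,
\[
|M_n| \cdot \mathbf{1}_{\mathcal{E}_k} \;\le\; \sqrt{2(k{+}1)\varepsilon \log(1/p)} + \tfrac{2c}{3}\log(1/p).
\]
Using $\sqrt{(k{+}1)\varepsilon} \le \sqrt{k\varepsilon} + \sqrt{\varepsilon} \le \sqrt{\text{Var}_n} + \sqrt{\varepsilon}$ on $\mathcal{E}_k$, this bound can be rewritten purely in terms of the random $\text{Var}_n$ and the slack $\varepsilon$, giving the stated form $2\sqrt{\text{Var}_n\log(1/p)} + 2\sqrt{\varepsilon\log(1/p)} + 2c\log(1/p)$ after absorbing the $\sqrt{2}$ and $2/3$ constants into the stated $2$.

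Finally, since there are at most $\lceil nc^2/\varepsilon\rceil + 1 \le nc^2/\varepsilon + 2$ strata and each fails with probability at most $2p$ (counting both tails), a union bound across all strata yields total failure probability at most $(2nc^2/\varepsilon + 2)p$, as claimed. The main obstacle is bookkeeping of constants: matching the slightly loose factors of $2$ in the statement requires choosing $t_k$ in each stratum a bit larger than the tightest Freedman cutoff, which is precisely what the peeling slack $\sqrt{\varepsilon\log(1/p)}$ pays for. Nothing beyond this routine calculation is needed; the argument is standard and avoids any delicate issues since the increments are uniformly bounded by $c$.
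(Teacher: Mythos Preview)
The paper does not prove this lemma; it is quoted as an auxiliary result from \citet{zhang2020almost} and stated without argument. Your peeling-over-variance derivation from the classical one-sided Freedman exponential bound is exactly the standard route to such self-normalized martingale inequalities, and the reasoning you sketch is correct. One minor bookkeeping point: to land on the stated count $(2nc^2/\varepsilon + 2)p$ you should count strata as $\lfloor nc^2/\varepsilon\rfloor + 1 \le nc^2/\varepsilon + 1$ rather than $\lceil nc^2/\varepsilon\rceil + 1$, since $\text{Var}_n \in [k\varepsilon,(k{+}1)\varepsilon)$ forces $k \le \lfloor nc^2/\varepsilon\rfloor$; this gives exactly the right constant after the two-sided factor of $2$.
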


\end{document}